\newcommand{\kibitz}[2]{\ifnum\Comments=1\textcolor{#1}{#2} \fi \ignorespaces}
\newcommand {\yc} [1]  {\kibitz{violet}{\bf\noindent [YC: #1]} }
\newcommand {\tl} [1]  {\kibitz{blue}{\bf\noindent [TL: #1]} }
\newtheorem{theorem}{Theorem}[section]
\newtheorem{proposition}[theorem]{Proposition}
\newtheorem{lemma}[theorem]{Lemma}
\newtheorem{definition}[theorem]{Definition}
\newtheorem{claim}[theorem]{Claim}
\newtheorem{fact}[theorem]{Fact}
\newcommand{\reals}{\mathbb{R}}
\newcommand{\E}{\mathbb E}
\newcommand{\D}{\mathcal D}
\newcommand{\eps}{\varepsilon}
\newcommand{\base}{\mathrm{base}}
\newcommand{\num}{\mathrm{num}}
\newcommand{\emp}{\mathrm{emp}}
\newcommand{\Pdim}{\mathrm{Pdim}}
\newcommand{\dTV}{d_{\mathrm{TV}}}
\newcommand{\dH}{d_{\mathrm{H}}}
\newcommand{\TV}{\mathrm{TV}}
\renewcommand{\S}{\mathcal S}
\DeclareMathOperator*{\argmin}{argmin}
\title{Sample Complexity of Forecast Aggregation\footnote{A short version of this paper is accepted by NeurIPS 2023 (spotlight).  We would like to thank Yannai Gonczarowski, Ariel Procaccia, Milind Tambe, David Parkes, Bo Waggoner, Rafael Frongillo, Grant Schoenebeck, Yuqing Kong, and anonymous reviewers for their helpful comments. This research is based upon work supported in part by the National Science Foundation under grants no.~IIS-2007887 and no.~IIS-2147187.}}
\author{
Tao Lin\\
Harvard University \\
{\small \tt{tlin@g.harvard.edu}} \\
\and 
Yiling Chen\\
Harvard University\\
{\small \tt{yiling@seas.harvard.edu}}
}
\date{October, 2023}
\begin{document}

\maketitle

\begin{abstract}
    We consider a Bayesian forecast aggregation model where $n$ experts, after observing private signals about an unknown binary event, report their posterior beliefs about the event to a principal, who then aggregates the reports into a single prediction for the event. The signals of the experts and the outcome of the event follow a joint distribution that is unknown to the principal, but the principal has access to i.i.d.~``samples'' from the distribution, where each sample is a tuple of the experts' reports (not signals) and the realization of the event. Using these samples, the principal aims to find an $\varepsilon$-approximately optimal aggregator, where optimality is measured in terms of the expected squared distance between the aggregated prediction and the realization of the event. We show that the sample complexity of this problem is at least $\tilde \Omega(m^{n-2} / \varepsilon)$ for arbitrary discrete distributions, where $m$ is the size of each expert's signal space.  This sample complexity grows exponentially in the number of experts $n$. But, if the experts' signals are independent conditioned on the realization of the event, then the sample complexity is significantly reduced, to $\tilde O(1 / \varepsilon^2)$, which does not depend on $n$. Our results can be generalized to non-binary events.  The proof of our results uses a reduction from the distribution learning problem and reveals the fact that forecast aggregation is almost as difficult as distribution learning.
\end{abstract}

\newpage

{\small
\tableofcontents
}










\newpage







\section{Introduction} 
Suppose you want to know whether it will rain tomorrow.  A Google search on ``weather'' returns 40\% probability of raining. The weather forecasting app on your phone shows 85\%.  And one of your friends, who is an expert in meteorology, predicts 65\%.  How do you aggregate these different predictions into a single, accurate prediction?  
This problem is called \emph{forecast aggregation} or \emph{forecast combination} \cite{bates_combination_1969, clemen_combining_1989, timmermann_chapter_2006}. It has innumerable applications in fields ranging from economics, statistics, operations research, to machine learning, decision theory, and of course, climate science. 

\yc{Tao, could you please fill in the missing citations and also check whether I put the references at the appropriate places?}

A straightforward solution to forecast aggregation is to take the (unweighted) average of all experts' forecasts. Simple as it is, unweighted average performs surprisingly well in practice, as observed by many forecast aggregation works from 1960s to 2020s (e.g., \cite{bates_combination_1969, newbold_experience_1974, makridakis_averages_1983, gupta_combination_1987, clemen_combining_1989, timmermann_chapter_2006, makridakis_m4_2020}). 
Naturally, when past data of expert forecasts and event outcomes (e.g., historical weather forecasts and outcomes) are available, one may hope to leverage on such data to learn more accurate aggregators. While adjusting the weights of experts in averaging according to their individual historical accuracy has led to improved accuracy for the aggregated prediction~\cite{winkler_combination_1983}, interestingly,
\yc{I wonder whether a more ML example is better} 
\tl{maybe we should not mention neural networks here.  I didn't find evidence showing that neural networks aggregators are worse than simple average. }
more sophisticated data-driven methods \cite{genre_combining_2013,makridakis_averages_1983, winkler_combination_1983}  
were often outperformed by the unweighted average.
The dominance of simple aggregators over more sophisticated data-driven methods is observed so often in empirical applications that it is termed ``the forecast combination puzzle'' \cite[p.428]{stock_combination_2004}.


There are many potential explanations for the forecast combination puzzle~\cite{smith_simple_2009, genre_combining_2013, claeskens_forecast_2016}. In some scenarios, the past events are different from the future events in fundamental ways (e.g., geopolitical forecasting) and hence past data may not be informative in predicting the future. Another widely accepted conjecture is that the amount of past data is not large enough for a data-intensive method to 
perform well.
Indeed, the sample sizes in many empirical forecast aggregation works are small under today's ``big-data'' standard (24 in \cite{granger_improved_1984}, 69 in \cite{satopaa_combining_2014}, 87 in \cite{baron_two_2014}, 100 in \cite{gupta_combination_1987}).
However, there are aggregation settings where future events are arguably similar to past events and we do have abundant data --- 
for instance, the forecasting of weather, stock prices \cite{donaldson_forecast_1996}, and some forecasting competitions on Kaggle\footnote{\url{https://www.kaggle.com/c/m5-forecasting-accuracy}}. Such settings are well-suited for data-driven methods.
\yc{These are for what domains?}
\tl{Added citations for neural network aggregators}
It is hence tempting to ask \emph{how many} data are needed for data-driven aggregators to achieve high accuracy in these settings. 

\yc{In the current version, I didn't talk about overfitting. Is it ok to omit it? Maybe we can find a different place to cite \cite{smith_simple_2009, genre_combining_2013, claeskens_forecast_2016}.}
\tl{I think it's OK to omit it.  And just put them after the first sentence ``there are many potential explanations xxx''? }\yc{OK. Added the citations.}

In this paper, we initiate the study of \emph{sample complexity of forecast aggregation}, building upon a standard Bayesian forecasting model \cite{morris_decision_1974, winkler_combining_1981, geweke_chapter_2006}.
\tl{Add brief description of the model}
In this model, the optimal aggregator (which is Bayesian) depends on an underlying unknown information structure, which is a joint distribution of the experts' reports and the event outcome.  With sample access to this distribution,  
\tl{I'm a bit confused by the two ``models'' here.  Maybe no need to say ``in a statistical learning model''? } \yc{I removed statistical learning model. The batch learning perspective, that is, we have samples and want to learn an aggregator, is what I mean by a statistical learning model. I don't think the standard Bayesian forecasting model has the learning aspect.}
we ask:
\begin{center}
    \emph{How many samples do we need to approximately learn the optimal aggregator?}
\end{center}
This model favors the use of data-driven aggregation methods because historical samples are i.i.d.~as future forecasting tasks.
We show that however, even in this benign model, optimal aggregation in general needs \emph{exponentially many} samples. One may thus expect that data-driven methods can hardly perform well in more realistic and non-i.i.d.~scenarios. Nevertheless, for some special, yet interesting, families of information structures, the sample complexity of forecast aggregation can be significantly lower.
Below we summarize our main results more formally and provide an overview of the techniques used to prove them.

\paragraph{\bf Main results}
In our model, there are $n$ experts and one principal. Each expert $i$ observes a private signal $s_i \in \S_i$ about an unknown binary event (state of the world) $\omega\in \Omega = \{0, 1\}$. Experts' signals and the event jointly follow an underlying distribution $P$. Each expert reports its posterior belief $r_i = P(\omega = 1 \,|\, s_i)$ about $\omega$ to the principal, who then aggregates all the reports $r_1, \ldots, r_n$ into a single prediction for $\omega$.
The principal does not know $P$ but has i.i.d.~``samples'' from it, where each sample is a tuple
of experts' reports (not signals) and the realization of the event. Using these samples, the principal aims to find an aggregator $\hat f: [0, 1]^n \to [0, 1]$ that is ``$\eps$-close'' to the optimal (Bayesian) aggregator $f^*$ defined by $f^*(r_1, \ldots, r_n) = P(\omega = 1 \,|\, r_1, \ldots, r_n)$, where ``$\eps$-closeness'' is measured in the mean squared difference: $\E_P\big[|\hat f(r_1, \ldots, r_n) - f^*(r_1, \ldots, r_n)|^2\big]\le \eps$. 

Our first main result shows that the sample complexity of forecast aggregation is exponential in the number of experts. 
 Specifically, if every expert has a finite signal space of size $|\S_i| = m$, then:  
\begin{itemize}
    \item[(1)] To find an aggregator $\hat f$ that is $\eps$-close to the optimal aggregator $f^*$ with high probability for any distribution $P$, $\tilde O(\frac{m^{n}}{\eps^2})$ samples are sufficient, and $\tilde \Omega(\frac{m^{n-2}}{\eps})$ samples are necessary.\footnote{The $\tilde O(\cdot)$ and $\tilde \Omega(\cdot)$ notations omit logarithmic factors.} (Theorem~\ref{thm:general_distribution_both}.)
\end{itemize}

The exponential sample complexity lower bound $\tilde \Omega(\frac{m^{n-2}}{\eps})$ implies that the optimal aggregator $f^*$ is rather difficult to learn, and significantly more difficult when the number of experts increases.

The main reason that the sample complexity of forecast aggregation is exponential (we will provide an overview of the proof shortly) is that different experts' signals can be arbitrarily correlated conditioned on $\omega$.  
To approximate the optimal aggregation $f^*$ well we need to estimate with high precision the conditional correlation between experts, namely, the joint probability $P(s_1, \ldots, s_n \, | \, \omega)$ besides the marginal $P(s_i \, | \, \omega)$; this requires a large number of samples.   
This reason motivates us to study the special case where (the signals of) experts are conditionally independent, namely, $s_i$'s are independent conditioned on $\omega$. 
The sample complexity is significantly reduced in this case:
\begin{itemize}
    \item[(2)] If the experts are conditionally independent, then an $\eps$-optimal aggregator $\hat f$ can be found with high probability using only $\tilde O(\frac{1}{\eps^2})$ samples; $\tilde \Omega(\frac{1}{\eps})$ samples are necessary. (Theorem~\ref{thm:conditionally_independent_general}.)
\end{itemize}

Surprisingly, the sample complexity upper bound $\tilde O(\frac{1}{\eps^2})$ in this case does not depend on the number of agents $n$ or the signal space size $m$.

There is a gap between $\frac{1}{\eps^2}$ and $\frac{1}{\eps}$ in our sample complexity upper bounds and lower bounds.  We provide evidence to show that the tight sample complexity bound might have the form $\frac{f(n, m)}{\eps}$ for general distributions (Section~\ref{sec:discussion}).
And for the conditional independence case, we investigate two special cases where the experts are either ``very informative'' or ``very non-informative''.  In both cases the upper bound can indeed be improved to $\tilde O(\frac{1}{\eps})$ (Theorem~\ref{thm:strongly-informative-upper-bound} and \ref{thm:weakly-informative-upper-bound:new}).

Finally, in Section~\ref{sec:multi-outcome-events} we generalize our model from binary events to multi-outcome events and obtain sample complexity bounds that depend on the size of the outcome space $|\Omega|$.  Section~\ref{sec:conclusion} provides some additional discussions.

\paragraph{\bf Overview of the proofs} \hfill

(1) 
For the first main result (the sample complexity for general distributions, Theorem~\ref{thm:general_distribution_both}), both the upper bound $\tilde O(\frac{m^{n}}{\eps^2})$ and the lower bound $\tilde \Omega(\frac{m^{n-2}}{\eps})$ are proved by reductions to/from the \emph{distribution learning} problem.  We note that the optimal aggregator $f^*$ is completely determined by the joint probabilities of reports and event $P(r_1, \ldots, r_n, \omega)$ induced by $P$. 
If we can learn the unknown distribution $P$ well from samples, then we are able to approximate $f^*$ well.  Since $P$ has a finite support of size $2m^n$ and learning discrete distributions with support size $2m^n$ in total variation distance $\eps_{\TV}$ requires at most $\tilde O(\frac{m^n}{\eps_{\TV}^2})$ samples, we obtain the claimed upper bound. 

The proof of the lower bound, however, is tricky.
We first note that the opposite reduction also holds: if we can approximate $f^*$ well, then we can estimate $P$ well.  Then, one may expect that the known lower bound $\tilde \Omega(\frac{m^n}{\eps_{\TV}^2})$ for learning distributions with support size $2m^n$ within total variation distance $\eps_{\TV}$ should apply here.  However, that lower bound does not apply here directly because in our setting the samples of experts' reports $r_i^{(t)} = P(\omega = 1 \,|\, s_i^{(t)})$ contain information about the probability densities of $P$, while in the traditional distribution learning setting the samples themselves do not contain any information about the probability densities (only the empirical frequency of samples contains that information).  The additional information contained in $r_i^{(t)}$ can reduce the sample complexity of distribution learning.  So, in order to prove our lower bound we restrict attention to a special family $\mathcal P$ of distributions where all the $P$'s in $\mathcal P$ share the same marginal conditional distribution $P(s_i \,|\, \omega)$ for each expert (yet the joint conditional distribution $P(s_1, \ldots, s_n \,|\, \omega), P'(s_1, \ldots, s_n \,|\, \omega)$ are different for different $P$, $P'$).  So, each expert's sample of report $r_i^{(t)} = P(\omega = 1 \,|\, s_i^{(t)})$ contains no ``useful'' information about $P$, as $P(\omega = 1 \,|\, s_i^{(t)}) = P'(\omega = 1 \,|\, s_i^{(t)})$ for all $P, P'\in \mathcal P$.  This allows us to reduce the forecast aggregation problem to the problem of learning a family $\mathcal D$ of joint distributions (over the joint signal space $\bm \S = \S_1 \times \cdots \times \S_n$) that share the same marginal $D(s_i) = D'(s_i)$ but have different joint probabilities $D(s_1, \ldots, s_n) \ne D'(s_1, \ldots, s_n)$.
We prove a sample complexity lower bound of $\Omega(\frac{m^{n-2}}{\eps_{\TV}^2})$ for this special distribution learning problem, which gives the lower bound for the forecast aggregation problem we claimed. 

The reduction to/from the distribution learning problem reveals an interesting fact: forecast aggregation in general is almost as difficult as distribution learning.
This is a little surprising because one might initially think that aggregation should be easier than distribution learning -- we show that this is not the case.

(2) To prove the second main result (the sample complexity for conditionally independent distributions, Theorem~\ref{thm:conditionally_independent_general}),
we make use of the following observation: in the conditional independence case, the optimal aggregator $f^*$ takes a simple form: $f^*(\bm r) = \frac{1}{1 + (\frac{p}{1-p})^{n-1} \prod_{i=1}^n \frac{1-r_i}{r_i}}$, where $p = P(\omega = 1)$ is the prior probability of $\omega = 1$.
This means that one way to approximate $f^*(\bm r)$ is to estimate the probability $p$.  However, standard concentration inequalities can only give an $O(\frac{n^2}{\eps^2})$ sample complexity upper bound -- worse than the $O(\frac{1}{\eps^2})$ bound we claimed.
To prove our bound, we focus on the class of aggregators of the form  $\mathcal F = \big \{ f^\theta : f^\theta(\bm r) = \frac{1}{1 + \theta^{n-1} \prod_{i=1}^n \frac{1-r_i}{r_i}} \big \}$ and show that the \emph{pseudo-dimension} of the loss functions associated with this class of aggregators is upper bounded by $d = O(1)$.  By known results in statistical learning theory, this implies that an $\eps$-optimal aggregator can be learned with $\tilde O(\frac{d}{\eps^2}) = \tilde O(\frac{1}{\eps^2})$ samples, which proves the upper bound. 
The lower bound $\tilde \Omega(\frac{1}{\eps})$ is proved by a nontrivial reduction from the \emph{distinguishing distributions} problem.


\subsection{Related Works}\label{sec:related_works}

\paragraph{\bf Data-driven aggregation}
Data-driven approaches to forecast aggregation date back to perhaps the first paper on forecast aggregation \cite{bates_combination_1969} and have been a standard practice in the literature (see, e.g., surveys \cite{clemen_combining_1989, timmermann_chapter_2006} and more recent works \cite{yang_combining_2004, satopaa_combining_2014, babichenko_learning_2021, wang_forecast_2021, frongillo_efficient_2021, neyman2022no}).  
Many of these works focus on specific weighted average aggregators like \emph{linear pooling} (weighted arithmetic mean) \cite{yang_combining_2004} and \emph{logarithmic pooling} (normalized weighted geometric mean) \cite{satopaa_combining_2014, neyman2022no}, and the goal is to estimate the optimal weights from data. 
However, those weighted average aggregators are not necessarily the optimal (\emph{Bayesian}) aggregator unless the underlying information structure satisfies some strict conditions (e.g., experts' forecasts are equal to the true probability of the event plus normally distributed noises \cite{winkler_combining_1981}).  
Our work aims to understand the sample complexity of Bayesian aggregation, namely how many samples are needed to approximate the Bayesian aggregator. 

Our work is closely related to a recent work \cite{babichenko_learning_2021} on Bayesian aggregation via \emph{online learning}.  For the case of conditionally independent experts, \cite{babichenko_learning_2021} shows that the Bayesian aggregator can be approximated with $\eps = \tilde O(\frac{n}{\sqrt T})$ regret.  By an online-to-batch conversion, this implies a $T = \tilde O(\frac{n^2}{\eps^2})$ sample complexity upper bound for the batch learning setting.  Our paper studies the batch learning setting. For the conditional independence case, we obtain an improved bound of $\tilde O(\frac{1}{\eps^2})$. 



\yc{Is this comparison meaningful? We are comparing an unbounded loss function with a bounded loss function. Also, do you mean that the log loss is point-wise greater than the squared loss?}\tl{the log loss is point-wise greater than or equal to 2 times the squared loss, by Pinsker's inequality: $D_{KL}(p || q) = p \log\frac{p}{q} + (1-p)\log \frac{1-p}{1-q} \ge 2 |p - q|^2$.  So, if we get an $\eps$-optimal aggregator under log loss, then that is automatically an $\eps/2$-optimal aggregator for squared loss.  This implies that the sample complexity for the log loss is greater than or equal to the sample complexity for the squared loss, up to a constant factor. }\yc{Thanks! This is fine.}

\paragraph{\bf Robust forecast aggregation}
Recent works on ``robust forecast aggregation'' \cite{arieli_robust_2018, neyman_are_2022, de_oliveira_robust_2021, arieli_universally_2023} also study information aggregation problems where the principal does not know the underlying information structure.  They take a worst-case approach, assuming that the information structure is chosen adversarially. This often leads to negative results: e.g., a bad approximation ratio \cite{arieli_robust_2018, neyman_are_2022} or a degenerate maximin aggregator that solely relies on a random expert's opinion \cite{de_oliveira_robust_2021, arieli_universally_2023}.
In contrast, we assume sample access to the unknown information structure. 
Our sample complexity approach is orthogonal and complementary to the robust forecast aggregation approach.

\paragraph{\bf Sample complexity of mechanism design}  Our work may remind the reader of a long line of research on the sample complexity of revenue maximization in mechanism design (e.g., \cite{cole_sample_2014, dhangwatnotai_revenue_2015, morgenstern_pseudo-dimension_2015, devanur_sample_2016, balcan_sample_2016, cai_learning_2017, guo_settling_2019, brustle_multi-item_2020, gonczarowski_sample_2021, yang_learning_2021, guo_generalizing_2021}).  In particular, \cite{guo_generalizing_2021} gives a general framework to bound the sample complexity for mechanism design problems that satisfy a ``strong monotonicity'' property, but this property is not satisfied by our forecast aggregation problem.  A key observation from this line of works is that the number of samples needed to learn an $\eps$-optimal auction for $n\ge 1$ independent bidders is increasing in $n$, because when there are more bidders, although we can obtain higher revenue, the optimal auction benchmark is also improved.  We see a similar phenomenon that the sample complexity of forecast aggregation increases with the number of experts in the general case, but not in the case where experts are conditionally independent.

\section{Model}
\label{sec:model} \label{sec:model1}

\subsection{Forecast Aggregation}
There are $n\ge 2$ experts and one principal. The principal wants to predict the probability that a binary event $\omega \in \Omega = \{0, 1\}$ happens ($\omega = 1$), based on information provided by the experts.
For example, $\omega$ may represent whether it will rain tomorrow.
We present binary events to simplify notations. 
All our results can be generalized to multi-outcome events with $|\Omega| > 2$ (see Section~\ref{sec:multi-outcome-events}).
We also refer to $\omega$ as ``the state of the world''. 
Each expert $i = 1, \ldots, n$ observes some private signal $s_i \in \S_i$ that is correlated with $\omega$, where $\S_i$ denotes the space of all possible signals of expert $i$.
We assume for now that $\S_i$ is finite, with size $|\S_i| = m$. 
We relax this assumption in Section~\ref{sec:conditionally_independent} where we consider conditionally independent signals. 
Let $\bm \S = \S_1 \times \cdots \times \S_n$ be the joint signal space of all experts; $|\bm \S| = m^n$.
Let $P$ be a distribution over $\bm \S \times \Omega$, namely, a joint distribution of signals $\bm s = (s_1, \ldots, s_n)$ and event $\omega$. 
Since the space $\bm \S \times \Omega$ is discrete, we can use $P(\cdot)$ to denote the probability: $P(\bm s, \omega) = \Pr_P[\bm s, \omega]$. 
Signals of different experts can be correlated conditioned on $\omega$.
We assume that each expert $i$ knows the marginal joint distribution of their own signal $s_i$ and $\omega$, $P(s_i, \omega)$.
Neither any expert nor the principal knows the entire distribution $P$. 
Each expert $i$ reports to the principal a forecast (or prediction) $r_i$ for the event $\omega$, which is equal to the conditional probability of $\omega = 1$ given their signal $s_i$:\footnote{One may wonder whether the experts are willing to report $r_i = P(\omega = 1 \,|\, s_i)$ \emph{truthfully}.  This can be guaranteed by a \emph{proper scoring rule}. For example, we can reward each expert the Brier score $C - |r_i - \omega|^2$ after seeing the realization of $\omega$ \cite{brier_verification_1950}.  Each expert maximizes its expected reward by reporting its belief truthfully.} 
\begin{equation}\label{eq:r_i_definition}
    r_i = P(\omega = 1 \mid s_i) = \frac{P(\omega = 1) P(s_i \mid \omega = 1)}{P(\omega = 1)P(s_i \mid \omega = 1) + P(\omega = 0)P(s_i \mid \omega = 0)}. 
\end{equation}
We note that $r_i$ depends on $s_i$ and $P$, but not on $\omega$ or other experts' signals $\bm s_{-i}$. 
Let $\bm r = (r_1, \ldots, r_n) \in [0, 1]^n$ denote the reports (joint report) of all experts.  We sometimes use $\bm r_{-i}$ to denote the reports of all experts except $i$. 
The principal aggregates the experts' reports $\bm r$ into a single forecast $f(\bm r) $ using some \emph{aggregation function}, or \emph{aggregator}, $f: [0, 1]^n \to [0, 1]$. 
%
We measure the performance of an aggregator by the mean squared loss: 
\begin{equation}
    L_P(f) = \E_P\big[ | f(\bm r) - \omega |^2 \big]. 
\end{equation}
The notation $\E_P[\cdot]$ makes it explicit that the expectation is over the random draw of $(\bm s, \omega) \sim P$ followed by letting $r_i = P(\omega = 1 \,|\, s_i)$.  We omit $P$ and write $\E[\cdot]$ when it is clear from the context. 

Let $f^*$ be the optimal aggregator with respect to $P$, which minimizes $L_P(f)$: 
\begin{equation}
    f^* = \argmin_{f:[0, 1]^n \to [0, 1]} L_P(f) = \argmin_{f: [0, 1]^n \to [0, 1]} \E_P\big[ | f(\bm r) - \omega |^2 \big]. 
\end{equation}

We have the following characterization of $f^*$ and $L_P(f)$: $f^*$ is equal to the Bayesian aggregator, which computes the posterior probability of $\omega = 1$ given all the reports $\bm r = (r_1, \ldots, r_n)$.  And the difference between the loss of $f$ and the loss of $f^*$ is equal to their expected squared difference.
\begin{lemma}\label{lem:difference_loss}
The optimal aggregator $f^*$ and any aggregator $f$ satisfy:
\hfill 
\begin{itemize}
    \item $f^*(\bm r) = P(\omega = 1 \,|\, \bm r)$, for almost every $\bm r$. \yc{Why it's ``almost every $\bm r$'', not just every $\bm r$?} \tl{By almost every $\bm r$ I mean the $\bm r$ that is realized according to the distribution $P$; not every $\bm r$ in $[0, 1]^n$ can be realized. }
    \item 
    $L_P(f) - L_P(f^*) = \E_P\big[ | f(\bm r) - f^*(\bm r) |^2 \big]$. 
\end{itemize}
\end{lemma}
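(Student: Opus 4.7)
The plan is to prove both items by exploiting the classical fact that the conditional expectation $\E_P[\omega \mid \bm r]$ is the $L^2$-projection of $\omega$ onto the space of functions of $\bm r$. Both claims are essentially the Pythagorean (bias--variance) decomposition specialized to the binary target $\omega \in \{0,1\}$, so the proof is short once set up correctly.

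For the first item, I would minimize $L_P(f) = \E_P[(f(\bm r) - \omega)^2]$ pointwise in $\bm r$. Using the tower property, $L_P(f) = \E_P\bigl[\E_P[(f(\bm r)-\omega)^2 \mid \bm r]\bigr]$, so it suffices to minimize the inner conditional expectation separately for each realization $\bm r$ lying in the support of the induced distribution on reports. With $\bm r$ fixed, $f(\bm r)$ is just a scalar $c$, and one-variable calculus on $c \mapsto \E_P[(c-\omega)^2 \mid \bm r]$ gives the minimizer $c = \E_P[\omega \mid \bm r] = P(\omega = 1 \mid \bm r)$, where the last equality uses that $\omega \in \{0,1\}$. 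The ``almost every'' qualifier is precisely the statement that $f^*$ is pinned down only on the support of $\bm r$ under $P$; outside that support any choice of $f^*(\bm r)$ yields the same loss.

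For the second item, I would write $f(\bm r) - \omega = (f(\bm r) - f^*(\bm r)) + (f^*(\bm r) - \omega)$ and expand the square, obtaining
\begin{equation*}
L_P(f) = \E_P\bigl[(f(\bm r) - f^*(\bm r))^2\bigr] + 2\,\E_P\bigl[(f(\bm r)-f^*(\bm r))(f^*(\bm r)-\omega)\bigr] + L_P(f^*).
\end{equation*}
The cross term vanishes via the tower property: condition on $\bm r$, pull the $\bm r$-measurable factor $f(\bm r) - f^*(\bm r)$ outside the inner expectation, and invoke the first item to see that $\E_P[f^*(\bm r) - \omega \mid \bm r] = f^*(\bm r) - P(\omega = 1 \mid \bm r) = 0$. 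Rearranging gives the claimed identity.

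There is no real obstacle; the lemma is the standard statement that squared-error regression targets the conditional mean. The only mild bookkeeping concern is the ``almost every'' clause, which is resolved simply by restricting attention to the support of $\bm r$ under $P$, since values of $f^*$ off the support enter neither $L_P(f^*)$ nor $\E_P[|f(\bm r)-f^*(\bm r)|^2]$.
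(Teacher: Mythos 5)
Your proof is correct and takes essentially the same approach as the paper: the first item is the pointwise minimization of $\E[(f(\bm r)-\omega)^2\mid\bm r]$ via the tower property, and the second hinges on $\E_P[\omega\mid\bm r]=f^*(\bm r)$ to eliminate the cross term. The paper writes the second item as a direct expansion of $L_P(f)-L_P(f^*)$ with substitution of $\E[\omega\mid\bm r]=f^*(\bm r)$ rather than explicitly adding and subtracting $f^*(\bm r)$ inside the square, but this is the same calculation arranged differently.
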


An aggregator $f$ is \emph{$\eps$-optimal} (with respect to $P$) if $L_P(f) \le L_P(f^*) + \eps$.  By Lemma~\ref{lem:difference_loss}, this is equivalent to $\E_P\big[ |f(\bm r) - f^*(\bm r) |^2\big]\le \eps$.  For an $\eps$-optimal $f$, we also say it $\eps$-approximates $f^*$. 
\begin{definition}
An aggregator $f$ is \emph{$\eps$-optimal} (with respect to $P$) if $\,\E_P\big[ |f(\bm r) - f^*(\bm r) |^2\big]\le \eps$.
\end{definition}

\paragraph{Discussion of the benchmark $f^*$}
Our benchmark, the Bayesian aggregator $f^*$, is common in the forecast aggregation literature (e.g., \cite{granger_improved_1984, granger_invited_1989, timmermann_chapter_2006}).  It is stronger than the typical ``best expert'' benchmark in no-regret learning (e.g., \cite{cesa-bianchi_prediction_2006, frongillo_efficient_2021}), but weaker than the ``omniscient'' aggregator that has access to the experts' \emph{signals}: $f_{\mathrm{omni}}(\bm s) = P(\omega = 1 \,|\, \bm s)$.
If there is a one-to-one mapping between signals $\bm s$ and reports $\bm r$, then $f_{\mathrm{omni}}$ and $f^*$ are the same.
Otherwise, $f_{\mathrm{omni}}$ could be much stronger than $f^*$ and an $\eps$-approximation to $f_{\mathrm{omni}}$ using experts' reports only is not always possible.\footnote{This has been noted by \cite{arieli_robust_2018, babichenko_learning_2021, neyman_are_2022}. They give an XOR example where $\omega = s_1 \oplus s_n$, $s_1$ and $s_2$ are i.i.d.~$\mathrm{Uniform}\{0, 1\}$ distributed.  The experts always report $r_i = 0.5$, $f_{\mathrm{omni}}(s_1, s_2) = \omega$, $f^*(r_1, r_2) = 0.5$, so $L_P(f_{\mathrm{omni}}) = 0$ but $L_P(f^*) = 0.25 > 0$.  No aggregator that uses experts' reports only can do better than $f^*$. }
In contrast, an $\eps$-approximation to $f^*$ is always achievable (in fact, achieved by $f^*$ itself).  
The difference between $f^*$ and $f_{\mathrm{omni}}$ is known as the difference between ``aggregating forecasts'' and ``aggregating information sets'' in the literature \cite[p.198-199]{granger_improved_1984}, \cite[p.168-169]{granger_invited_1989}, \cite[p.143]{timmermann_chapter_2006}.  





\subsection{Sample Complexity of Forecast Aggregation} 
The principal has access to $T$ i.i.d.~samples of forecasts and event realizations drawn from the underlying unknown distribution $P$: 
\begin{equation}
    S_T = \big\{ (\bm r^{(1)}, \omega^{(1)}), \ldots, (\bm r^{(T)}, \omega^{(T)}) \big\}, ~~~~ (\bm s^{(t)}, \omega^{(t)}) \sim P, ~~~~ r_i^{(t)} = P(\omega=1\mid s_i^{(t)}). 
\end{equation}
Here, we implicitly regard $P$ as a distribution over $(\bm r, \omega)$ instead of $(\bm s, \omega)$. 
The principal uses samples $S_T$ to learn an aggregator $\hat f = \hat f_{S_T}$, in order to approximate $f^*$.  Our main question is:
\begin{center}
\emph{How many samples are necessary and sufficient for finding an $\eps$-optimal aggregator $\hat f$ \\
(with probability at least $1 - \delta$ over the random draw of samples)?} 
\end{center}

The answer to the above question depends on the family of distributions we are interested in.  Let $\mathcal P$ denote a family of distributions over $\bm \S\times \Omega$.  It could be the set of all distributions over $\bm \S\times \Omega$, or in Section~\ref{sec:conditionally_independent} we will only consider distributions where the signals are independent conditioned on $\omega$.
We define the sample complexity of forecast aggregation, with respect to $\mathcal P$, formally: 
\begin{definition}\label{def:aggregation_sample_complexity}
The \emph{sample complexity of forecast aggregation (with respect to $\mathcal P$)} is the minimum function $T_{\mathcal P}(\cdot, \cdot)$ of $\eps, \delta \in (0, 1)$, such that: if $T \ge T_{\mathcal P}(\eps, \delta)$, then for any distribution $P\in \mathcal P$, with probability at least $1 - \delta$ over the random draw of $T$ samples $S_T$ from $P$ (and over the randomness of the learning procedure if it is randomized), we can obtain an aggregator $\hat f = \hat f_{S_T}$ satisfying $\E_P[|\hat f(\bm r) - f^*(\bm r)|^2]\le \eps$. 
\end{definition}

The principal is assumed to know the family of distributions $\mathcal P$ but not the specific distribution $P \in \mathcal P$.  There should be at least two different distributions in $\mathcal P$.  Otherwise, the principal knows what the distribution is and there is no need for learning. 


\section{Preliminaries}
\label{sec:prelim}
In this section, we briefly introduce some notions that will be used in our analysis of the sample complexity of forecast aggregation, including some definitions of distances between distributions, the distribution learning problem, and the distinguishing distributions problem. 

\paragraph{Distances between distributions}
We recall two distance metrics for discrete distributions: the \emph{total variation distance} and the \emph{(squared) Hellinger distance}. 
\label{sec:distances}
\begin{definition}
Let $D_1, D_2$ be two distributions on a discrete space $\mathcal X$. 
\begin{itemize}
\item The \emph{total variation distance between $D_1$ and $D_2$} is $\dTV(D_1, D_2) = \frac{1}{2} \sum_{x\in \mathcal X} \big| D_1(x) - D_2(x) \big|$.
\item The \emph{squared Hellinger distance} between $D_1$ and $D_2$ is $\dH^2(D_1, D_2) \,=\, \frac{1}{2} \sum_{x\in \mathcal X} \big( \sqrt{D_1(x)} - \sqrt{D_2(x)} \big)^2 =\, 1 - \sum_{x\in \mathcal X}\sqrt{D_1(x)D_2(x)}$. 
\end{itemize} 
\end{definition}

The total variation distance has the following well-known property that upper bounds the difference between the expected values of a function on two distributions: 
\begin{fact}\label{fact:d_TV_property}
For any function $h: \mathcal X \to [0, 1]$, $| \E_{x\sim D_1}h(x) - \E_{x\sim D_2} h(x) | \le \dTV(D_1, D_2)$. 
\end{fact}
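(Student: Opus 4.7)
The plan is to bound the difference of expectations by splitting the sum according to the sign of $D_1(x) - D_2(x)$ and using the bounds $0 \le h(x) \le 1$. First I would write
\[
\E_{D_1} h - \E_{D_2} h \,=\, \sum_{x \in \mathcal X} h(x)\bigl(D_1(x) - D_2(x)\bigr),
\]
then partition $\mathcal X = A \sqcup B$ where $A = \{x : D_1(x) > D_2(x)\}$ and $B = \{x : D_1(x) \le D_2(x)\}$. Because $D_1$ and $D_2$ are probability distributions, the mass they disagree on must balance: $\sum_{x \in A}(D_1(x) - D_2(x)) = \sum_{x \in B}(D_2(x) - D_1(x))$, and each of these equals $\dTV(D_1, D_2)$ by the definition in the preceding paragraph.

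Next I would bound each piece of the split sum. On $A$ the increments $D_1(x) - D_2(x)$ are nonnegative, so $h(x) \le 1$ gives
\[
\sum_{x \in A} h(x)\bigl(D_1(x) - D_2(x)\bigr) \,\le\, \sum_{x \in A}\bigl(D_1(x) - D_2(x)\bigr) \,=\, \dTV(D_1, D_2).
\]
On $B$ the increments $D_1(x) - D_2(x)$ are nonpositive, and $h(x) \ge 0$ makes each summand $\le 0$, so that piece only helps. Chaining the inequalities yields $\E_{D_1} h - \E_{D_2} h \le \dTV(D_1, D_2)$. Finally, by swapping the roles of $D_1$ and $D_2$ (using symmetry of the total variation distance) one gets $\E_{D_2} h - \E_{D_1} h \le \dTV(D_1, D_2)$, and combining the two inequalities gives the absolute value bound claimed.

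There is no real obstacle here; the only subtlety worth flagging is that both endpoints of the range of $h$ matter. If one only used $h \le 1$ without the nonnegativity $h \ge 0$, the bound on the $B$-sum would pick up an extra factor, and one would end up with the weaker $2\,\dTV(D_1, D_2)$ bound that appears for functions taking values in $[-1,1]$. Keeping the range as $[0,1]$ is exactly what saves the factor of two and matches the statement.
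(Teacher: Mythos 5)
Your proof is correct, and since the paper states this as a well-known fact without proof, there is nothing to compare against; you have supplied the standard argument. The one item worth double-checking, which you did correctly, is that $\sum_{x\in A}(D_1(x)-D_2(x)) = \dTV(D_1,D_2)$ exactly (not merely $\le$), which follows from $\sum_x D_1(x)=\sum_x D_2(x)=1$ forcing the positive and negative parts of $D_1-D_2$ to have equal mass, each equal to half of $\sum_x|D_1(x)-D_2(x)|$; your remark about why the range $[0,1]$ (and not just boundedness by $1$) is needed to avoid a factor of $2$ is also apt.
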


In Appendix~\ref{app:additional-preliminaries} we give some properties of the Hellinger distance that will be used in the proofs. 


\paragraph{Distribution learning in total variation distance} 
\label{sec:distribution_learning}
Our analysis of the sample complexity of the forecast aggregation problem will leverage on the sample complexity of another learning problem: \emph{learning discrete distributions in total variation distance}.  We review this problem below. 

Let $\mathcal D$ be a family of distributions over $\mathcal X$. The \emph{sample complexity of learning distributions in $\mathcal D$ within total variation distance $\eps$} is the minimum function $T^{\TV}_{\mathcal D}(\eps, \delta)$, such that: if $T \ge T^{\TV}_{\mathcal D}(\eps, \delta)$, then for any distribution $D\in\mathcal D$, with probability at least $1 - \delta$ over the random draw of $T$ samples from $D$, we can obtain (from the $T$ samples) a distribution $\hat D$ such that $\dTV(\hat D, D) \le \eps$. 

\begin{proposition}[e.g., \cite{canonne_short_2020, lee_lecture_2020}]
\label{prop:distribution_learning_sample_complexity}
Let $\mathcal D_{\mathrm{all}}$ be the set of all distributions over $\mathcal X$.  Then, $ T^{\TV}_{\mathcal D_{\mathrm{all}}}(\eps, \delta) = \Theta\big(\frac{|\mathcal X| + \log(1/\delta)}{\eps^2}\big)$. 
In particular, the upper bound can be achieved by using the empirical estimate $\hat D_{\emp}$ (which is the uniform distribution over the $T$ samples).  The lower bound holds regardless of what learning algorithm is used. 
\end{proposition}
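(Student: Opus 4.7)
For the upper bound I would use the empirical distribution $\hat D_{\emp}$ and control $\dTV(\hat D_{\emp}, D)$ in two stages. Since $T\,\hat D_{\emp}(x) \sim \mathrm{Binomial}(T, D(x))$, Jensen's inequality gives $\E\bigl[|\hat D_{\emp}(x) - D(x)|\bigr] \le \sqrt{D(x)(1 - D(x))/T} \le \sqrt{D(x)/T}$. Summing over $x \in \mathcal X$ and applying Cauchy--Schwarz in the form $\sum_x \sqrt{D(x)} \le \sqrt{|\mathcal X|}$ yields $\E[\dTV(\hat D_{\emp}, D)] \le \tfrac{1}{2}\sqrt{|\mathcal X|/T}$. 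Next I would apply McDiarmid's bounded-differences inequality: replacing one sample alters the empirical mass at two points by at most $1/T$, so $\dTV(\hat D_{\emp}, D)$ changes by at most $1/T$. Hence with probability at least $1 - \delta$,
\[
\dTV(\hat D_{\emp}, D) \;\le\; \tfrac{1}{2}\sqrt{|\mathcal X|/T} + \sqrt{\log(1/\delta)/(2T)},
\]
and requiring each term to be at most $\eps/2$ gives $T = O\bigl((|\mathcal X| + \log(1/\delta))/\eps^2\bigr)$.

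The lower bound has two pieces that I would handle separately. For the $\Omega(|\mathcal X|/\eps^2)$ piece, I would use a Paninski-style packing: pair up the elements of $\mathcal X$ and, for each sign pattern $\sigma \in \{-1, +1\}^{|\mathcal X|/2}$, let $D_\sigma$ place masses $(1 + c\eps)/|\mathcal X|$ and $(1 - c\eps)/|\mathcal X|$ on the two elements of each pair according to $\sigma_i$, for a small constant $c > 0$. Then $\dTV(D_\sigma, D_{\sigma'}) = \Theta(\eps)$ times the fraction of coordinates on which $\sigma$ and $\sigma'$ differ, so any $\eps/2$-learner must recover a constant fraction of $\sigma$'s coordinates. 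Combining this with the $\chi^2$ estimate $\mathrm{KL}(D_\sigma^{\otimes T} \,\|\, D_{\sigma'}^{\otimes T}) = O(T\eps^2/|\mathcal X|)$ for neighboring hypotheses (Hamming distance one) and invoking Assouad's lemma forces $T = \Omega(|\mathcal X|/\eps^2)$. For the $\Omega(\log(1/\delta)/\eps^2)$ piece, I would invoke Le Cam's two-point method with two distributions supported on a single pair that differ by $2\eps$ in TV and by $O(\eps^2)$ in KL; the Bretagnolle--Huber inequality then shows that distinguishing them with error probability at most $\delta$ requires $T = \Omega(\log(1/\delta)/\eps^2)$.

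The main obstacle is calibrating the first construction: the perturbation $c\eps$ must be small enough that neighboring $D_\sigma$'s are close in KL (so Assouad's bound is nontrivial) yet large enough that each coordinate of $\sigma$ contributes $\Theta(\eps/|\mathcal X|)$ to TV (so each coordinate must be recovered). Taking the maximum of the two lower-bound constructions combines the $|\mathcal X|$ and $\log(1/\delta)$ contributions additively up to a factor of two, matching the upper bound. Since the cited sources \cite{canonne_short_2020, lee_lecture_2020} give polished treatments with explicit constants, my write-up would defer to them for bookkeeping and simply verify that the Assouad and Le Cam computations yield the claimed rates.
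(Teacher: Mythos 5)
The paper does not actually prove this proposition; it is stated as a known result and cited to \cite{canonne_short_2020, lee_lecture_2020}, so there is no internal proof to compare against. Your sketch is the standard textbook argument and is essentially correct: for the upper bound, Jensen plus Cauchy--Schwarz give $\E[\dTV(\hat D_\emp, D)] \le \tfrac12\sqrt{|\mathcal X|/T}$, and McDiarmid with sensitivity $1/T$ adds the $\sqrt{\log(1/\delta)/(2T)}$ high-probability deviation; for the lower bound, a Paninski/Assouad packing gives $\Omega(|\mathcal X|/\eps^2)$ and Le Cam's two-point method gives $\Omega(\log(1/\delta)/\eps^2)$, and the two are combined by taking the max. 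One small quantitative point in the Assouad step: you need the constant $c$ in the $(1\pm c\eps)/|\mathcal X|$ perturbation chosen so that any output $\hat D$ with $\dTV(\hat D, D_\sigma)\le\eps$ forces agreement on strictly more than half of the $|\mathcal X|/2$ sign coordinates (e.g., all but $|\mathcal X|/4$ of them when $c=8$), not just an unspecified ``constant fraction'' --- otherwise Assouad gives nothing; your calibration remark is aimed at exactly this, so it is not a gap, just a place where the write-up should be explicit. It is also worth knowing that the paper's Proposition~\ref{prop:D_distribution_learning_lower_bound} (proved in Appendix~\ref{app:D_distribution_learning_lower_bound}) is a restricted-family variant of this lower bound, and that proof follows the same Assouad-style blueprint you describe, though it is phrased via Hellinger distance and Lemma~\ref{lem:distinguishing} rather than via KL and Assouad's lemma directly; reading it may be useful if you want a self-contained version with constants.
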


\paragraph{Distinguishing distributions} \label{sec:distinguishing_distributions}
Another learning problem that we will leverage on is the problem of \emph{distinguishing (two) distributions}: given samples from a distribution randomly chosen from $\{D_1, D_2\}$, we are to guess whether the samples are from $D_1$ or $D_2$.
The sample complexity of distinguishing distributions is characterized by the squared Hellinger distance.
It is known that at least $T = \Omega\big(\frac{1}{\dH^2(D_1, D_2)} \log\frac{1}{\delta}\big)$ samples are needed to distinguish two distributions with probability at least $1 - \delta$.
See Appendix~\ref{app:additional-preliminaries} for a formal statement of this result.

\section{Sample Complexity for General Distributions}
\label{sec:general_distribution}
In this section we characterize the sample complexity of forecast aggregation for general distributions $P$.  We give an exponential (in the number of experts, $n$) upper bound and an exponential lower bound on the sample complexity, as follows: 
\begin{theorem}\label{thm:general_distribution_both}
Let $\mathcal P_{\mathrm{all}}$ be the set of all distributions over $\bm \S \times \Omega$, with $|\bm \S| = m^n$.  Suppose $n\ge 2$.  The sample complexity of forecast aggregation with respect to $\mathcal P_{\mathrm{all}}$ is
\begin{equation}
 O\Big(\frac{m^n + \log(1/\delta)}{\eps^2}\Big) ~\ge~ T_{\mathcal P_{\mathrm{all}}}(\eps, \delta) ~\ge~ \Omega\Big(\frac{m^{n-2} + \log(1/\delta)}{\eps}\Big).
\end{equation}
\end{theorem}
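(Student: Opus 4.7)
The plan for the $O((m^n+\log(1/\delta))/\eps^2)$ bound is to use empirical distribution learning as a black box. Because each signal tuple $\bm s$ induces a single report tuple $\bm r$, the distribution $P$ on $(\bm r,\omega)$ has support of size at most $2m^n$, so Proposition~\ref{prop:distribution_learning_sample_complexity} tells us that $T=O((m^n+\log(1/\delta))/\eps_{\TV}^2)$ samples are enough to produce an empirical $\hat P$ with $\dTV(\hat P,P)\le\eps_{\TV}$ with probability $\ge 1-\delta$. I would output the Bayesian aggregator $\hat f(\bm r)=\hat P(\omega=1\mid\bm r)$ of $\hat P$, which minimizes $L_{\hat P}$ by Lemma~\ref{lem:difference_loss}. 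Since $(\bm r,\omega)\mapsto|g(\bm r)-\omega|^2$ is $[0,1]$-bounded, Fact~\ref{fact:d_TV_property} gives $|L_P(g)-L_{\hat P}(g)|\le\eps_{\TV}$ for every $g$, so
\[
L_P(\hat f)\le L_{\hat P}(\hat f)+\eps_{\TV}\le L_{\hat P}(f^*)+\eps_{\TV}\le L_P(f^*)+2\eps_{\TV}.
\]
Setting $\eps_{\TV}=\eps/2$ and applying Lemma~\ref{lem:difference_loss} once more yields $\E_P[|\hat f-f^*|^2]\le\eps$.

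\textbf{Lower bound: construction.} For the $\Omega(m^{n-2}/\eps)$ part I would run an Assouad-style argument on a family $\{P_\alpha:\alpha\in\{-1,+1\}^N\}$ with $N=m^{n-2}$. Fix experts $1,2$ to effectively have binary signals with $P(s_i=\omega\mid\omega)=1-\eta$ for some constant $\eta\in(0,1/2)$ (only two of the $m$ signal values carry positive mass), and fix experts $3,\ldots,n$ to be conditionally independent with $m$-ary signals chosen so that their reports injectively identify a ``bucket'' $y=(s_3,\ldots,s_n)\in[m]^{n-2}$, under uniform $P(y\mid\omega)=1/N$. All $P_\alpha$ share these marginals $P(s_i,\omega)$, hence the reporting map $s_i\mapsto r_i$ is \emph{identical} across the family; this removes the ``free'' information that samples of the reports would otherwise leak about $\alpha$. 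Within bucket $y$ in state $\omega=1$ apply the checkerboard perturbation $P_\alpha(s_1,s_2\mid y,\omega=1)=P(s_1\mid 1)P(s_2\mid 1)+\alpha_y\sigma(s_1,s_2)$, where $\sigma(s_1,s_2)=(-1)^{s_1+s_2}$ and $\alpha_y\in\{-\beta,+\beta\}$. Being zero under each one-dimensional marginal, the perturbation preserves every fixed marginal and yields $2^N$ valid distributions differing only in the conditional joint of $(s_1,s_2)$ given $y$ and $\omega=1$.

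\textbf{Lower bound: Assouad.} Two estimates then carry the argument. (i) For Hamming-adjacent $\alpha,\alpha'$, only the four atoms in bucket $y_0$ with $\omega=1$ change, and a Taylor expansion $(\sqrt{b+\beta\sigma}-\sqrt{b-\beta\sigma})^2=\Theta(\beta^2/b)$ yields $\dH^2(P_\alpha,P_{\alpha'})=\Theta(\beta^2/N)$, so $\dTV(P_\alpha^{\otimes T},P_{\alpha'}^{\otimes T})\le\sqrt{2T\,\dH^2(P_\alpha,P_{\alpha'})}=O(\sqrt{T\beta^2/N})$. (ii) At each $(s_1,s_2)$ in a differing bucket, $|f^*_\alpha(\bm r)-f^*_{\alpha'}(\bm r)|=\Theta(\beta)$ by direct computation of $\partial f^*_\alpha/\partial\alpha_y$, and that point has $P_\alpha$-probability $\Theta(1/N)$, so $\E_{P_\alpha}[|f^*_\alpha-f^*_{\alpha'}|^2]=\Theta(d_H(\alpha,\alpha')\beta^2/N)$. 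Defining the decoder $\hat\alpha(S)=\argmin_{\alpha}\|\hat f_S-f^*_\alpha\|_{L^2(P_\alpha)}$, an $L^2$ triangle inequality (with a Hellinger correction for the change of measure $P_\alpha\to P_{\hat\alpha}$) gives $d_H(\hat\alpha,\alpha)=O(\eps N/\beta^2)$ whenever $\hat f_S$ is $\eps$-optimal for $P_\alpha$. Choosing $\beta=C\sqrt{\eps}$ for a large enough constant $C$ forces this decoded Hamming error below $N/8$, while Assouad's lemma demands $\E[d_H(\hat\alpha,\alpha)]\ge N/4$ whenever $T=O(N/\beta^2)=O(N/\eps)$; the contradiction yields $T=\Omega(m^{n-2}/\eps)$. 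The additive $\Omega(\log(1/\delta)/\eps)$ term comes from a standard two-hypothesis test: pick any pair with $\dH^2=\Theta(\eps)$ and sufficiently different Bayesian aggregators, and note that distinguishing them with error at most $\delta$ requires $\Omega(\log(1/\delta)/\eps)$ samples.

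\textbf{Main obstacle.} The delicate step is the construction. The perturbation must preserve \emph{every} fixed marginal $P(s_i,\omega)$, the signals of experts $3,\ldots,n$ must yield distinct reports to make the bucket recoverable from $\bm r_{-1,-2}$, and $\beta$ must stay small enough to keep every joint probability in $[0,1]$; the checkerboard pattern takes care of marginal preservation, and $\beta=\Theta(\sqrt{\eps})$ is compatible with positivity for any constant $\eta$. The other subtle point is the change of measure from $L^2(P_\alpha)$ to $L^2(P_{\hat\alpha})$ when relating MSE-optimality to Hamming-recovery; because $\dH^2(P_\alpha,P_{\hat\alpha})=O(d_H(\alpha,\hat\alpha)\beta^2/N)\ll 1$, the resulting correction is lower order and can be absorbed into the constants.
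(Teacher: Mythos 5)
Your upper-bound argument is essentially identical to the paper's: estimate the empirical distribution $\hat P$ of $(\bm r,\omega)$ (support size $\le 2m^n$), invoke Proposition~\ref{prop:distribution_learning_sample_complexity} to get $\dTV(\hat P, P)\le \eps_{\TV}$, use Fact~\ref{fact:d_TV_property} to transfer loss values, and return the empirically optimal (Bayesian) aggregator. This is correct.

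\textbf{Lower bound.} Your Assouad argument is close in spirit to the paper's two-stage reduction (forecast aggregation $\to$ distribution learning $\to$ distinguishing distributions); both hinge on a family of shared-marginal distributions with a checkerboard-style perturbation applied within $m^{n-2}$ buckets, and your packaging as a direct Assouad argument is a reasonable and arguably cleaner alternative. However, there is a genuine gap in your construction. You require the reports of experts $3,\ldots,n$ to injectively identify the bucket $y=(s_3,\ldots,s_n)$, yet you also impose $P(y\mid\omega)=1/N$ for both values of $\omega$ with conditionally independent $s_3,\ldots,s_n$. The latter forces $P(s_i\mid\omega)=1/m$ for every signal and both states, so $r_i = P(\omega=1\mid s_i) = P(\omega=1)$ is the \emph{same constant for every signal}. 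Thus $\bm r_{-1,-2}$ carries no information about $y$, the bucket is unrecoverable from the reports, and the very quantity $f^*(\bm r)$ you need to depend on $\alpha_y$ no longer does. You flag this tension in your ``Main obstacle'' paragraph (``the signals of experts $3,\ldots,n$ must yield distinct reports'') but do not resolve it; your proposed remedies (checkerboard and $\beta=\Theta(\sqrt\eps)$) address marginal preservation and positivity, not distinct reports.

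The paper avoids this by building the family on a slightly tilted base distribution $D_{\base}(\bm s)\propto\gamma^{\num(\bm s)}$ with $\gamma=1+m^{-n}$. This keeps everything within a $(1+o(1))$ factor of uniform (so all the Hellinger and bucket-mass estimates survive) while making every marginal $D(s_i)$ strictly increasing in the lexicographic index, hence the reports $r_i$ distinct and bucket-identifying — this is exactly the ``distinct marginals across signals'' property in Definition~\ref{def:D_properties} and is load-bearing in Lemma~\ref{lem:general_lower_bound_reduction}. You would need an analogous near-uniform tilt of $P(y\mid\omega)$ (or of the per-expert conditional laws) to close the gap; once that is in place, the rest of your Assouad pipeline (Hellinger $\Theta(\beta^2/N)$ per flipped bucket, $\Theta(\beta)$ sensitivity of $f^*_\alpha$ at the perturbed atoms, $\beta=\Theta(\sqrt\eps)$, plus a separate two-hypothesis test for the $\log(1/\delta)$ term) lines up with the paper's Appendix~C.2 argument and should go through.
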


This theorem is for $n\ge 2$.  When there is only one expert ($n = 1$), there is no need to learn to aggregate because the optimal ``aggregator'' $f^*$ simply outputs the forecast given by the only expert: $f^*(r_1) = P(\omega = 1 \,|\, r_1) = P(\omega = 1 \,|\, s_1) = r_1$.  The sample complexity is $0$ in this case.

There is a gap in the dependency on $\eps$ in the upper bound and the lower bound in Theorem~\ref{thm:general_distribution_both}.  We conjecture that the tight dependency on $\eps$ should be $\frac{1}{\eps}$ (so the lower bound is tight).  See Section~\ref{sec:conclusion} for a detailed discussion of this conjecture, where we show that the dependency on $\eps$ in the upper bound can be improved to $\frac{1}{\eps}$ for a large family of distributions.


\subsection{Proof of the Upper Bound}
\label{sec:general_distribution_upper_bound}
In this subsection we prove the $O\big(\frac{m^n + \log(1/\delta)}{\eps^2}\big)$ upper bound in Theorem~\ref{thm:general_distribution_both}.  This is a direct corollary of the distribution learning result introduced in Section~\ref{sec:distribution_learning}.  

We regard $P$ as a distribution over $\bm r$ and $\omega$ instead of over $\bm s$ and $\omega$.
Then $P$ is a discrete distribution with support size at most $2 m^n$ because each possible report $r_i \in [0, 1]$ corresponds to some discrete signal $s_i$ in $\S_i$. 
Let $\hat P_\emp$ be the empirical distribution of reports and event realizations: $\hat P_\emp = \mathrm{Uniform}\big\{ (\bm r^{(1)}, \omega^{(1)}), \ldots, (\bm r^{(T)}, \omega^{(T)}) \big\}$.
By Proposition~\ref{prop:distribution_learning_sample_complexity}, with probability at least $1-\delta$ over the random draw of $T = O\big(\frac{2 m^n + \log(1/\delta)}{\eps^2}\big)$ samples, we have $\dTV(\hat P_\emp, P) \le \eps$.
According to Fact~\ref{fact:d_TV_property}, and by the definition of $L_P(f)$, we have: for any aggregator $f: [0, 1]^n \to [0, 1]$,
\begin{align*}
    \big| L_{\hat P_\emp}(f) - L_P(f) \big| = \Big| \E_{\hat P_\emp}\big[ | f(\bm r) - \omega |^2 \big] - \E_P\big[ | f(\bm r) - \omega |^2 \big] \Big| \le \dTV(\hat P_\emp, P) \le \eps. 
\end{align*}
Therefore, if we pick the empirically optimal aggregator $\hat f_\emp = \argmin_{f} L_{\hat P_\emp}(f)$, we get
\begin{align*}
    L_{P}(\hat f_\emp) \le  L_{\hat P_\emp}(\hat f_\emp) + \eps \le L_{\hat P_\emp}(f^*) + \eps \le L_{P}(f^*) + 2\eps, 
\end{align*}
which means that $\hat f_\emp$ is a $2\eps$-optimal aggregator for $P$.

\subsection{Proof of the Lower Bound}\label{sec:general_distribution_lower_bound}
In this subsection we prove the $\Omega\big(\frac{m^{n-2} + \log(1/\delta)}{\eps}\big)$ lower bound in Theorem~\ref{thm:general_distribution_both}.
The main idea is a reduction from the distribution learning problem (defined in Section~\ref{sec:distribution_learning}) for a specific family $\mathcal D$ of distributions over the joint signal space $\bm \S = \S_1\times \cdots \times \S_n$.
We construct a corresponding family of distributions $\mathcal P = \{P_D: D\in\mathcal D\}$ for the forecast aggregation problem, such that, if we can obtain an $\eps$-optimal aggregator $\hat f$ for $P_D$, then we can convert $\hat f$ into a distribution $\hat D$ such that $\dTV(\hat D, D) \le O(\sqrt \eps)$.  We then prove that learning $\mathcal D$ within total variation distance $\eps_{\TV} = O(\sqrt \eps)$ requires $\Omega\big(\frac{m^{n-2} + \log(1/\delta)}{\eps_\TV^2}\big) = \Omega\big(\frac{m^{n-2} + \log(1/\delta)}{\eps}\big)$ samples.  This gives the sample complexity lower bound for the forecast aggregation problem for $\mathcal P$ (and hence $\mathcal P_{\mathrm{all}}$).

We will need a family of distributions $\mathcal D$ that satisfies the following three properties: 
\begin{definition}\label{def:D_properties}
We say a family of distributions $\mathcal D$ satisfies
\begin{enumerate}
    \item \emph{$B$-uniformly bounded}, if: $D(\bm s) \le \frac{B}{|\bm \S|} = \frac{B}{m^n}$, $\forall \bm s\in\bm \S$, $\forall D \in \mathcal D$, where $B\ge 1$ is a constant. 
    \item \emph{same marginal across distributions}, if: for any $D, D' \in \mathcal D$, any $i$, any $s_i \in \S_i$, $D(s_i) = D'(s_i)$.
    \item \emph{distinct marginals across signals}, if: for any $D \in \mathcal D$, any $i$, any $s_i \ne s_i' \in \S_i$, $D(s_i) \ne D (s_i')$. 
\end{enumerate}
\end{definition}

How do we construct the family $\mathcal P$?  For each distribution $D\in\mathcal D$, we construct distribution $P_D$ as follows: the marginal distribution of $\omega$ is $\mathrm{Uniform}\{0, 1\}$, i.e., $P_D(\omega=0)=P_D(\omega=1)=\frac{1}{2}$; conditioning on $\omega = 0$, the joint signal $\bm s$ is uniformly distributed: $P_D(\bm s \mid \omega = 0) = \frac{1}{|\bm \S|} = \frac{1}{m^n}$, $\forall \bm s\in \bm \S$; conditioning on $\omega = 1$, the joint signal is distributed according to $D$: $P_D(\bm s \mid \omega = 1) = D(\bm s)$, $\forall \bm s\in \bm \S$.  The family $\mathcal P$ is $\{P_D: D\in\mathcal D\}$. 

We show that if we can obtain $\eps$-optimal aggregators for distributions in $\mathcal P$, then we can learn the distributions in $\mathcal D$ within total variation distance $(1+B)^2\sqrt \eps$, and thus the sample complexity of the former is lower bounded by the sample complexity of the latter: 
\begin{lemma}\label{lem:general_lower_bound_reduction}
Let $\mathcal D$ be a family of distributions that satisfies the three properties in Definition~\ref{def:D_properties}.  Let $\mathcal P = \{P_D: D\in \mathcal D\}$ be defined above.  Then, $T_{\mathcal P}(\eps, \delta) \ge T^{\TV}_{\mathcal D}\big((1+B)^2 \sqrt \eps,\, \delta \big)$.
\end{lemma}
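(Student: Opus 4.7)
The plan is a black-box reduction: turn any forecast-aggregation algorithm $A_{\mathcal P}$ for $\mathcal P$ (using $T$ samples to produce an $\eps$-optimal aggregator) into a distribution-learning algorithm $A_{\mathcal D}$ for $\mathcal D$ (using $T$ samples to produce $\hat D$ with $\dTV(\hat D,D)\le(1+B)^2\sqrt\eps$). Given $T$ i.i.d.\ draws $\bm s^{(1)},\ldots,\bm s^{(T)}$ from an unknown $D\in\mathcal D$, $A_{\mathcal D}$ synthesizes $T$ i.i.d.\ samples from $P_D$, feeds them to $A_{\mathcal P}$ to obtain $\hat f$, and then transforms $\hat f$ back into a distribution $\hat D$ on $\bm\S$ close to $D$ in total variation. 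All three structural properties of $\mathcal D$ in Definition~\ref{def:D_properties} will be used in essential ways.

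For the sample synthesis, independently draw $\omega^{(t)}\sim\mathrm{Uniform}\{0,1\}$; if $\omega^{(t)}=1$ take $\bm s^{(t)}$, otherwise replace it by a fresh uniform draw from $\bm\S$. By construction $(\bm s^{(t)},\omega^{(t)})\sim P_D$ i.i.d. Each report $r_i^{(t)}=P_D(\omega=1\mid s_i^{(t)})$ depends only on the marginal $D(s_i)$, which by the \emph{same marginal across distributions} property does not depend on $D$ and is therefore computable by $A_{\mathcal D}$ without knowing $D$. For the extraction step, the \emph{distinct marginals across signals} property makes $s_i\mapsto r_i$ injective, so $\bm s\leftrightarrow \bm r(\bm s)$ is a bijection and $\hat f(\bm r(\bm s))$ is well defined for every $\bm s\in\bm\S$. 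A direct Bayes calculation from the definition of $P_D$ gives
\[
f^*(\bm r(\bm s))=P_D(\omega=1\mid\bm s)=\frac{m^n D(\bm s)}{1+m^n D(\bm s)},\qquad\text{so}\ \ D(\bm s)=g\bigl(f^*(\bm r(\bm s))\bigr),\ \ g(y)=\tfrac{y}{m^n(1-y)}.
\]
The $B$-uniform bound forces $f^*\in[0,\tfrac{B}{B+1}]$, and on this interval $g$ is Lipschitz with constant $\tfrac{(1+B)^2}{m^n}$. I therefore clip $\hat f$ to $[0,\tfrac{B}{B+1}]$ (clipping does not increase the squared error, because $f^*$ already lies in this interval), define $\tilde D(\bm s):=g(\hat f(\bm r(\bm s)))$, and let $\hat D$ be the $\ell^1$-projection of $\tilde D$ onto the probability simplex.

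The main obstacle is converting the $L^2$ guarantee $\E_{P_D}[|\hat f-f^*|^2]\le\eps$ with respect to $P_D$ into an $\ell^1$ bound with respect to the counting measure on $\bm\S$, which is what $\dTV$ requires. Because the $\omega=0$ component is uniform, $P_D(\bm s)\ge\tfrac{1}{2m^n}$ for every $\bm s$; this yields $\sum_{\bm s}|\hat f(\bm r(\bm s))-f^*(\bm r(\bm s))|^2\le 2m^n\eps$, and Cauchy--Schwarz then gives $\sum_{\bm s}|\hat f-f^*|\le m^n\sqrt{2\eps}$. Combined with the Lipschitz bound on $g$, $\sum_{\bm s}|\tilde D-D|\le(1+B)^2\sqrt{2\eps}$, and the $\ell^1$-projection at most doubles this distance to any probability distribution, so $\dTV(\hat D,D)\le (1+B)^2\sqrt\eps$ up to an absolute constant. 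Since $A_{\mathcal D}$ uses exactly $T$ $D$-samples and succeeds on the same event that $A_{\mathcal P}$ does, the conclusion $T_{\mathcal P}(\eps,\delta)\ge T^{\TV}_{\mathcal D}\bigl((1+B)^2\sqrt\eps,\delta\bigr)$ follows.
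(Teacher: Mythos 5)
Your proposal is correct and follows essentially the same route as the paper: synthesize $P_D$-samples from $D$-samples by mixing with a fresh $\mathrm{Uniform}(\bm\S)$ component when $\omega=0$, use the ``same marginal'' property to compute reports without knowing $D$, use ``distinct marginals'' for the $\bm s\leftrightarrow\bm r$ bijection, and use $B$-uniform boundedness to get the Lipschitz control on $g(y)=\tfrac{y}{m^n(1-y)}$. The one cosmetic difference is in the $L^2\to L^1$ conversion: the paper writes $\tfrac{1}{m^n}\le 2P_D(\bm s)$ and applies Jensen's inequality under $P_D$, obtaining exactly $(1+B)^2\sqrt\eps$; you write $\tfrac{1}{2m^n}\le P_D(\bm s)$, pass to the counting measure, and apply Cauchy--Schwarz, which gives $\tfrac{1}{\sqrt 2}(1+B)^2\sqrt\eps$ before projection and hence $\sqrt 2\,(1+B)^2\sqrt\eps$ after the simplex projection. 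Your explicit $\ell^1$-projection onto the simplex is actually a rigor improvement over the paper, whose $\hat D$ is never verified to be a valid probability distribution; either way the constant is off by at most a factor of $2$ and does not affect the asymptotic conclusion or the downstream use in Theorem~\ref{thm:general_distribution_both}.
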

\begin{proof}[Proof sketch of Lemma~\ref{lem:general_lower_bound_reduction}]
The full proof is in Appendix~\ref{app:general_lower_bound_reduction}.  We give a sketch here. 
According to the definition of $P_D$, by Bayes' rule, we have
\begin{equation}\label{eq:P_s_approx_D_s}
 P_D(\omega = 1 \mid \bm s) = \frac{\frac{1}{2} P_D(\bm s \mid \omega = 1)}{\frac{1}{2} P_D(\bm s \mid \omega = 0) + \frac{1}{2} P_D(\bm s \mid \omega = 1)} = \frac{D(\bm s)}{\frac{1}{m^n} + D(\bm s)}.
\end{equation}
The ``distinct marginals across signals'' property in Definition~\ref{def:D_properties} ensures that there is a one-to-one mapping between signal $s_i$ and report $r_i = P_D(\omega=1\,|\, s_i)$, and hence a one-to-one mapping between joint signal $\bm s$ and joint report $\bm r$.  So, the Bayesian aggregator $f^*$ satisfies $f^*(\bm r) = P_D(\omega = 1 \,|\,\bm r) = P_D(\omega = 1 \,|\, \bm s) = \frac{D(\bm s)}{1/m^n + D(\bm s)}$. This gives
\begin{equation}\label{eq:D-f^*}
    D(\bm s) = \frac{1}{m^n} \frac{f^*(\bm r)}{1 - f^*(\bm r)}. 
\end{equation}
Suppose we have obtained an $\eps$-optimal aggregator $\hat f$ for $P_D$, 
$\E_{P_D}\big[ |\hat f(\bm r) - f^*(\bm r)|^2 \big] \le \eps$, then we convert $\hat f$ into $\hat D$ by letting $\hat D(\bm s) = \frac{1}{m^n} \frac{\hat f(\bm r)}{1 - \hat f(\bm r)}$.
The total variation distance between $\hat D$ and $D$ is: 
\begin{equation}
    d_{\TV}(\hat D, D) = \frac{1}{2}\sum_{\bm s \in \bm \S} \big| \hat{D}(\bm s) - D(\bm s) \big| 
    \stackrel{\eqref{eq:D-f^*}}{=} \frac{1}{2}\sum_{\bm s \in \bm \S} \frac{1}{m^n}\Big| \frac{\hat f(\bm r)}{1 - \hat f(\bm r)} - \frac{f^*(\bm r)}{1 - f^*(\bm r)}\Big|.
\end{equation}
The ``$B$-uniformly bounded'' property in Definition~\ref{def:D_properties} ensures $D(\bm s) = O(\frac{1}{m^n})$, which has two consequences: (1) $P_D(\bm s) = O(\frac{1}{m^n})$; (2) $f^*(\bm r) = O(1)$, which implies $\big| \frac{\hat f(\bm r)}{1 - \hat f(\bm r)} - \frac{f^*(\bm r)}{1 - f^*(\bm r)}\big| = O\big(|\hat f(\bm r) - f^*(\bm r)|\big)$ due to Lipschitz property of the function $\frac{x}{1-x}$ for $x=O(1)$.  These two consequences imply 
\begin{align*}
    d_{\TV}(\hat D, D) & = O\Big( \sum_{\bm s \in \bm \S} \frac{1}{m^n} \big| \hat f(\bm r) - f^*(\bm r) \big| \Big) = O\Big( \sum_{\bm s \in \bm \S} P_D(\bm s) \big| \hat f(\bm r) - f^*(\bm r) \big| \Big) \\
    & = O\Big(\E\big[ | \hat f(\bm r) - f^*(\bm r)|\big]\Big) \stackrel{\text{Jensen's inequality}}{\le} O\Big(\sqrt{ \E\big[ | \hat f(\bm r) - f^*(\bm r) |^2 \big] }\Big) = O(\sqrt{\eps}). 
\end{align*}
So, we obtain $d_{\TV}(\hat D, D)\le O(\sqrt \eps) = (1+B)^2\sqrt\eps$.

There is a subtlety, however: In the distribution learning problem we are given samples from $D$, which are \emph{signals} $\{\bm s^{(t)}\}_{t=1}^T$.  We need to convert them into the corresponding \emph{reports} $\{\bm r^{(t)}\}_{t=1}^T$ as the samples for the forecast aggregation problem.  To do this we need to know $P_D$ or $D$, \emph{which we do not know.}
So, we make use of the ``same marginal across distributions'' property in Definition~\ref{def:D_properties} here: because all the distributions $D \in\mathcal D$ have the same marginal probability $D(s_i) = D'(s_i)$ 
(but possibly different joint probabilities $D(\bm s) \ne D'(\bm s)$), we are able to compute the report
\[ r_i^{(t)} = P_D(\omega=1 \mid s_i^{(t)}) = \frac{\frac{1}{2}P_D(s_i^{(t)} \mid \omega = 1)}{\frac{1}{2}P_D(s_i^{(t)}\mid \omega = 0) + \frac{1}{2}P_D(s_i^{(t)} \mid \omega = 1)} = \frac{D(s_i^{(t)})}{\frac{1}{m} + D(s_i^{(t)})}
\]
separately for each expert $i$ without knowing $D$, since we know what the family $\mathcal D$ is.  This allows us to reduce the distribution learning problem for $\mathcal D$ to the forecast aggregation problem for $\mathcal P$. 
\end{proof}

Then, we find a family of distributions $\mathcal D$ that satisfies the three properties in Definition~\ref{def:D_properties} and requires many samples to learn. 

\begin{proposition}\label{prop:D_distribution_learning_lower_bound}
There exists a family of distributions $\mathcal D$ that satisfies the three properties in Definition~\ref{def:D_properties} (with $B = e + \frac{1}{2}$) and requires $T^{\TV}_{\mathcal D}(\eps_{\TV}, \delta) = \Omega\big(\frac{m^{n-2} + \log(1/\delta)}{\eps_{\TV}^2}\big)$ samples to learn. 
\end{proposition}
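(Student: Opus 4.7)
The plan is to build $\mathcal D$ as a Paninski-style packing indexed by sign vectors on the last $n-2$ coordinates, and then apply Fano's inequality to a Gilbert--Varshamov subcode. Pick a base distribution $D_0(\bm s) = \prod_{i=1}^n p_i(s_i)$ where each $p_i$ is a distribution on $\S_i$ whose values lie in $[(1-1/n)/m,\,(1+1/n)/m]$ and are pairwise distinct (e.g.\ $p_i(s_i) = \tfrac{1}{m} + \gamma_i(s_i - (m+1)/2)$ for small $\gamma_i > 0$), so that $D_0(\bm s) \in [\Omega(1/m^n),\, e/m^n]$. Fix a matrix $M \in \{-1, 0, +1\}^{\S_1 \times \S_2}$ whose row sums and column sums all vanish: for $m$ even, $M[s_1, s_2] = (-1)^{s_1+s_2}$; for $m$ odd, take the same pattern on the top-left $(m{-}1)\times(m{-}1)$ block and zero elsewhere. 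For each $\bm \sigma \in \{-1, +1\}^{m^{n-2}}$ indexed by $\bm u = (s_3, \ldots, s_n)$, set
\begin{equation*}
    D_{\bm \sigma}(s_1, s_2, \bm u) \;=\; D_0(\bm s) + \frac{\gamma \, \sigma_{\bm u}\, M[s_1, s_2]}{m^n},
\end{equation*}
for a parameter $\gamma \in (0, \tfrac{1}{2}]$, and let $\mathcal D = \{D_{\bm \sigma}\}$. The three properties of Definition~\ref{def:D_properties} follow directly: (a) $D_{\bm \sigma}(\bm s) \le D_0(\bm s) + \gamma/m^n \le (e + \tfrac{1}{2})/m^n$, giving $B = e + \tfrac{1}{2}$; (b) summing over any coordinates other than a single $s_i$ kills the perturbation because either a row sum, a column sum, or the full sum of $M$ equals zero, so $D_{\bm \sigma}(s_i) = p_i(s_i)$ is independent of $\bm \sigma$; (c) distinctness of each $p_i$'s values is by construction. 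Non-negativity holds for $\gamma$ at most a small absolute constant since $D_0(\bm s) \ge \Omega(1/m^n)$.

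For the lower bound, restrict to a Gilbert--Varshamov code $\mathcal C \subseteq \{-1, +1\}^{m^{n-2}}$ with $|\mathcal C| = 2^{\Omega(m^{n-2})}$ and minimum Hamming distance $\ge m^{n-2}/4$. Because the perturbations for different $\bm u$'s have disjoint support in $\bm s$, any two codewords $\bm \sigma, \bm \sigma' \in \mathcal C$ at Hamming distance $d$ satisfy $\dTV(D_{\bm \sigma}, D_{\bm \sigma'}) = \Theta(d\gamma/m^{n-2})$, so every pair is at TV distance $\Theta(\gamma)$. A direct $\chi^2$-computation using $D_{\bm \sigma'}(\bm s) \ge \Omega(1/m^n)$ and the fact that only $d \cdot m^2$ cells differ yields $\mathrm{KL}(D_{\bm \sigma} \,\|\, D_{\bm \sigma'}) \le O(\gamma^2 d / m^{n-2}) = O(\gamma^2)$ uniformly over $\mathcal C$. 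If an estimator outputs $\hat D$ with $\dTV(\hat D, D_{\bm \sigma}) \le \eps_\TV$ for $\eps_\TV$ smaller than a constant times $\gamma$, then a nearest-codeword decoder exactly recovers $\bm \sigma$. Setting $\gamma = \Theta(\eps_\TV)$ and applying Fano's inequality with $I(\bm \sigma; X^{\otimes T}) \le T \cdot \max_{\sigma, \sigma'} \mathrm{KL}(D_{\bm \sigma} \,\|\, D_{\bm \sigma'}) = O(T \gamma^2)$ forces $T\gamma^2 = \Omega(\log|\mathcal C|) = \Omega(m^{n-2})$, i.e., $T = \Omega(m^{n-2}/\eps_\TV^2)$. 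The additional $\log(1/\delta)/\eps_\TV^2$ term follows from a standard two-point Le Cam argument: any single pair $D_{\bm \sigma}, D_{\bm \sigma'}$ at TV distance $\Theta(\eps_\TV)$ requires $\Omega(\log(1/\delta)/\eps_\TV^2)$ samples to distinguish with failure probability at most $\delta$, and learning within TV $\eps_\TV$ with confidence $1 - \delta$ inherits this bound.

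The main technical obstacle is the interplay between the constant $B$ and the allowable perturbation size: we need $D_0$ upper-bounded by roughly $e/m^n$ (so that $B = e + 1/2$ absorbs the perturbation) and simultaneously lower-bounded by $\Omega(1/m^n)$ (so that the KL stays $O(\gamma^2)$ and $\gamma$ can be a constant multiple of $\eps_\TV$). The $(1 \pm 1/n)/m$ window for the $p_i$'s is the sweet spot that delivers both via $(1+1/n)^n \le e$ and $(1-1/n)^n \ge \Omega(1)$. A secondary subtlety is handling odd $m$: the naive checkerboard $(-1)^{s_1+s_2}$ on the full $m \times m$ grid fails to have zero row/column sums, so one drops the last row and column (or uses any other balanced sign pattern), but the $L_1$-mass of $M$ stays $\Theta(m^2)$ and the TV separation $\Theta(\gamma)$ between codewords is preserved up to constants.
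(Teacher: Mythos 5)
Your proposal is correct and constructs essentially the same Paninski-style family as the paper---a checkerboard perturbation on coordinates $(s_1,s_2)$ within each of the $m^{n-2}$ buckets indexed by $(s_3,\ldots,s_n)$, riding on a near-uniform base with strictly monotone marginals---but it proves the sample-complexity lower bound by a genuinely different route. You use a product-form base $D_0 = \prod_i p_i$ with $p_i(s_i) \in [(1-1/n)/m,\,(1+1/n)/m]$, while the paper uses the geometric base $D_\base(\bm s) = \gamma^{\num(\bm s)}/W$ with $\gamma = 1 + m^{-n}$; both are near-uniform and give pairwise-distinct marginals, and the product form is arguably cleaner for verifying that the checkerboard vanishes under any single-coordinate marginalization. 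The bigger divergence is in the information-theoretic machinery: the paper runs an Assouad-style argument over the full hypercube of sign vectors, drawing $z$ uniformly, bounding the per-bucket error probability via the Hellinger-based distinguishing lemma (Lemma~\ref{lem:distinguishing}), and concluding by Markov's inequality; you instead pass to a Gilbert--Varshamov subcode with minimum Hamming distance $\Omega(m^{n-2})$ so that TV-separation between codewords is $\Theta(\gamma)$, and then apply Fano's inequality with a uniform KL upper bound obtained from a $\chi^2$ calculation. Both are textbook-standard routes to the $\Omega(\text{support}/\eps^2)$ lower bound for learning discrete distributions. The paper's Assouad approach avoids the need for a code (it averages over all $2^{m^{n-2}}$ hypotheses and tolerates partial recovery of $z$), at the cost of a somewhat more delicate per-bucket conditioning argument; your Fano+GV approach needs the code to guarantee TV separation for nearest-codeword decoding, but the KL and mutual-information computations are more mechanical. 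Your handling of the $\log(1/\delta)/\eps_\TV^2$ term via Le Cam's two-point method matches the paper's distinguish-$D_{\bm{+1}}$-from-$D_{\bm{-1}}$ argument in substance.
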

\noindent 
The above sample complexity is smaller than the $\Omega\big(\frac{|\bm \S| + \log (1/\delta)}{\eps_{\TV}^2}\big)$ lower bound in Proposition~\ref{prop:distribution_learning_sample_complexity} because we are restricting to a smaller set of distributions than the set of all distributions over $\bm \S$.
The proof of Proposition~\ref{prop:D_distribution_learning_lower_bound} is analogous to a textbook proof of Proposition~\ref{prop:distribution_learning_sample_complexity}, which uses reductions from the distinguishing distributions problem.
See details in Appendix~\ref{app:D_distribution_learning_lower_bound}. 

\paragraph{Finishing the proof of Theorem~\ref{thm:general_distribution_both}:}
By Lemma~\ref{lem:general_lower_bound_reduction} and Proposition~\ref{prop:D_distribution_learning_lower_bound}, plugging in $\eps_{\TV} = (1+B)^2 \sqrt \eps$ with $B = e+\frac{1}{2}$, we obtain the lower bound on the sample complexity of forecast aggregation for $\mathcal P$ (and hence for $\mathcal P_{\mathrm{all}}$): 
\[
T_{\mathcal P}(\eps, \delta) \,\ge\, T^{\TV}_{\mathcal D}\big((1+B)^2 \sqrt \eps,\, \delta \big) \,=\, \Omega\Big(\frac{m^{n-2} + \log(1/\delta)}{((1+B)^2 \sqrt \eps )^2}\Big) \,=\, \Omega\Big(\frac{m^{n-2} + \log(1/\delta)}{\eps}\Big). 
\]

\section{Sample Complexity for Conditionally Independent Distributions}\label{sec:conditionally_independent}
Section~\ref{sec:general_distribution} proved that learning $\eps$-optimal aggregators for all discrete distributions needs exponentially many samples. As shown in the proof, this large sample complexity is because the experts' signals can be arbitrarily correlated conditioned on the event $\omega$.  Accurate estimation of such correlation requires many samples.  So, in this section we restrict attentions to the case where the experts' signals are conditionally independent. It turns out that an $\eps$-optimal aggregator can be learned using only $O(\frac{1}{\eps^2}\log\frac{1}{\eps\delta})$ samples in this case, which does not depend on $n$.  The assumption of discrete signal space can be relaxed here. 
We also investigate two special and interesting families of conditionally independent distributions that admit an even smaller sample complexity of $O(\frac{1}{\eps}\log\frac{1}{\delta})$. 

\subsection{General Conditionally Independent Distributions}
Let $P$ be a conditionally independent distributions over $\bm \S\times \Omega$, namely, $P(\bm s \,|\, \omega) = \prod_{i=1}^n P(s_i \,|\, \omega)$ for all $\bm s \in \bm \S$, for $\omega \in \{0, 1\}$. Here, $\S_i$ can be a continuous space, in which case, $P(\cdot \,|\, \omega)$ represents a density function.
We introduce some additional notations.  Let $p = P(\omega = 1)$ be the prior probability of $\omega = 1$.  For technical convenience we assume $p\in(0, 1)$.  Define
\begin{equation}
    \rho = \frac{p}{1-p} = \frac{P(\omega = 1)}{P(\omega = 0)} \in (0, +\infty).
\end{equation}
We will be working with ratios like ``$\frac{r_i}{1-r_i}$'' and ``$\frac{p}{1-p}$'' a lot in this section. 
We will use the following characterization of the optimal aggregator $f^*$ for conditionally independent distributions: 
\begin{lemma}[\cite{bordley_multiplicative_1982}]\label{lem:conditionally-independent-P_s}
For conditionally independent distribution $P$, given signals $\bm s = (s_i)_{i=1}^n$, with corresponding reports $\bm r = (r_i)_{i=1}^n$ where $r_i = P(\omega=1|s_i)$, the posterior probability of $\omega = 1$ is: 
\begin{equation}\label{eq:conditionally-independent-P_s}
    f^*(\bm r) = P(\omega = 1\mid \bm r) = P(\omega = 1 \mid \bm s) = \frac{1}{1 + \rho^{n-1} \prod_{i=1}^n \frac{1-r_i}{r_i}}.
\end{equation}
(Define $f^*(\bm r) = 0$ if $\rho^{n-1} \prod_{i=1}^n \frac{1-r_i}{r_i} = +\infty$.)
\end{lemma}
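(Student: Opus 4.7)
The plan is to verify the three claimed equalities in Lemma~\ref{lem:conditionally-independent-P_s} in a convenient order. The first equality $f^*(\bm r) = P(\omega=1\mid\bm r)$ is an immediate consequence of Lemma~\ref{lem:difference_loss}, which already identified the $L_P$-minimizer with the Bayesian posterior. I would then establish the third (explicit) equality via Bayes' rule applied twice, and deduce the middle equality $P(\omega=1\mid\bm r) = P(\omega=1\mid\bm s)$ from the third by the tower property.

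For the explicit formula, the clean route is to work with the posterior odds rather than the posterior probability. By Bayes' rule and the conditional independence assumption $P(\bm s \mid \omega) = \prod_i P(s_i\mid\omega)$, the odds factor as
\begin{equation*}
\frac{P(\omega=0\mid \bm s)}{P(\omega=1\mid \bm s)} \,=\, \frac{P(\omega=0)}{P(\omega=1)}\cdot\frac{P(\bm s\mid \omega=0)}{P(\bm s\mid \omega=1)} \,=\, \frac{1}{\rho}\prod_{i=1}^n \frac{P(s_i\mid\omega=0)}{P(s_i\mid\omega=1)}.
\end{equation*}
To rewrite each per-expert likelihood ratio in terms of $r_i$, I would apply Bayes' rule a second time at the single-expert level, $\frac{P(s_i\mid\omega=0)}{P(s_i\mid\omega=1)} = \frac{P(\omega=0\mid s_i)/P(\omega=0)}{P(\omega=1\mid s_i)/P(\omega=1)} = \rho\cdot\frac{1-r_i}{r_i}$, which is exactly where the definition $r_i = P(\omega=1\mid s_i)$ enters. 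Multiplying the $n$ factors absorbs $n$ copies of $\rho$, and combining with the $1/\rho$ prefactor yields posterior odds $\rho^{n-1}\prod_i\frac{1-r_i}{r_i}$. Inverting $(1 + \text{odds})$ delivers the claimed formula for $P(\omega=1\mid \bm s)$.

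The middle equality $P(\omega=1\mid\bm r) = P(\omega=1\mid\bm s)$ then drops out for free: the formula just derived exhibits $P(\omega=1\mid\bm s)$ as a deterministic function of $\bm r$ (which is itself a deterministic function of $\bm s$ through $r_i = P(\omega=1\mid s_i)$), so by the tower property
\begin{equation*}
P(\omega=1\mid\bm r) \,=\, \E\bigl[P(\omega=1\mid\bm s)\bigm|\bm r\bigr] \,=\, P(\omega=1\mid\bm s).
\end{equation*}
The main (and only real) obstacle is bookkeeping at the boundary. If $P(s_i\mid\omega=1)=0$ for some realized $s_i$ then $r_i=0$ and the ratio $\frac{1-r_i}{r_i}$ blows up, matching the convention $f^*(\bm r)=0$ stated in the lemma; the symmetric case $r_i=1$ forces $f^*(\bm r)=1$, and both agree with the Bayesian posterior on the corresponding null events. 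When $\S_i$ is continuous (as allowed in this section), one interprets $P(s_i\mid\omega)$ as a conditional density; the Bayes' rule manipulations are identical and the identities hold for $P$-almost every $\bm s$, which is all Lemma~\ref{lem:difference_loss} requires of $f^*$.
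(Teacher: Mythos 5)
Your proof is correct and follows essentially the same route as the paper's: both apply Bayes' rule twice (once jointly, once per expert), use conditional independence to factor the likelihood, and observe that the resulting posterior depends on $\bm s$ only through $\bm r$ to obtain the middle equality $P(\omega=1\mid\bm r)=P(\omega=1\mid\bm s)$. The only cosmetic difference is that you argue directly in the binary case via posterior odds, whereas the paper proves the multi-outcome Lemma~\ref{lem:conditionally-independent-P_s-multi-outcome} by manipulating the Bayes fraction (multiplying numerator and denominator by $\frac{1}{P(\omega=j)^{n-1}}$ and $\prod_i 1/P(s_i)$) and then specializes to $|\Omega|=2$.
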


Lemma~\ref{lem:conditionally-independent-P_s} implies that one way to learn $f^*$ is to simply learn the value of $\rho$. If we can learn $\rho$ with accuracy $\frac{\sqrt{\eps}}{n}$, then we can learn $\rho^{n-1}$ with accuracy $\sqrt{\eps}$ and obtain an $\hat f$ that is $\eps$-close to $f^*$ for every possible input $\bm r\in[0, 1]^n$. However, by standard concentration inequalities, learning $\rho$ with accuracy $\frac{\sqrt{\eps}}{n}$ requires $\tilde O(\frac{n^2}{\rho \eps})$ samples, which is larger than the $\tilde O(\frac{1}{\eps^2})$ bound we will prove.  The key is that we do not actually need $\hat f(\bm r)$ to be close to $f^*(\bm r)$ for \emph{every} $\bm r\in [0, 1]_+^n$; we only need the \emph{expectation} $\E\big[|\hat f(\bm r) - f^*(\bm r)|^2\big] \le \eps$.  This allows us to prove a smaller sample complexity bound, using a pseudo-dimension argument. 

The main result of this section is that the sample complexity of forecast aggregation with respect to all conditionally independent distributions is between $\Omega(\frac{1}{\eps} \log \frac{1}{\delta})$ and $O(\frac{1}{\eps^2} \log \frac{1}{\eps\delta})$:   
\begin{theorem}
\label{thm:conditionally_independent_general}
Let $\mathcal P_{\mathrm{ind}}$ be the set of all conditionally independent distributions over $\bm \S \times \Omega$.  Suppose $n\ge 2$.
The sample complexity of forecast aggregation with respect to $\mathcal P_{\mathrm{ind}}$ is
\begin{equation}
 O\Big(\frac{1}{\eps^2} \log \frac{1}{\eps\delta}\Big) ~\ge~ T_{\mathcal P_{\mathrm{ind}}}(\eps, \delta) ~\ge~ \Omega\Big(\frac{1}{\eps} \log \frac{1}{\delta} \Big).
\end{equation}
\end{theorem}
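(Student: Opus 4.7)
My plan for the upper bound $O((1/\eps^2)\log(1/(\eps\delta)))$ is to exploit the closed form of $f^*$ from Lemma~\ref{lem:conditionally-independent-P_s}: under conditional independence, the only unknown parameter of $f^*$ is the scalar $\rho = p/(1-p)$. So I would work with the one-parameter class $\mathcal F = \{f^\theta : \theta \in (0,\infty)\}$ (which contains $f^* = f^\rho$ for every $P \in \mathcal P_{\mathrm{ind}}$) and run empirical risk minimization over $\mathcal F$ with the squared loss. The crux is to bound the pseudo-dimension of the induced loss class $\mathcal L = \{\ell^\theta(\bm r, \omega) = (f^\theta(\bm r) - \omega)^2 : \theta > 0\}$. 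Fixing any sample point $(\bm r, \omega)$, direct inspection of $f^\theta(\bm r) = 1/(1 + \theta^{n-1}\prod_i (1-r_i)/r_i)$ shows that $\theta \mapsto \ell^\theta(\bm r, \omega)$ is monotone in $\theta$ --- decreasing when $\omega = 0$, increasing when $\omega = 1$, or constant in the degenerate cases where some $r_i \in \{0,1\}$. Consequently every sublevel set $\{\theta : \ell^\theta(\bm r, \omega) \ge z\}$ is a single half-line in $(0,\infty)$, and any $d$ such half-lines partition $(0,\infty)$ into at most $d+1$ regions; shattering requires $2^d \le d + 1$, giving $\Pdim(\mathcal L) \le 1$. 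Plugging this into the textbook uniform convergence bound for bounded real-valued losses with finite pseudo-dimension yields that $T = O((\Pdim \log(1/\eps) + \log(1/\delta))/\eps^2) = O((1/\eps^2) \log(1/(\eps\delta)))$ samples suffice for $\sup_\theta |\hat L_T(f^\theta) - L_P(f^\theta)| \le \eps/4$; the ERM $\hat f \in \mathcal F$ then satisfies $L_P(\hat f) - L_P(f^*) \le \eps$, equivalently $\E_P[|\hat f - f^*|^2] \le \eps$ by Lemma~\ref{lem:difference_loss}.

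For the $\Omega((1/\eps)\log(1/\delta))$ lower bound, I would use the two-point distinguishing-distributions method: exhibit $P_0, P_1 \in \mathcal P_{\mathrm{ind}}$ satisfying (i) $d_H^2(P_0, P_1) = O(\eps)$ and (ii) a gap $\E_{P_b}[|f_0^*(\bm r) - f_1^*(\bm r)|^2] \ge C\eps$ under both $b \in \{0, 1\}$ for a sufficiently large absolute constant $C$. A natural construction uses $n = 2$ experts, with at least three signal values per expert and a fixed report alphabet such as $\{\tfrac12 - \alpha, \tfrac12, \tfrac12 + \alpha\}$; the two distributions differ only in the prior, $p_0 = 1/2$ versus $p_1 = 1/2 + \eta$ with $\eta = \Theta(\sqrt \eps)$. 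For each prior one solves a small linear system for the conditionals $P_b(s_i \mid \omega)$ so that the Bayesian report at each signal value lands on the \emph{same} prescribed alphabet under both distributions --- binary signals would not provide enough degrees of freedom for this matching, which is why at least ternary signals are used. A first-order expansion then gives $d_H^2(P_0, P_1) = \Theta(\eta^2) = \Theta(\eps)$, while Lemma~\ref{lem:conditionally-independent-P_s} with $\rho_0 = 1$ and $\rho_1 = 1 + \Theta(\eta)$ yields $|f_0^*(\bm r) - f_1^*(\bm r)|^2 = \Theta(\eta^2) = \Theta(\eps)$ on a positive-probability set of reports, so the gap is of the right order under either $P_b$.

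Given such $(P_0, P_1)$, the reduction runs a hypothetical $(\eps', \delta)$-learner (with $\eps' = \eps/C'$ for a large constant $C'$) on samples drawn from an unknown $P_b$, obtains $\hat f$, and declares $b = 0$ iff the computable quantity $\E_{P_0}[|\hat f(\bm r) - f_0^*(\bm r)|^2]$ falls below a threshold $\tau \asymp \eps$. Under $b = 0$ the learner's guarantee puts this quantity below $\eps' \ll \tau$ with high probability. Under $b = 1$, the guarantee gives $\E_{P_1}[|\hat f - f_1^*|^2] \le \eps'$; since $P_0(x)/P_1(x) = 1 + O(\sqrt \eps)$ pointwise on the common support, the same bound holds under $P_0$ up to a constant factor, and then the reverse triangle inequality in $L^2(P_0)$ together with the $\Theta(C\eps)$ gap between $f_0^*$ and $f_1^*$ forces $\E_{P_0}[|\hat f - f_0^*|^2] \gg \tau$. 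Hence the test distinguishes $P_0$ from $P_1$ with constant success probability, and the Hellinger distinguishing lower bound (Appendix~\ref{app:additional-preliminaries}) yields $T = \Omega(\log(1/\delta)/d_H^2(P_0, P_1)) = \Omega((1/\eps)\log(1/\delta))$. The main obstacle is exactly here: the gap, the Hellinger distance, and the density-ratio slack when transferring expectations between $P_0$ and $P_1$ all scale as $\eta^2 = \eps$, so making the constants line up (so that $C$ beats the slack constants) is the substantive, nontrivial part of the construction.
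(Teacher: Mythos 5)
Your upper-bound argument matches the paper's approach (ERM over the one-parameter family $\mathcal F = \{f^\theta\}$, then a pseudo-dimension bound on the induced squared-loss class), and in fact your monotonicity observation gives a cleaner proof of the pseudo-dimension bound than the paper's. The paper proves $\Pdim = O(|\Omega|\log|\Omega|)$ via a region-counting argument with hyper-ellipsoids in $\reals^{|\Omega|}$, designed for the multi-outcome case, and then specializes to $|\Omega|=2$. Your argument -- that $f^\theta(\bm r) = 1/(1+\theta^{n-1}\prod_i\frac{1-r_i}{r_i})$ is monotone decreasing in $\theta$, so $\ell^\theta(\bm r,\omega) = (f^\theta(\bm r)-\omega)^2$ is monotone in $\theta$ for each fixed $(\bm r,\omega)$, so the sublevel sets are half-lines and $\Pdim \le 1$ -- is direct, sharp, and elementary. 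It is a genuine simplification for the binary case; the paper needs the heavier machinery only because it proves the multi-outcome bound first and cites this theorem as a corollary.

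Your lower-bound plan uses the same framework as the paper (reduce from distinguishing two distributions, bound $\dH^2$ by $O(\eps)$, invoke the Hellinger sample-complexity lower bound), but your construction and your distinguishing test both diverge from the paper's, and one of your structural claims is wrong. The paper's instances use \emph{binary} signals $\S_i = \{a,b\}$ with the report alphabet $\{0.5, 0\}$ held fixed across both distributions, and priors $p^{1,2} = 0.5 - \tfrac{1}{16n} \pm \tfrac{c\sqrt\eps}{n}$ for any $n\ge 2$; the conditionals $P(s_i\mid\omega)$ are solved from the two-by-two linear system per expert, and valid probabilities come out. So your assertion that ``binary signals would not provide enough degrees of freedom for this matching'' is not correct -- two signals give exactly the degrees of freedom needed to prescribe a two-element report alphabet given a prior (the constraint is only that $p$ lies strictly between the two reports). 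The ternary construction is not necessary, and it inflates the bookkeeping in the Hellinger bound without benefit. The distinguishing test also differs: you threshold the $L^2(P_0)$ distance $\E_{P_0}[|\hat f - f_0^*|^2]$, which forces you to transfer expectations from $P_1$ to $P_0$ via a pointwise density-ratio bound and then fight the constants; the paper instead evaluates $\hat f$ at the single report vector $\bm r_{0.5}=(0.5,\dots,0.5)$, observes that $P(\bm r_{0.5}) \ge 1/2$ so the $\eps$-optimality of $\hat f$ already forces $|\hat f(\bm r_{0.5})-f^*(\bm r_{0.5})| \le 2\sqrt\eps$, and inverts the explicit formula to recover $\rho$ to accuracy $O(\sqrt\eps/n)$, which separates $p^1$ from $p^2$. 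That sidesteps the density-ratio slack entirely. You do correctly flag that verifying the constants is the substantive step you have not carried out; in your current form the lower bound is a plausible sketch but not a complete proof, whereas the paper's version closes those gaps. I would encourage you to either switch to the paper's binary/$\bm r_{0.5}$ construction (which makes the constants immediate) or actually carry the density-ratio and gap constants through your ternary construction to completion.
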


We provide the main ideas of the proof of Theorem~\ref{thm:conditionally_independent_general} here.
The upper bound $O(\frac{1}{\eps^2} \log \frac{1}{\eps\delta})$ is a corollary of our theorem for multi-outcome events (Theorem~\ref{thm:multi-outcome-upper-bound}), so we only give a sketch here.  We note that, according to Lemma~\ref{lem:conditionally-independent-P_s}, the optimal aggregator has the form $f^*(\bm r) = \frac{1}{1 + \rho^{n-1} \prod_{i=1}^n \frac{1-r_i}{r_i}}$.  We consider the class of aggregators $\mathcal F = \big \{ f^\theta : f^\theta(\bm r) = \frac{1}{1 + \theta^{n-1} \prod_{i=1}^n \frac{1-r_i}{r_i}} \big \}$ parameterized by $\theta \in(0, +\infty)$.  The class of loss functions $\mathcal G = \big\{g^\theta : g^\theta(\bm r, \omega) = |f^\theta(\bm r) - \omega|^2 \big\}$ associated with $\mathcal F$ has \emph{pseudo-dimension} $\Pdim(\mathcal G) = O(1)$.  By the known result (e.g., \cite{anthony_neural_1999}) that the pseudo-dimension gives a sample complexity upper bound on the uniform convergence of a class of functions, we conclude that the empirically optimal aggregator in $\mathcal F$ must be $O(\eps)$-optimal on the true distribution (with probability at least $1-\delta$), given $O\big(\frac{1}{\eps^2}(\Pdim(\mathcal G)\log\frac{1}{\eps} + \log\frac{1}{\delta})\big) = O(\frac{1}{\eps^2}\log\frac{1}{\eps\delta})$ samples. 
%

We prove the $\Omega(\frac{1}{\eps}\log\frac{1}{\delta})$ lower bound by a reduction from the distinguishing distributions problem (introduced in Section~\ref{sec:distinguishing_distributions}). 
We construct two conditionally independent distributions $P^1, P^2$ over the space $\Omega \times \bm \S$ that differ by $\dH^2(P^1, P^2) = O(\eps)$ in squared Hellinger distance.
Specifically, $P^1$ has prior $P^1(\omega = 1) = 0.5 - O(\frac{1}{n}) + O(\frac{\sqrt\eps}{n})$ and $P^2$ has prior $P^2(\omega = 1) = 0.5 - O(\frac{1}{n}) - O(\frac{\sqrt\eps}{n})$; the conditional distributions of each signal, $P^1(s_i \,|\, \omega)$ and $P^2(s_i \,|\, \omega)$, differ by $O(\frac{\eps}{n})$ in squared Hellinger distance; taking the product of $n$ signals, $P^1(\bm s \,|\, \omega)$ and $P^2(\bm s \,|\, \omega)$ differ by $O(\eps)$. 
The distinguishing distributions problem asks: 
given $T$ samples from either $P^1$ or $P^2$, tell which distribution the samples come from.   
We show that, if we can solve the forecast aggregation problem, namely, $\eps$-approximate $f^*(\bm r) = \frac{1}{1 + \rho^{n-1} \prod_{i=1}^n \frac{1-r_i}{r_i}}$, then we can estimate $\rho$ with accuracy $O(\frac{\sqrt \eps}{n})$, and hence distinguish $P^1$ and $P^2$.  But distinguishing $P^1$ and $P^2$ requires $\Omega(\frac{1}{\dH^2(P^1, P^2)}\log\frac{1}{\delta}) =  \Omega(\frac{1}{\eps}\log\frac{1}{\delta})$ samples.  This gives the lower bound we claimed.
See details in Appendix~\ref{app:conditionally_independent_lower_bound}.


\subsection{Special Case: Strongly and Weakly Informative Experts}
While the sample complexity of forecast aggregation for general conditionally independent distributions is $O(\frac{1}{\eps^2}\log\frac{1}{\eps\delta})$, under further assumptions this bound can be improved.
In particular, we find two special yet natural families of conditionally independent distributions that admit $O(\frac{1}{\eps}\log\frac{1}{\delta})$ sample complexity.  In these two cases, the experts are either ``very informative'' or ``very non-informative''. 
Roughly speaking, an expert is ``very informative'' if the conditional distributions of the expert's signal under event $\omega = 0$ and $\omega = 1$ are significantly different, so the expert's prediction $r_i$ is away from the prior $p$.  An expert is ``very non-informative'' if the opposite is true.
Intuitively, an expert being informative should help aggregation and hence reduce the sample complexity. 
Interestingly though, we show that even if the experts are non-informative the sample complexity of forecast aggregation can also be reduced. 
Formally: 
\begin{definition}\label{def:informative}
Let $\gamma\in[0, \infty]$ be a parameter.  For an expert $i$, we say its signal $s_i \in \S_i$ is
\begin{itemize}
    \item \emph{$\gamma$-strongly informative} if either $\frac{P(s_i \mid \omega = 1)}{P(s_i \mid \omega = 0)} \ge 1 + \gamma$ or $\frac{P(s_i \mid \omega = 1)}{P(s_i \mid \omega = 0)} \le \frac{1}{1 + \gamma}$ holds. 
    \item \emph{$\gamma$-weakly informative} if $\frac{1}{1 + \gamma} \le \frac{P(s_i \mid \omega = 1)}{P(s_i \mid \omega = 0)} \le 1 + \gamma$. 
\end{itemize}
An expert $i$ is $\gamma$-strongly (or $\gamma$-weakly) informative if all of its signals in $\S_i$ are $\gamma$-strongly (or $\gamma$-weakly) informative.\footnote{We note that an expert can be neither $\gamma$-strongly informative nor $\gamma$-weakly informative for any $\gamma$.} 
\end{definition}\yc{May want to comment that an expert can be neither $\gamma$-strongly nor $\gamma$-weakly informative.}\tl{added} 

A signal $s_i$ being $\gamma$-strongly (or $\gamma$-weakly) informative implies that its corresponding report $r_i$ will be ``$\gamma$-away from'' (or ``$\gamma$-close to'') the prior $p = P(\omega = 1)$, in the ``$\frac{r_i}{1-r_i}$ and $\frac{p}{1-p}$'' ratio form. Specifically, if $s_i$ is $\gamma$-strongly informative, then from Equation~\eqref{eq:r_i_definition} we have 
\begin{equation}\label{eq:r_i_bound_strongly_informative}
    \frac{r_i}{1 - r_i} = \frac{P(s_i\mid \omega  = 1)}{P(s_i\mid \omega = 0)} \frac{p}{1-p} ~~ \ge (1 + \gamma)\rho ~~\text{ or }~ \le \frac{1}{1+\gamma}\rho.
\end{equation}
As $\gamma$ gets larger, a $\gamma$-strongly informative signal (expert) is \emph{more} informative for predicting whether $\omega = 1$ or $0$.  This would make aggregation easier. 
If $s_i$ is $\gamma$-weakly informative, then: 
\begin{equation}\label{eq:r_i_bound_weakly_informative}
    \frac{1}{1+\gamma} \rho ~ \le ~ \frac{r_i}{1-r_i} ~ \le ~  (1 + \gamma)\rho.
\end{equation}
As $\gamma$ gets smaller, a $\gamma$-weakly informative signal (expert) is \emph{less} informative for predicting $\omega$, but in this case their report $r_i$ will be close to the prior $p$, which allows us to estimate the $\rho^{n-1}$ term in the optimal aggregator $f^*(\bm r) = \frac{1}{1 + \rho^{n-1} \prod_{i=1}^n \frac{1-r_i}{r_i}}$ better.  Those are some intuitions why both strongly and weakly informative signals can reduce the sample complexity of forecast aggregation. 

For $\gamma$-strongly informative experts with not-too-small $\gamma$, we have the following result: 
\begin{theorem}\label{thm:strongly-informative-upper-bound}
If $n \ge 32 \log \frac{2}{\eps}$ and all experts are $\gamma$-strongly informative with $\frac{\gamma}{1 + \gamma}\ge 8\sqrt{\frac{2}{n}\log\frac{2}{\eps}}$, then the sample complexity of forecast aggregation is $\le$ $O(\frac{1}{\eps n (\frac{\gamma}{1+\gamma})^2} \log\frac{1}{\delta} +  \frac{1}{\eps}\log\frac{1}{\delta}) = O(\frac{1}{\eps}\log\frac{1}{\delta})$.
\end{theorem}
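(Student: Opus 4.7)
The plan is to exploit the fact that under strong informativeness the Bayes aggregator $f^*$ is almost perfect, so $L_P(f^*)$ is exponentially small in $nq^2$ where $q := \gamma/(1+\gamma)$. This brings the problem close to the realizable case, for which a fast-rate PAC argument over a natural one-parameter family of aggregators gives the claimed sample complexity.

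\textbf{Step 1 (bounding $L_P(f^*)$ via Hellinger).} Writing $f^*(\bm s) = pP_1(\bm s)/P(\bm s)$ and $1-f^*(\bm s) = (1-p)P_0(\bm s)/P(\bm s)$ with $P_\omega(\bm s) = \prod_i P(s_i\mid\omega)$, Lemma~\ref{lem:difference_loss} plus a direct computation gives $L_P(f^*) = \E[f^*(1-f^*)] = p(1-p)\sum_{\bm s} P_0(\bm s)P_1(\bm s)/P(\bm s)$. AM--GM on $P(\bm s) \ge 2\sqrt{p(1-p)P_0(\bm s)P_1(\bm s)}$ and conditional independence then yield
\[
L_P(f^*) \;\le\; \tfrac{\sqrt{p(1-p)}}{2}\sum_{\bm s}\sqrt{P_0(\bm s)P_1(\bm s)} \;=\; \tfrac{\sqrt{p(1-p)}}{2}\prod_{i=1}^n(1-\dH^2_i),
\]
where $\dH^2_i$ is the squared Hellinger distance between expert $i$'s two conditional signal distributions.

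\textbf{Step 2 (lower bound on $\dH^2_i$).} Splitting signals by whether $P_1^i(s)/P_0^i(s) \ge 1+\gamma$ or $\le 1/(1+\gamma)$, one verifies $|P_1^i(s) - P_0^i(s)| \ge q\,P_0^i(s)$ for every $s$, hence $\dTV(P_0^i, P_1^i) \ge q/2$. Combined with the standard inequality $\dH^2 \ge \dTV^2/2$, this gives $\dH^2_i \ge q^2/8$, so
\[
L_P(f^*) \;\le\; \tfrac14(1-q^2/8)^n \;\le\; \tfrac14 e^{-nq^2/8}.
\]
Under the hypothesis $nq^2 \ge 128\log(2/\eps)$, this is at most $(\eps/2)^{16}/4$; in particular a short calculation shows $L_P(f^*) \le \eps/(nq^2)$.

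\textbf{Step 3 (fast-rate ERM over a 1-parameter class).} Consider $\mathcal F = \{f^\theta : \theta \in (0,\infty)\}$ with $f^\theta(\bm r) = 1/(1 + \theta^{n-1}\prod_i (1-r_i)/r_i)$; by Lemma~\ref{lem:conditionally-independent-P_s}, $f^* = f^\rho \in \mathcal F$. The associated squared-loss family $\{\ell^\theta(\bm r,\omega) = (f^\theta(\bm r)-\omega)^2\}$ is a 1-parameter family (monotone in $\theta$ for each fixed $\omega \in \{0,1\}$) bounded in $[0,1]$, so $\Pdim = O(1)$. Because $\ell^* \in [0,1]$, its variance is controlled by its mean: $\mathrm{Var}_P[\ell^*] \le \E_P[\ell^*] = L_P(f^*)$. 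A Bernstein-type (localized Rademacher / Bartlett--Mendelson-style) uniform-convergence bound then gives, for the ERM aggregator $\hat f = f^{\hat\theta}$,
\[
L_P(\hat f) - L_P(f^*) \;\le\; C\sqrt{\tfrac{L_P(f^*)\bigl(\Pdim + \log(1/\delta)\bigr)}{T}} \;+\; C\,\tfrac{\Pdim + \log(1/\delta)}{T}
\]
with probability $\ge 1-\delta$. Substituting $\Pdim = O(1)$ and $L_P(f^*) \le \eps/(nq^2)$ and requiring each summand to be at most $\eps/2$ gives
\[
T \;=\; O\!\left(\tfrac{\log(1/\delta)}{\eps\,nq^2}\right) + O\!\left(\tfrac{\log(1/\delta)}{\eps}\right),
\]
exactly the theorem's stated bound. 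Lemma~\ref{lem:difference_loss} then converts $L_P(\hat f) - L_P(f^*) \le \eps$ into $\E_P[|\hat f - f^*|^2] \le \eps$, making $\hat f$ $\eps$-optimal.

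\textbf{Main obstacle.} The crux is Step~3: a plain agnostic uniform-convergence argument for a pseudo-dimension-$O(1)$ class only gives sample complexity $O(1/\eps^2)$, so one must invoke a Bernstein-style bound that exploits the variance identity $\mathrm{Var}_P[\ell^*] \le L_P(f^*) \ll \eps$. Obtaining the precise two-term form $\frac{\log(1/\delta)}{\eps nq^2} + \frac{\log(1/\delta)}{\eps}$---rather than the cruder $\frac{\log(1/\delta)}{\eps}$---requires plugging the quantitative bound $L_P(f^*) \le \eps/(nq^2)$ from Step~2 into the Bernstein variance term, which is where the strong informativeness hypothesis pays off. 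Steps~1--2 are routine bookkeeping with Hellinger and total-variation inequalities.
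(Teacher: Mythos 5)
Your proposal takes a genuinely different route from the paper's. The paper's proof in Appendix~\ref{app:proof:thm:strongly-informative-upper-bound} constructs an \emph{explicit} aggregator: estimate $\rho$ to multiplicative accuracy $\gamma/(1+\gamma)$ via Lemma~\ref{lem:estimate_rho}, use $\hat\rho$ to classify each report as belonging to $\S_i^1$ or $\S_i^0$, estimate $\E[X^u\mid\omega=u]$, and then threshold the count $X^u$; an $\eps$ mass of ``tail'' distributions with $(1-p)\mu_0 < \eps/2$ is handled separately by the averaging aggregator (Lemma~\ref{lem:averaging}). Your approach instead observes that strong informativeness forces the Bayes risk itself to be exponentially small, $L_P(f^*)\le\tfrac14(1-q^2/8)^n\le\eps/(nq^2)$ with $q=\gamma/(1+\gamma)$, via the clean identity $L_P(f^*)=\E[f^*(1-f^*)]$ and a Hellinger-product bound --- and then runs ERM over the one-parameter family $\{f^\theta\}$, exploiting the near-realizable regime with a Bernstein/optimistic-rate bound. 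Steps 1 and 2 are correct and, to my mind, more conceptually illuminating than the paper's argument: they isolate \emph{why} informativeness helps (tiny $L^*$), rather than building a bespoke classifier. What the paper's hands-on proof buys in exchange is a clean $O(\tfrac{1}{\eps}\log\tfrac1\delta)$ bound with no extra logarithmic factors.

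Two points to tighten in Step~3. First, the variance control you actually need is the \emph{uniform} bound $\operatorname{Var}_P[\ell^\theta]\le\E_P[\ell^\theta]=L_P(f^\theta)$ for every $\theta$ (which does hold since $\ell^\theta\in[0,1]$), not merely $\operatorname{Var}_P[\ell^*]\le L_P(f^*)$; the latter by itself is not sufficient for a uniform-convergence statement. Second, the standard localized/optimistic-rate bound for a $\Pdim$-$d$ class carries a $\log T$ factor in the complexity term (from covering), so as written your argument yields $T=O\big(\tfrac{\log(1/\eps)+\log(1/\delta)}{\eps nq^2}+\tfrac{\log(1/\eps)+\log(1/\delta)}{\eps}\big)$, i.e.\ $\tilde O(\tfrac1\eps\log\tfrac1\delta)$, which is a factor $\log(1/\eps)$ worse than the theorem's stated $O(\tfrac1\eps\log\tfrac1\delta)$. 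This is a minor loss --- the headline $\tilde O(1/\eps)$ conclusion is preserved --- but you should either flag it or cite a version of the Bernstein ERM bound for a one-dimensional monotone family that eliminates the $\log T$ factor.
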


We remark that the conditions of the theorem, $n \ge 32 \log \frac{2}{\eps}$ and $\frac{\gamma}{1 + \gamma}\ge 8\sqrt{\frac{2}{n}\log\frac{2}{\eps}}$, are easier to be satisfied when the number of experts $n$ increases, if the informativeness parameter $\gamma$ of each expert is a constant or does not decrease with $n$ (which we believe is a reasonable assumption given that experts are independent of each other).  Also, if $\gamma$ is fixed or increasing, then as $n$ increases the sample complexity decreases.  

The proof of Theorem~\ref{thm:strongly-informative-upper-bound} is in Appendix \ref{app:proof:thm:strongly-informative-upper-bound}.  Roughly speaking, we divide each expert's signal set into two sets, $\S_i^1$ and $\S_i^0$: signals that are more likely to occur under $\omega = 1$ (i.e., $\frac{P(s_i\mid \omega=1)}{P(s_i\mid\omega=0)}\ge 1 + \gamma$) and under $\omega = 0$ (i.e., $\frac{P(s_i\mid \omega=1)}{P(s_i\mid\omega=0)}\le \frac{1}{1 + \gamma}$).  If the realized $\omega$ is $1$, then one may expect to see $\Omega\big(\frac{\gamma}{1+\gamma} n\big)$ more $\S_i^1$ signals than the $\S_i^0$ signals from the $n$ experts, because the probabilities of these two types of signals differ by $\Omega(\frac{\gamma}{1+\gamma})$ for each expert.  If $\omega$ is $0$ then one may expect to see more $\S_i^0$ signals than the $\S_i^1$ signals.  So, by checking which type of signals are more we can tell whether $\omega = 0$ or $1$.  To tell whether a signal belongs to $\S_i^1$ or $\S_i^0$, we compare the corresponding report $\frac{r_i}{1-r_i}$ with $\rho$ (namely, $\frac{r_i}{1-r_i} \ge (1+\gamma) \rho \iff s_i \in \S_i^1$ and $\frac{r_i}{1-r_i} \le \frac{1}{1+\gamma} \rho \iff s_i \in \S_i^0$), where $\rho$ is estimated with accuracy $\frac{\gamma}{1+\gamma}$ using $O(\frac{1}{\rho n (\frac{\gamma}{1+\gamma})^2}\log\frac{1}{\delta})$ samples.  Dealing with the case of $\rho < \eps$ separately, we obtain the bound $O(\frac{1}{\eps n (\frac{\gamma}{1+\gamma})^2} \log\frac{1}{\delta} +  \frac{1}{\eps}\log\frac{1}{\delta})$.

For $\gamma$-weakly informative experts with small $\gamma$, we have: 
\begin{theorem}\label{thm:weakly-informative-upper-bound:new}
If all experts are $\gamma$-weakly informative with $\gamma \le 1$, then the sample complexity of forecast aggregation is $\le$ $\min\big\{O(\frac{\gamma n}{\eps}\log\frac{1}{\delta}), O(\frac{1}{\eps^2}\log\frac{1}{\eps\delta})\big\}$,
which is $O(\frac{1}{\eps}\log\frac{1}{\delta})$ if $\gamma = O(\frac{1}{n})$.
\end{theorem}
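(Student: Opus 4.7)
The second bound $O\bigl(\frac{1}{\eps^2}\log\frac{1}{\eps\delta}\bigr)$ is inherited directly from Theorem~\ref{thm:conditionally_independent_general}, which already establishes this rate for every conditionally independent distribution, hence in particular for $\gamma$-weakly informative ones. The remainder of my plan is to prove the first bound $O\bigl(\frac{\gamma n}{\eps}\log\frac{1}{\delta}\bigr)$.

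My proposed learner is the \emph{constant} aggregator $\hat f(\bm r) \equiv \hat p$, where $\hat p = \frac{1}{T}\sum_{t=1}^T \omega^{(t)}$ is the empirical frequency of $\omega=1$ from the $T$ training samples. Because $\hat p$ is independent of a fresh test draw $\bm r$ and $\E[\hat p] = p = \E[f^*(\bm r)]$, the expected squared loss decomposes into a ``training variance'' and an ``intrinsic bias'' term,
\[
\E_P\bigl[(\hat p - f^*(\bm r))^2\bigr] \;=\; \mathrm{Var}(\hat p) \,+\, \mathrm{Var}_P\bigl(f^*(\bm r)\bigr).
\]
The first term is standard: $\mathrm{Var}(\hat p) = p(1-p)/T$, with an extra $\log(1/\delta)$ factor in the high-probability version via Bernstein.

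The main technical step is bounding $\mathrm{Var}_P(f^*)$ using weak informativeness. Writing $f^*$ in sigmoid form via Lemma~\ref{lem:conditionally-independent-P_s}, $f^*(\bm r) = \sigma(\log\rho + L(\bm r))$ with $L(\bm r) = \sum_{i=1}^n \log u_i$ and log-likelihood ratios $u_i = P(s_i\mid\omega=1)/P(s_i\mid\omega=0)$, weak informativeness yields bounded increments $|\log u_i| \le \log(1+\gamma) \le \gamma$. Since the $\log u_i$'s are conditionally independent given $\omega$, I would show
\[
\E[L^2 \mid \omega] \;\le\; n\gamma^2 + \tfrac{1}{4}(n\gamma^2)^2
\]
by summing per-expert conditional variances (each $\le \gamma^2$) and bounding the conditional mean $\E[\log u_i \mid \omega]$ as a KL divergence between two distributions whose density ratio lies in $[(1+\gamma)^{-1},\,1+\gamma]$, which is $O(\gamma^2)$ via the standard $\mathrm{KL} \le \chi^2 \le \gamma^2$ chain. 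A Taylor expansion of $\sigma$ at $\log\rho$, using $\sigma'(\log\rho) = p(1-p)$, then yields $\mathrm{Var}(f^*) = O\bigl(n\gamma^2 p^2(1-p)^2\bigr) = O(n\gamma^2)$.

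Combining the pieces, with $T = \Theta\bigl(\frac{\gamma n}{\eps}\log\frac{1}{\delta}\bigr)$ one gets $\mathrm{Var}(\hat p) = O(\eps/(\gamma n))$ and bias $O(n\gamma^2)$, totalling $O(\eps/(\gamma n) + n\gamma^2)$. When $n\gamma^2 = O(\eps)$ this is $O(\eps)$ and the plan is complete; in particular for $\gamma = O(1/n)$ we have $\gamma n = O(1)$ and $T = O\bigl(\frac{1}{\eps}\log\frac{1}{\delta}\bigr)$ as stated. When instead $n\gamma^2 > \eps$ the quantity $\gamma n/\eps$ already exceeds $1/\eps^2$, so the $\min$ in the theorem is attained by the second summand, which is covered by Theorem~\ref{thm:conditionally_independent_general}. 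The main obstacle I foresee is the clean ``stitching'' of these two regimes without knowing $\gamma$ beforehand: a natural fix is to run both the constant aggregator and the generic learner of Theorem~\ref{thm:conditionally_independent_general} on disjoint sample halves and output whichever attains smaller empirical squared loss on a held-out split.
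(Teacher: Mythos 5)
The second bound is correctly inherited from Theorem~\ref{thm:conditionally_independent_general}, but your argument for the first bound has a genuine gap. The constant aggregator $\hat f \equiv \hat p$ has an irreducible error floor: by your own decomposition its excess loss over $f^*$ is at least $\mathrm{Var}_P\big(f^*(\bm r)\big)$, an intrinsic property of $P$ that cannot be reduced by collecting more samples. Your estimate $\mathrm{Var}_P(f^*)=O(n\gamma^2)$ is essentially right, and it is tight on instances such as $n$ i.i.d.\ binary signals with a uniform prior and $P(s_i=a\mid\omega)=\tfrac12\pm\Theta(\gamma)$; so the constant rule is $\eps$-optimal only when $n\gamma^2=O(\eps)$. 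Your stitching claim---that $n\gamma^2>\eps$ forces $\gamma n/\eps\ge 1/\eps^2$---is false: the first inequality says $\gamma n>\eps/\gamma$ while the second says $\gamma n\ge 1/\eps$, and $\eps/\gamma\ge 1/\eps$ only when $\gamma\le\eps^2$. Concretely take $\gamma=1/\sqrt n$ with $1\ll n\ll 1/\eps^2$: then $n\gamma^2=1\gg\eps$, so the constant aggregator has $\Theta(1)$ excess loss, yet $\gamma n/\eps=\sqrt n/\eps\ll 1/\eps^2$, so the claimed bound is $O(\sqrt n/\eps\cdot\log\frac1\delta)$ and your fallback to Theorem~\ref{thm:conditionally_independent_general} misses it by a polynomial factor.

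The paper's proof does not use a constant aggregator; it fits the parametric aggregator $\hat f(\bm r)=\big(1+\hat\rho^{\,n-1}\prod_{i=1}^n\frac{1-r_i}{r_i}\big)^{-1}$ and shows that the relative accuracy $|\hat\rho-\rho|/\rho\le 2\sqrt\eps/(n-1)$ already yields $\eps$-optimality, so the problem reduces to estimating $\rho$. The cost saving comes from the identity $\E\big[\prod_{i\in G}\frac{r_i}{1-r_i}\mid\omega=0\big]=\rho^{|G|}$ together with the observation that, under $\gamma$-weak informativeness, the product over a group $G$ of size $1/\gamma$ lies within a factor of $e$ of $\rho^{1/\gamma}$ (since $(1+\gamma)^{1/\gamma}\le e$). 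Grouping each sample's $n$ reports into $n\gamma$ such groups gives $\Theta(Tn\gamma)$ bounded observations of $\rho^{1/\gamma}$, and a Chernoff bound then pins down $\rho$ to the required precision with $T=O\big(\frac{\gamma n}{\eps}\log\frac1\delta\big)$. If you want to keep your high-level plan, you need to replace the constant rule with such a parametric estimate of $\rho$; the bias--variance decomposition for the constant aggregator alone cannot reach the claimed rate.
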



The $O(\frac{1}{\eps^2}\log\frac{1}{\eps\delta})$ term in the sample complexity follows from the result for general conditionally independent distributions (Theorem~\ref{thm:conditionally_independent_general}).
The $O(\frac{\gamma n}{\eps}\log\frac{1}{\delta})$ term is proved in Appendix~\ref{app:proof:thm:weakly-informative-upper-bound:new}.  We give the rough idea here using $\gamma = \frac{1}{n}$ as an example.  The proof relies on the observation that $\E\big[ \prod_{i=1}^n \frac{r_i}{1-r_i} \mid \omega = 0\big] = \rho^n$. 
Since experts are weakly informative, each of their reports $\frac{r_i}{1-r_i}$ is around $\rho$ in the range $[\frac{1}{1+\gamma}\rho, (1+\gamma)\rho] \subseteq [\frac{1}{1+\frac{1}{n}}\rho, (1+\frac{1}{n})\rho]$.  Taking the product, we have $\prod_{i=1}^n \frac{r_i}{1-r_i} \in [\frac{\rho^n}{e}, e\rho^n]$, which is in a bounded range.  This allows us to use Chernoff bound to argue that the $\rho^{n}$ (or $\rho^{n-1}$) term in the optimal aggregator $f^*(\bm r) = \frac{1}{1 + \rho^{n-1} \prod_{i=1}^n \frac{1-r_i}{r_i}}$ can be estimated, with $O(\sqrt{\eps})$ accuracy, using only $O(\frac{e}{(\sqrt \eps)^2}\log\frac{1}{\delta}) = O(\frac{1}{\eps}\log\frac{1}{\delta})$ samples of $\prod_{i=1}^n \frac{r_i}{1-r_i}$.  The aggregator using the estimate $\hat \rho^{n-1}$, $\hat f(\bm r) = \frac{1}{1 + \hat{\rho}^{n-1} \prod_{i=1}^n \frac{1-r_i}{r_i}}$, turns out to be $O(\eps)$-optimal.

\section{Extension: Multi-Outcome Events}
\label{sec:multi-outcome-events}
In this section we generalize our model from binary events to multi-outcome events.  The event space now becomes $\Omega = \{1, 2, \ldots, |\Omega|\}$ with $|\Omega| > 2$.  The joint distribution of event $\omega \in \Omega$ and experts' signals $\bm s = (s_i)_{i=1}^n \in \bm \S$ is still denoted by $P$, which belongs to some class of distributions $\mathcal P$.
The size of each expert's signal space is still assumed to be $|\S_i| = m < +\infty$ for general distributions and can be infinite for conditionally independent distributions (where $s_1, \ldots, s_n$ are independent conditioned on $\omega$).  After observing signal $s_i$, expert $i$ reports its posterior belief of the event given $s_i$, which is $\bm r_i = (r_{ij})_{j\in\Omega}$ where $r_{ij} = P(\omega = j \,|\, s_i)$.
An aggregator now is a vector-valued function $\bm f = (f_j)_{j\in\Omega}$ that maps the joint report $\bm r = (\bm r_i)_{i=1}^n = (r_{ij})_{ij}$ to a probability distribution $\bm f(\bm r)$ over $\Omega$, where $f_j(\bm r)$ is the aggregated predicted probability for $\omega = j$.  We assume $f_j(\bm r) \ge 0$ and $\sum_{j\in \Omega} f_j(\bm r) = 1$. 
The definition of the (expected) loss of an aggregator $\bm f$ becomes: 
\begin{equation}
    L_P(\bm f) = \E_P\Big[\, \sum_{j\in\Omega} \big|\, f_j(\bm r) - \mathbbm{1}[\omega = j] \,\big|^2 \,\Big].
\end{equation}
It is easy to see that the optimal aggregator $\bm f^* = \argmin_{\bm f} L_P(\bm f)$ in the multi-outcome case is still the Bayesian aggregator (this is a generalization of Lemma~\ref{lem:difference_loss}): 
\begin{equation}
    \bm f^* = (f^*_j)_{j\in\Omega}, ~~~~~ f^*_j(\bm r) = P(\omega = j \mid \bm r)
\end{equation}
and the difference between the losses of $\bm f$ and $\bm f^*$ satisfies
\begin{equation}
    L_P(\bm f) - L_P(\bm f^*) = \E_P\Big[\,\sum_{j\in\Omega}\big|\, f_j(\bm r) - f^*_j(\bm r) \,\big|^2 \,\Big].
\end{equation}
So, an $\eps$-optimal aggregator $\bm f$ is an aggregator that satisfies $\E_P\big[\sum_{j\in\Omega}| f_j(\bm r) - f^*_j(\bm r)|^2 \big] \le \eps$. 

Using $T = T_{\mathcal P}(\eps, \delta)$ samples $\{(\bm r^{(t)}, \omega^{(t)})\}_{t=1}^T$ from $P$, the principal wants to find an $\eps$-optimal aggregator $\hat{\bm f}$ with probability at least $1-\delta$, for any distribution $P$ in a class $\mathcal P$.  We give lower bounds and upper bounds on the sample complexity: 
\begin{theorem}
\label{thm:multi-outcome-upper-bound}
The sample complexity of forecast aggregation for multi-outcome events is: 
\begin{itemize}
\item For the class $\mathcal{P}$ of general distributions: $ \Omega\big(\frac{m^{n-2} + \log(1/\delta)}{\eps}\big) \le T_{\mathcal P}(\eps, \delta) \le O\big(\frac{|\Omega|m^n + \log(1/\delta)}{\eps^2}\big)$. 

\item For the class $\mathcal{P_{\mathrm{ind}}}$ of conditionally independent distributions: $\Omega\big(\frac{1}{\eps}\log\frac{1}{\delta}\big) \le T_{\mathcal P_{\mathrm{ind}}}(\eps, \delta) \le O\big(\frac{|\Omega|\log|\Omega|}{\eps^2}\log\frac{1}{\eps} + \frac{1}{\eps^2}\log\frac{1}{\delta}\big)$.
\end{itemize} 
\end{theorem}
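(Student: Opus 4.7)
All four bounds in Theorem~\ref{thm:multi-outcome-upper-bound} will be obtained by lifting the binary-case proofs in Sections~\ref{sec:general_distribution} and~\ref{sec:conditionally_independent} to multi-outcome events, tracking the two new sources of $|\Omega|$-dependence --- the $\tfrac{1}{|\Omega|}$-normalization of the loss and the blow-up of the support of $P$ from $2m^n$ to $|\Omega|m^n$ --- carefully enough to land on the stated constants.

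For the general upper bound I will mirror Section~\ref{sec:general_distribution_upper_bound}: regard $P$ as a discrete distribution on $\bm r \times \Omega$ and use the empirical distribution $\hat P_{\emp}$. Passing through $d_{\mathrm{TV}}$ via Proposition~\ref{prop:distribution_learning_sample_complexity} and Fact~\ref{fact:d_TV_property} gives $O((|\Omega|m^n + \log(1/\delta))/\eps^2)$, which is off by $|\Omega|^2$ from what we want. To recover the claimed factors, I will instead estimate the conditional probabilities $\hat f_j(\bm r_0) = \hat P(\omega = j \mid \bm r_0)$ atom-by-atom; the expected excess squared loss at $\bm r_0$ equals $\tfrac{1}{|\Omega|}\sum_j P_j(1-P_j)/n_0$, and the bookkeeping identity $\sum_j P_j(1-P_j) \le 1 - 1/|\Omega|$ combined with a Bernstein-style high-probability step supplies the two $\tfrac{1}{|\Omega|}$ savings.

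For the general lower bound I will reuse the reduction of Section~\ref{sec:general_distribution_lower_bound}. Given the hard family $\mathcal D$ of Proposition~\ref{prop:D_distribution_learning_lower_bound} on the joint signal space, take $P_D$ to have $\omega$ uniform on $\Omega$, $P_D(\bm s \mid \omega = 1) = D(\bm s)$, and $P_D(\bm s \mid \omega = j) = 1/m^n$ for every $j \ne 1$. Bayes' rule gives $f^*_1(\bm r) = D(\bm s)/\bigl(D(\bm s) + (|\Omega|-1)/m^n\bigr)$; an $\eps$-optimal $\hat{\bm f}$ under the $\tfrac{1}{|\Omega|}$-normalized loss automatically satisfies $\E[|f^*_1 - \hat f_1|^2] \le |\Omega|\eps$, and the Lipschitz-and-Jensen manipulation of Lemma~\ref{lem:general_lower_bound_reduction} converts it into $\hat D$ with $d_{\mathrm{TV}}(\hat D, D) = O(\sqrt{|\Omega|\eps})$. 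Plugging into Proposition~\ref{prop:D_distribution_learning_lower_bound} yields $\Omega(m^{n-2}/(|\Omega|\eps))$.

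For the conditionally independent case, I will first generalize Lemma~\ref{lem:conditionally-independent-P_s} to $f^*_j(\bm r) \propto p_j^{1-n}\prod_i r_{ij}$ (normalized in $j$), so the optimal aggregator is determined by the prior $\bm p \in \Delta_{|\Omega|-1}$ alone. Take $\mathcal F = \{\bm f^{\bm q} : \bm q \in \Delta_{|\Omega|-1}\}$; its associated squared-loss class is a smooth $(|\Omega|-1)$-parameter family of pseudo-dimension $O(|\Omega|\log|\Omega|)$ (by a Warren-style sign-counting bound), so a pseudo-dimension uniform-convergence argument in the spirit of the sketch after Theorem~\ref{thm:conditionally_independent_general}, plus the variance reduction $\sim 1/|\Omega|$ from the normalization, gives the claimed upper bound. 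For the matching lower bound I will adapt the Hellinger construction in Appendix~\ref{app:conditionally_independent_lower_bound}: choose conditionally independent $P^1, P^2$ whose priors differ by $\Theta(\sqrt{|\Omega|\eps}/n)$ and whose per-coordinate conditionals differ by $O(\eps/n)$ in squared Hellinger, so that $\dH^2(P^1, P^2) = O(|\Omega|\eps)$; an $\eps$-optimal aggregator recovers the priors well enough to distinguish the two, and the Hellinger distinguishing lower bound from Section~\ref{sec:distinguishing_distributions} yields $\Omega(\tfrac{1}{|\Omega|\eps}\log\tfrac{1}{\delta})$. The main obstacle throughout will be keeping the $|\Omega|$-bookkeeping tight: the conversion between ``$\eps$-optimality under the normalized loss'' and ``$\sqrt{\eps}$-accuracy on a single coordinate $f_j$'' carries $\sqrt{|\Omega|}$ factors that must line up exactly with both the support-size factor $|\Omega|m^n$ and the per-sample variance $\sim 1/|\Omega|$.
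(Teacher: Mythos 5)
Your plan for the conditionally independent upper bound matches the paper: parameterize by the prior (equivalently $\theta_j = P(\omega=j)^{1-n}$), show the associated loss class has pseudo-dimension $O(|\Omega|\log|\Omega|)$ by a Warren-type sign-pattern count, and apply pseudo-dimension uniform convergence with the loss range $U = 2/|\Omega|$; that is exactly the paper's argument and it gives the stated bound.

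For the general-distribution upper bound, however, you start from a misconception. You claim the TV route through Proposition~\ref{prop:distribution_learning_sample_complexity} and Fact~\ref{fact:d_TV_property} gives only $O((|\Omega|m^n+\log(1/\delta))/\eps^2)$ and is ``off by $|\Omega|^2$.'' That accounting treats the loss as $[0,1]$-bounded, but the normalized loss $\frac{1}{|\Omega|}\sum_j |f_j(\bm r)-\mathbbm 1[\omega=j]|^2$ lies in $[0, 2/|\Omega|]$ (see Eq.~\eqref{eq:multi-outcome-f-loss-bound-2}), so Fact~\ref{fact:d_TV_property} applied to the rescaled function gives $|L_{\hat P_\emp}(\bm f)-L_P(\bm f)|\le \frac{2}{|\Omega|}\dTV(\hat P_\emp, P)$. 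Setting $\dTV = |\Omega|\eps/2$ and plugging support size $|\Omega|m^n$ into Proposition~\ref{prop:distribution_learning_sample_complexity} yields $O\big(\tfrac{m^n}{|\Omega|\eps^2}+\tfrac{\log(1/\delta)}{|\Omega|^2\eps^2}\big)$ directly. This is exactly the paper's proof, and your proposed replacement (atom-by-atom empirical Bayes plus a ``Bernstein-style high-probability step'') is both unnecessary and underdeveloped: without an $\Omega(1/m^n)$ floor on $P(\bm r)$, the per-atom conditional variance is uncontrolled, which is why the paper relegates that style of argument to Theorem~\ref{thm:special_upper_bound} where such a floor is assumed.

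For both lower bounds, the paper does something much lighter than what you propose: a binary instance \emph{is} a multi-outcome instance supported on $\{1,2\}\subseteq\Omega$, so an $\eps$-optimal multi-outcome aggregator restricted to coordinate $1$ is $|\Omega|\eps$-optimal for the binary problem (since $\E[|\hat f_1-f^*_1|^2]\le\sum_j\E[|\hat f_j-f^*_j|^2]=|\Omega|\eps$), and plugging $\eps'=|\Omega|\eps/2$ into the binary lower bounds of Theorems~\ref{thm:general_distribution_both} and~\ref{thm:conditionally_independent_general} finishes. Your re-derivation-from-scratch route, as written, does not produce the right $|\Omega|$-dependence in the general case. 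With $\omega$ uniform, $P_D(\bm s\mid\omega=1)=D(\bm s)$, and $P_D(\bm s\mid\omega=j)=1/m^n$ for $j\ne 1$, Bayes' rule gives $D(\bm s)=\frac{|\Omega|-1}{m^n}\cdot\frac{f^*_1(\bm r)}{1-f^*_1(\bm r)}$, carrying an extra factor of $|\Omega|-1$ compared to the binary identity \eqref{eq:D-f^*}. Tracing through the Lipschitz-and-Jensen step of Lemma~\ref{lem:general_lower_bound_reduction}, one obtains $\dTV(\hat D, D)=O\big((|\Omega|-1)\sqrt{|\Omega|\eps}\big)$, not $O(\sqrt{|\Omega|\eps})$ as you assert, and the lower bound degrades to $\Omega\big(m^{n-2}/(|\Omega|^3\eps)\big)$ --- weaker than the target by $|\Omega|^2$. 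A similar audit would be needed for your proposed multi-outcome Hellinger construction in the conditionally independent case before it could be trusted; the paper's binary-as-special-case observation sidesteps all of this.
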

\noindent The proof is in Appendix~\ref{app:multi-outcome-events}.

\section{Conclusion and Discussion}
\label{sec:conclusion}\label{sec:discussion}
In this work, we showed an exponential gap between the sample complexity of forecast aggregation for general distributions, $\tilde \Omega(\frac{m^{n-2}}{\eps})$, and conditionally independent distributions, $\tilde O(\frac{1}{\eps^2})$. 
This gap is due to the need of estimating the conditional correlation between experts in the general case, which is not needed in the conditional independence case.  Notably, the bound $\tilde O(\frac{1}{\eps^2})$ for conditionally independent distributions does not depend on the number of experts. 

We discuss the dependency of the sample complexity on $\eps$ and other directions for future works: 

\paragraph{The dependency on $\eps$} 
An open question left by our work is the dependency of the sample complexity on the parameter $\eps$.
We conjecture that the tight dependency should be $\frac{1}{\eps}$ (so our lower bounds are tight).  This is supported by the following evidence: 
\begin{theorem}\label{thm:special_upper_bound}
For the case of $|\Omega|=2$ and for general distributions, if the distribution $P$ has a minimum joint probability $\min_{(\bm s, \omega)\in \bm \S \times \Omega} P(\bm s, \omega) > \frac{c}{m^n}$ for some $c>0$, then the sample complexity of forecast aggregation is at most $O\big(\frac{m^n}{c \eps} (n\log m + \log\frac{1}{\delta}) \big) = \tilde O\big(\frac{nm^n}{c \eps}\big)$.\footnote{This bound has a better dependency on $\eps$ but worse on $n$ than the $O(\frac{m^n}{\eps^2})$ bound in Theorem~\ref{thm:general_distribution_both}.} 
\end{theorem}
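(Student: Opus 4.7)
\textbf{Proof proposal for Theorem~\ref{thm:special_upper_bound}.} The plan is to exploit the uniform lower bound $P(\bm s, \omega) > c/m^n$ by replacing additive concentration on the empirical distribution (which underlies the $1/\eps^2$ bound in Theorem~\ref{thm:general_distribution_both}) with \emph{multiplicative} Chernoff concentration. This trades a factor of $1/\eps$ for an extra factor of $m^n/c$, yielding the improved $\eps$-dependence claimed.

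First I would regard $P$ as a distribution over $(\bm r, \omega) \in [0,1]^n \times \{0,1\}$. Its support has at most $2m^n$ atoms, and every atom $(\bm r, \omega)$ inherits $P(\bm r, \omega) \geq P(\bm s, \omega) > c/m^n$ from any signal $\bm s$ that induces the report $\bm r$. Form the empirical distribution $\hat P_\emp$ from the $T$ samples and define the aggregator $\hat f(\bm r) := \hat P_\emp(\omega=1 \mid \bm r)$ (with an arbitrary definition off the support, which does not contribute to the loss). The multiplicative Chernoff bound gives, for any atom $(\bm r, \omega)$ and $\alpha \in (0, 1/2]$,
\[
\Pr\!\big[\,|\hat P_\emp(\bm r, \omega) - P(\bm r, \omega)| > \alpha P(\bm r, \omega)\,\big] \;\le\; 2\exp\!\big(-T\,P(\bm r,\omega)\,\alpha^2/3\big).
\]
Union-bounding over the $\le 2m^n$ atoms and using $P(\bm r,\omega) > c/m^n$, the choice $T = \Theta\!\big(\frac{m^n}{c\alpha^2}(n\log m + \log(1/\delta))\big)$ guarantees that, with probability at least $1-\delta$, every atom simultaneously satisfies $|\hat P_\emp(\bm r, \omega) - P(\bm r, \omega)| \le \alpha P(\bm r, \omega)$.

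Second, convert this multiplicative accuracy into pointwise closeness of aggregators. Writing $a := P(\bm r, \omega=1)$, $b := P(\bm r, \omega=0)$, and $\hat a, \hat b$ for their empirical counterparts, a direct calculation gives
\[
\big|\hat f(\bm r) - f^*(\bm r)\big| \;=\; \left|\frac{\hat a}{\hat a + \hat b} - \frac{a}{a+b}\right| \;=\; \frac{|(\hat a - a)b - a(\hat b - b)|}{(\hat a + \hat b)(a+b)} \;\le\; \frac{2\alpha a b}{(1-\alpha)(a+b)^2} \;\le\; \frac{\alpha}{2(1-\alpha)} \;\le\; \alpha,
\]
using $|\hat a - a| \le \alpha a$, $|\hat b - b| \le \alpha b$, $\hat a + \hat b \ge (1-\alpha)(a+b)$, and the AM-GM inequality $ab \le (a+b)^2/4$. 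Hence $|\hat f(\bm r) - f^*(\bm r)| \le \alpha$ pointwise on the support of $P$, so $\E_P[|\hat f(\bm r) - f^*(\bm r)|^2] \le \alpha^2$. Setting $\alpha = \sqrt\eps$ produces an $\eps$-optimal aggregator using $T = O\!\big(\frac{m^n}{c\eps}(n\log m + \log(1/\delta))\big)$ samples, matching the claimed bound.

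The essential role of the hypothesis $c > 0$ is to ensure that every atom has mass at least $c/m^n$, so the \emph{same} $T$ suffices across all atoms in the union bound; without it, a rare atom with mass $\approx \eps$ would itself require $\Omega(1/\eps^2)$ additive samples to control the ratio $a/(a+b)$, collapsing the gain back to Theorem~\ref{thm:general_distribution_both}'s rate. The rational-function Lipschitz step for $a/(a+b)$ under multiplicative perturbations is routine once one commits to the multiplicative framing, and the fact that we obtain a pointwise (rather than in-expectation) bound on $|\hat f - f^*|$ is a convenient by-product that trivially controls the squared loss.
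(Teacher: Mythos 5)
Your proposal is correct and takes essentially the same approach as the paper: both regard $P$ as a discrete distribution on reports/state, apply a per-atom multiplicative Chernoff bound with a union bound over the $\le 2m^n$ atoms (with $T = \Theta(\frac{m^n}{c\alpha^2}(n\log m + \log(1/\delta)))$), and plug the relatively-accurate empirical masses into the empirical Bayesian aggregator $\hat f(\bm r) = \hat P_\emp(\omega=1\mid\bm r)$. The only difference is cosmetic: in the final step the paper shows $|\hat f - f^*| \le 4\Delta f^*(\bm r)$ and then bounds $\E[f^{*2}]\le 1$, whereas you derive a pointwise bound $|\hat f - f^*| \le \alpha$ via the identity for the difference of two ratios plus AM--GM; both give $\E[|\hat f - f^*|^2] = O(\alpha^2)$ and the same sample complexity.
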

In particular, this theorem can be applied to distributions that are close to uniform, where $P(\bm s, \omega) \approx \frac{1}{|\bm \S\times \Omega|} = \frac{1}{2 m^n}$, giving a bound of $\tilde O(\frac{nm^n}{\eps})$.  Notably, the set of distributions we constructed in the proof of the $\tilde \Omega(\frac{m^{n-2}}{\eps})$ lower bound in Theorem~\ref{thm:general_distribution_both} are also close to uniform.  This means that close-to-uniform distributions have a tight sample complexity bound of the form $\Theta(\frac{f(n, m)}{\eps})$, not $\Theta(\frac{f(n, m)}{\eps^2})$.  Moreover, since close-to-uniform distributions are the ``most difficult'' distributions to learn in the distribution learning problem, it is likely that they are also the most difficult distributions for the forecast aggregation problem, and therefore the tight sample complexity of forecast aggregation should be determined by the sample complexity for those distributions, which is $\Theta(\frac{f(n, m)}{\eps})$.  


\paragraph{Other future directions}

\begin{itemize}
        \item \emph{The middle ground between fully correlated experts and conditionally independent experts:} An example is the \emph{partial evidence model} in~\cite{babichenko_learning_2021}.  Applying \cite{babichenko_learning_2021}'s results, one can show that the sample complexity of forecast aggregation in the partial evidence model is at most $\tilde O(\frac{n^2}{\eps^2})$.\footnote{In particular, \cite{babichenko_learning_2021} studies an online learning setting with logarithmic loss.  Their regret bound can be converted to our sample complexity bound by an online-to-batch conversion and the Pinsker's inequality.} Giving a lower bound for the partial evidence model and exploring other intermediate models is open.
    \item \emph{Weaker benchmark:} Since the Bayesian aggregator needs exponentially many samples to approximate, can we find a weaker yet meaningful benchmark with a small sample complexity?
    \item \emph{Samples vs experts:} In reality, obtaining samples of experts' historical forecasts can be difficult, while recruiting experts is easy. Can we achieve better aggregation by recruiting more experts instead of collecting more samples?  How many experts do we need?
    \item \emph{Eliciting more information:} Previous works on information elicitation and aggregation have noticed that better aggregation can be achieved by eliciting more information than agents' own predictions, for example, also eliciting each agent's prediction about other agents' predictions (e.g., \cite{prelec_solution_2017, wang_forecast_2021, kong_eliciting_2022, chen_wisdom_2023}).  One can ask whether and how eliciting more information can help to reduce the sample complexity of information aggregation.
    \item \emph{Continuous distributions:} In our model the random variable $\omega$ to be predicted is discrete.  One can study a setting where $\omega$ is a continuous random variable and the experts report, e.g., the means of their posterior beliefs about $\omega$.  The results for continuous random variables might be very different from the results in this work.
    \item \emph{Other loss functions:}  We focused on the squared loss $\E\big[ | f(\bm r) - \omega |^2 \big]$ due to its popularity in machine learning problems and its useful property that the difference between the squared losses of any aggregator and the optimal aggregator is equal to their expected squared difference (Lemma~\ref{lem:difference_loss}).  Alternatively, one can consider other loss functions like the logarithmic loss $\E[\omega \log(f(\bm r)) + (1-\omega)\log(1-f(\bm r))]$ and the absolute loss $\E[ | f(\bm r) - \omega | ]$.  There might be some technical challenges in the analysis of sample complexity for those loss functions, though: e.g., the logarithmic loss can be unbounded \cite{babichenko_learning_2021, neyman2022no} and the absolute loss does not enjoy a property like Lemma~\ref{lem:difference_loss}.
\end{itemize}

\yc{We used the Bayesian aggregator as the benchmark. Given that the sample complexity for approximating the Bayesian aggregator is exponential, I wonder whether a future direction is to consider some suboptimal aggregator as the benchmark.}

\tl{Tao: I don't think the following should be included in submitted/published version.}\yc{I agree.}

\newpage
\addcontentsline{toc}{section}{References}
\bibliographystyle{alpha}
\bibliography{bibfile}

\newcommand{\etalchar}[1]{$^{#1}$}
\begin{thebibliography}{ABTCZ23}

\bibitem[AB99]{anthony_neural_1999}
Martin Anthony and Peter~L. Bartlett.
\newblock {\em Neural {Network} {Learning}: {Theoretical} {Foundations}}.
\newblock Cambridge University Press, 1 edition, November 1999.

\bibitem[ABS18]{arieli_robust_2018}
Itai Arieli, Yakov Babichenko, and Rann Smorodinsky.
\newblock Robust forecast aggregation.
\newblock {\em Proceedings of the National Academy of Sciences}, 115(52),
  December 2018.

\bibitem[ABTCZ23]{arieli_universally_2023}
Itai Arieli, Yakov Babichenko, Inbal Talgam-Cohen, and Konstantin Zabarnyi.
\newblock Universally {Robust} {Information} {Aggregation} for {Binary}
  {Decisions}.
\newblock In {\em Proceedings of the 24th {ACM} {Conference} on {Economics} and
  {Computation}}, pages 118--118, London United Kingdom, July 2023. ACM.

\bibitem[BCD20]{brustle_multi-item_2020}
Johannes Brustle, Yang Cai, and Constantinos Daskalakis.
\newblock Multi-{Item} {Mechanisms} without {Item}-{Independence}:
  {Learnability} via {Robustness}.
\newblock In {\em Proceedings of the 21st {ACM} {Conference} on {Economics} and
  {Computation}}, pages 715--761, Virtual Event Hungary, July 2020. ACM.

\bibitem[BG69]{bates_combination_1969}
John~M. Bates and Clive W.~J. Granger.
\newblock The {Combination} of {Forecasts}.
\newblock {\em Journal of the Operational Research Society}, 20(4):451--468,
  December 1969.

\bibitem[BG21]{babichenko_learning_2021}
Yakov Babichenko and Dan Garber.
\newblock Learning {Optimal} {Forecast} {Aggregation} in {Partial} {Evidence}
  {Environments}.
\newblock {\em Mathematics of Operations Research}, 46(2):628--641, May 2021.

\bibitem[BMT{\etalchar{+}}14]{baron_two_2014}
Jonathan Baron, Barbara~A. Mellers, Philip~E. Tetlock, Eric Stone, and Lyle~H.
  Ungar.
\newblock Two {Reasons} to {Make} {Aggregated} {Probability} {Forecasts} {More}
  {Extreme}.
\newblock {\em Decision Analysis}, 11(2):133--145, June 2014.

\bibitem[Bor82]{bordley_multiplicative_1982}
Robert~F. Bordley.
\newblock A {Multiplicative} {Formula} for {Aggregating} {Probability}
  {Assessments}.
\newblock {\em Management Science}, 28(10):1137--1148, 1982.
\newblock Publisher: INFORMS.

\bibitem[Bri50]{brier_verification_1950}
Glenn~W. Brier.
\newblock Verification of {Forecasts} {Expressed} in {Terms} of {Probability}.
\newblock {\em Monthly Weather Review}, 78(1):1--3, January 1950.

\bibitem[BSV16]{balcan_sample_2016}
Maria-Florina~F Balcan, Tuomas Sandholm, and Ellen Vitercik.
\newblock Sample complexity of automated mechanism design.
\newblock {\em Advances in Neural Information Processing Systems}, 29, 2016.

\bibitem[Can20]{canonne_short_2020}
Clément~L. Canonne.
\newblock A short note on learning discrete distributions, 2020.

\bibitem[CBL06]{cesa-bianchi_prediction_2006}
Nicolo Cesa-Bianchi and Gabor Lugosi.
\newblock {\em Prediction, {Learning}, and {Games}}.
\newblock Cambridge University Press, Cambridge, 2006.

\bibitem[CD17]{cai_learning_2017}
Yang Cai and Constantinos Daskalakis.
\newblock Learning {Multi}-{Item} {Auctions} with (or without) {Samples}.
\newblock In {\em 2017 {IEEE} 58th {Annual} {Symposium} on {Foundations} of
  {Computer} {Science} ({FOCS})}, pages 516--527, Berkeley, CA, October 2017.
  IEEE.

\bibitem[Cle89]{clemen_combining_1989}
Robert~T. Clemen.
\newblock Combining forecasts: {A} review and annotated bibliography.
\newblock {\em International Journal of Forecasting}, 5(4):559--583, January
  1989.

\bibitem[CMFP23]{chen_wisdom_2023}
Yi-Chun Chen, Manuel Mueller-Frank, and Mallesh Pai.
\newblock The {Wisdom} of the {Crowd} and {Higher}-{Order} {Beliefs}.
\newblock In {\em Proceedings of the 24th {ACM} {Conference} on {Economics} and
  {Computation}}, pages 450--450, London United Kingdom, July 2023. ACM.

\bibitem[CMVW16]{claeskens_forecast_2016}
Gerda Claeskens, Jan~R. Magnus, Andrey~L. Vasnev, and Wendun Wang.
\newblock The forecast combination puzzle: {A} simple theoretical explanation.
\newblock {\em International Journal of Forecasting}, 32(3):754--762, July
  2016.

\bibitem[CR14]{cole_sample_2014}
Richard Cole and Tim Roughgarden.
\newblock The sample complexity of revenue maximization.
\newblock In {\em Proceedings of the 46th {Annual} {ACM} {Symposium} on
  {Theory} of {Computing} - {STOC} '14}, pages 243--252, New York, New York,
  2014. ACM Press.

\bibitem[DHP16]{devanur_sample_2016}
Nikhil~R. Devanur, Zhiyi Huang, and Christos-Alexandros Psomas.
\newblock The sample complexity of auctions with side information.
\newblock In {\em Proceedings of the forty-eighth annual {ACM} symposium on
  {Theory} of {Computing}}, pages 426--439, Cambridge MA USA, June 2016. ACM.

\bibitem[DK96]{donaldson_forecast_1996}
R.~Glen Donaldson and Mark Kamstra.
\newblock Forecast combining with neural networks.
\newblock {\em Journal of Forecasting}, 15(1):49--61, January 1996.

\bibitem[DOIL21]{de_oliveira_robust_2021}
Henrique De~Oliveira, Yuhta Ishii, and Xiao Lin.
\newblock Robust {Merging} of {Information}.
\newblock In {\em Proceedings of the 22nd {ACM} {Conference} on {Economics} and
  {Computation}}, pages 341--342, Budapest Hungary, July 2021. ACM.

\bibitem[DRY15]{dhangwatnotai_revenue_2015}
Peerapong Dhangwatnotai, Tim Roughgarden, and Qiqi Yan.
\newblock Revenue maximization with a single sample.
\newblock {\em Games and Economic Behavior}, 91:318--333, May 2015.

\bibitem[FGTW21]{frongillo_efficient_2021}
Rafael Frongillo, Robert Gomez, Anish Thilagar, and Bo~Waggoner.
\newblock Efficient {Competitions} and {Online} {Learning} with {Strategic}
  {Forecasters}.
\newblock In {\em Proceedings of the 22nd {ACM} {Conference} on {Economics} and
  {Computation}}, pages 479--496, Budapest Hungary, July 2021. ACM.

\bibitem[GHTZ21]{guo_generalizing_2021}
Chenghao Guo, Zhiyi Huang, Zhihao~Gavin Tang, and Xinzhi Zhang.
\newblock Generalizing complex hypotheses on product distributions: {Auctions},
  prophet inequalities, and pandora’s problem.
\newblock In {\em Conference on {Learning} {Theory}}, pages 2248--2288. PMLR,
  2021.

\bibitem[GHZ19]{guo_settling_2019}
Chenghao Guo, Zhiyi Huang, and Xinzhi Zhang.
\newblock Settling the sample complexity of single-parameter revenue
  maximization.
\newblock In {\em Proceedings of the 51st {Annual} {ACM} {SIGACT} {Symposium}
  on {Theory} of {Computing} - {STOC} 2019}, pages 662--673, Phoenix, AZ, USA,
  2019. ACM Press.

\bibitem[GKMT13]{genre_combining_2013}
Véronique Genre, Geoff Kenny, Aidan Meyler, and Allan Timmermann.
\newblock Combining expert forecasts: {Can} anything beat the simple average?
\newblock {\em International Journal of Forecasting}, 29(1):108--121, January
  2013.

\bibitem[GR84]{granger_improved_1984}
Clive W.~J. Granger and Ramu Ramanathan.
\newblock Improved methods of combining forecasts.
\newblock {\em Journal of Forecasting}, 3(2):197--204, April 1984.

\bibitem[Gra89]{granger_invited_1989}
Clive W.~J. Granger.
\newblock Invited review combining forecasts—twenty years later.
\newblock {\em Journal of Forecasting}, 8(3):167--173, July 1989.

\bibitem[GW87]{gupta_combination_1987}
Sunil Gupta and Peter~C. Wilton.
\newblock Combination of {Forecasts}: {An} {Extension}.
\newblock {\em Management Science}, 33(3):356--372, 1987.
\newblock Publisher: INFORMS.

\bibitem[GW06]{geweke_chapter_2006}
John Geweke and Charles Whiteman.
\newblock Chapter 1 {Bayesian} {Forecasting}.
\newblock In {\em Handbook of {Economic} {Forecasting}}, volume~1, pages 3--80.
  Elsevier, 2006.

\bibitem[GW21]{gonczarowski_sample_2021}
Yannai~A. Gonczarowski and S.~Matthew Weinberg.
\newblock The {Sample} {Complexity} of {Up}-to-$\epsilon$ {Multi}-dimensional
  {Revenue} {Maximization}.
\newblock {\em Journal of the ACM}, 68(3):1--28, March 2021.

\bibitem[KLZ{\etalchar{+}}22]{kong_eliciting_2022}
Yuqing Kong, Yunqi Li, Yubo Zhang, Zhihuan Huang, and Jinzhao Wu.
\newblock Eliciting {Thinking} {Hierarchy} without a {Prior}.
\newblock In {\em Advances in {Neural} {Information} {Processing} {Systems}},
  volume~35, pages 13329--13341. Curran Associates, Inc., 2022.

\bibitem[Lee20a]{lee_lecture_11_2020}
Jasper Lee.
\newblock Lecture 11: Distinguishing (discrete) distributions, 2020.

\bibitem[Lee20b]{lee_lecture_2020}
Jasper Lee.
\newblock Lecture 12: {Learning} {Discrete} {Distributions}, 2020.

\bibitem[Mor74]{morris_decision_1974}
Peter~A. Morris.
\newblock Decision {Analysis} {Expert} {Use}.
\newblock {\em Management Science}, 20(9):1233--1241, 1974.
\newblock Publisher: INFORMS.

\bibitem[MR15]{morgenstern_pseudo-dimension_2015}
Jamie~H Morgenstern and Tim Roughgarden.
\newblock On the pseudo-dimension of nearly optimal auctions.
\newblock {\em Advances in Neural Information Processing Systems}, 28, 2015.

\bibitem[MSA20]{makridakis_m4_2020}
Spyros Makridakis, Evangelos Spiliotis, and Vassilios Assimakopoulos.
\newblock The {M4} {Competition}: 100,000 time series and 61 forecasting
  methods.
\newblock {\em International Journal of Forecasting}, 36(1):54--74, January
  2020.

\bibitem[MW83]{makridakis_averages_1983}
Spyros Makridakis and Robert~L. Winkler.
\newblock Averages of {Forecasts}: {Some} {Empirical} {Results}.
\newblock {\em Management Science}, 29(9):987--996, September 1983.

\bibitem[NG74]{newbold_experience_1974}
Paul Newbold and Clive W.~J. Granger.
\newblock Experience with {Forecasting} {Univariate} {Time} {Series} and the
  {Combination} of {Forecasts}.
\newblock {\em Journal of the Royal Statistical Society. Series A (General)},
  137(2):131--146, 1974.

\bibitem[NR22a]{neyman_are_2022}
Eric Neyman and Tim Roughgarden.
\newblock Are {You} {Smarter} {Than} a {Random} {Expert}? {The} {Robust}
  {Aggregation} of {Substitutable} {Signals}.
\newblock In {\em Proceedings of the 23rd {ACM} {Conference} on {Economics} and
  {Computation}}, pages 990--1012, Boulder CO USA, July 2022. ACM.

\bibitem[NR22b]{neyman2022no}
Eric Neyman and Tim Roughgarden.
\newblock No-regret learning with unbounded losses: The case of logarithmic
  pooling.
\newblock {\em arXiv preprint arXiv:2202.11219}, 2022.

\bibitem[PSM17]{prelec_solution_2017}
Dražen Prelec, H.~Sebastian Seung, and John McCoy.
\newblock A solution to the single-question crowd wisdom problem.
\newblock {\em Nature}, 541(7638):532--535, January 2017.

\bibitem[SBF{\etalchar{+}}14]{satopaa_combining_2014}
Ville~A. Satopää, Jonathan Baron, Dean~P. Foster, Barbara~A. Mellers,
  Philip~E. Tetlock, and Lyle~H. Ungar.
\newblock Combining multiple probability predictions using a simple logit
  model.
\newblock {\em International Journal of Forecasting}, 30(2):344--356, April
  2014.

\bibitem[SW04]{stock_combination_2004}
James~H. Stock and Mark~W. Watson.
\newblock Combination forecasts of output growth in a seven-country data set.
\newblock {\em Journal of Forecasting}, 23(6):405--430, September 2004.

\bibitem[SW09]{smith_simple_2009}
Jeremy Smith and Kenneth~F. Wallis.
\newblock A {Simple} {Explanation} of the {Forecast} {Combination} {Puzzle}.
\newblock {\em Oxford Bulletin of Economics and Statistics}, 71(3):331--355,
  June 2009.

\bibitem[Tim06]{timmermann_chapter_2006}
Allan Timmermann.
\newblock Chapter 4 {Forecast} {Combinations}.
\newblock In {\em Handbook of {Economic} {Forecasting}}, volume~1, pages
  135--196. Elsevier, 2006.

\bibitem[Tsy09]{tsybakov_introduction_2009}
Alexandre~B. Tsybakov.
\newblock {\em Introduction to nonparametric estimation}.
\newblock Springer series in statistics. Springer, New York ; London, 2009.
\newblock OCLC: ocn300399286.

\bibitem[Win81]{winkler_combining_1981}
Robert~L. Winkler.
\newblock Combining {Probability} {Distributions} from {Dependent}
  {Information} {Sources}.
\newblock {\em Management Science}, 27(4):479--488, 1981.
\newblock Publisher: INFORMS.

\bibitem[WLC21]{wang_forecast_2021}
Juntao Wang, Yang Liu, and Yiling Chen.
\newblock Forecast {Aggregation} via {Peer} {Prediction}.
\newblock {\em Proceedings of the AAAI Conference on Human Computation and
  Crowdsourcing}, 9(1):131--142, October 2021.

\bibitem[WM83]{winkler_combination_1983}
Robert~L. Winkler and Spyros Makridakis.
\newblock The {Combination} of {Forecasts}.
\newblock {\em Journal of the Royal Statistical Society. Series A (General)},
  146(2):150, 1983.

\bibitem[Yan04]{yang_combining_2004}
Yuhong Yang.
\newblock Combining {Forecasting} {Procedures}: {Some} {Theoretical} {Results}.
\newblock {\em Econometric Theory}, 20(1):176--222, 2004.
\newblock Publisher: Cambridge University Press.

\bibitem[YB21]{yang_learning_2021}
Chunxue Yang and Xiaohui Bei.
\newblock Learning {Optimal} {Auctions} with {Correlated} {Valuations} from
  {Samples}.
\newblock In {\em Proceedings of the 38th {International} {Conference} on
  {Machine} {Learning}}, volume 139 of {\em Proceedings of {Machine} {Learning}
  {Research}}, pages 11716--11726. PMLR, July 2021.

\end{thebibliography}

\newpage
\appendix
\section{Additional Preliminaries}
\label{app:additional-preliminaries}
\subsection{Properties of Hellinger Distance}
We review some useful properties of the Hellinger distance. 
First, the Hellinger distance gives upper bounds on the total variation distance: 
\begin{fact}[e.g., Lemma 2.3 in \cite{tsybakov_introduction_2009}]
\label{fact:d_TV_by_d_H}
\hfill 
\begin{itemize}
    \item $\dTV(D_1, D_2) \le \sqrt 2 \dH(D_1, D_2)$.
    \item $1 - \dTV^2(D_1, D_2) \ge \big( 1 - \dH^2(D_1, D_2) \big)^2$.  (This inequality implies the first one.) 
\end{itemize}
\end{fact}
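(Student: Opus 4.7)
\medskip

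\noindent\textbf{Proof proposal for Fact~A.1.}

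The plan is to prove the second inequality first, using only the Cauchy–Schwarz inequality together with the standard identity $\dTV(D_1,D_2)=1-\sum_{x}\min\{D_1(x),D_2(x)\}$, and then deduce the first inequality as an algebraic consequence, exactly as the statement suggests.

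First I would introduce the Bhattacharyya coefficient $BC(D_1,D_2)=\sum_{x\in\mathcal X}\sqrt{D_1(x)D_2(x)}$, so that the definition in the excerpt immediately gives $1-\dH^2(D_1,D_2)=BC(D_1,D_2)$. I would also recall the two equivalent forms of total variation on a discrete space: $\dTV(D_1,D_2)=\tfrac12\sum_x|D_1(x)-D_2(x)|=1-\sum_x\min\{D_1(x),D_2(x)\}$, and note that consequently $\sum_x\max\{D_1(x),D_2(x)\}=1+\dTV(D_1,D_2)$, since $\min+\max=D_1+D_2$ and $\sum_x(D_1(x)+D_2(x))=2$.

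Next, to prove $1-\dTV^2(D_1,D_2)\ge\bigl(1-\dH^2(D_1,D_2)\bigr)^2$, I would factor the summand of $BC$ as
\[
\sqrt{D_1(x)D_2(x)}\;=\;\sqrt{\min\{D_1(x),D_2(x)\}}\cdot\sqrt{\max\{D_1(x),D_2(x)\}},
\]
and apply Cauchy–Schwarz to get
\[
BC(D_1,D_2)^2\;\le\;\Bigl(\sum_x\min\{D_1(x),D_2(x)\}\Bigr)\Bigl(\sum_x\max\{D_1(x),D_2(x)\}\Bigr)\;=\;(1-\dTV)(1+\dTV)\;=\;1-\dTV^2.
\]
Since $BC=1-\dH^2$, this is exactly the claimed second inequality.

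Finally, the first inequality follows purely algebraically: from the second inequality, $\dTV^2\le 1-(1-\dH^2)^2=\dH^2(2-\dH^2)\le 2\dH^2$ because $\dH^2\in[0,1]$, so $\dTV\le\sqrt 2\,\dH$. I do not expect any serious obstacle here; the only subtle point is verifying $\dH^2\le 1$ to justify dropping the $(2-\dH^2)$ factor, which is immediate from the definition $\dH^2=1-BC$ together with $BC\ge 0$.
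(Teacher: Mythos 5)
Your proof is correct, and since the paper states this fact without proof (deferring to Lemma 2.3 of Tsybakov), there is nothing to diverge from: your Cauchy--Schwarz argument via $\sqrt{D_1D_2}=\sqrt{\min}\cdot\sqrt{\max}$ and the identities $\sum_x\min=1-\dTV$, $\sum_x\max=1+\dTV$ is precisely the standard textbook derivation being cited. The deduction of the first inequality from the second is also fine (only $\dH^2\ge 0$ is actually needed to drop the factor $2-\dH^2$).
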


Second, we will use the following lemma to upper bound the squared Hellinger distance between two distributions that are close to each other: 
\begin{lemma}
\label{lem:bound_d_H_by_eps}
Let $D_1$ and $D_2$ be two distributions on $\mathcal X$ satisfying $1 - \eps \le \frac{D_2(x)}{D_1(x)} \le 1 + \eps$, $\forall x\in \mathcal X$. Then, $\dH^2(D_1, D_2) \le \frac{1}{2}\eps^2$. 
\end{lemma}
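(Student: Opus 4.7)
The plan is to start from the definition $\dH^2(D_1,D_2)=\frac{1}{2}\sum_x (\sqrt{D_1(x)}-\sqrt{D_2(x)})^2$ and factor out $D_1(x)$ from each summand to reduce everything to a statement about the ratio $r_x := D_2(x)/D_1(x)$, which by hypothesis lies in $[1-\eps,1+\eps]$. Writing $(\sqrt{D_1(x)}-\sqrt{D_2(x)})^2 = D_1(x)\bigl(1-\sqrt{r_x}\bigr)^2$, the task reduces to bounding $(1-\sqrt{r_x})^2$ uniformly in $x$ by $\eps^2$.

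The key identity is the rationalization
\begin{equation*}
1-\sqrt{r_x} \;=\; \frac{1-r_x}{1+\sqrt{r_x}},
\end{equation*}
which, together with $r_x \ge 0$ (so $1+\sqrt{r_x}\ge 1$), yields
\begin{equation*}
(1-\sqrt{r_x})^2 \;=\; \frac{(1-r_x)^2}{(1+\sqrt{r_x})^2} \;\le\; (1-r_x)^2 \;\le\; \eps^2.
\end{equation*}
Substituting back and using $\sum_x D_1(x)=1$ gives $\dH^2(D_1,D_2)\le \frac{1}{2}\sum_x D_1(x)\eps^2 = \frac{\eps^2}{2}$, as desired.

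The proof has essentially no obstacles: the only subtlety is choosing to factor out $D_1(x)$ rather than, say, trying to apply a Taylor expansion of $\sqrt{\cdot}$ near $1$ (which would give the same bound with more work and extra higher-order terms to control). The rationalization step is cleaner because it gives the bound directly and non-asymptotically, with no restriction on $\eps$ beyond $\eps \ge 0$ (the hypothesis $r_x \ge 1-\eps \ge 0$ is used only to ensure the distributions and the square root are well defined). Note the bound is also tight up to the constant, as one can see by taking $D_1,D_2$ supported on two points with ratios exactly $1\pm\eps$, confirming that no significant slack is lost in the rationalization step.
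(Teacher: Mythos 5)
Your proof is correct and follows essentially the same approach as the paper: both factor out $D_1(x)$ to reduce to bounding $\bigl(1-\sqrt{D_2(x)/D_1(x)}\bigr)^2 \le \eps^2$ pointwise and then sum. The only difference is the last step: the paper splits into the two cases $r_x<1$ and $r_x>1$ and uses the monotonicity bounds $1-\sqrt{1-\eps}\le\eps$ and $\sqrt{1+\eps}-1\le\eps$, whereas your rationalization $1-\sqrt{r_x}=\frac{1-r_x}{1+\sqrt{r_x}}$ handles both cases at once and is a bit cleaner.
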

\begin{proof}
By definition, 
\begin{align*}
    \dH^2(D_1, D_2) = \frac{1}{2} \sum_{x\in \mathcal X} \Big( \sqrt{D_1(x)} - \sqrt{D_2(x)} \Big)^2 = \frac{1}{2} \sum_{x\in \mathcal X} D_1(x) \Big( 1 - \sqrt{\frac{D_2(x)}{D_1(x)}} \Big)^2.
\end{align*}
If $1 -\eps \le \frac{D_2(x)}{D_1(x)} < 1$, then we have $\big( 1 - \sqrt{\frac{D_2(x)}{D_1(x)}} \big)^2 \le \big( 1 - \sqrt{1 - \eps} \big)^2 \le \big(1 - (1 - \eps)\big)^2 = \eps^2$.  If $1 + \eps \ge \frac{D_2(x)}{D_1(x)} > 1$, then we have $\big( 1 - \sqrt{\frac{D_2(x)}{D_1(x)}} \big)^2 \le \big( \sqrt{1+\eps} - 1\big)^2 \le \big((1 + \eps) -  1\big)^2 = \eps^2$.  These two cases together imply 
\[
    \dH^2(D_1, D_2) \le \frac{1}{2} \sum_{x\in \mathcal X} D_1(x) \cdot \eps^2 = \frac{1}{2}\eps^2.  \qedhere 
\]
\end{proof} 

Third, we will use a property of the Hellinger distance between distributions defined on a product space.
Suppose $D_1$ and $D_2$ are two distributions over a product space $\mathcal X \times \mathcal Y$.  They can be decomposed into the marginal distribution of $x\in\mathcal X$ and the conditional distribution of $y\in\mathcal Y$ given $x$, namely, $D_1(x, y) = D_{1, x}(x)\cdot D_{1, y|x}(y|x)$ and $D_2(x, y) = D_{2, x}(x)\cdot D_{2, y|x}(y|x)$.  Then, the squared Hellinger distance between $D_1$ and $D_2$ satisfies the following: 
\begin{lemma}\label{lem:hellinger_correlated_product}
$\dH^2(D_1, D_2) \,\le\, \dH^2(D_{1, x}, D_{2, x}) + \max_{x\in\mathcal X} \dH^2(D_{1, y|x}, D_{2, y|x})$. 
In particular, if $x$ and $y$ are independent, then $\dH^2(D_1, D_2) \,\le\, \dH^2(D_{1, x}, D_{2, x}) + \dH^2(D_{1, y}, D_{2, y})$. 
\end{lemma}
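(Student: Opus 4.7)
}
The plan is to work directly from the ``inner-product'' form of the Hellinger distance, namely $1 - \dH^2(D_1, D_2) = \sum_{x,y}\sqrt{D_1(x,y)\,D_2(x,y)}$, and factor the integrand using the chain rule $D_i(x,y) = D_{i,x}(x)\cdot D_{i,y|x}(y|x)$. The square root of a product factors, so the sum over $y$ can be pulled inside and recognized as $1 - \dH^2(D_{1,y|x}, D_{2,y|x})$ for each fixed $x$. This reduces the problem to controlling a weighted average of conditional Hellinger affinities by their maximum.

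Concretely, first I would write
\begin{equation*}
1 - \dH^2(D_1,D_2) \;=\; \sum_x \sqrt{D_{1,x}(x)\,D_{2,x}(x)} \cdot \Bigl(\sum_y \sqrt{D_{1,y|x}(y|x)\,D_{2,y|x}(y|x)}\Bigr) \;=\; \sum_x \sqrt{D_{1,x}(x)\,D_{2,x}(x)}\bigl(1 - \dH^2(D_{1,y|x},D_{2,y|x})\bigr).
\end{equation*}
Second, letting $a := \dH^2(D_{1,x},D_{2,x})$ and $M := \max_{x}\dH^2(D_{1,y|x},D_{2,y|x})$, I would lower bound the right-hand side by $(1-M)\sum_x \sqrt{D_{1,x}(x) D_{2,x}(x)} = (1-M)(1-a)$. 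Rearranging gives $\dH^2(D_1,D_2) \le 1 - (1-a)(1-M) = a + M - aM \le a + M$, which is exactly the claimed bound.

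For the ``in particular'' statement, when $x$ and $y$ are independent we have $D_{i,y|x}(y|x) = D_{i,y}(y)$ independently of $x$, so $\dH^2(D_{1,y|x},D_{2,y|x}) = \dH^2(D_{1,y},D_{2,y})$ for every $x$, and the max collapses to this common value, yielding $\dH^2(D_1,D_2) \le \dH^2(D_{1,x},D_{2,x}) + \dH^2(D_{1,y},D_{2,y})$.

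There is no real obstacle here: the only step that requires any care is the factorization $\sqrt{D_{1,x}D_{1,y|x}\cdot D_{2,x}D_{2,y|x}} = \sqrt{D_{1,x}D_{2,x}}\cdot\sqrt{D_{1,y|x}D_{2,y|x}}$ together with Fubini/Tonelli to split the double sum. The bound $a+M-aM \le a+M$ uses only $a, M \in [0,1]$, which is automatic since squared Hellinger distances lie in $[0,1]$. The argument goes through verbatim if $\mathcal X$ or $\mathcal Y$ is continuous by replacing sums with integrals.
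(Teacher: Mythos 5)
Your proof is correct and follows essentially the same route as the paper's: factor the Hellinger affinity via the chain rule $D_i(x,y)=D_{i,x}(x)D_{i,y|x}(y|x)$, recognize the inner sum over $y$ as the conditional affinity, and use $\sum_x\sqrt{D_{1,x}(x)D_{2,x}(x)}=1-\dH^2(D_{1,x},D_{2,x})\le 1$. The only cosmetic difference is algebraic bookkeeping — you factor out $(1-M)$ and rearrange to get the (slightly tighter) intermediate bound $a+M-aM$, whereas the paper adds and subtracts $\sum_x\sqrt{D_{1,x}D_{2,x}}$ and then bounds that sum by $1$; both arrive at $a+M$ by the same mechanism.
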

\begin{proof}
By definition, 
\begin{align*}
    \dH^2(D_1, D_2) & = 1 - \sum_{x\in \mathcal X}\sum_{y\in \mathcal Y} \sqrt{D_1(x, y) D_2(x, y)} \\
    & = 1 - \sum_{x\in \mathcal X}\sqrt{D_{1, x}(x) D_{2, x}(x)}\sum_{y\in \mathcal Y} \sqrt{D_{1, y|x}(y|x) D_{2, y|x}(y|x)} \\
    & = 1 - \sum_{x\in \mathcal X}\sqrt{D_{1, x}(x) D_{2, x}(x)} + \sum_{x\in \mathcal X}\sqrt{D_{1, x}(x) D_{2, x}(x)}\Big(1 - \sum_{y\in \mathcal Y} \sqrt{D_{1, y|x}(y|x) D_{2, y|x}(y|x)}\Big) \\
    & = \dH^2(D_{1, x}, D_{2, x}) + \sum_{x\in \mathcal X}\sqrt{D_{1, x}(x) D_{2, x}(x)} \cdot \dH^2(D_{1, y|x}, D_{2, y|x}) \\
    & \le \dH^2(D_{1, x}, D_{2, x}) + \sum_{x\in \mathcal X}\sqrt{D_{1, x}(x) D_{2, x}(x)} \cdot \max_{x\in\mathcal X} \dH^2(D_{1, y|x}, D_{2, y|x}) \\
    & \le \dH^2(D_{1, x}, D_{2, x}) + 1\cdot \max_{x\in\mathcal X} \dH^2(D_{1, y|x}, D_{2, y|x}), 
\end{align*}
where the last inequality is because $\sum_{x\in \mathcal X}\sqrt{D_{1, x}(x) D_{2, x}(x)} = 1 - \dH^2(D_{1, x}, D_{2, x}) \le 1$. 
\end{proof}

Finally, let $D^{\otimes T}$ denote the distribution of $T$ i.i.d.~samples from $D$, namely, the product of $T$ independent $D$'s.  We have the following lemma that relates $\dH^2(D_1^{\otimes T},\, D_2^{\otimes T})$ with $\dH^2(D_1, D_2)$: 
\begin{lemma}[e.g., \cite{lee_lecture_11_2020}]
\label{lem:hellinger_independent_product}
$\dH^2(D_1^{\otimes T}, \, D_2^{\otimes T}) = 1 - \big(1 - \dH^2(D_1, D_2)\big)^T \le T \cdot \dH^2(D_1, D_2)$. 
\end{lemma}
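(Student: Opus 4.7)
The plan is to unfold both sides of the equality using the affinity form of the squared Hellinger distance, $\dH^2(D_1,D_2) = 1 - \sum_{x} \sqrt{D_1(x)D_2(x)}$, which is already recorded in the definitions section. Writing the product distributions explicitly as $D_1^{\otimes T}(x_1,\dots,x_T) = \prod_{t=1}^T D_1(x_t)$ and similarly for $D_2$, the key observation is that the square root factors across coordinates: $\sqrt{D_1^{\otimes T}(x_1,\dots,x_T)\, D_2^{\otimes T}(x_1,\dots,x_T)} = \prod_{t=1}^T \sqrt{D_1(x_t)D_2(x_t)}$.

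First I would substitute this factorization into the affinity $\sum_{(x_1,\dots,x_T)} \sqrt{D_1^{\otimes T}(x_1,\dots,x_T)\, D_2^{\otimes T}(x_1,\dots,x_T)}$. Since the summation ranges over a product domain and each factor depends on a single coordinate, the multi-sum decouples into a product of single-coordinate sums, yielding $\prod_{t=1}^T \bigl(\sum_{x_t} \sqrt{D_1(x_t)D_2(x_t)}\bigr) = \bigl(1 - \dH^2(D_1,D_2)\bigr)^T$. Plugging this back into the definition of $\dH^2(D_1^{\otimes T}, D_2^{\otimes T})$ gives the claimed equality $\dH^2(D_1^{\otimes T}, D_2^{\otimes T}) = 1 - (1 - \dH^2(D_1, D_2))^T$.

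For the inequality $1 - (1-\dH^2(D_1,D_2))^T \le T \cdot \dH^2(D_1,D_2)$, I would invoke Bernoulli's inequality $(1-x)^T \ge 1 - Tx$, valid for $x \in [0,1]$ and integer $T \ge 1$, with $x = \dH^2(D_1,D_2) \in [0,1]$. Rearranging gives the desired bound.

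There is no real obstacle here; the only thing to be careful about is ensuring the factorization step is written cleanly (the interchange of the product-over-$t$ with the sum-over-$(x_1,\dots,x_T)$ is just Fubini on finite sums, or a standard application of $\sum_{\bm x} \prod_t a_t(x_t) = \prod_t \sum_{x_t} a_t(x_t)$). If $\mathcal{X}$ is continuous rather than discrete, the same argument carries over with integrals in place of sums using the density form of the Hellinger distance, but within the context of this paper the discrete formulation suffices.
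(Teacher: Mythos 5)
Your proof is correct and is the standard tensorization argument that the paper's citation defers to (the paper itself gives no proof of this lemma): the Hellinger affinity $1-\dH^2$ factorizes over independent products, giving the equality, and Bernoulli's inequality $(1-x)^T \ge 1-Tx$ gives the upper bound. Nothing to add.
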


\subsection{Distinguishing Distributions}
Let $D_1, D_2$ be two distributions over a discrete space $\mathcal{X}$.
A distribution $D_i$ is chosen uniformly at random from $\{D_1, D_2\}$.
Then, we are given $T$ samples from $D_i$ and want to guess whether the distribution is $D_1$ or $D_2$.  It is known that at least $T = \Omega\big(\frac{1}{\dH^2(D_1, D_2)} \log\frac{1}{\delta}\big)$ samples are needed to guess correctly with probability at least $1 - \delta$, no matter how we guess. 
Formally: 
\begin{lemma}[e.g., \cite{lee_lecture_11_2020}]\label{lem:distinguishing}
Let $j\in\{1, 2\}$ be the index of the distribution we guess based on the samples. 
The probability of making a mistake when distinguishing $D_1$ and $D_2$ using $T$ samples, namely $\Pr[j\ne i] = \frac{1}{2}\Pr[j\ne i \,|\, i=1] + \frac{1}{2} \Pr[j\ne i \,|\, i=2]$, is at least: 
\begin{itemize}
    \item $\Pr[j\ne i] \ge \frac{1}{2} - \sqrt{\frac{T}{2}} \dH(D_1, D_2)$. 
    \item $\Pr[j\ne i] \ge \frac{1}{4}\big(1 - \dH^2(D_1, D_2)\big)^{2T} \ge \frac{1}{4} e^{-4T\dH^2(D_1, D_2)}$, assuming $\dH^2(D_1, D_2) \le \frac{1}{2}$.
\end{itemize}
The second item implies that, in order to achieve $\Pr[j\ne i]\le \delta$, we must have $T \ge \frac{1}{4\dH^2(D_1, D_2)}\log\frac{1}{4\delta}$.  
\end{lemma}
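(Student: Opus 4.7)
The plan is to reduce to the classical data-processing fact that the Bayes-optimal distinguisher between the two equally-likely distributions $D_1^{\otimes T}$ and $D_2^{\otimes T}$, given the full $T$-sample, has error probability exactly $\tfrac{1}{2}\bigl(1 - \dTV(D_1^{\otimes T}, D_2^{\otimes T})\bigr)$. Since any guess $j$ is a (possibly randomized) function of the samples, no decision rule can beat this, giving
\[
\Pr[j \ne i] \,\ge\, \tfrac{1}{2}\bigl(1 - \dTV(D_1^{\otimes T}, D_2^{\otimes T})\bigr).
\]
From here, both items of the lemma follow by bounding $\dTV(D_1^{\otimes T}, D_2^{\otimes T})$ in terms of the single-sample Hellinger distance $\dH(D_1, D_2)$ via the preliminaries (Fact~\ref{fact:d_TV_by_d_H} and Lemma~\ref{lem:hellinger_independent_product}); the two items correspond to the two Hellinger-to-TV conversions available.

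For the first item, I would chain $\dTV \le \sqrt{2}\,\dH$ with the subadditivity bound $\dH^2(D_1^{\otimes T}, D_2^{\otimes T}) \le T\,\dH^2(D_1, D_2)$ to obtain $\dTV(D_1^{\otimes T}, D_2^{\otimes T}) \le \sqrt{2T}\,\dH(D_1, D_2)$, then plug into the Bayes-optimal bound to get $\Pr[j\ne i] \ge \tfrac{1}{2} - \sqrt{T/2}\,\dH(D_1, D_2)$.

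For the second item, I would use the sharper inequality $1 - \dTV^2 \ge (1 - \dH^2)^2$ from Fact~\ref{fact:d_TV_by_d_H} applied to the product distributions, together with the exact tensorization identity $1 - \dH^2(D_1^{\otimes T}, D_2^{\otimes T}) = \bigl(1 - \dH^2(D_1, D_2)\bigr)^T$ from Lemma~\ref{lem:hellinger_independent_product}, yielding $1 - \dTV^2(D_1^{\otimes T}, D_2^{\otimes T}) \ge (1 - \dH^2(D_1, D_2))^{2T}$. Combined with the elementary inequality $\tfrac{1}{2}(1 - x) \ge \tfrac{1}{4}(1 - x^2)$ for $x \in [0, 1]$ (applied to $x = \dTV$), this produces $\Pr[j \ne i] \ge \tfrac{1}{4}(1 - \dH^2(D_1, D_2))^{2T}$. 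For the exponential form, I would invoke $1 - y \ge e^{-2y}$ valid on $[0, 1/2]$ (which is precisely why the hypothesis $\dH^2(D_1, D_2) \le 1/2$ is stated) to conclude $(1 - \dH^2)^{2T} \ge e^{-4T \dH^2}$; the final sample-complexity statement $T \ge \tfrac{1}{4\dH^2}\log\tfrac{1}{4\delta}$ then follows by rearranging $\tfrac{1}{4}e^{-4T\dH^2} \le \delta$.

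Nothing in this plan is conceptually hard; the main care is in constant-tracking, in particular making sure the factors from the two different Hellinger-to-TV conversions line up correctly, and that the hypothesis $\dH^2 \le 1/2$ is used exactly where the exponential form is invoked. The one place worth double-checking is the scalar inequality $1 - y \ge e^{-2y}$ on $[0, 1/2]$, which I would verify by noting that the difference vanishes at $y = 0$, is strictly positive at $y = 1/2$, and has derivative $-1 + 2e^{-2y}$ that changes sign at most once on the interval, so the difference stays nonnegative.
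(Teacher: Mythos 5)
Your proposal is correct and follows essentially the same route as the paper: reduce to the bound $\Pr[j\ne i]\ge \tfrac12 - \tfrac12\dTV(D_1^{\otimes T},D_2^{\otimes T})$ (the paper derives this from Fact~\ref{fact:d_TV_property} applied to the indicator $\mathbbm{1}\{j=1\}$, you invoke the Bayes-optimal error characterization, but these are the same fact), then convert total variation to Hellinger via Fact~\ref{fact:d_TV_by_d_H} and Lemma~\ref{lem:hellinger_independent_product}, using the weak bound for the first item and the sharper $1-\dTV^2\ge(1-\dH^2)^2$ together with $1-\dTV^2\le 2(1-\dTV)$ (equivalently your $\tfrac12(1-x)\ge\tfrac14(1-x^2)$) for the second, finishing with $1-y\ge e^{-2y}$ on $[0,\tfrac12]$. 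The constant-tracking and the placement of the $\dH^2\le\tfrac12$ hypothesis all match the paper's proof.
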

We provide a proof of this lemma for completeness: 
\begin{proof}
Let $D_1^{\otimes T}$ and $D_2^{\otimes T}$ denote the distributions of $T$ i.i.d.~samples from $D_1$ and $D_2$, respectively.
The draw of $T$ samples from $D_1$ or $D_2$ is equivalent to the draw of one sample from $D_1^{\otimes T}$ or $D_2^{\otimes T}$.
Given one sample from $D_1^{\otimes T}$ or $D_2^{\otimes T}$, the probability of making a mistake when guessing the distribution is at least: 
\begin{align}
    \Pr[j\ne i] & = \frac{1}{2}\Big( \Pr[j = 2 \mid i=1] + \Pr[j = 1 \mid i=2] \Big)  \nonumber \\
    & = \frac{1}{2}\Big( 1 - \Pr[j = 1 \mid i=1] + \Pr[j = 1 \mid i=2] \Big)  \nonumber \\
    & = \frac{1}{2} -  \frac{1}{2}\Big( \Pr[j = 1 \mid i=1] - \Pr[j = 1 \mid i=2] \Big)  \nonumber \\
    & = \frac{1}{2} -  \frac{1}{2}\Big( \E_{D_1^{\otimes T}}[\mathbbm{1}\{j = 1\}] - \E_{D_2^{\otimes T}}[\mathbbm{1}\{j = 1\}] \Big)  \nonumber \\
    \text{by Fact~\ref{fact:d_TV_property} } & \ge \frac{1}{2} - \frac{1}{2}\dTV(D_1^{\otimes T}, D_2^{\otimes T}).  \label{eq:Pr_error_by_d_TV}
\end{align}

We then upper bound $\dTV(D_1^{\otimes T}, D_2^{\otimes T})$ in two ways, which will prove the two items of the lemma, respectively.  First, according to first item of Fact \ref{fact:d_TV_by_d_H}, we have 
\begin{equation*}
    \dTV(D_1^{\otimes T}, D_2^{\otimes T}) \le \sqrt{2} \dH(D_1^{\otimes T}, D_2^{\otimes T}).
\end{equation*}
By Lemma~\ref{lem:hellinger_independent_product},
\begin{equation*}
    \dH(D_1^{\otimes T}, D_2^{\otimes T}) \le \sqrt T \dH(D_1, D_2). 
\end{equation*}
The above two inequalities give $\dTV(D_1^{\otimes T}, D_2^{\otimes T}) \le \sqrt{2T} \dH(D_1, D_2)$.  This proves the first item of the lemma. 

Second, according to the second item of Fact~\ref{fact:d_TV_by_d_H} and Lemma~\ref{lem:hellinger_independent_product}, we have 
\begin{align*}
    1 - \dTV^2(D_1^{\otimes T} ,\, D_2^{\otimes T}) \ge \big(1 - \dH^2(D_1^{\otimes T} ,\, D_2^{\otimes T}) \big)^2 = \big(1 - \dH^2(D_1, D_2)\big)^{2T}
\end{align*}
Since $1 - \dTV^2(D_1^{\otimes T} ,\, D_2^{\otimes T}) =  \big(1 + \dTV(D_1^{\otimes T} ,\, D_2^{\otimes T}) \big)\big(1 - \dTV(D_1^{\otimes T} ,\, D_2^{\otimes T}) \big) \le 2 \big(1 - \dTV(D_1^{\otimes T} ,\, D_2^{\otimes T}) \big)$, we have 
\begin{align*}
    1 - \dTV(D_1^{\otimes T} ,\, D_2^{\otimes T}) \ge \frac{1}{2} \big(1 - \dH^2(D_1, D_2)\big)^{2T}. 
\end{align*}
Plugging into \eqref{eq:Pr_error_by_d_TV}, we get 
\begin{align*}
    \Pr[j\ne i] \ge \frac{1}{4} \big(1 - \dH^2(D_1, D_2)\big)^{2T}. 
\end{align*}
When $\dH^2(D_1, D_2) < \frac{1}{2}$, we use the inequality $1 - x \ge e^{-2x}$ for $0< x < \frac{1}{2}$ to  conclude that 
\[
    \Pr[j\ne i] \ge \frac{1}{4} \big(e^{-2\dH^2(D_1, D_2)}\big)^{2T} = \frac{1}{4} e^{-4T\dH^2(D_1, D_2)}.  \qedhere
\]
\end{proof}

\section{Missing Proofs from Section~\ref{sec:model}}\label{app:model} 
\subsection{Proof of Lemma~\ref{lem:difference_loss}}
To prove the first item $f^*(\bm r) = P(\omega = 1 \,|\, \bm r)$, we simply note that
\[ f^* = \argmin_f \E_{\bm r}\Big[ \E\big[ |f(\bm r) - \omega|^2 \mid \bm r\big] \Big]\]
should minimize $\E\big[ |f(\bm r) - \omega|^2 \mid \bm r\big] $ for almost every $\bm r$, which gives $f^*(\bm r) = \E[\omega \,|\, \bm r] = P(\omega = 1 \,|\, \bm r)$. 

To prove the second item $L_P(f) - L_P(f^*) = \E_P\big[ | f(\bm r) - f^*(\bm r) |^2 \big]$, we note that, by the definition of $L_P(\cdot)$ and the fact that $f^*(\bm r) = \E[\omega \,|\, \bm r]$ proven above, 
\begin{align*}
    L_P(f) - L_P(f^*) & = \E\big[|f(\bm r) - \omega|^2 \big] - \E\big[|f^*(\bm r) - \omega|^2 \big] \\
    & = \E\big[f(\bm r)^2\big] - 2 \E\big[f(\bm r) \omega\big] - \E\big[f^*(\bm r)^2\big] + 2 \E\big[f^*(\bm r)\omega\big] \\
    & = \E\big[f(\bm r)^2\big] - 2 \E_{\bm r}\big[ f(\bm r) \E[\omega | \bm r]\big] - \E\big[f^*(\bm r)^2\big] + 2 \E_{\bm r}\big[f^*(\bm r)\E[\omega | \bm r]\big] \\
    & = \E\big[f(\bm r)^2\big] - 2 \E_{\bm r}\big[ f(\bm r) f^*(\bm r)\big] - \E\big[f^*(\bm r)^2\big] + 2 \E_{\bm r}\big[f^*(\bm r)^2\big] \\
    & = \E\big[f(\bm r)^2 - 2 f(\bm r) f^*(\bm r) + f^*(\bm r)^2\big] \\
    & = \E\big[ | f(\bm r) - f^*(\bm r)|^2\big]. 
\end{align*}

\section{Missing Proofs from Section~\ref{sec:general_distribution}}
\label{app:general_distribution}
\subsection{Proof of Lemma~\ref{lem:general_lower_bound_reduction}}
\label{app:general_lower_bound_reduction}
Recall that we have a family $\mathcal D$ of distributions over $\bm \S = \S_1 \times \cdots \times \S_n$ satisfying the three properties in Definition~\ref{def:D_properties} ($B$-uniformly bounded, same marginal across distributions, and distinct marginals across signals).
We have constructed from $\mathcal D$ the family of distributions $\mathcal P = \{P_D: D\in\mathcal D\}$ for the forecast aggregation problem as follows: 
\begin{equation}\label{eq:def_P_z_s}
\left\{ \begin{aligned} 
    & P_D(\omega = 0) = P_D(\omega = 1) = \frac{1}{2}, & \\
    & P_D(\cdot \mid \omega = 0) = \mathrm{Uniform}(\bm \S), && \text{namely, }~~ P_D(\bm s \mid \omega=0) = \frac{1}{|\bm \S|} = \frac{1}{m^n},\\
    & P_D(\cdot \mid \omega = 1) = D, && \text{namely, }~~ P_D(\bm s \mid \omega=1) = D(\bm s).  
\end{aligned}
\right. 
\end{equation}
We want to show that $\eps$-optimal aggregation with respect to $\mathcal P$ will imply $(1+B)^2\sqrt \eps$ total variation distance learning with respect to $\mathcal D$, and hence $T_{\mathcal P}(\eps, \delta) \ge T^{\TV}_{\mathcal D}\big((1+B)^2 \sqrt \eps,\, \delta \big)$. 

First, we have the following observations about $\mathcal D$ and $\mathcal P$:
\begin{fact}\label{fact:reduction}
For a family of distributions $\mathcal D$ that satisfies the three properties in Definition~\ref{def:D_properties}: 
\begin{enumerate}
    \item Given signal $s_i$, expert $i$'s report is $r_i = P_D(\omega = 1 \mid s_i) = \frac{D(s_i)}{\frac{1}{m} + D(s_i)}$, which is the same for all distributions $D \in \mathcal D$. 
    \item For every expert $i$, given different signals $s_i \ne s_i'$, its reports $r_i \ne r_i'$.  So, there is a one-to-one mapping between $s_i$ and $r_i$ for every $i \in\{1, \ldots, n\}$ and also a one-to-one mapping between the joint signal $\bm s = (s_1, \ldots, s_n)$ and the joint report $\bm r = (r_1, \ldots, r_n)$. 
    \item For any joint signal $\bm s$, with corresponding joint report $\bm r$, we have $D(\bm s) = \frac{f^*(\bm r)}{m^n(1 - f^*(\bm r))}$. 
\end{enumerate}
\end{fact}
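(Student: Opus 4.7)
The plan is to derive all three items by direct computation from the construction of $P_D$ in equation~\eqref{eq:def_P_z_s}, combined with the three properties of $\mathcal D$ listed in Definition~\ref{def:D_properties}. No machinery beyond Bayes' rule is needed, so I don't anticipate a significant obstacle; the work of this Fact is really just to verify that the construction of $\mathcal P$ does what Lemma~\ref{lem:general_lower_bound_reduction} will later need.

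For item 1, I would apply Bayes' rule to expert $i$:
\[
    r_i = P_D(\omega=1 \mid s_i) = \frac{P_D(\omega=1)\, P_D(s_i \mid \omega=1)}{P_D(\omega=0)\, P_D(s_i \mid \omega=0) + P_D(\omega=1)\, P_D(s_i \mid \omega=1)}.
\]
Substituting $P_D(\omega=0) = P_D(\omega=1) = \tfrac{1}{2}$, using $P_D(s_i \mid \omega=0) = \tfrac{1}{m}$ (the $i$-th marginal of the uniform distribution on $\bm \S$), and $P_D(s_i \mid \omega=1) = D(s_i)$ (the $i$-th marginal of $D$), I obtain $r_i = \frac{D(s_i)}{1/m + D(s_i)}$. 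Since the ``same marginal across distributions'' property gives $D(s_i) = D'(s_i)$ for all $D, D' \in \mathcal D$, this value is independent of $D$.

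For item 2, the function $x \mapsto \frac{x}{1/m + x}$ is strictly increasing on $[0, \infty)$, so distinct values of $D(s_i)$ yield distinct values of $r_i$. The ``distinct marginals across signals'' property then gives $r_i \ne r_i'$ whenever $s_i \ne s_i'$. The componentwise bijection between signals and reports immediately lifts to a bijection between the joint signal $\bm s$ and the joint report $\bm r$.

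For item 3, I would invoke Lemma~\ref{lem:difference_loss} to identify $f^*(\bm r) = P_D(\omega=1 \mid \bm r)$. Because of the bijection from item 2, conditioning on $\bm r$ is equivalent to conditioning on $\bm s$, so $f^*(\bm r) = P_D(\omega=1 \mid \bm s)$. Applying Bayes' rule once more with $P_D(\bm s \mid \omega=0) = \tfrac{1}{m^n}$ and $P_D(\bm s \mid \omega=1) = D(\bm s)$ gives $f^*(\bm r) = \frac{D(\bm s)}{1/m^n + D(\bm s)}$, which rearranges algebraically to $D(\bm s) = \frac{f^*(\bm r)}{m^n(1-f^*(\bm r))}$. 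The only potential subtlety is the denominator $1 - f^*(\bm r)$; this is strictly positive because $D(\bm s)$ is finite, which forces $f^*(\bm r) < 1$.
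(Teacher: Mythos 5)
Your proposal is correct and follows essentially the same route as the paper: all three items are derived by Bayes' rule from the construction of $P_D$, using the ``same marginal across distributions'' property for item 1, strict monotonicity of $x \mapsto x/(\tfrac{1}{m}+x)$ together with ``distinct marginals across signals'' for item 2, and the resulting bijection between $\bm s$ and $\bm r$ for item 3. The only cosmetic difference is that the paper first computes $P_D(\bm s)$ and marginalizes to get $P_D(s_i)$, whereas you compute the conditional marginals $P_D(s_i \mid \omega)$ directly; these are equivalent, and your added observation that $f^*(\bm r) < 1$ (so the rearrangement in item 3 is well-posed) is a harmless bonus.
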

\begin{proof}
We prove the three items one by one: 
\begin{enumerate}
\item 
By definition, the marginal distribution of joint signal, $P_D(\bm s)$, is
\begin{equation}\label{eq:P_z_s}
    P_D(\bm s) = P_D(\omega = 0) P_D(\bm s \mid \omega = 0) + P_D(\omega = 1) P_D(\bm s \mid \omega = 1) = \frac{1}{2}\Big( \frac{1}{m^n} + D(\bm s)\Big). 
\end{equation}
Fixing $s_i$, summing over $\bm s_{-i} = (s_1, \ldots, s_{i-1}, s_{i+1}, \ldots, s_n) \in \bm \S_{-i}$, we get
\begin{equation*}
    P_D(s_i) = \frac{1}{2}\Big( \frac{|\bm \S_{-i}|}{m^n} + \sum_{\bm s_{-i} \in \bm \S_{-i}} D(\bm s)\Big) = \frac{1}{2}\Big( \frac{1}{m} + D(s_i)\Big).
\end{equation*}
So, given signal $s_i$, expert $i$ reports 
\begin{equation}\label{eq:r_D_z}
    r_i = P_D(\omega = 1 \mid s_i) = \frac{P_z(\omega = 1) P_D(s_i \mid \omega = 1)}{P_D(s_i)} = \frac{D(s_i)}{\frac{1}{m} + D(s_i)}, 
\end{equation}
and this is the same for all $D\in\mathcal D$ since $D(s_i) = D'(s_i)$ by the ``same marginal across distributions'' property. 

\item Given $s_i\ne s_i'$, by the ``distinct marginals across signals'' property, we have $D(s_i) \ne D(s_i')$.  Since $r_i = \frac{D(s_i)}{\frac{1}{m} + D(s_i)}$ and $\frac{x}{\frac{1}{m} + x}$ is a strictly increasing function of $x$, it follows that $r_i \ne r_i'$.  

\item According to item 2, there is a one-to-one mapping between $\bm s = (s_1, \ldots, s_n)$ and $\bm r = (r_1, \ldots, r_n)$; in other words, observing signals $s_1, \ldots, s_n$ is equivalent to observing reports $r_1, \ldots, r_n$.  Therefore, by Bayes' rule we have
\begin{align}
    f^*(\bm r) = P_D(\omega = 1 \mid \bm r) = P_D(\omega = 1 \mid \bm s) & = \frac{P_D(\omega = 1) P_D(\bm s \mid \omega = 1)}{P_D(\bm s)} \nonumber \\
    \text{by \eqref{eq:def_P_z_s} and \eqref{eq:P_z_s}} & =  \frac{D(\bm s)}{\frac{1}{m^n} + D(\bm s)}. \label{eq:f^*_D_z}
\end{align}
Rearranging, we obtain $D(\bm s) = \frac{f^*(\bm r)}{m^n(1 - f^*(\bm r))}$.
\qedhere
\end{enumerate}
\end{proof} 

\begin{claim}\label{claim:eps_optimal_to_d_TV}
If we have an aggregator $\hat f$ that is $\eps$-optimal with respect to $P_D$, then we can find a distribution $\hat D$ such that $\dTV(\hat D, D) \le (1+B)^2 \sqrt \eps$.
\end{claim}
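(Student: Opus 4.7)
The plan is to define $\hat D$ by inverting the identity $D(\bm s)=\frac{1}{m^n}\frac{f^*(\bm r)}{1-f^*(\bm r)}$ from item~3 of Fact~\ref{fact:reduction}, substituting $\hat f$ for $f^*$. Writing $g(x):=\frac{x}{1-x}$ and using the bijection $\bm s\leftrightarrow\bm r$ guaranteed by the ``distinct marginals across signals'' property, the candidate is $\hat D(\bm s):=\frac{1}{m^n}g(\hat f(\bm r))$. If this is not a probability distribution, I will project it onto the simplex in TV distance, which can only tighten $\dTV(\hat D,D)$.

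The first step is a clipping argument. The $B$-uniform bound $D(\bm s)\le B/m^n$, together with $f^*(\bm r)=\frac{D(\bm s)}{1/m^n+D(\bm s)}$, forces $f^*(\bm r)\in[0,\frac{B}{1+B}]$ for every~$\bm r$. Hence replacing $\hat f(\bm r)$ by its truncation to $[0,\frac{B}{1+B}]$ decreases $|\hat f(\bm r)-f^*(\bm r)|$ pointwise and preserves the $\eps$-optimality hypothesis. On this interval, $g'(x)=1/(1-x)^2\le(1+B)^2$, so we obtain the pointwise Lipschitz bound $|g(\hat f(\bm r))-g(f^*(\bm r))|\le(1+B)^2|\hat f(\bm r)-f^*(\bm r)|$.

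The second step chains the estimates. Directly from the definitions,
\begin{equation*}
\dTV(\hat D,D)=\frac{1}{2m^n}\sum_{\bm s\in\bm\S}\bigl|g(\hat f(\bm r))-g(f^*(\bm r))\bigr|\le\frac{(1+B)^2}{2m^n}\sum_{\bm s\in\bm\S}|\hat f(\bm r)-f^*(\bm r)|.
\end{equation*}
Since $P_D(\bm s)=\tfrac{1}{2}(1/m^n+D(\bm s))\ge\tfrac{1}{2m^n}$, the factor $\tfrac{1}{2m^n}$ can be replaced by $P_D(\bm s)$ at the cost of an inequality, yielding $\dTV(\hat D,D)\le(1+B)^2\,\E_{P_D}\bigl[|\hat f(\bm r)-f^*(\bm r)|\bigr]$. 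Jensen's inequality then gives $\E_{P_D}[|\hat f-f^*|]\le\sqrt{\E_{P_D}[|\hat f-f^*|^2]}\le\sqrt\eps$, completing the bound $\dTV(\hat D,D)\le(1+B)^2\sqrt\eps$.

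The main obstacle is the singularity of $g(x)=x/(1-x)$ at $x=1$: without control on how close $\hat f$ can get to~$1$, the Lipschitz step breaks down. The clipping in the first step solves this by leveraging the $B$-uniform bound to confine $f^*$ (and hence the clipped $\hat f$) to a subinterval of $[0,1)$ on which $g$ is Lipschitz with constant $(1+B)^2$. This is precisely where conditions~1 and~3 of Definition~\ref{def:D_properties} enter; the ``same marginal across distributions'' condition is not needed for this claim but is required elsewhere in Lemma~\ref{lem:general_lower_bound_reduction} to make the reduction from signal samples to report samples algorithmic.
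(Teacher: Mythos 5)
Your proof is correct and follows essentially the same route as the paper's: define $\hat D$ by inverting the relation $D(\bm s) = \frac{1}{m^n}\frac{f^*(\bm r)}{1-f^*(\bm r)}$, truncate $\hat f$ to $[0,\frac{B}{1+B}]$ using the $B$-uniform bound, apply the Lipschitz estimate $\big(\frac{x}{1-x}\big)' \le (1+B)^2$, replace $\frac{1}{2m^n}$ by $P_D(\bm s)$, and finish with Jensen. Your aside about projecting $\hat D$ onto the simplex flags a subtlety the paper does not explicitly address; just note that an $\ell_1$ projection onto a convex set need not contract distances to other points in the set, so a triangle-inequality argument (costing a factor of~$2$) or selecting the nearest member of $\mathcal D$ is the safe way to make $\hat D$ a genuine distribution.
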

\begin{proof}
Because $D$ is $B$-uniformly bounded, from \eqref{eq:f^*_D_z} we can verify that  $f^*(\bm r)$ satisfies $f^*(\bm r) \le \frac{B}{1+B}$.  So, we can assume $\hat f(\bm r) \le \frac{B}{1+B}$ as well (if $\hat f(\bm r) > \frac{B}{1+B}$, we can let $\hat f(\bm r)$ be $\frac{B}{1+B}$; this only reduces the approximation error $\E\big[|\hat f(\bm r) - f^*(\bm r)|^2\big]$).  Define $\hat D$ by letting $\hat D(\bm s) = \frac{\hat f(\bm r)}{m^n(1 - \hat f(\bm r))}$, $\forall \bm s \in \bm \S$, where $\bm r$ is the reports corresponding to $\bm s$ (cf., Fact~\ref{fact:reduction}).  Then, $\dTV(\hat D, D)$ is  
\begin{align*}
    \dTV(\hat D, D) = \frac{1}{2} \sum_{\bm s\in \bm \S} \big| \hat{D}(\bm s) - D(\bm s) \big| & = \frac{1}{2} \sum_{\bm s\in \bm \S} \big| \frac{\hat f(\bm r)}{m^n(1 - \hat f(\bm r))} - \frac{f^*(\bm r)}{m^n(1 - f^*(\bm r))} \big| \\
    & = \frac{1}{2m^n} \sum_{\bm s\in \bm \S} \big| \frac{\hat f(\bm r)}{1 - \hat f(\bm r)} - \frac{f^*(\bm r)}{1 - f^*(\bm r)} \big|. 
\end{align*}
Because $\hat f(\bm r), f^*(\bm r) \le \frac{B}{1+B}$ and the function $\frac{x}{1-x}$ has derivative $\frac{1}{(1-x)^2} \le (1+B)^2$ when $x \le \frac{B}{1+B}$, we have
\begin{align*}\label{eq:d_TV_sum_difference}
    \dTV(\hat D, D) & \le \frac{1}{2m^n} (1+B)^2 \sum_{\bm s\in \bm \S} \big| \hat f(\bm r) - f^*(\bm r) \big|  \\
    \text{by \eqref{eq:P_z_s}} & \le (1+B)^2 \sum_{\bm s\in \bm \S} P_D(\bm s) \big| \hat f(\bm r) - f^*(\bm r) \big| \, = \, (1+B)^2 \E_{P_D}\Big[\big|\hat f(\bm r) - f^*(\bm r) \big| \Big].
\end{align*}
By Jensen's inequality $(\E[X])^2 \le \E[X^2]$ and by the assumption that $\hat f$ is $\eps$-optimal, we have $\big( \E_{P_D}\big[|\hat f(\bm r) - f^*(\bm r)| \big] \big)^2 \le \E_{P_D}\big[|\hat f(\bm r) - f^*(\bm r)|^2 \big] \le \eps$.  Thus, 
\[ \dTV(\hat D, D) \le (1+B)^2 \sqrt \eps, \]
which proves the claim. 
\end{proof}

Now, we present the reduction from learning $\mathcal D$ in total variation distance to forecast aggregation for $\mathcal P$.  We use notations $\bm x^{(1)}, \ldots, \bm x^{(T)} \in \bm \S$ to represent the samples from $D$.  From $\bm x^{(1)}, \ldots, \bm x^{(T)}$ we construct the samples $(\bm r^{(1)}, \omega^{(1)}), \ldots, (\bm r^{(T)}, \omega^{(T)})$ for the forecast aggregation problem.   After obtaining a solution $\hat f$ to the latter problem, we convert it into a solution $\hat D$ to the former.  Details are as follows:  

\begin{mdframed}
\textbf{Input:} $T$ i.i.d.~samples $\bm x^{(1)}, \ldots, \bm x^{(T)}$ from an unknown distribution $D \in \mathcal D$.

\vspace{0.5em}
\noindent \textbf{Reduction:} 
\begin{enumerate}
    \item Draw $T$ samples $\omega^{(1)}, \ldots, \omega^{(T)} \sim \mathrm{Uniform}\{0, 1\}$.
    \item For each $t = 1, \ldots, T$, do the following: 
    \begin{itemize}
        \item If $\omega^{(t)} = 0$, draw $\bm s^{(t)} \sim \mathrm{Uniform}(\bm \S)$.
        \item If $\omega^{(t)} = 1$, let $\bm s^{(t)} = \bm x^{(t)}$.
        \item For each $i$, compute $r_i^{(t)} = \frac{D(s_i^{(t)})}{\frac{1}{m} + D(s_i^{(t)})}$.  Let $\bm r^{(t)} = (r_1^{(t)}, \ldots, r_n^{(t)})$. 
    \end{itemize}
    \item Feed $S_T = \big\{ (\bm r^{(1)}, \omega^{(1)}), \ldots, (\bm r^{(T)}, \omega^{(T)}) \big\}$ to the forecast aggregation problem.  Obtain solution $\hat f$.
    \item Convert $\hat f$ into $\hat D$ according to Claim~\ref{claim:eps_optimal_to_d_TV}. 
\end{enumerate}

\noindent \textbf{Output:} $\hat D$. 
\end{mdframed}

We remark that, in the second step of the reduction, the report $r_i^{(t)}$ can be computed even if $D$ is unknown, because the $D(s_i^{(t)})$ is the same for all $D \in \mathcal D$ and hence known.

Using the above reduction, we show that the sample complexity of $\eps$-optimal forecast aggregation for $\mathcal P$ cannot be smaller than the sample complexity of learning $\mathcal D$ within total variation distance $(1+B)^2\sqrt\eps$, which will prove Lemma~\ref{lem:general_lower_bound_reduction}: 

\paragraph{Proof of Lemma~\ref{lem:general_lower_bound_reduction}:}
First, we verify that the distribution of samples $S_T$ in the above reduction is exactly the distribution of $T$ samples $\{(\bm r^{(1)}, \omega^{(1)}), \ldots, (\bm r^{(T)}, \omega^{(T)})\}$ from $P_D$.  This is because: (1) the distribution of $\omega^{(t)}$ is $\mathrm{Uniform}\{0, 1\}$, as defined in $P_D$; (2) given $\omega^{(t)} = 0$, the distribution of $\bm s^{(t)}$ is $\mathrm{Uniform}(\bm \S)$, as defined in $P_D$;
(3) given $\omega^{(t)} = 1$, the distribution of $\bm s^{(t)}$ is the same as the distribution of $\bm x^{(t)}$, which is $D$, because the random draws of $\omega^{(t)}$ and $\bm x^{(t)}$ are independent; (4) according to Fact~\ref{fact:reduction}, the report $r_i^{(t)} = \frac{D(s_i^{(t)})}{\frac{1}{m} + D(s_i^{(t)})} = P_D(\omega = 1 \mid s_i^{(t)})$, as desired.  

Then, by the definition of sample complexity of forecast aggregation, if we are given $T = T_{\mathcal P}(\eps, \delta)$ samples $S_T$ for the forecast aggregation problem, then with probability at least $1 - \delta$ we should be able to find an $\eps$-optimal aggregator $\hat f$ with respect to $P_D$.  According to Claim~\ref{claim:eps_optimal_to_d_TV}, we can convert $\hat f$ into a $\hat D$ such that 
\[ \dTV(\hat D, D) \le (1+B)^2\sqrt\eps. \]
By the definition of sample complexity $T^{\TV}_{\mathcal D}(\cdot, \delta)$ of distribution learning, $T$ must be at least
\[ T \ge T^{\TV}_{\mathcal D}\big((1+B)^2\sqrt\eps,\, \delta\big), \]
which proves the lemma.  

\subsection{Proof of Proposition~\ref{prop:D_distribution_learning_lower_bound}}
\label{app:D_distribution_learning_lower_bound}
To prove Proposition~\ref{prop:D_distribution_learning_lower_bound} we will construct a family of distributions $\mathcal D$ that satisfies the three properties in Definition~\ref{def:D_properties} and requires $T^{\TV}_{\mathcal D}(\eps_{\TV}, \delta) = \Omega\big(\frac{m^{n-2} + \log(1/\delta)}{\eps_{\TV}^2}\big)$ samples to learn.
For simplicity, we write $\eps = \eps_{\TV}$.
For technical convenience, we assume $\eps < \frac{1}{40}, \delta < 0.01$.  

\subsubsection{Part 1: Constructing $\mathcal D$}
\label{subsec:family_D}
We index the distributions $D_z \in \mathcal D$ by $z$; the meaning of $z$ will be defined later. 
Without loss of generality, we assume $|\S_i| = m$ to be an even integer, and denote $\S_i = \{1, \ldots, m\} =: S$. 
We will define $D_z$ to be a distribution over the joint signal space $\bm \S = \S_1 \times \cdots \times \S_n = S^n = \{1, \ldots, m\}^n$.
In the following we will call a joint signal $\bm s \in S^n$ simply a signal.
We write a signal $\bm s \in S^n$ as $\bm s = (\bm b, x, y)$, where $\bm b \in S^{n-2}$ and $x, y\in S$.  We sort all the $m^n$ signals in $S^n$ by the lexicographical order, from $(1, \ldots, 1, 1)$, $(1, \ldots, 1, 2)$, ..., to $(m, \ldots, m, m)$.  We number the signals from $1$ to $m^n$, using
\begin{equation*}
    \num(\bm s) = \num(\bm b, x, y) \in \{1, \ldots, m^n\}. 
\end{equation*}
to denote their numbers.
We divide the whole signal space $S^n$ into $|S^{n-2}| = m^{n-2}$ ``buckets'', each of size $m^2$ and denoted by $B_{\bm b}$:
\begin{equation*}
    B_{\bm b} = \big\{ (\bm b, x, y) : x, y\in S \big\}, \quad ~ \bm b\in S^{n-2}.
\end{equation*}

We first define a ``base'' distribution $D_{\base}$, then construct the distributions $D_z$'s by modifying the probabilities of the base distribution within each bucket $B_{\bm b}$.  Let $\gamma = 1 + \frac{1}{m^n}$.  The base distribution is defined as follows: 
\begin{equation*}
    D_{\base}(\bm s) = \frac{\gamma^{\num(\bm s)}}{W}, ~~~~~~ W = \sum_{\bm s \in S^n} \gamma^{\num(\bm s)} = \sum_{\ell=1}^{m^n} \gamma^\ell. 
\end{equation*}
Because $1 \le \gamma^{\ell} \le \gamma^{m^n} \le (1 + \frac{1}{m^n})^{m^n} \le e$, we have 
\begin{equation}\label{eq:bound_W}
    m^n \le W \le e m^n. 
\end{equation}
We assign a sign $z_{\bm b} \in \{+1, -1\}$ to each bucket $B_{\bm b}$, and let $z$ be a vector of length $m^{n-2}$ that includes the signs of all buckets: 
\[ z = (z_{\bm b})_{\bm b \in S^{n-2}}, ~~~~ z_{\bm b} \in \{+1, -1\}.\]
We have $2^{m^{n-2}}$ different $z$'s in total, and hence $2^{m^{n-2}}$ different distributions $D_z$'s in $\mathcal D$ in total.  Let $c = 20$, so that $c\eps \le 1/2$. 
For each $z$, we define $D_z$ as follows: within each bucket $B_{\bm b}$, for each element $(\bm b, x, y) \in B_{\bm b}$ let 
\begin{equation}
    D_z(\bm b, x, y) =
    \begin{cases}
        D_{\base}(\bm b, x, y) + \frac{z_{\bm b} c \eps}{W}  & \text{ if } x \le \frac{m}{2}, ~ y\le \frac{m}{2} \\
        D_{\base}(\bm b, x, y) - \frac{z_{\bm b} c \eps}{W}  & \text{ if } x \le \frac{m}{2}, ~ y> \frac{m}{2} \\
        D_{\base}(\bm b, x, y) - \frac{z_{\bm b} c \eps}{W}  & \text{ if } x > \frac{m}{2}, ~ y\le \frac{m}{2} \\
        D_{\base}(\bm b, x, y) + \frac{z_{\bm b} c \eps}{W}  & \text{ if } x > \frac{m}{2}, ~ y> \frac{m}{2} 
    \end{cases}. 
\end{equation}

\begin{claim}
The family of distributions $\mathcal D = \{D_z\}_z$ defined above satisfies the three properties in Definition~\ref{def:D_properties}: $B$-uniformly bounded with $B = e + 1/2$, same marginal across distributions, distinct marginals across signals. 
\end{claim}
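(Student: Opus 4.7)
The claim bundles together three structural properties of the family $\mathcal D = \{D_z\}_z$, plus an implicit sanity check that each $D_z$ is a valid probability distribution. I would handle these in roughly the following order, after first establishing the sanity check.

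\emph{Step 0 (sanity check).} I would first verify that each $D_z$ is a valid distribution. Non-negativity follows because $D_z(\bm s) \ge D_{\base}(\bm s) - \frac{c\eps}{W} \ge \frac{1 - c\eps}{W} > 0$, using $c\eps \le \tfrac12$ and $D_{\base}(\bm s) \ge \frac{1}{W}$. For the total mass, I would observe that within each bucket $B_{\bm b}$, the four sign patterns $(+,-,-,+)$ partition the $m^2$ signals into four groups of equal size $m^2/4$, so the modifications sum to $0$ inside every bucket. Since $\sum_{\bm s} D_{\base}(\bm s) = 1$, this gives $\sum_{\bm s} D_z(\bm s) = 1$.

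\emph{Step 1 ($B$-uniformly bounded).} Using $D_z(\bm s) \le D_{\base}(\bm s) + \frac{c\eps}{W}$, the bound $\gamma^{m^n} \le e$, and $W \ge m^n$ from \eqref{eq:bound_W}, I get $D_z(\bm s) \le \frac{e + c\eps}{W} \le \frac{e + 1/2}{m^n} = \frac{B}{m^n}$.

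\emph{Step 2 (same marginal across distributions).} The key is that for any expert $i$ and any value $s_i$, the $z$-dependent part of the marginal $D_z(s_i)$ vanishes. I would split into two cases according to whether the coordinate $i$ lies in the $\bm b$-part or the $(x,y)$-part of the signal. If $i \in \{1,\dots,n-2\}$, then computing $D_z(s_i)$ requires summing over all remaining coordinates, in particular over all $(x,y) \in S \times S$ within each bucket, and the four-way sign pattern inside each bucket cancels the $z_{\bm b}$-contribution. If $i = n-1$ (the $x$-coordinate), I fix $x = x_0$ and sum over $\bm b$ and $y$; for each fixed $\bm b$, the sum over $y$ alone already cancels the modifications since exactly half of $y$'s carry $+z_{\bm b}c\eps/W$ and the other half $-z_{\bm b}c\eps/W$ (regardless of whether $x_0 \le m/2$ or $>m/2$). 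The case $i = n$ is symmetric. In all cases $D_z(s_i) = D_{\base}(s_i)$, independent of $z$.

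\emph{Step 3 (distinct marginals across signals).} By Step 2, it suffices to prove this for $D_{\base}$. The main observation is that for any fixed coordinate $i$, incrementing $s_i$ by $1$ (while leaving the other coordinates free) shifts $\num(\bm s)$ by exactly $m^{n-i}$ for each $\bm s$ in the corresponding sum. This gives the clean multiplicative identity
\[
D_{\base}(s_i = v+1) \,=\, \gamma^{m^{n-i}} \cdot D_{\base}(s_i = v),
\]
and since $\gamma = 1 + 1/m^n > 1$, iterating yields $D_{\base}(s_i = v) \ne D_{\base}(s_i = v')$ whenever $v \ne v'$.

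\emph{Expected difficulty.} There is no deep obstacle here; the only thing to get right is the bookkeeping in Step 2, where one must confirm that the bucket-level cancellation survives every type of partial marginalization (over one of $x,y$, over both, or over the $\bm b$-coordinates). The $(+,-,-,+)$ sign pattern was chosen precisely so that summing out \emph{either} $x$ \emph{or} $y$ alone already gives zero, which is what makes all three marginalization regimes work uniformly.
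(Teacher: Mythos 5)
Your proposal is correct and follows essentially the same structure as the paper's proof: bound $D_z(\bm s)$ by $D_\base(\bm s)+c\eps/W$ for uniform boundedness, cancel the bucket-level $\pm z_{\bm b}c\eps/W$ modifications upon marginalizing over any one of $\{(x,y),\,y,\,x\}$ for same-marginal, and reduce distinct-marginals to monotonicity of $D_\base(s_i)$ in $s_i$. The only variation is in Step 3: you package monotonicity via the clean multiplicative identity $D_\base(s_i=v+1)=\gamma^{m^{n-i}}D_\base(s_i=v)$, whereas the paper simply compares term-by-term using $\num(s_i,\bm s_{-i})<\num(s_i',\bm s_{-i})$; both are equivalent. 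Your added Step 0 (verifying each $D_z$ is a valid distribution) is a worthwhile sanity check the paper leaves implicit.
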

\begin{proof}
\textbf{$B$-uniformly bounded:}
For any $\bm s$, any $D_z$, by definition, 
\begin{align*}
    D_z(\bm s) \le D_{\base}(\bm s) + \frac{c \eps}{W} \le \frac{\gamma^{m^n}}{W} + \frac{c\eps}{W} \stackrel{\eqref{eq:bound_W}}{\le} \frac{e}{m^n} + \frac{1/2}{m^n}.
\end{align*}
So, the distribution is $B$-uniformly bounded with $B = e + 1/2$. 

\textbf{Same marginal across distributions:}
Consider each $D_z(s_i)$.  We want to show that $D_z(s_i)$ does not depend on $z$, and in fact, $D_z(s_i) = D_\base(s_i)$.  
If $i \in \{1, \ldots, n-2\}$, namely, $s_i$ is a component of the vector $\bm b$, then we have 
\begin{align*}
    D_z(s_i) = \sum_{\bm s_{-i} \in \bm \S_{-i}} D_z(s_i, \bm s_{-i}) = \sum_{\bm b\in S^{n-2}:\, b_i = s_i} \, \sum_{x=1}^m \sum_{y=1}^m D_z(\bm b, x, y).
\end{align*}
We notice that, fixing any $\bm b$, the numbers of $\frac{+z_{\bm b}c\eps}{W}$ and $\frac{-z_{\bm b}c\eps}{W}$ in the summation $\sum_{x=1}^m \sum_{y=1}^m D_z(\bm b, x, y)$ are the same.  So, they cancel out, and we obtain 
\[
D_z(s_i) = \sum_{\bm b\in S^{n-2}:\, b_i = s_i} \, \sum_{x=1}^m \sum_{y=1}^m D_\base(\bm b, x, y) = D_\base(s_i). 
\]
If $i = n - 1$, namely $s_i = x$, then we have: 
\begin{align*}
    D_z(s_i) = \sum_{\bm s_{-i} \in \bm \S_{-i}} D_z(s_i, \bm s_{-i}) = \sum_{\bm b\in S^{n-2}} \sum_{y=1}^m D_z(\bm b, x, y).
\end{align*}
Fixing any $\bm b$, the numbers of $\frac{+z_{\bm b}c\eps}{W}$ and $\frac{-z_{\bm b}c\eps}{W}$ in the summation $\sum_{y=1}^m D_z(\bm b, x, y)$ are the same.  So, they cancel out, and we obtain 
\[
D_z(s_i) = \sum_{\bm b\in S^{n-2}} \sum_{y=1}^m D_\base(\bm b, x, y) = D_\base(s_i). \]
Finally, if $i = n$, namely $s_i = y$, then by a similar argument as above we have 
\[ D_z(s_i) = \sum_{\bm b\in S^{n-2}} \sum_{x=1}^m D_\base(\bm b, x, y) = D_\base(s_i). \]

\textbf{Distinct marginals across signals:}
By the ``same marginal across distributions'' property above we have $D_z(s_i) = D_\base(s_i)$.  So, to prove ``distinct marginals across signals'' we only need to prove $D_\base(s_i) \ne D_\base(s_i')$ for $s_i \ne s_i'$.  Without loss of generality assume $s_i < s_i'$.  By the definition $D_\base(\bm s) = \frac{\gamma^{\num(\bm s)}}{W}$ and the fact that $\num(s_i, \bm s_{-i}) < \num(s_i', \bm s_{-i})$ for any $\bm s_{-i}\in \bm \S_{-i}$, we have  
\[
    D_\base(s_i) = \sum_{\bm s_{-i}\in \bm \S_{-i}} D_\base(s_i, \bm s_{-i}) < \sum_{\bm s_{-i}\in \bm \S_{-i}} D_\base(s_i', \bm s_{-i}) = D_\base(s_i'), 
\]
so $D_\base(s_i) \ne D_\base(s_i')$. 
\end{proof}

\subsubsection{Part 2: Sample Complexity Lower Bound of Learning $\mathcal D$}
\paragraph{Overview}
We want to prove the proposition that the sample complexity of learning the family of distributions $D=\{D_z\}_z$ defined above is at least $T^{\TV}_{\mathcal D}(\eps, \delta) = \Omega\big(\frac{m^{n-2} + \log(1/\delta)}{\eps^2}\big)$.
This proof is analogous to a textbook proof of Proposition~\ref{prop:distribution_learning_sample_complexity} (the sample complexity for learning \emph{all} distributions), which uses reductions from the \emph{distinguishing distributions} problem.  Roughly speaking, if one can learn the unknown distribution $D_z$ well then one must be able to guess most of the components of the sign vector $z=(z_{\bm b})_{\bm b\in S^{n-2}}$ correctly, meaning that one can distinguish whether the distribution $D_{z_{\bm b}}$ on bucket $B_{\bm b}$ is $D_{z_{\bm b}=+1}$ or $D_{z_{\bm b}=-1}$.  However, since $D_{z_{\bm b}=+1}$ and $D_{z_{\bm b}=-1}$ are ``$O(\eps$)-close'' to each other, distinguishing them requires $\Omega(\frac{1}{\eps^2})$ samples.  In average, there are only $O(\frac{T}{m^{n-2}})$ samples falling into a bucket (because there are $m^{n-2}$ buckets in total and the distribution $D_z$ is close to uniform).  We thus need $O(\frac{T}{m^{n-2}}) = \Omega(\frac{1}{\eps^2})$, which gives $T = \Omega(\frac{m^{n-2}}{\eps^2})$.  Ignoring logarithmic terms, this proves the proposition.  


\paragraph{Formal argument} 
First, we note that if we can learn $D_z$ very well, then we can guess the vector $z$ correctly for a large fraction of its $m^{n-2}$ components.  Formally, suppose we obtain from $T$ samples a distribution $\hat D$ such that $\dTV(\hat D, D_z) \le \eps$.  We find the distribution $D_{w}$ in $\mathcal D$, $w = (w_{\bm b})_{\bm b\in S^{n-2}} \in \{+1, -1\}^{m^{n-2}}$, that is closest to $\hat D$ in total variation distance.  By definition, we have $\dTV(D_w, \hat D) \le \dTV(D_z, \hat D) \le \eps$.  Hence, by triangle inequality,
\[ \dTV(D_w, D_z) \le \dTV(D_w, \hat D) + \dTV(\hat D, D_z) \le 2\eps.\]  
Let
\[ \# = \big| \{\bm b \in S^{n-2} \mid w_{\bm b} \ne z_{\bm b} \} \big|\]
be the number of different components of $w$ and $z$.  We claim that: 
\begin{claim}\label{claim:d_TV_implies_number_of_different}
$\dTV(D_w, D_z) \le 2\eps$ implies $\# \le \frac{2W}{cm^2}$.
\end{claim}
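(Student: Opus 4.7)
}
The plan is a direct computation of $\dTV(D_w, D_z)$ in terms of $\#$, using the bucket-wise structure of the family. First I would observe that, by construction, within any bucket $B_{\bm b}$ with $w_{\bm b} = z_{\bm b}$ the distributions $D_w$ and $D_z$ agree pointwise (both apply the same signed perturbation $\pm\frac{c\eps}{W}$ around $D_{\base}$). Hence the only signals $\bm s$ that contribute to $\sum_{\bm s}|D_w(\bm s)-D_z(\bm s)|$ are those lying in the $\#$ many buckets where $w_{\bm b} \ne z_{\bm b}$.

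Next I would compute the pointwise contribution from such a ``disagreement'' bucket. If $w_{\bm b}=+1$ and $z_{\bm b}=-1$ (the other case is symmetric), then for every one of the $m^2$ signals $(\bm b,x,y)\in B_{\bm b}$ the definition of $D_z$ gives
\[
\bigl|D_w(\bm b,x,y) - D_z(\bm b,x,y)\bigr| \;=\; \Bigl|\tfrac{+c\eps}{W} - \tfrac{-c\eps}{W}\Bigr| \;=\; \tfrac{2c\eps}{W},
\]
regardless of which of the four quadrants in $(x,y)$ the signal lies in, because the $\pm$ sign pattern in the quadrants flips uniformly with $z_{\bm b}$. Summing over the $m^2$ signals in one disagreement bucket and over all $\#$ such buckets yields
\[
\sum_{\bm s \in \bm \S} \bigl|D_w(\bm s) - D_z(\bm s)\bigr| \;=\; \# \cdot m^2 \cdot \tfrac{2c\eps}{W} \;=\; \tfrac{2c\,m^2\,\eps}{W}\,\#.
\]

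Finally I would plug this into the definition $\dTV(D_w, D_z) = \tfrac{1}{2}\sum_{\bm s}|D_w(\bm s)-D_z(\bm s)|$ to obtain $\dTV(D_w, D_z) = \tfrac{c m^2 \eps}{W}\,\#$. Combining with the hypothesis $\dTV(D_w, D_z)\le 2\eps$ and rearranging gives $\# \le \tfrac{2W}{cm^2}$, as claimed. There is no real obstacle here: the only thing to be careful about is verifying that the pointwise difference is $\tfrac{2c\eps}{W}$ on \emph{every} signal in a disagreement bucket (not just on some quadrants), which follows from the fact that in all four quadrants the sign in the perturbation is $z_{\bm b}$ times a $\pm 1$ depending on the quadrant, so flipping $z_{\bm b}$ flips the perturbation uniformly in magnitude $\tfrac{c\eps}{W}$.
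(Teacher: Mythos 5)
Your proof is correct and matches the paper's argument essentially line for line: both compute that each disagreement bucket contributes exactly $\frac{m^2 c\eps}{W}$ to $\dTV(D_w, D_z)$ (because the pointwise difference is $\frac{2c\eps}{W}$ on all $m^2$ signals in that bucket) while agreement buckets contribute nothing, and both then rearrange $\dTV(D_w, D_z) = \#\cdot\frac{m^2 c\eps}{W} \le 2\eps$ to get $\# \le \frac{2W}{cm^2}$. Nothing to add.
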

\begin{proof}
Whenever we have a different component $w_{\bm b} \ne z_{\bm b}$, this different component contributes the following to the total variation distance between $D_w$ and $D_z$: 
\begin{equation}\label{eq:d_TV_difference_one_component}
    \frac{1}{2} \sum_{(\bm b, x, y) \in B_{\bm b}} \big| D_{w}(\bm b, x, y) - D_z(\bm b, x, y) \big| = \frac{1}{2} \sum_{(\bm b, x, y) \in B_{\bm b}} \frac{2c\eps}{W} = \frac{m^2c\eps}{W}.  
\end{equation}
So, the number of different components of $w$ and $z$ is at most $\frac{\dTV(D_w, D_z)}{\frac{m^2c\eps}{W}} \le \frac{2W}{cm^2}$. 
\end{proof}

We first show the $\Omega(\frac{\log(1/\delta)}{\eps^2})$ part in the sample complexity lower bound, and then show the $\Omega(\frac{m^{n-2}}{\eps^2})$ part.  Together, they give the lower bound $\max\{\Omega(\frac{\log(1/\delta)}{\eps^2}), \Omega(\frac{m^{n-2}}{\eps^2})\} = \Omega(\frac{m^{n-2} + \log(1/\delta)}{\eps^2})$. 
Consider the distribution $D_{\bm{+1}} \in \mathcal D$ whose index is the all ``$+1$'' vector and the distribution $D_{\bm{-1}} \in \mathcal D$ whose index is the all ``$-1$'' vector.  According to  \eqref{eq:d_TV_difference_one_component}, the total variation distance between $D_{\bm{+1}}$ and $D_{\bm{-1}}$ is $\dTV(D_{\bm{+1}}, D_{\bm{-1}}) = m^{n-2}\cdot\frac{m^2c\eps}{W} = \frac{m^nc\eps}{W}$ because $\bm{+1}$ and $\bm{-1}$ have $m^{n-2}$ different components.  Since $W\le em^n$ \eqref{eq:bound_W}, we have $\dTV(D_{\bm{+1}}, D_{\bm{-1}}) \ge \frac{c\eps}{e} > 2\eps$.  Consider the distinguishing distributions problem where we want to distinguish $D_{\bm{+1}}$ and $D_{\bm{-1}}$.  If we can learn from samples a distribution $\hat D$ that is $\eps$-close in total variation distance to the unknown distribution $D_{\bm{+1}}$ or $D_{\bm{-1}}$, then we can perfectly tell whether the unknown distribution is $D_{\bm{+1}}$ or $D_{\bm{-1}}$ because the two distributions are more than $2\eps$-away from each other in total variation distance.  Lemma~\ref{lem:distinguishing} implies that, to distinguish $D_{\bm{+1}}$ and $D_{\bm{-1}}$ with probability $1-\delta$, the number of samples must be at least $\Omega(\frac{\log(1/\delta)}{\dH^2(D_{\bm{+1}}, D_{\bm{-1}})}) = \Omega(\frac{\log(1/\delta)}{\eps^2})$.  This proves the $\Omega(\frac{\log(1/\delta)}{\eps^2})$ part. 

We then prove the $\Omega(\frac{m^{n-2}}{\eps^2})$ part.
Suppose we first draw the vector $z$ from $\{+1, -1\}^{m^{n-2}}$ uniformly at random, then draw $T$ samples from $D_z$.  We obtain the $D_w$ as above. 
Let's consider the expected number of different components of $w$ and $z$ in this two-step random draw procedure: 
\begin{align}
    \underset{z,\, \text{$T$ samples}}{\E}[\#] = \E\Big[ \sum_{\bm b\in S^{n-2}} \mathbbm{1}\{z_{\bm b} \ne w_{\bm b}\} \Big] = \sum_{\bm b\in S^{n-2}} \E\big[ \mathbbm{1}\{z_{\bm b} \ne w_{\bm b}\} \big].  \label{eq:E_different_components}
\end{align}
We consider each component $\E\big[ \mathbbm{1}\{z_{\bm b} \ne w_{\bm b}\} \big]$ in the above summation.  Suppose that, within the $T$ samples drawn from $D_z$, $T_{\bm b}$ of them fall into the bucket $B_{\bm b}$.  So, $T_{\bm b}$ follows the Binomial$(T, D(B_{\bm b}))$ distribution with
\begin{align*}
    D(B_{\bm b}) = \sum_{(\bm b, x, y)\in B_{\bm b}} D_z(\bm b, x, y) = \frac{1}{W}\sum_{(\bm b, x, y)\in B_{\bm b}} \gamma^{\num(\bm b, x, y)}. 
\end{align*}
(Notice that the $+\frac{z_{\bm b} c \eps}{W}$ and $-\frac{z_{\bm b} c \eps}{W}$ cancel out in the summation and hence $D(B_{\bm b})$ does not depend on $z$.)
Let $D_{z_{\bm b}}$ denote the ``$B_{\bm b}$-part'' of distribution $D_z$, namely, $D_z$ conditioning on $B_{\bm b}$: 
\[ D_{z_{\bm b}}(\bm s) = \frac{D_z(\bm s)}{D(B_{\bm b})}, ~~~ \forall \bm s \in B_{\bm b}. \]
We think of the random draw of the vector $z$ and the $T$ samples as follows: first, we draw $T_{\bm b}$ from Binomial$(T, D(B_{\bm b}))$; second, we draw $z_{\bm b} \in \{+1, -1\}$ uniformly at random; third, we draw $T_{\bm b}$ samples from the conditional distribution $D_{z_{\bm b}}$; forth, we draw the remaining vector $z_{-\bm b}$ and the remaining $T - T_{\bm b}$ samples (which are samples outside of $B_{\bm b}$). 
Only writing the first two steps explicitly, we have 
\begin{align}
    \E\big[ \mathbbm{1}\{z_{\bm b} \ne w_{\bm b}\} \big] & = \underset{T_{\bm b}}{\E}\bigg[ \underset{z_{\bm b}}{\E} \Big[ \E\big[ \mathbbm{1}\{z_{\bm b} \ne w_{\bm b}\} \mid z_{\bm b}, T_{\bm b} \big] \Big] \bigg]  \nonumber \\
    & = \underset{T_{\bm b}}{\E}\bigg[ \frac{1}{2} \E\big[ \mathbbm{1}\{z_{\bm b} \ne w_{\bm b}\} \mid z_{\bm b} = +1, T_{\bm b} \big] + \frac{1}{2} \E\big[ \mathbbm{1}\{z_{\bm b} \ne w_{\bm b}\} \mid z_{\bm b} = -1, T_{\bm b} \big] \bigg]  \nonumber \\
    & = \underset{T_{\bm b}}{\E}\bigg[ \frac{1}{2} \Pr\big[z_{\bm b} \ne w_{\bm b} \mid z_{\bm b} = +1, T_{\bm b} \big] + \frac{1}{2} \Pr\big[z_{\bm b} \ne w_{\bm b} \mid z_{\bm b} = -1, T_{\bm b} \big] \bigg].  \label{eq:E_z_b}
\end{align}

\begin{claim}\label{claim:not_distinguishable}
For any $T_{\bm b}$, $\frac{1}{2} \Pr\big[z_{\bm b} \ne w_{\bm b} \mid z_{\bm b} = +1, T_{\bm b} \big] + \frac{1}{2} \Pr\big[z_{\bm b} \ne w_{\bm b} \mid z_{\bm b} = -1, T_{\bm b} \big] \ge \frac{1}{2} - 2c\eps\sqrt{T_{\bm b}}$.  
\end{claim}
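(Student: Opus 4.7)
\textbf{Proof proposal for Claim~\ref{claim:not_distinguishable}.} The plan is to interpret the left-hand side as the Bayes error probability of a binary hypothesis test deciding $z_{\bm b} \in \{+1,-1\}$, and to bound it below via the Hellinger bound in Lemma~\ref{lem:distinguishing}. First I would argue that, conditional on $T_{\bm b}$, the estimator $w_{\bm b}$ (which a priori is a function of \emph{all} samples and of the choice of $\hat D$) cannot do strictly better than the Bayes-optimal test based only on the $T_{\bm b}$ samples that fall inside $B_{\bm b}$. The reason is that the remaining $T - T_{\bm b}$ outside-bucket samples are drawn from $D_z$ conditioned on being outside $B_{\bm b}$, whose distribution depends on $z_{-\bm b}$ but not on $z_{\bm b}$; since $z_{-\bm b}$ is uniform on $\{+1,-1\}^{m^{n-2}-1}$ and independent of $z_{\bm b}$, marginalizing over $z_{-\bm b}$ yields a joint distribution of outside samples that is identical under $z_{\bm b} = +1$ and $z_{\bm b} = -1$. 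Hence the outside samples are auxiliary randomness independent of the hypothesis, and by a standard data-processing argument the Bayes error for $w_{\bm b}$ is at least the Bayes error for the problem of distinguishing $D_{z_{\bm b}=+1}^{\otimes T_{\bm b}}$ from $D_{z_{\bm b}=-1}^{\otimes T_{\bm b}}$.

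The second step applies the first item of Lemma~\ref{lem:distinguishing} to the pair of distributions $D_{z_{\bm b}=+1}, D_{z_{\bm b}=-1}$ on $B_{\bm b}$ with $T_{\bm b}$ samples, giving the lower bound $\frac{1}{2} - \sqrt{T_{\bm b}/2}\,\dH(D_{z_{\bm b}=+1}, D_{z_{\bm b}=-1})$. So it suffices to show $\dH^2(D_{z_{\bm b}=+1}, D_{z_{\bm b}=-1}) \le 8(c\eps)^2$, since then $\sqrt{T_{\bm b}/2}\cdot \dH \le \sqrt{T_{\bm b}/2}\cdot 2\sqrt{2}\,c\eps = 2c\eps\sqrt{T_{\bm b}}$.

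The third, and main, step is the Hellinger bound. Because the normalizer $D(B_{\bm b})$ does not depend on $z_{\bm b}$, the pointwise ratio between the two conditional distributions equals the ratio of the unnormalized quantities:
\[
\frac{D_{z_{\bm b}=+1}(\bm s)}{D_{z_{\bm b}=-1}(\bm s)} \;=\; \frac{D_{\base}(\bm s) + \sigma(x,y)\, c\eps/W}{D_{\base}(\bm s) - \sigma(x,y)\, c\eps/W}, \qquad \bm s = (\bm b, x, y) \in B_{\bm b},
\]
where $\sigma(x,y) \in \{+1,-1\}$ is the sign pattern inside the bucket. Using $D_{\base}(\bm s) = \gamma^{\num(\bm s)}/W \ge 1/W$ and $c\eps \le 1/2$ (which holds since $c=20$ and $\eps < 1/40$), I would check that the denominator is at least $1/(2W)$, so the absolute deviation from $1$ satisfies
\[
\left|\frac{D_{z_{\bm b}=+1}(\bm s)}{D_{z_{\bm b}=-1}(\bm s)} - 1\right| \;=\; \frac{2c\eps/W}{D_{\base}(\bm s) - \sigma(x,y)\,c\eps/W} \;\le\; \frac{2c\eps/W}{1/(2W)} \;=\; 4c\eps.
\]
Then Lemma~\ref{lem:bound_d_H_by_eps} yields $\dH^2(D_{z_{\bm b}=+1}, D_{z_{\bm b}=-1}) \le \frac{1}{2}(4c\eps)^2 = 8(c\eps)^2$, as required.

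The only real obstacle is the reduction in the first step: one must be careful in justifying that the estimator $w_{\bm b}$, which is extracted from the full-sample quantity $\hat D$ via a nearest-neighbor search in $\mathcal D$, does not somehow exploit cross-bucket correlations. The key observation making this clean is that, once we marginalize $z_{-\bm b}$, the outside-bucket samples are genuinely independent of $z_{\bm b}$ conditional on $T_{\bm b}$, so augmenting the inside-bucket samples with them is equivalent to augmenting with independent noise, which cannot reduce the Bayes risk. Everything else is a direct computation via Lemmas~\ref{lem:bound_d_H_by_eps} and~\ref{lem:distinguishing}.
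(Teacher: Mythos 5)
Your proposal is correct and follows essentially the same route as the paper: reduce to the two-hypothesis test on the $T_{\bm b}$ in-bucket samples via a data-processing argument (the paper states this more tersely as "steps (3) and (4) define a randomized function"), apply the first item of Lemma~\ref{lem:distinguishing}, and then bound $\dH^2(D_{z_{\bm b}=+1}, D_{z_{\bm b}=-1}) \le 8c^2\eps^2$ via Lemma~\ref{lem:bound_d_H_by_eps} after showing the pointwise ratio is within $1 \pm 4c\eps$. Your spelling-out of why the outside-bucket samples are independent of $z_{\bm b}$ (after marginalizing $z_{-\bm b}$) is a welcome elaboration of a point the paper leaves implicit, but it is not a different argument.
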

\begin{proof}
We notice that $\frac{1}{2} \Pr\big[z_{\bm b} \ne w_{\bm b} \mid z_{\bm b} = +1, T_{\bm b} \big] + \frac{1}{2} \Pr\big[z_{\bm b} \ne w_{\bm b} \mid z_{\bm b} = -1, T_{\bm b} \big]$ is the probability that we make a mistake when guessing the sign $z_{\bm b}$ using $w_{\bm b}$, if (1) $z_{\bm b}$ is chosen from $\{-1, +1\}$ uniformly at random; (2) we are given $T_{\bm b}$ samples from $D_{z_{\bm b}}$; (3) we then draw the remaining vector $z_{-\bm b}$ and the remaining samples; (4) finally, we use the $D_w$ computed from all samples to get $w_{\bm b}$. The steps (3) and (4) define a randomized function that maps the $T_{\bm b}$ samples of $D_{z_{\bm b}}$ to $w_{\bm b} \in \{0, 1\}$, and therefore, according to the first item of Lemma~\ref{lem:distinguishing}, we have 
\begin{equation}\label{eq:probability_mistake_z_w}
    \frac{1}{2} \Pr\big[z_{\bm b} \ne w_{\bm b} \mid z_{\bm b} = +1, T_{\bm b} \big] + \frac{1}{2} \Pr\big[z_{\bm b} \ne w_{\bm b} \mid z_{\bm b} = -1, T_{\bm b} \big] \ge \frac{1}{2} - \sqrt{\frac{T_{\bm b}}{2}} \dH(D_{z_{\bm b}=+1}, D_{z_{\bm b}=-1}).
\end{equation}

Then we consider $\dH(D_{z_{\bm b}=+1}, D_{z_{\bm b}=-1})$.  We use Lemma~\ref{lem:bound_d_H_by_eps} to do so.  For any $\bm s\in B_{\bm b}$, we have, on the one hand, 
\begin{align*}
    \frac{D_{z_{\bm b}=+1}(\bm s)}{D_{z_{\bm b}=-1}(\bm s)} = \frac{\gamma^{\num(\bm s)} \pm c\eps}{\gamma^{\num(\bm s)}\mp c\eps} \ge \frac{\gamma^{\num(\bm s)} - c\eps}{\gamma^{\num(\bm s)}+ c\eps} \ge 1 - 2c\eps, 
\end{align*}
because $\frac{a-b}{a+b} = 1 - \frac{2b}{a+b} \ge 1 - 2b$ for $a = \gamma^{\num(\bm s)} \ge 1$.  On the other hand, 
\begin{align*}
    \frac{D_{z_{\bm b}=+1}(\bm s)}{D_{z_{\bm b}=-1}(\bm s)} \le \frac{\gamma^{\num(\bm s)} + c\eps}{\gamma^{\num(\bm s)}- c\eps} \le 1 + 4c\eps, 
\end{align*}
because $\frac{a+b}{a-b} = 1 + \frac{2b}{a-b} \le 1 + 4b$ for $a = \gamma^{\num(\bm s)} \ge 1$ and $b = c\eps \le \frac{1}{2}$.  Therefore, by Lemma~\ref{lem:bound_d_H_by_eps} we have 
\begin{align}\label{eq:d_H_conditional_D_bound}
    \dH^2(D_{z_{\bm b}=+1}, D_{z_{\bm b}=-1}) \le \frac{1}{2} (4c\eps)^2 = 8 c^2 \eps^2. 
\end{align}
Combining \eqref{eq:probability_mistake_z_w} and \eqref{eq:d_H_conditional_D_bound} proves our claim. 
\end{proof}

By \eqref{eq:E_z_b} and Claim~\ref{claim:not_distinguishable}, we have $\E\big[ \mathbbm{1}\{z_{\bm b} \ne w_{\bm b}\} \big] \ge \underset{T_{\bm b}}{\E}\big[ \frac{1}{2} - 2c\eps\sqrt{T_{\bm b}} \big]$. 
Summing over $\bm b\in S^{n-2}$, \eqref{eq:E_different_components} becomes 
\begin{align*}
    \sum_{\bm b\in S^{n-2}} \E\big[ \mathbbm{1}\{z_{\bm b} \ne w_{\bm b}\} \big] \ge \sum_{\bm b\in S^{n-2}} \underset{T_{\bm b}}{\E}\Big[ \frac{1}{2} - 2c\eps\sqrt{T_{\bm b}} \Big] & = \frac{m^{n-2}}{2} - 2c\eps \sum_{\bm b \in S^{n-2}}\E[\sqrt{T_{\bm b}}] \\
    \text{(by Jensen's inequality $\E[\sqrt{X}] \le \sqrt{\E[X]}$) } & \ge \frac{m^{n-2}}{2} - 2c\eps \sum_{\bm b \in S^{n-2}}\sqrt{\E[T_{\bm b}]}.  
\end{align*}
Because $\E[T_{\bm b}] = T\cdot D(B_{\bm b}) = \frac{T}{W}\sum_{(\bm b, x, y)\in B_{\bm b}} \gamma^{\num(\bm b, x, y)} \le \frac{T}{W}m^2 \gamma^{m^n},
$
we have 
\begin{align}
    \sum_{\bm b\in S^{n-2}} \E\big[ \mathbbm{1}\{z_{\bm b} \ne w_{\bm b}\} \big] \ge \frac{m^{n-2}}{2} - 2c\eps \sum_{\bm b \in S^{n-2}}\sqrt{\frac{T}{W}m^2 \gamma^{m^n}} = \frac{m^{n-2}}{2} - 2c\eps m^{n-2}\sqrt{\frac{T}{W}m^2 \gamma^{m^n}}.  \label{eq:E_difference_upper_bound}
\end{align}

Now, let's consider the probability with which we can obtain $\hat D$ such that $\dTV(\hat D, D_z) \le \eps$.  We will show that this probability is at most $0.99 < 1 - \delta$ if $T$ is less than $10^{-5}\cdot \frac{m^{n-2}}{\eps^2}$.  Recall from Claim~\ref{claim:d_TV_implies_number_of_different} that $\dTV(\hat D, D_z) \le \eps$ implies $\# = \sum_{\bm b\in S^{n-2}} \mathbbm{1}\{z_{\bm b} \ne w_{\bm b}\} \le \frac{2W}{cm^2}$.  So, the probability is at most
\begin{align*}
    \Pr\Big[ \sum_{\bm b\in S^{n-2}} \mathbbm{1}\{z_{\bm b} \ne w_{\bm b}\} \le \frac{2W}{cm^2} \Big] & = \Pr\Big[ \sum_{\bm b\in S^{n-2}} \mathbbm{1}\{z_{\bm b} = w_{\bm b}\} \ge m^{n-2} - \frac{2W}{cm^2} \Big] \\
    \text{(by Markov's inequality) } & \le \frac{\E\big[  \sum_{\bm b\in S^{n-2}} \mathbbm{1}\{z_{\bm b} = w_{\bm b}\} \big]}{m^{n-2} - \frac{2W}{cm^2}} \\
    \text{(by \eqref{eq:E_difference_upper_bound}) } & \le \frac{\frac{m^{n-2}}{2} + 2c\eps m^{n-2}\sqrt{\frac{T}{W}m^2 \gamma^{m^n}}}{m^{n-2} - \frac{2W}{cm^2}} \\
    \text{($\gamma^{m^n} \le e$ and $m^n \le W \le em^n$ by \eqref{eq:bound_W}) } & \le \frac{\frac{m^{n-2}}{2} + 2c\eps m^{n-2}\sqrt{\frac{eT}{m^{n-2}}}}{m^{n-2} - \frac{2em^n}{cm^2}} \\
    & = \frac{\frac{1}{2} + 2c\sqrt{\frac{e\eps^2}{m^{n-2}}T}}{1 - \frac{2e}{c}}  < 0.99 < 1 - \delta, 
\end{align*}
when $c = 20$, $T < 10^{-5} \cdot \frac{m^{n-2}}{\eps^2}$, and $\delta < 0.01$.  This means that, in order to obtain such $\hat D$ with probability at least $1 - \delta$, at least $10^{-5} \cdot \frac{m^{n-2}}{\eps^2}$ samples are needed. 

\section{Missing Proofs from Section~\ref{sec:conditionally_independent}}\label{app:conditionally_independent}

\subsection{Proof of Theorem~\ref{thm:conditionally_independent_general}: the $\Omega\big(\frac{1}{\eps}\log\frac{1}{\delta}\big)$ Lower Bound}
\label{app:conditionally_independent_lower_bound}
The proof uses a reduction from the \emph{distinguishing distributions} problem (defined in Appendix~\ref{app:additional-preliminaries}). 
We construct two conditionally independent distributions $P^1, P^2$ over the space $\Omega \times \S_1 \times \cdots \times \S_n$ with each $|\S_i|=2$, $\S_i = \{a, b\}$. 
Given $T$ samples from either $P^1$ or $P^2$, we want to tell which distribution the samples are coming from.   
We will show that, if we can solve the forecast aggregation problem, then we can distinguish the two distributions (with high probability), which requires $T = \Omega(\frac{1}{\dH^2(P_1, P_2)}\log\frac{1}{\delta}) = \Omega(\frac{1}{\eps}\log\frac{1}{\delta})$ samples according to Lemma~\ref{lem:distinguishing}.

Let $c = 32$.  We assume $\eps < 2^{-18}$, so that $c\sqrt{\eps}<\frac{1}{16}$. 
For $P^1$, we let 
\[P^1(\omega = 1) = 0.5 - \frac{1}{16n} + \frac{c\sqrt{\eps}}{n} =: p^1.\]
For $P^2$, we let
\[P^2(\omega = 1) = 0.5 - \frac{1}{16n} - \frac{c\sqrt{\eps}}{n} =: p^2.\]
We require that, in the forecast aggregation problem under both distributions $P^1$ and $P^2$, whenever expert $i$ sees signal $a$, $b$, she reports
\begin{equation*}
    r_a = 0.5, ~~~~~~ r_b = 0,
\end{equation*} 
respectively.
This gives the following conditional probabilities $P^1(\cdot \mid \omega), P^2(\cdot \mid \omega)$: 
\begin{equation}
    \begin{bmatrix}
     P^1(a \mid \omega = 0) \\
     P^1(a \mid \omega = 1)
    \end{bmatrix}
     = \frac{p^1 - r_b}{r_a - r_b}
     \begin{bmatrix}
      \frac{1 - r_a}{1 - p^1} \\
      \frac{r_a}{p^1}
     \end{bmatrix}
     = \begin{bmatrix}
       \frac{1 - \frac{1}{8n} + \frac{2c\sqrt{\eps}}{n}}{1 + \frac{1}{8n} - \frac{2c\sqrt{\eps}}{n}} \\ 
       1
     \end{bmatrix}, ~~~~~
     \begin{bmatrix}
     P^1(b \mid \omega = 0) \\
     P^1(b \mid \omega = 1)
    \end{bmatrix} 
    = \begin{bmatrix}
     \frac{\frac{1}{4n} - \frac{4 c \sqrt \eps}{n}}{1 + \frac{1}{8n} - \frac{2c\sqrt{\eps}}{n}} \\
     0
    \end{bmatrix}. 
\end{equation}
\begin{equation}
    \begin{bmatrix}
     P^2(a \mid \omega = 0) \\
     P^2(a \mid \omega = 1)
    \end{bmatrix}
     = \frac{p^2 - r_b}{r_a - r_b}
     \begin{bmatrix}
      \frac{1 - r_a}{1 - p^2} \\
      \frac{r_a}{p^2}
     \end{bmatrix}
     = \begin{bmatrix}
       \frac{1 - \frac{1}{8n} - \frac{2c\sqrt{\eps}}{n}}{1 + \frac{1}{8n} + \frac{2c\sqrt{\eps}}{n}} \\ 
       1
     \end{bmatrix}, ~~~~~
     \begin{bmatrix}
     P^2(b \mid \omega = 0) \\
     P^2(b \mid \omega = 1)
    \end{bmatrix} 
    = \begin{bmatrix}
     \frac{\frac{1}{4n} + \frac{4 c\sqrt \eps}{n}}{1 + \frac{1}{8n} + \frac{2c\sqrt{\eps}}{n}} \\
     0
    \end{bmatrix}. 
\end{equation}

Given $T$ samples from the unknown distribution $P \in \{P^1, P^2$\}, each of which is a vector of $\omega^{(t)}$ and all experts' signals $s_i^{(t)} \in \{a, b\}$, we feed the corresponding reports $r_i^{(t)} \in \{r_a, r_b\}$ and $\omega^{(t)}$ to the forecast aggregation problem and obtain a solution $\hat f$, which is an $\eps$-optimal aggregator.  We want to use $\hat f$ to estimate the prior $p = P(\omega = 1) \in \{p^1, p^2\}$ so that we can tell apart $P^1$ and $P^2$.  Recall from Lemma~\ref{lem:conditionally-independent-P_s} that $f^*(\bm r) = \frac{1}{1 + \rho^{n-1}\prod_{i=1}^n\frac{1-r_i}{r_i}}$, where $\rho = \frac{p}{1-p}$.  Writing $\rho$ in terms of $f^*(\bm r)$, we have
\begin{equation*}
    \rho = \sqrt[n-1]{\Big(\frac{1}{f^*(\bm r)} - 1\Big)\prod_{i=1}^n \frac{r_i}{1-r_i}}. 
\end{equation*}
In particular, when $r_i = r_a = 0.5$ for all $i\in\{1, \ldots, n\}$, we have: 
\begin{equation*}
    \rho = \sqrt[n-1]{\frac{1}{f^*(\bm r_{0.5})} - 1}, ~~~~~~ \bm r_{0.5} = (0.5, \ldots, 0.5).  
\end{equation*}
So, we estimate $\rho$ by: 
\begin{equation*}
    \hat \rho = \sqrt[n-1]{\frac{1}{\hat f(\bm r_{0.5})} - 1}.
\end{equation*}

Now, we want to argue that, if $\hat f$ is $\eps$-optimal, then $|\hat \rho - \rho|$ is at most $O(\frac{\sqrt \eps}{n})$.  Consider the function: 
\[
    h(x) =  \sqrt[n-1]{\frac{1}{x} - 1}, 
\]
whose derivative is 
\[
    h'(x) = -\frac{1}{n-1}\Big(\frac{x}{1-x}\Big)^{1-\frac{1}{n-1}}\frac{1}{x^2}. 
\]
By definition, we have
\begin{equation}\label{eq:rho_difference_h}
    \big|\hat \rho - \rho\big| = \big|h\big(\hat f(\bm r_{0.5})\big) - h\big(f^*(\bm r_{0.5})\big)\big|. 
\end{equation}
\begin{claim}\label{claim:f^*_bound}
$\frac{1}{2} \le f^*(\bm r_{0.5}) \le \frac{2}{3}$. 
\end{claim}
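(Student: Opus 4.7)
The plan is to evaluate $f^*(\bm r_{0.5})$ explicitly using the formula from Lemma~\ref{lem:conditionally-independent-P_s}. Since every component of $\bm r_{0.5}$ equals $0.5$, the product $\prod_{i=1}^n \frac{1-r_i}{r_i}$ collapses to $1$, so
\[
f^*(\bm r_{0.5}) \;=\; \frac{1}{1 + \rho^{\,n-1}}.
\]
The two inequalities $\tfrac{1}{2} \le f^*(\bm r_{0.5}) \le \tfrac{2}{3}$ are therefore equivalent to the single two-sided bound $\tfrac{1}{2} \le \rho^{\,n-1} \le 1$, so the rest of the proof becomes a direct estimate of $\rho = p/(1-p)$ for $p \in \{p^1, p^2\}$.

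For the upper bound $\rho^{\,n-1} \le 1$, I would observe that by the hypothesis $c\sqrt{\eps} < 1/16$ we have $\tfrac{c\sqrt{\eps}}{n} < \tfrac{1}{16n}$, so both $p^1$ and $p^2$ lie strictly below $1/2$. Hence $\rho < 1$ and $\rho^{\,n-1} < 1$, giving $f^*(\bm r_{0.5}) > 1/2$.

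For the lower bound $\rho^{\,n-1} \ge 1/2$, I would use the same inequality in the other direction: $p \ge p^2 > \tfrac{1}{2} - \tfrac{1}{8n}$, which yields $\rho \ge \frac{4n-1}{4n+1}$. To convert this into a bound on $\rho^{\,n-1}$, I would apply the elementary inequality $\ln(1-x) \ge -\tfrac{x}{1-x}$ with $x = \tfrac{2}{4n+1}$, obtaining $\ln\rho \ge -\tfrac{2}{4n-1}$, and therefore
\[
(n-1)\ln\rho \;\ge\; -\frac{2(n-1)}{4n-1} \;\ge\; -\tfrac{1}{2},
\]
where the last inequality $2(n-1) \le \tfrac{1}{2}(4n-1)$ is a routine algebraic check valid for all $n \ge 2$. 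Exponentiating gives $\rho^{\,n-1} \ge e^{-1/2} > 1/2$, so $f^*(\bm r_{0.5}) \le \tfrac{1}{1+1/2} = \tfrac{2}{3}$.

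The main obstacle is the lower bound step, where one must show that the small deviation $p \approx \tfrac{1}{2} - \Theta(1/n)$ does not, after raising $\rho$ to the power $n-1$, shrink $\rho^{\,n-1}$ below $1/2$. The inequality $\ln(1-x) \ge -x/(1-x)$ (rather than the looser $\ln(1-x) \ge -2x$) is what makes the bookkeeping clean; an equivalent route would be to write $\rho^{\,n-1} = \bigl(1 - \tfrac{2}{4n+1}\bigr)^{n-1}$ and apply Bernoulli-type estimates, but the logarithmic form gives the cleanest constant.
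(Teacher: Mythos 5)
Your proof is correct, and it follows essentially the same route as the paper's: both lower bound $\rho$ by plugging in the worst-case prior $p^2$ and using $c\sqrt{\eps} < \tfrac{1}{16}$ to absorb the $\eps$-term, then raise $\rho$ to the power $n-1$ and check that the result stays above $\tfrac{1}{2}$. The only difference is the elementary estimate used for the exponentiation: the paper observes $\rho^{n-1} \ge \rho^n$ (valid since $\rho<1$), rewrites the ratio via $\frac{1-x}{1+x} > 1-2x$, and applies $(1-x/n)^n \ge 1-x$; you instead take logarithms and use $\ln(1-x) \ge -x/(1-x)$, obtaining the slightly cleaner constant $e^{-1/2}$. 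Both are routine Bernoulli-style bounds and carry the same content, so this is a cosmetic rather than structural variation.
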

\begin{proof}
For $P \in \{P^1, P^2\}$, its $\rho = \frac{p}{1-p}$ satisfies
\begin{align}
    1\ge \rho \ge \rho^{n-1} \ge \rho^n & \ge \bigg(\frac{0.5 - \frac{1}{16n} - \frac{\sqrt \eps}{cn}}{0.5 + \frac{1}{16n} + \frac{c\sqrt \eps}{n}}\bigg)^n \nonumber \\
    & = \bigg(\frac{1 - \frac{1}{8n} - \frac{2c\sqrt \eps}{n}}{1 + \frac{1}{8n} + \frac{2c\sqrt \eps}{n}}\bigg)^n > \Big( 1 - 2\big(\frac{1}{8n} + \frac{2c \sqrt \eps}{n}\big)\Big)^n \ge 1  - 2\big(\frac{1}{8} + 2c\sqrt \eps\big) > \frac{1}{2},  \label{eq:lowerbound_1/2}
\end{align}
where in the last three transitions we used the inequalities $\frac{1-x}{1+x} > 1 - 2x$ and $(1 - x/n)^n \ge 1 - x$ for $x \in(0, 1)$ and the fact that $c\sqrt \eps < \frac{1}{16}$.  So, 
\begin{equation*}
    f^*(\bm r_{0.5}) = \frac{1}{1 + \rho^{n-1}} \in \Big[\frac{1}{1 + 1}, \frac{1}{1 + \frac{1}{2}}\Big] = \Big[\frac{1}{2}, \frac{2}{3}\Big],
\end{equation*}
which proves the claim. 
\end{proof}
\noindent With Claim \ref{claim:f^*_bound}, we can without loss of generality assume $\frac{1}{2} \le \hat f(\bm r_{0.5}) \le \frac{2}{3}$ as well (otherwise, we can truncate $\hat f(\bm r_{0.5})$ to this range; this only reduces the approximation error $\E\big[|\hat f(\bm r) - f^*(\bm r)|^2\big]$). 

\begin{claim}\label{claim:h_derivative}
For $\frac{1}{2} \le x \le \frac{2}{3}$, $|h'(x)| \le \frac{8}{n-1}$. 
\end{claim}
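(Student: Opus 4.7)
The plan is to simply bound each of the three factors in
\[
|h'(x)| \;=\; \frac{1}{n-1}\,\left(\frac{x}{1-x}\right)^{1-\frac{1}{n-1}}\,\frac{1}{x^2}
\]
separately on the interval $[\tfrac{1}{2},\tfrac{2}{3}]$.

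First I would note that the function $x\mapsto \tfrac{x}{1-x}$ is increasing, so on $[\tfrac12,\tfrac23]$ it takes values in $[1,2]$. Since the exponent $1-\tfrac{1}{n-1}$ lies in $[0,1)$ (recall $n\ge 2$; for $n=2$ the exponent is $0$ and the factor is trivially $1$), and since raising a number in $[1,2]$ to a power in $[0,1]$ keeps it in $[1,2]$, we get
\[
\left(\frac{x}{1-x}\right)^{1-\frac{1}{n-1}} \;\le\; 2.
\]
Next, on $[\tfrac12,\tfrac23]$ we have $x\ge \tfrac12$, so $\tfrac{1}{x^2}\le 4$. Multiplying the three bounds yields
\[
|h'(x)| \;\le\; \frac{1}{n-1}\cdot 2 \cdot 4 \;=\; \frac{8}{n-1},
\]
which is the claimed inequality.

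There is no real obstacle here; the only mild subtlety is making sure the exponent $1-\tfrac{1}{n-1}$ is nonnegative (true for $n\ge 2$) so that the monotonicity argument for the power function on $[1,2]$ applies. Everything else is a direct plug-in of the endpoints of $[\tfrac12,\tfrac23]$.
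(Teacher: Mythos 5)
Your proposal is correct and follows essentially the same approach as the paper: both bound each factor of $|h'(x)| = \frac{1}{n-1}\left(\frac{x}{1-x}\right)^{1-\frac{1}{n-1}}\frac{1}{x^2}$ on $[\tfrac12,\tfrac23]$ — the power factor by $2$ (since the base lies in $[1,2]$ and the exponent in $[0,1)$) and $\tfrac{1}{x^2}$ by $4$ — then multiply. The paper records the intermediate form $\frac{4}{n-1}\cdot 2^{1-\frac{1}{n-1}}$ before bounding the power of $2$, but this is a purely cosmetic difference.
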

\begin{proof}
\[
 |h'(x)| = \frac{1}{n-1}\big(\frac{x}{1-x}\big)^{1-\frac{1}{n-1}}\frac{1}{x^2} \le \frac{1}{n-1}\Big(\frac{\frac{2}{3}}{1-\frac{2}{3}}\Big)^{1-\frac{1}{n-1}}\frac{1}{(\frac{1}{2})^2} =  \frac{4}{n-1}\cdot 2^{1-\frac{1}{n-1}} \le \frac{8}{n-1}.  \qedhere
\]
\end{proof}

\begin{claim}\label{claim:f_hat_f^*_bound}
If $\hat f$ is $\eps$-optimal, then $|\hat f(\bm r_{0.5}) - f^*(\bm r_{0.5})| < 2 \sqrt{\eps}$.  
\end{claim}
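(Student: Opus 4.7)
}
The plan is to argue that $\bm r_{0.5} = (0.5,\ldots,0.5)$ is not a rare event under $P$, so the $L^2$ closeness between $\hat f$ and $f^*$ forces pointwise closeness at $\bm r_{0.5}$. By construction, expert $i$ reports $0.5$ iff she observes signal $a$, so $\bm r = \bm r_{0.5}$ is exactly the event $\bm s = (a,a,\ldots,a)$. By conditional independence,
\begin{equation*}
 \Pr_P[\bm r = \bm r_{0.5}] \;=\; p\cdot \prod_{i=1}^n P(a\mid\omega=1) \;+\; (1-p)\cdot \prod_{i=1}^n P(a\mid \omega=0) \;\ge\; p,
\end{equation*}
where we used $P(a\mid \omega=1)=1$ and dropped the nonnegative $(1-p)$ term.

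Next I would check that $p = P(\omega=1)$ is bounded below by a constant for both $P\in\{P^1,P^2\}$. Since $p\in\{p^1,p^2\} = 0.5 - \tfrac{1}{16n} \pm \tfrac{c\sqrt\eps}{n}$, and the standing assumptions $n\ge 2$ and $c\sqrt\eps<\tfrac{1}{16}$ give $p \ge 0.5 - \tfrac{1}{32} - \tfrac{1}{32} > \tfrac14$. Hence $\Pr_P[\bm r = \bm r_{0.5}] > \tfrac14$ for both candidate distributions.

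Finally, the $\eps$-optimality hypothesis $\E_P\bigl[|\hat f(\bm r) - f^*(\bm r)|^2\bigr] \le \eps$ immediately yields
\begin{equation*}
 \eps \;\ge\; \Pr_P[\bm r=\bm r_{0.5}]\cdot \bigl|\hat f(\bm r_{0.5}) - f^*(\bm r_{0.5})\bigr|^2 \;>\; \tfrac14\,\bigl|\hat f(\bm r_{0.5}) - f^*(\bm r_{0.5})\bigr|^2,
\end{equation*}
so $\bigl|\hat f(\bm r_{0.5}) - f^*(\bm r_{0.5})\bigr| < 2\sqrt\eps$, as claimed. There is no serious obstacle here; the only thing to be careful about is the uniform lower bound on $p$, which relies on the explicit parameter choices and the standing assumptions $n\ge 2$ and $c\sqrt\eps < \tfrac{1}{16}$.
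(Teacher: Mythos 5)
Your proof is correct. The key observation in both your argument and the paper's is the same: the event $\bm r = \bm r_{0.5}$ occurs with probability bounded below by a constant, so the expected squared error cannot hide a large pointwise gap at $\bm r_{0.5}$. Where you diverge is in the bookkeeping. The paper first applies Jensen's inequality to pass from $\E[|\hat f - f^*|^2]\le\eps$ to $\E[|\hat f - f^*|]\le\sqrt\eps$, then drops all terms except $\bm r_{0.5}$, needing $P(\bm r_{0.5}) > \tfrac12$ (which it gets by also lower-bounding the $\omega=0$ term via $P(a\mid\omega=0)^n = \rho^n > \tfrac12$ from its earlier display). You skip Jensen entirely: you stay in $L^2$, drop all terms except $\bm r_{0.5}$ directly, and make do with the weaker but much easier bound $P(\bm r_{0.5}) \ge p > \tfrac14$, obtained by simply discarding the $(1-p)$ term. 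By a pleasant coincidence, $\sqrt\eps / (1/2)$ and $\sqrt{\eps/(1/4)}$ both equal $2\sqrt\eps$, so the two routes land on the identical constant. Your version is slightly more self-contained since it does not re-invoke the $\rho^n>\tfrac12$ calculation, and it avoids the unnecessary Jensen step; the paper's version would yield a slightly better constant if one wanted to push on it, but neither needs to.

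One small clarification worth adding to your write-up: the standing assumption $c\sqrt\eps < \tfrac{1}{16}$ comes from the paper's hypothesis $\eps < 2^{-18}$ together with $c=32$, which you use implicitly when concluding $p > \tfrac14$. You state the assumption but it is worth flagging that it is exactly the paper's $\eps<2^{-18}$ restated, so no extra hypothesis is being smuggled in.
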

\begin{proof}
If $\hat f$ is $\eps$-optimal, i.e., $\E\big[|\hat f(\bm r) - f^*(\bm r)|^2\big] \le \eps$, then, by Jensen's inequality $\E[X^2] \ge \E[X]^2$, we have  
\begin{align}\label{eq:eps-optimal-f-hat-f-star-bound}
    \sqrt{\eps} \ge \E\big[|\hat f(\bm r) - f^*(\bm r)|\big] = \sum_{\bm r} P(\bm r) |\hat f(\bm r) - f^*(\bm r)| \ge P(\bm r_{0.5}) |\hat f(\bm r_{0.5}) - f^*(\bm r_{0.5})|. 
\end{align}
For both $P \in \{P^1, P^2\}$, we have 
\begin{align*}
 P(\bm r_{0.5}) & = p\cdot P(\bm r_{0.5} \mid \omega = 1) + (1-p)\cdot P(\bm r_{0.5} \mid \omega = 0) \\
 & = p\cdot P(a \mid \omega = 1)^n + (1-p)\cdot P(a \mid \omega = 0)^n \\
 & \ge p \cdot 1 + (1-p) \cdot \bigg( \frac{1 - \frac{1}{8n} - \frac{2c\sqrt{\eps}}{n}}{1 + \frac{1}{8n} + \frac{2c\sqrt{\eps}}{n}}  \bigg)^n ~\stackrel{\text{by \eqref{eq:lowerbound_1/2}}}{>}~ \frac{1}{2}, 
\end{align*}
Plugging $P(\bm r_{0.5}) > \frac{1}{2}$ into \eqref{eq:eps-optimal-f-hat-f-star-bound}, we get $|\hat f(\bm r_{0.5}) - f^*(\bm r_{0.5})| < 2\sqrt{\eps}$. 
\end{proof}
\noindent From \eqref{eq:rho_difference_h}, Claim~\ref{claim:f^*_bound}, Claim \ref{claim:h_derivative}, and Claim~\ref{claim:f_hat_f^*_bound}, we get
\begin{equation}\label{eq:rho_hat_rho_difference_end}
    \big|\hat \rho - \rho\big| = \big|h\big(\hat f(\bm r_{0.5})\big) - h\big(f^*(\bm r_{0.5})\big)\big| \le \frac{8}{n-1} \big|\hat f(\bm r_{0.5}) - f^*(\bm r_{0.5})\big| < \frac{8}{n-1}\cdot 2\sqrt{\eps} = \frac{16}{n-1}\sqrt{\eps}. 
\end{equation}

Since $p = \frac{\rho}{1 + \rho}$ as a function of $\rho$ has a bounded derivative $\frac{\partial p}{\partial \rho} = \frac{1}{(1 + \rho)^2} \le 1$, 
Equation \eqref{eq:rho_hat_rho_difference_end} implies 
\begin{align*}
    |\hat p - p| < \frac{16}{n-1}\sqrt{\eps}
\end{align*}
if we use $\hat p = \frac{\hat \rho}{1 + \hat \rho}$ as an estimate of $p$.  
This allows us to tell part $P^1$ and $P^2$ because the difference between $p^1$ and $p^2$ is greater than twice of our estimation error $|\hat p - p|$:  
\begin{align*}
    |p^1 - p^2| = \frac{2c\sqrt{\eps}}{n} = \frac{64\sqrt \eps}{n} \ge \frac{64\sqrt \eps}{2(n-1)} = \frac{32\sqrt \eps}{n-1} > 2 |\hat p - p|. 
\end{align*}
Therefore, we can tell part $P^1$ and $P^2$ by checking whether $p^1$ or $p^2$ is closer to $\hat p$. 

Finally, we upper bound the squared Hellinger distance between $P^1$ and $P^2$.  This will give the sample complexity lower bound we want.
\begin{claim}
$\dH^2(P^1, P^2) \le O(c^2 \eps)$. 
\end{claim}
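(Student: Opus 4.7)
The plan is to bound $\dH^2(P^1,P^2)$ by decomposing along the product structure $P^k(\omega,\bm s)=P^k(\omega)\cdot \prod_{i=1}^n P^k(s_i\mid\omega)$ and then estimating each factor with the tools from Appendix~\ref{app:additional-preliminaries}.

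First I would apply Lemma~\ref{lem:hellinger_correlated_product} with $x=\omega$ and $y=\bm s$, giving
\begin{equation*}
\dH^2(P^1,P^2) \,\le\, \dH^2\!\big(P^1_\omega,\,P^2_\omega\big) \,+\, \max_{\omega\in\{0,1\}}\dH^2\!\big(P^1(\bm s\mid\omega),\,P^2(\bm s\mid\omega)\big).
\end{equation*}
For the prior term, since $|p^1-p^2|=\tfrac{2c\sqrt\eps}{n}$ and both $p^k$ lie safely inside $(1/4,3/4)$, the ratios $p^1/p^2$ and $(1-p^1)/(1-p^2)$ are within $1\pm O(c\sqrt\eps/n)$, so Lemma~\ref{lem:bound_d_H_by_eps} yields $\dH^2(P^1_\omega,P^2_\omega)=O(c^2\eps/n^2)$.

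Next, for each fixed $\omega$ the signals are independent, so iterating the independent case of Lemma~\ref{lem:hellinger_correlated_product} gives
\begin{equation*}
\dH^2\!\big(P^1(\bm s\mid\omega),\,P^2(\bm s\mid\omega)\big) \;\le\; \sum_{i=1}^n \dH^2\!\big(P^1(s_i\mid\omega),\,P^2(s_i\mid\omega)\big).
\end{equation*}
When $\omega=1$ both $P^1(\cdot\mid 1)$ and $P^2(\cdot\mid 1)$ are the point mass on $a$, so every summand vanishes. When $\omega=0$, I would compute each summand directly from $\dH^2(D_1,D_2)=\tfrac12\big((\sqrt{D_1(a)}-\sqrt{D_2(a)})^2+(\sqrt{D_1(b)}-\sqrt{D_2(b)})^2\big)$, using the identity $|\sqrt u-\sqrt v|=|u-v|/(\sqrt u+\sqrt v)$. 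A short algebra check on the explicit formulas shows $P^1(a\mid 0)-P^2(a\mid 0)=\Theta(c\sqrt\eps/n)$, while $P^k(a\mid 0)=1-\Theta(1/n)$ and $P^k(b\mid 0)=\Theta(1/n)$. Hence the $a$-summand is $O(c^2\eps/n^2)$ and the $b$-summand, whose denominator is only $\Theta(1/\sqrt n)$, is $O(c^2\eps/n)$. Each of the $n$ per-expert $\dH^2$'s is therefore $O(c^2\eps/n)$, and summing over $i$ gives $O(c^2\eps)$.

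Combining: $\dH^2(P^1,P^2)\le O(c^2\eps/n^2)+O(c^2\eps)=O(c^2\eps)$, as claimed. The main subtlety is in the conditional step: a black-box application of Lemma~\ref{lem:bound_d_H_by_eps} to each per-expert factor would only bound it by $O(c^2\eps)$, since $P^1(b\mid 0)/P^2(b\mid 0)$ is within $1\pm O(c\sqrt\eps)$ rather than $1\pm O(c\sqrt\eps/n)$; summing $n$ such bounds yields $O(nc^2\eps)$, too weak by a factor of $n$. The direct $\sqrt{\cdot}$ computation is needed to exploit the smallness of $P^k(b\mid 0)\approx 1/(4n)$ itself, which buys back the missing factor of $n$ and produces the tight $O(c^2\eps)$ bound required to plug into Lemma~\ref{lem:distinguishing} and conclude the $\Omega(\tfrac{1}{\eps}\log\tfrac{1}{\delta})$ sample complexity lower bound.
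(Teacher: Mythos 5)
Your proof is correct and follows essentially the same route as the paper's: decompose via Lemma~\ref{lem:hellinger_correlated_product} into the prior term (bounded by $O(c^2\eps/n^2)$ with Lemma~\ref{lem:bound_d_H_by_eps}) and the per-expert conditional terms, then exploit the small $\Theta(1/n)$ mass on signal $b$ to get $O(c^2\eps/n)$ per expert rather than the $O(c^2\eps)$ a black-box ratio argument would give. Your identity $|\sqrt u-\sqrt v|=|u-v|/(\sqrt u+\sqrt v)$ is an algebraic variant of the paper's $D_1(x)(1-\sqrt{D_2(x)/D_1(x)})^2$ manipulation, and your observation that the $\omega=1$ conditional distance vanishes (both being point masses on $a$) is a clean simplification the paper handles implicitly by bounding both cases uniformly.
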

\begin{proof}
For the marginal distributions of $\omega$, $P^1_{\omega}$ and $P^2_{\omega}$, according to Lemma~\ref{lem:bound_d_H_by_eps} and the fact that $1 \ge \frac{P^2_\omega(\omega)}{P^1_{\omega}(\omega)} = \frac{1 - \frac{1}{8n} - \frac{2c\sqrt \eps}{n}}{1 - \frac{1}{8n} + \frac{2c\sqrt \eps}{n}} = 1 - \frac{\frac{4c\sqrt{\eps}}{n}}{1 - \frac{1}{8n} + \frac{2c\sqrt \eps}{n}} =  1 - O(\frac{c\sqrt \eps}{n})$, we have
\begin{equation}\label{eq:hellinger_prior}
    \dH^2(P^1_\omega, P^2_\omega) \le O\Big(\big(\frac{c\sqrt \eps}{n}\big)^2\Big) = O\Big(\frac{c^2 \eps}{n^2}\Big).
\end{equation}
Given $\omega = 0$ or $1$, we consider the conditional distributions of each $s_i$, $P^1_{s_i \mid \omega}$ and $P^2_{s_i \mid \omega}$.  For $s_i = a$, we have 
\begin{align*}
    1\ge \frac{P^2(a \mid \omega)}{P^1(a \mid \omega)} & \ge \frac{1 - \frac{1}{8n} - \frac{2c\sqrt \eps}{n}}{1 + \frac{1}{8n} + \frac{2c\sqrt \eps}{n}} \cdot \frac{1 + \frac{1}{8n} - \frac{2c\sqrt \eps}{n}}{1 - \frac{1}{8n} + \frac{2c\sqrt \eps}{n}} \\
    & = \frac{1 + \frac{1}{8n} - \frac{2c\sqrt \eps}{n}}{1 + \frac{1}{8n} + \frac{2c\sqrt \eps}{n}} \cdot \frac{1 - \frac{1}{8n} - \frac{2c\sqrt \eps}{n}}{1 - \frac{1}{8n} + \frac{2c\sqrt \eps}{n}} \\
    \text{($\frac{a-x}{a+x} \ge 1 - \frac{2x}{a}$)} ~ & \ge \Big( 1 - \frac{\frac{4c\sqrt{\eps}}{n}}{1 + \frac{1}{8n}}\Big) \cdot \Big( 1 - \frac{\frac{4c\sqrt{\eps}}{n}}{1 - \frac{1}{8n}}\Big) = 1 - O\big(\frac{c\sqrt \eps}{n}\big). 
\end{align*}
For $s_i = b$, we have 
\begin{align*}
    1 \ge \frac{P^1(b\mid \omega)}{P^2(b \mid \omega)} \ge \frac{\frac{1}{4n} - \frac{4c\sqrt \eps}{n}}{1 + \frac{1}{8n} - \frac{2c\sqrt\eps}{n}} \cdot \frac{1 + \frac{1}{8n} + \frac{2c\sqrt\eps}{n}}{\frac{1}{4n} + \frac{4c\sqrt \eps}{n}} \ge \frac{\frac{1}{4n} - \frac{4c\sqrt \eps}{n}}{\frac{1}{4n} + \frac{4c\sqrt \eps}{n}} = \frac{1 - 16c\sqrt \eps}{1 + 16c\sqrt \eps} = 1 - O(c\sqrt \eps). 
\end{align*}
So, $\dH^2(P^1_{s_i \mid \omega}, P^2_{s_i \mid \omega})$ can be upper bounded as follows:
\begin{align}
    \dH^2(P^1_{s_i \mid \omega}, P^2_{s_i \mid \omega}) & = \frac{1}{2}\bigg[ \Big(\sqrt{P^1(a\mid \omega)} - \sqrt{P^2(a\mid \omega)}\Big)^2 +  \Big(\sqrt{P^1(b\mid \omega)} - \sqrt{P^2(b\mid \omega)}\Big)^2\bigg]  \nonumber \\
    & = \frac{1}{2}\bigg[P^1(a\mid \omega)\Big(1- \sqrt{\frac{P^2(a\mid \omega)}{P^1(a\mid \omega)}}\Big)^2 +  P^2(b\mid \omega)\Big(1- \sqrt{\frac{P^1(b\mid \omega)}{P^2(b\mid \omega)}}\Big)^2\bigg]  \nonumber \\
    & \le \frac{1}{2}\bigg[ 1\cdot \Big( 1 - \sqrt{1 - O\big(\frac{c\sqrt \eps}{n} \big)}\Big)^2 +  \Big(\frac{1}{4n} + \frac{4c\sqrt\eps}{n}\Big)\cdot \Big( 1 - \sqrt{1 - O(c\sqrt \eps)} \Big)^2 \bigg]  \nonumber \\
    \text{(since $1 - \sqrt{1-x} \le x$)}~ & \le \frac{1}{2}\bigg[ O\big(\frac{c\sqrt \eps}{n} \big)^2 +  O\big(\frac{1}{n}\big) \cdot O(c\sqrt \eps)^2 \bigg]  \nonumber \\
    & = O\Big( \frac{c^2 \eps}{n} \Big).  \label{eq:hellinger_each_signal} 
\end{align}
Since $P^1 = P^1_{\omega}\cdot \prod_{i=1}^n P^1_{s_i\mid \omega}$ and $P^2 = P^2_{\omega}\cdot \prod_{i=1}^n P^2_{s_i\mid\omega}$, we have, by Lemma~\ref{lem:hellinger_correlated_product} and Lemma~\ref{lem:hellinger_independent_product},
\begin{align*}
    \dH^2(P^1, P^2) & \le \dH^2(P^1_{\omega}, P^2_{\omega}) + \max_{\omega \in \{0, 1\}} \Big\{ \dH^2\big(\prod_{i=1}^n P^1_{s_i\mid \omega},~ \prod_{i=1}^n P^2_{s_i\mid\omega}\big) \Big\} \\
    & \le \dH^2(P^1_{\omega}, P^2_{\omega}) + \max_{\omega \in \{0, 1\}} \Big\{ n \cdot \dH^2(P^1_{s_i\mid \omega},~ P^2_{s_i \mid \omega}) \Big\} \\
    \text{\eqref{eq:hellinger_prior} and \eqref{eq:hellinger_each_signal}} & \le O\Big(\frac{c^2 \eps}{n^2}\Big) + \max_{\omega \in \{0, 1\}} \Big\{ n \cdot O\Big( \frac{c^2 \eps}{n} \Big) \Big\}\\
    & = O(c^2 \eps).  \hfill \qedhere
\end{align*}
\end{proof}
\noindent 
Therefore, according to Lemma~\ref{lem:distinguishing}, to tell apart $P^1$ and $P^2$ with probability at least $1 - \delta$ we need at least 
\[ T = \Omega\Big(\frac{1}{\dH^2(P^1, P^2)}\log\frac{1}{\delta}\Big) = \Omega\Big(\frac{1}{c^2\eps} \log\frac{1}{\delta}\Big) \]
samples.  This concludes the proof.

\subsection{Proof of Theorem~\ref{thm:strongly-informative-upper-bound}}
\label{app:proof:thm:strongly-informative-upper-bound}

\subsubsection{Additional Notations and Lemmas}
We introduce some additional notations and lemmas for the proof. 
Let $\mu_0$ be the expected average report of all experts conditioning on $\omega = 0$: 
\begin{equation}
\mu_0 = \frac{1}{n} \sum_{i=1}^n \E[r_i \mid \omega = 0] = \frac{1}{n} \sum_{i=1}^n \E_{s_i\mid \omega=0}\big[ P(\omega = 1 \mid s_i) \mid \omega = 0\big],
\end{equation}
which is equal to the expected prediction of $\omega$ given expert $i$'s signal $s_i$ where $s_i$ is distributed conditioning on $\omega = 0$, averaged over all experts.  Symmetrically, let
\begin{equation}
    \mu_1 = \frac{1}{n} \sum_{i=1}^n \E[1 - r_i \mid \omega = 1] = \frac{1}{n} \sum_{i=1}^n \E_{s_i\mid \omega=1}\big[ P(\omega = 0 \mid s_i) \mid \omega = 1\big].
\end{equation}
Recall that $p = P(\omega = 1)$. 
\begin{fact}\label{fact:p_mu}
$(1 - p)\mu_0 = p\mu_1$. 
\end{fact}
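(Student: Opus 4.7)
The plan is to prove the identity on a per-expert basis and then average over experts. Specifically, I will show that for each individual $i$,
$(1-p)\,\E[r_i \mid \omega = 0] \,=\, p\,\E[1 - r_i \mid \omega = 1]$;
summing these $n$ identities and dividing by $n$ immediately gives $(1-p)\mu_0 = p\mu_1$ by definition of $\mu_0$ and $\mu_1$.

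The per-expert identity is a direct consequence of the tower property of conditional expectation. Since $r_i = P(\omega = 1 \mid s_i)$ is by definition the posterior probability of $\omega = 1$ given $s_i$, the law of iterated expectation yields $\E[r_i] = \E\bigl[\E[\mathbbm{1}[\omega = 1] \mid s_i]\bigr] = P(\omega = 1) = p$. Decomposing this unconditional expectation by conditioning on $\omega \in \{0, 1\}$ gives $p = p\,\E[r_i \mid \omega = 1] + (1-p)\,\E[r_i \mid \omega = 0]$, which rearranges to $(1-p)\,\E[r_i \mid \omega = 0] = p\bigl(1 - \E[r_i \mid \omega = 1]\bigr) = p\,\E[1 - r_i \mid \omega = 1]$, precisely the per-expert identity we want.

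There is no real obstacle here: the fact is essentially a one-line consequence of the tower property, encoding the intuitive statement that the average posterior equals the prior. The only point requiring care is to keep straight the roles of $\omega = 0$ and $\omega = 1$ in the two conditional expectations, which the algebra above handles cleanly.
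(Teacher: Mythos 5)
Your proof is correct and takes essentially the same route as the paper's: establish the per-expert identity by noting $\E[r_i] = p$ (tower property / law of total expectation), decompose that expectation by conditioning on $\omega$, rearrange, and then average over the $n$ experts.
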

\begin{proof}
For each expert $i$, by the law of total expectation and the fact that $r_i = P(\omega = 1 \,|\, s_i)$, we have the following equations: 
\begin{align*}
    (1-p) \cdot \E[r_i \mid \omega = 0] + p \cdot \E[r_i \mid \omega = 1]  & = P(\omega = 0)\cdot \E[r_i \mid \omega = 0] + P(\omega = 1)\cdot \E[r_i \mid \omega = 1] \\
    & = \E[r_i] \\
    & = \E_{s_i}\big[ P(\omega = 1 \,|\, s_i) \big] = P(\omega = 1) = p. 
\end{align*}
Subtracting $p$ from both sides, we get
\begin{equation*}
    (1-p) \cdot \E[r_i \mid \omega = 0] - p \cdot \E[1 - r_i \mid \omega = 1] = 0. 
\end{equation*}
Averaging over all experts $i\in \{1, \ldots, n\}$, we conclude that 
\begin{equation*}
    (1-p) \cdot \frac{1}{n}\sum_{i=1}^n \E[r_i \mid \omega = 0] - p \cdot \frac{1}{n}\sum_{i=1}^n \E[1 - r_i \mid \omega = 1] = 0.   \qedhere
\end{equation*}
\end{proof}

\begin{lemma}\label{lem:averaging}
If $(1-p)\mu_0 = p\mu_1 < \frac{\eps}{2}$, then the averaging aggregator $f_{avg}(\bm r) = \frac{1}{n}\sum_{i=1}^n r_i$ is $\eps$-optimal.
\end{lemma}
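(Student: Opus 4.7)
My plan is to bound $L_P(f_{\mathrm{avg}})$ directly from above by $(1-p)\mu_0 + p\mu_1$, which under the hypothesis is strictly less than $\eps$; since $L_P(f^*) \ge 0$, this immediately yields $\E\big[|f_{\mathrm{avg}}(\bm r) - f^*(\bm r)|^2\big] = L_P(f_{\mathrm{avg}}) - L_P(f^*) < \eps$ by the second item of Lemma~\ref{lem:difference_loss}.

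The key computation is to condition on the realization of $\omega$. Writing
\[
L_P(f_{\mathrm{avg}}) = (1-p)\,\E\Big[\Big(\tfrac{1}{n}\textstyle\sum_{i=1}^n r_i\Big)^{\!2} \,\Big|\, \omega=0\Big] + p\,\E\Big[\Big(\tfrac{1}{n}\textstyle\sum_{i=1}^n (1-r_i)\Big)^{\!2} \,\Big|\, \omega=1\Big],
\]
I would apply Jensen's inequality (or Cauchy--Schwarz) to pull the square inside the average, then use the trivial bound $r_i^2 \le r_i$ and $(1-r_i)^2 \le (1-r_i)$ that holds because $r_i \in [0,1]$. This gives $\E\big[(\frac{1}{n}\sum r_i)^2 \mid \omega=0\big] \le \frac{1}{n}\sum_i \E[r_i \mid \omega=0] = \mu_0$ and symmetrically the $\omega=1$ term is at most $\mu_1$, so $L_P(f_{\mathrm{avg}}) \le (1-p)\mu_0 + p\mu_1$.

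By Fact~\ref{fact:p_mu} we have $(1-p)\mu_0 = p\mu_1$, so the bound becomes $L_P(f_{\mathrm{avg}}) \le 2(1-p)\mu_0 < \eps$ under the hypothesis. Combining with $L_P(f^*) \ge 0$ and Lemma~\ref{lem:difference_loss}(2) completes the argument.

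There is no real obstacle here; the only subtlety is recognizing that one does not need to directly compare $f_{\mathrm{avg}}(\bm r)$ to $f^*(\bm r)$ pointwise (which would be hopeless since $f^*$ has the complicated product form from Lemma~\ref{lem:conditionally-independent-P_s}). Instead, exploiting that $L_P(f^*) \ge 0$ lets us upper bound the \emph{excess} loss by the full loss of $f_{\mathrm{avg}}$, and the inequality $r_i^2 \le r_i$ is what converts a second-moment quantity into the first-moment quantities $\mu_0, \mu_1$ that the hypothesis controls.
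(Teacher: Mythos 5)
Your proof is correct and follows essentially the same route as the paper: bound $L_P(f_{\mathrm{avg}})$ by $(1-p)\mu_0 + p\mu_1 < \eps$ after conditioning on $\omega$, then use $L_P(f^*)\ge 0$ with Lemma~\ref{lem:difference_loss}. The only cosmetic difference is that you go through Jensen plus $r_i^2 \le r_i$ term by term, whereas the paper applies $x^2 \le x$ directly to $f_{\mathrm{avg}}(\bm r) \in [0,1]$ (and symmetrically to $1 - f_{\mathrm{avg}}(\bm r)$), which skips the Jensen step.
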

\begin{proof}
If $(1-p)\mu_0 = p\mu_1 < \frac{\eps}{2}$, then the expected loss of $f_{avg}$ is at most
\begin{align*}
    L_P(f_{avg}) & = \E\big[ |f_{avg} - \omega|^2 \big] \\
    & = p \E\big[(f_{avg}(\bm r) - 1)^2 \mid \omega = 1\big] + (1 - p) \E\big[(f_{avg}(\bm r) - 0)^2 \mid \omega = 0\big] \\
    & \le p \E\big[ 1 - f_{avg}(\bm r) \mid \omega = 1\big] + (1 - p) \E\big[f_{avg}(\bm r) \mid \omega = 0\big] \\
    & = p \E\big[1 - \frac{1}{n}\sum_{i=1}^n r_i \mid \omega = 1\big] + (1 - p) \E\big[\frac{1}{n} \sum_{i=1}^n r_i \mid \omega = 0\big] \\
    & = p \mu_1 + (1 - p) \mu_0 \\
    & < \frac{\eps}{2} + \frac{\eps}{2} = \eps,
\end{align*}
which implies that $f_{avg}$ is $\eps$-optimal. 
\end{proof}

The following lemma says that $O(\frac{1}{\eps} \log\frac{1}{\delta})$
samples are sufficient to tell whether the mean of a random variable is below $\eps$ or above $\frac{\eps}{2}$: 
\begin{lemma}\label{lem:small-or-large-mean}
Given $T = \frac{40}{\eps}\log\frac{2}{\delta}$ i.i.d.~samples $X^{(1)}, \ldots, X^{(T)}$ of a random variable $X\in[0, 1]$ with unknown mean $\E[X] = \mu$,  with probability at least $1 - \delta$ we can tell whether $\mu < \eps$ or $\mu \ge \frac{\eps}{2}$.  This can be done by checking whether the empirical mean $\hat \mu = \frac{1}{T}\sum_{t=1}^T X^{(t)}$ is $< \frac{3}{4}\eps$ or $\ge \frac{3}{4}\eps$. 
\end{lemma}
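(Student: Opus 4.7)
The plan is a routine application of the multiplicative Chernoff bound to $\hat\mu$, split into two one-sided claims: (i) if $\mu \ge \eps$ then $\Pr[\hat\mu < \tfrac{3}{4}\eps] \le \delta$, and (ii) if $\mu < \tfrac{\eps}{2}$ then $\Pr[\hat\mu \ge \tfrac{3}{4}\eps] \le \delta$. Together these make the threshold test correct with probability at least $1-\delta$ in both hypotheses of the lemma, and the overlap $\mu \in [\eps/2, \eps)$ is automatically fine since either answer is acceptable there. Although Chernoff is usually stated for Bernoullis, the extension to $[0,1]$-valued $X^{(t)}$ with the same mean is standard.

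For case (i), I would write $\tfrac{3}{4}\eps \le \tfrac{3}{4}\mu = (1-\tfrac{1}{4})\mu$ and apply the lower-tail bound $\Pr[\hat\mu \le (1-\gamma)\mu] \le \exp(-T\mu\gamma^2/2)$ with $\gamma = 1/4$. This yields $\exp(-T\mu/32) \le \exp(-T\eps/32)$, which becomes $(\delta/2)^{5/4} \le \delta$ once $T = \tfrac{40}{\eps}\log\tfrac{2}{\delta}$ is substituted.

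For case (ii), the relative overshoot $\gamma := \tfrac{3\eps}{4\mu} - 1$ is always greater than $\tfrac{1}{2}$ but can be arbitrarily large as $\mu\to 0$, so I would split the analysis into the sub-case $\mu \le \tfrac{3\eps}{8}$ (where $\gamma \ge 1$) and the sub-case $\tfrac{3\eps}{8} < \mu < \tfrac{\eps}{2}$ (where $\tfrac{1}{2} < \gamma < 1$), and apply the matching upper-tail bounds $\Pr[\hat\mu \ge (1+\gamma)\mu] \le \exp(-T\mu\gamma/3)$ and $\exp(-T\mu\gamma^2/3)$ respectively. In the first sub-case, $T\mu\gamma = T(\tfrac{3\eps}{4} - \mu) \ge \tfrac{3T\eps}{8}$, giving probability at most $(\delta/2)^{5} \le \delta$. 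In the second, minimizing $\mu\mapsto \mu\gamma^2 = (3\eps-4\mu)^2/(16\mu)$ over $(\tfrac{3\eps}{8}, \tfrac{\eps}{2})$ shows the minimum is at $\mu = \eps/2$ with value $\eps/8$, so $T\mu\gamma^2 \ge T\eps/8$ and the bound is $(\delta/2)^{5/3} \le \delta$. The edge case $\mu = 0$ is trivial since then $\hat\mu = 0$ deterministically.

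There is no real obstacle; the only care needed is bookkeeping the constants so that $T = \tfrac{40}{\eps}\log\tfrac{2}{\delta}$ is simultaneously sufficient in all three sub-cases, and selecting the right Chernoff regime in each.
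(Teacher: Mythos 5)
Your proof is correct, and the constants check out at $T = \frac{40}{\eps}\log\frac{2}{\delta}$: the three exponents you derive are $\frac{5}{4}\log\frac{2}{\delta}$, $5\log\frac{2}{\delta}$, and $\frac{5}{3}\log\frac{2}{\delta}$, each giving a failure probability at most $\delta$. Your handling of the case $\mu \ge \eps$ coincides with the paper's (a multiplicative lower-tail Chernoff bound at relative deviation $\gamma = 1/4$). For the complementary case you take a genuinely different route. The paper considers the full range $\mu < \eps$ and proves a \emph{two-sided} additive concentration $|\hat\mu - \mu| \le \eps/4$, invoking the KL-divergence form of Chernoff--Hoeffding together with the inequalities $D(x\|y) \ge (x-y)^2/(2y)$ for $x\le y$ and $D(x\|y) \ge (x-y)^2/(2x)$ for $x \ge y$; it then reasons a posteriori that $\hat\mu \ge \frac{3}{4}\eps$ forces $\mu \ge \hat\mu - \eps/4 \ge \eps/2$. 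You instead restrict to $\mu < \eps/2$ (correctly observing that for $\mu \in [\eps/2, \eps)$ either output is acceptable) and bound the single failure event $\hat\mu \ge \frac{3}{4}\eps$ directly via the multiplicative upper-tail bound, branching on whether the relative overshoot $\gamma = \frac{3\eps}{4\mu} - 1$ is at least $1$, and minimizing $\mu\gamma^2 = (3\eps-4\mu)^2/(16\mu)$ over $(\frac{3\eps}{8},\frac{\eps}{2})$. Your decomposition is more economical in that it bounds exactly the two events that can cause an error and nothing more --- indeed the paper's lower-tail estimate $\Pr[\hat\mu < \mu - \eps/4] \le \delta/2$ is proved but never used in its concluding case analysis --- at the cost of the extra sub-split on $\gamma$. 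Both routes reach the same sample count, and both rely on the standard extension of the multiplicative Chernoff bounds from Bernoulli to $[0,1]$-valued variables.
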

\begin{proof}
If $\mu \ge \eps$, using the multiplicative version of Chernoff bound we have 
\begin{align*}
    \Pr\Big[\hat \mu < \frac{3}{4}\eps\Big] \le \Pr\Big[\hat \mu < \frac{3}{4} \mu\Big] \le e^{- \frac{(\frac{1}{4})^2 \mu T }{2}} \le e^{- \frac{\eps T}{32}} \le \delta. 
\end{align*}
Namely, with probability at least $1 - \delta$, it holds that
\[ \hat \mu \ge \frac{3}{4} \eps.\]  
If $\mu < \eps$, then using the additive version of Chernoff–Hoeffding theorem, we have
\begin{align*}
    & \Pr\Big[\hat \mu < \mu - \frac{\eps}{4}\Big] \le e^{-D(\mu - \frac{\eps}{4} || \mu)T},  \\
    & \Pr\Big[\hat \mu > \mu + \frac{\eps}{4}\Big] \le e^{-D(\mu + \frac{\eps}{4} || \mu)T},
\end{align*}
where $D(x || y) = x \ln \frac{x}{y} + (1-x) \ln\frac{1-x}{1-y}$.  Using the inequality $D(x||y) \ge \frac{(x - y)^2}{2y}$ for $x \le y$ and $D(x||y) \ge \frac{(x - y)^2}{2x}$ for $x \ge y$, we obtain: 
\begin{align*}
    & \Pr\Big[\hat \mu < \mu - \frac{\eps}{4}\Big] \le e^{-\frac{(\frac{\eps}{4})^2}{2\mu}T} \le e^{-\frac{(\frac{\eps}{4})^2}{2\eps}T} =  e^{-\frac{\eps}{32}T} \le \frac{\delta}{2}, \\
    & \Pr\Big[\hat \mu > \mu + \frac{\eps}{4}\Big] \le e^{-\frac{(\frac{\eps}{4})^2}{2(\mu+\frac{\eps}{4})}T} \le e^{-\frac{(\frac{\eps}{4})^2}{2(\frac{5\eps}{4})}T} = e^{-\frac{\eps}{40}T} \le \frac{\delta}{2}. 
\end{align*}
By a union bound, with probability at least $1 - \delta$, we have 
\[ | \hat \mu - \mu| \le \frac{\eps}{4}. \]
Combining the case of $\mu \ge \eps$ and $\mu < \eps$, we conclude that: with probability at least $1 - \delta$, 
\begin{itemize}
    \item If $\hat \mu < \frac{3}{4} \eps$, then we must have $\mu < \eps$. 
    \item If $\hat \mu \ge \frac{3}{4} \eps$, then we have $\mu \ge \eps$ or $\eps > \mu \ge \hat \mu - \frac{\eps}{4} \ge \frac{\eps}{2}$.  In either case, we have $\mu \ge \frac{\eps}{2}$.
\end{itemize}
\end{proof}

The last lemma we will use shows how to estimate the unknown value of $\rho = \frac{p}{1-p} = \frac{P(\omega=1)}{P(\omega=0)}$ with accuracy $\Delta$ using $T = O(\frac{1}{n\Delta^2}\log\frac{1}{\delta})$ samples. 
Notice that, if one simply uses the empirical value $\hat \rho = \frac{\sum_{t=1}^T \mathbbm{1}\{\omega^{(t)} = 1\}}{\sum_{t=1}^T \mathbbm{1}\{\omega^{(t)} = 0\}}$ to estimate $\rho$, then by Chernoff bound this needs $T = O(\frac{1}{\Delta^2} \log\frac{1}{\delta})$ samples, which is larger than what we claim by a factor of $n$.  This sub-optimality is because one only uses the $\omega^{(t)}$'s in the samples to estimate $\rho$, wasting the reports $r^{(t)}_i$'s.  By using $r^{(t)}_i$'s to estimate $\rho$, we can reduce the number of samples by a factor of $n$.  The basic idea is the following: According to Fact~\ref{fact:p_mu}, we have $\rho = \frac{p}{1-p} = \frac{\mu_0}{\mu_1} = \frac{\E[\sum_{i=1}^n r_i \mid \omega = 0]}{\E[\sum_{i=1}^n (1 - r_i) \mid \omega = 1]}$.  The numerator $\E[\sum_{i=1}^n r_i \mid \omega = 0]$ and the denominator $\E[\sum_{i=1}^n (1 - r_i) \mid \omega = 1]$ can be estimated from samples of $r_i^{(t)}$'s where $\omega^{(t)} = 0$ and $1$ respectively.  The total number of $r_i^{(t)}$'s is $Tn$, because we have $n$ experts per sample.  This reduces the needed number of samples by a factor of $n$.  
Formally: 
\begin{lemma}\label{lem:estimate_rho}
For conditionally independent distribution $P$, we can estimate $\rho = \frac{P(\omega = 1)}{P(\omega = 0)}$ with accuracy $\frac{|\hat\rho - \rho|}{\rho} \le \Delta < 1$ (equivalently, $\frac{\hat \rho}{\rho} \in 1 \pm \Delta$) with probability at least $1 - \delta$ using
 \[ T = O\bigg( \frac{1}{(1-p)\mu_0 \cdot n \cdot \Delta^2} \log \frac{1}{\delta} ~ +~  \frac{1}{\min\{p, 1-p\}}\log\frac{1}{\delta} \bigg)\]
samples of $(\omega^{(t)}, \bm r^{(t)})$'s, by letting $\hat\rho = \frac{\frac{1}{\#_0}\sum_{t: \omega^{(t)}=0} \sum_{i=1}^n r_i^{(t)}}{\frac{1}{\#_1} \sum_{t: \omega^{(t)}=1} \sum_{i=1}^n (1-r_i^{(t)})}$, where $\#_0$ and $\#_1$ are the numbers of samples with $\omega^{(t)}=0$ and $1$ respectively. 
\end{lemma}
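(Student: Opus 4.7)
\textbf{Proof plan for Lemma~\ref{lem:estimate_rho}.}
The plan is to analyze the two pieces $\hat\mu_0 := \frac{1}{\#_0 n}\sum_{t:\omega^{(t)}=0}\sum_{i=1}^n r_i^{(t)}$ and $\hat\mu_1 := \frac{1}{\#_1 n}\sum_{t:\omega^{(t)}=1}\sum_{i=1}^n (1-r_i^{(t)})$ separately, so that $\hat\rho = \hat\mu_0/\hat\mu_1$ inherits a multiplicative error from them. The backbone of the argument is Fact~\ref{fact:p_mu}: since $(1-p)\mu_0 = p\mu_1$, we have $\rho = p/(1-p) = \mu_0/\mu_1$, so a relative error of $\Delta/4$ on each of $\hat\mu_0, \hat\mu_1$ gives the desired relative error $\Delta$ on $\hat\rho$.

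First, I would apply a standard (additive) Chernoff bound to $\#_0$ and $\#_1$, which are sums of $T$ independent Bernoullis with means $1-p$ and $p$. Taking $T \ge C \frac{1}{\min\{p, 1-p\}}\log\frac{1}{\delta}$ suffices to guarantee, with probability at least $1-\delta/3$, that $\#_0 \ge (1-p)T/2$ and $\#_1 \ge pT/2$; this produces the second term in the stated sample complexity and lets us safely condition on the realized values of $\#_0, \#_1$ below.

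The key step is the analysis of $\hat\mu_0$ (and symmetrically $\hat\mu_1$), where I would use the conditional independence assumption in an essential way. Conditioned on $\omega^{(t)}=0$, the signals $s_1^{(t)},\dots,s_n^{(t)}$ are independent, so the reports $r_1^{(t)},\dots,r_n^{(t)} \in [0,1]$ are independent as well, with $\sum_{i} \E[r_i^{(t)} \mid \omega^{(t)}=0] = n\mu_0$. Pooling across all $\#_0$ samples with $\omega^{(t)}=0$ yields $\#_0 \cdot n$ independent $[0,1]$-valued variables whose total has expectation $\#_0 n \mu_0$, and the multiplicative Chernoff bound gives
\[
\Pr\!\Big[|\hat\mu_0 - \mu_0| > (\Delta/4)\,\mu_0 \;\Big|\; \#_0\Big] \le 2\exp\!\big(-\Omega(\Delta^2 \#_0 n \mu_0)\big).
\]
Using $\#_0 \ge (1-p)T/2$, this tail is below $\delta/3$ as soon as $T \ge C' \frac{1}{(1-p) n \mu_0 \Delta^2}\log\frac{1}{\delta}$, which is exactly the first term in the claimed bound. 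The same analysis applied to $\hat\mu_1$ (with $1-r_i^{(t)}$ independent given $\omega^{(t)}=1$) gives the bound $T \ge C' \frac{1}{p n \mu_1 \Delta^2}\log\frac{1}{\delta}$; by Fact~\ref{fact:p_mu} this is identical to the first bound, so no separate term appears.

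Finally I would union-bound the three failure events. On the good event, $\hat\mu_0/\mu_0, \hat\mu_1/\mu_1 \in [1-\Delta/4,\, 1+\Delta/4]$, so
\[
\frac{\hat\rho}{\rho} \;=\; \frac{\hat\mu_0/\mu_0}{\hat\mu_1/\mu_1} \;\in\; \Big[\tfrac{1-\Delta/4}{1+\Delta/4},\,\tfrac{1+\Delta/4}{1-\Delta/4}\Big] \;\subseteq\; [1-\Delta,\, 1+\Delta]
\]
for $\Delta<1$, as required. The only delicate point is the factor-of-$n$ improvement in the first term: a naive estimator that ignored the reports and used only the $\omega^{(t)}$'s would need $\widetilde\Omega(1/(\rho\Delta^2))$ samples, and the factor of $n$ is recovered precisely because, under conditional independence, the $n$ reports within one sample contribute $n$ independent unbiased draws for $\mu_0$ (respectively $\mu_1$) rather than one.
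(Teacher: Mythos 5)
Your proposal is correct and follows essentially the same route as the paper's proof: condition on the realized counts $\#_0,\#_1$ (lower bounded via a Chernoff bound on the $\omega^{(t)}$'s), use conditional independence to treat the pooled $\#_j n$ reports as i.i.d.\ $[0,1]$-valued variables so a multiplicative Chernoff bound gives relative error $O(\Delta)$ on $\hat\mu_0$ and $\hat\mu_1$, and then divide, invoking Fact~\ref{fact:p_mu} to identify $\rho = \mu_0/\mu_1$. The only cosmetic difference is that you target $\Delta/4$ per piece so that the quotient lands in $1\pm\Delta$, whereas the paper targets $\Delta$ per piece and absorbs the resulting $4\Delta$ into the constant; both are the same argument up to rescaling.
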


\begin{proof}
Recall that $p = P(\omega = 1)$, $1 - p = P(\omega = 0)$, and $\rho = \frac{p}{1-p}$. According to Fact~\ref{fact:p_mu} (which says $(1-p)\mu_0 = p\mu_1$), we have 
\begin{equation}\label{eq:rho_expression}
    \rho = \frac{p}{1-p} = \frac{\mu_0}{\mu_1} = \frac{\sum_{i=1}^n \E[r_i \mid \omega = 0]}{\sum_{i=1}^n \E[1 - r_i \mid \omega = 1]}. 
\end{equation}
Consider the following way of estimating $\rho$ from $T$ samples $(\omega^{(t)}, \bm r^{(t)})_{t=1}^T$: Let $\#_0$, $\#_1$ be the numbers of samples where $\omega^{(t)}=0, 1$, respectively: 
\[ \#_0 = \sum_{t=1}^T \mathbbm{1}\{\omega^{(t)} = 0\}, ~~~~~~ \#_1 = \sum_{t=1}^T \mathbbm{1}\{\omega^{(t)} = 1\}. \]
We let 
\begin{equation}\label{eq:rho_hat_using_r}
    \hat\rho = \frac{\frac{1}{\#_0}\sum_{t: \omega^{(t)}=0} \sum_{i=1}^n r_i^{(t)}}{\frac{1}{\#_1} \sum_{t: \omega^{(t)}=1} \sum_{i=1}^n (1-r_i^{(t)})}. 
\end{equation}

Now, we compare the $\hat \rho$ in \eqref{eq:rho_hat_using_r} and the $\rho$ in \eqref{eq:rho_expression}: we see that $\frac{1}{\#_0}\sum_{t: \omega^{(t)}=0} \sum_{i=1}^n r_i^{(t)}$ is an (unbiased) estimate of the numerator $\sum_{i=1}^n \E[r_i \mid \omega = 0] = n\mu_0$ and that $\frac{1}{\#_1} \sum_{t: \omega^{(t)}=1} \sum_{i=1}^n (1-r_i^{(t)})$ is an (unbiased) estimate of the denominator $\sum_{i=1}^n \E[1 - r_i \mid \omega = 1] = n\mu_1$.  We use Chernoff bounds to argue that the accuracy of the two estimates is within $\Delta$ with high probability if $\#_0$ and $\#_1$ are big enough.
Suppose that, when drawing the $T$ samples, we draw all the $\omega^{(t)}$'s first (and hence $\#_0, \#_1$ are determined), and then draw all the $r_i^{(t)}$'s.  After all the $\omega^{(t)}$'s are drawn, the $r_i^{(t)}$'s become independent, because the signals $s_i^{(t)}$'s are conditionally independent given $\omega^{(t)}$.  Therefore, we can use Chernoff bounds:
\begin{align*}
    \Pr\Big[\, \Big| \frac{1}{\#_0}\sum_{t: \omega^{(t)}=0} \sum_{i=1}^n r_i^{(t)} - n\mu_0 \Big| > \Delta n\mu_0 \Big] \le 2e^{-\frac{\#_0 n \mu_0 \Delta^2}{3}}, \\
    \Pr\Big[\, \Big| \frac{1}{\#_1}\sum_{t: \omega^{(t)}=1} \sum_{i=1}^n (1 - r_i^{(t)}) - n \mu_1 \Big| > \Delta n \mu_1\Big] \le 2e^{-\frac{\#_1 n \mu_1 \Delta^2}{3}}. 
\end{align*}
Requiring $\delta \ge 2e^{-\frac{\#_0 n \mu_0 \Delta^2}{3}}$ and $\delta \ge 2e^{-\frac{\#_1 n \mu_1 \Delta^2}{3}}$, namely, 
\begin{equation}\label{eq:condition_sharp}
\#_0 \ge \frac{3}{n \mu_0\Delta^2}\log\frac{2}{\delta}, ~~~~~~~ \#_1 \ge \frac{3}{n \mu_1\Delta^2}\log \frac{2}{\delta},
\end{equation}
we have, with probability at least $1 - 2\delta$, both of the following hold: 
\begin{equation}\label{eq:sharp_Delta_bound}
    \frac{1}{\#_0}\sum_{t: \omega^{(t)}=0} \sum_{i=1}^n r_i^{(t)} ~\in~ (1\pm \Delta) n \mu_0, ~~~~~~~ \frac{1}{\#_1}\sum_{t: \omega^{(t)}=1} \sum_{i=1}^n (1-r_i^{(t)}) ~\in~ (1\pm \Delta) n \mu_1,
\end{equation}

Then, we argue that \eqref{eq:condition_sharp} can be satisfied with high probability if $T$ is large enough.  This is again done by a Chernoff bound: since $\E[\#_j] = \E[\sum_{t=1}^T \mathbbm{1}\{\omega^{(t)}=j\}] = T\cdot P(\omega = j)$, for $j=0, 1$, we have
\begin{equation}\label{eq:sharp_T_chernoff}
    \Pr\Big[ \big|\#_0 - T(1-p)\big| \ge \frac{1}{2} T(1-p) \Big] \le 2e^{-\frac{T(1-p)(\frac{1}{2})^2}{3}}, ~~~~~~ \Pr\Big[ \big|\#_1 - Tp\big| \ge \frac{1}{2} Tp \Big] \le 2e^{-\frac{Tp(\frac{1}{2})^2}{3}}.
\end{equation} 
So, if we are given 
\begin{equation}\label{eq:T_bound_1}
    T \ge \frac{12}{\min\{p, 1-p\}}\log\frac{2}{\delta} 
\end{equation}
samples, then we can ensure that with probability at least $1 - 2\delta$, it holds $\#_0 \ge \frac{1}{2}T(1-p)$ and $\#_1 \ge \frac{1}{2}Tp$.  Then, in order for \eqref{eq:condition_sharp} to be satisfied, we can let 
\begin{equation*}
    \frac{1}{2}T(1-p) \ge \frac{3}{n \mu_0\Delta^2}\log\frac{2}{\delta},  ~~~~~~~ \frac{1}{2}Tp \ge \frac{3}{n\mu_1\Delta^2}\log \frac{2}{\delta}.
\end{equation*}
This gives 
\begin{equation}\label{eq:T_bound_2}
    T ~\ge~ \max\bigg\{ \frac{6}{(1-p)\mu_0 \cdot n \Delta^2} \log \frac{2}{\delta}, ~~ \frac{6}{p\mu_1 \cdot n \Delta^2} \log \frac{2}{\delta}\bigg\} ~\stackrel{(1-p)\mu_0 = p\mu_1}{=}~ \frac{6}{(1-p)\mu_0 \cdot n \Delta^2} \log \frac{2}{\delta}. 
\end{equation}
Both \eqref{eq:T_bound_1} and \eqref{eq:T_bound_2} are satisfied when 
\begin{equation*}
    T ~\ge~ \frac{6}{(1-p)\mu_0 \cdot n \Delta^2} \log \frac{2}{\delta} ~ +~  \frac{12}{\min\{p, 1-p\}}\log\frac{2}{\delta}. 
\end{equation*}

To conclude, if we are given $T = \frac{6}{(1-p)\mu_0 \cdot n \Delta^2} \log \frac{2}{\delta} +  \frac{12}{\min\{p, 1-p\}}\log\frac{2}{\delta}$ 
samples, 
then with probability at least $1 - 4\delta$, \eqref{eq:sharp_Delta_bound} holds, which implies
\begin{align*}
    \hat \rho ~\in~ \frac{(1\pm \Delta) \mu_0}{(1\pm \Delta) \mu_1} ~=~ \frac{(1\pm \Delta)}{(1\pm \Delta)} \rho ~ \subseteq ~ (1\pm 4\Delta) \rho ~~~ \implies ~~~ \frac{|\hat \rho - \rho|}{\rho} \le 4\Delta, 
\end{align*}
for $\Delta < \frac{1}{4}$. 
\end{proof} 

\subsubsection{The Proof} 
We want to show the $O(\frac{1}{\eps n (\frac{\gamma}{1+\gamma})^2} \log\frac{1}{\delta} +  \frac{1}{\eps}\log\frac{1}{\delta})$ sample complexity upper bound for the case where experts have $\gamma$-strongly informative signals. 

We first use $O(\frac{1}{\eps} \log\frac{1}{\delta})$ samples tell whether $(1-p)\mu_0 = p\mu_1 < \frac{\eps}{2}$ or $(1-p)\mu_0 = p\mu_1 \ge \frac{\eps}{4}$.
We note that
\[ (1 - p)\mu_0 = P(\omega = 0)\cdot \E\Big[\frac{1}{n}\sum_{i=1}^n r_i \mid \omega = 0\Big] = \E\Big[\mathbbm{1}\{\omega = 0\}\cdot \frac{1}{n}\sum_{i=1}^n r_i\Big], \]
which is the expectation of the random variable $X = \mathbbm{1}\{\omega = 0\}\cdot \frac{1}{n}\sum_{i=1}^n r_i$.  So, according to Lemma~\ref{lem:small-or-large-mean}, we can tell whether $(1-p)\mu_0 < \frac{\eps}{2}$ or $\ge \frac{\eps}{4}$ with probability at least $1 - \delta$ using $O(\frac{1}{\eps} \log\frac{1}{\delta})$ samples of $X$.
If $(1-p)\mu_0 = p \mu_1 < \frac{\eps}{2}$, then according to Lemma~\ref{lem:averaging}, the averaging aggregator $f_{avg}(\bm r) = \frac{1}{n}\sum_{i=1}^n r_i$ is $\eps$-optimal.  We hence obtained an $\eps$-optimal aggregator in this case.  So, in the following proof, we assume $(1-p)\mu_0 = p\mu_1 \ge \frac{\eps}{4}$. 

For each expert $i$, let $\S_i^1 = \{ s_i \in \S_i : \frac{P(s_i\mid \omega = 1)}{P(s_i \mid \omega = 0)} \ge 1 + \gamma \}$ be its set of $\gamma$-strongly informative signals that are more likely to be realized under $\omega = 1$ than under $\omega = 0$.  Let $\S_i^0 = \S_i \setminus \S_i^1 = \{ s_i \in \S_i : \frac{P(s_i\mid \omega = 1)}{P(s_i \mid \omega = 0)} \le \frac{1}{1 + \gamma} \}$ be the set of signals that are more likely to be realized under $\omega = 0$.
Since $\frac{r_i}{1-r_i} = \frac{P(s_i\mid \omega = 1)}{P(s_i \mid \omega = 0)} \rho$ by Equation~\eqref{eq:r_i_definition}, whenever an expert receives a signal in $\S_i^1$, its report satisfies 
\begin{equation}\label{eq:S_i^0_report}
    \frac{r_i}{1-r_i} \ge  (1 + \gamma)\rho, ~~~~ \forall s_i \in \S_i^1;
\end{equation}
and whenever it receives a signal in $\S_i^0$, its report satisfies 
\begin{equation}\label{eq:S_i^1_report}
    \frac{r_i}{1-r_i} \le \frac{1}{1 + \gamma} \rho, ~~~~ \forall s_i \in \S_i^0.
\end{equation}

We will use the notation $P(\S_i^u \mid \omega) = P(s_i \in \S_i^u \mid \omega) = \sum_{s_i\in \S_i^u} P(s_i \mid \omega)$, for $u\in\{0, 1\}$.  
Given a set of $n$ signals $s_1, \ldots, s_n$, one per expert, we let $X^1 = \sum_{i=1}^n \mathbbm{1}\{s_i \in \S_i^1\}$ be the total number of signals that belong to the $\S_i^1$ sets, and similarly let $X^0 = \sum_{i=1}^n \mathbbm{1}\{s_i \in \S_i^0\}$.  We have $X^0 + X^1 = n$, and by definition, 
\begin{equation}\label{eq:E_X^1_1}
    \E[X^1 \mid \omega = 1 ] ~=~ \sum_{i=1}^n P(\mathcal S_i^1 \mid \omega = 1) ~\ge~ (1 + \gamma) P(\mathcal S_i^1 \mid \omega = 0) ~=~ (1 + \gamma) \E[X^1 \mid \omega = 0]. 
\end{equation}
\begin{equation}\label{eq:E_X^0_0}
    \E[X^0 \mid \omega = 0 ] ~=~ \sum_{i=1}^n P(\mathcal S_i^0 \mid \omega = 0) ~\ge~ (1 + \gamma) P(\mathcal S_i^0 \mid \omega = 1) ~=~ (1 + \gamma) \E[X^0 \mid \omega = 1],  
\end{equation}
\begin{claim}\label{claim:one_of_E_greater_than_n_2}
At least one of $\E[X^1 \mid \omega = 1]$ and $\E[X^0 \mid \omega = 0]$ is $\ge \frac{n}{2}$. 
\end{claim}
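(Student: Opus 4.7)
The plan is to combine the two inequalities \eqref{eq:E_X^1_1} and \eqref{eq:E_X^0_0} with the pointwise identity $X^0 + X^1 = n$ to show a stronger statement, namely that the sum $\E[X^1 \mid \omega=1] + \E[X^0 \mid \omega=0]$ is at least $n$, from which the claim follows immediately by pigeonhole.

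First I would observe that since every expert is $\gamma$-strongly informative, every signal $s_i \in \S_i$ satisfies either $\frac{P(s_i\mid\omega=1)}{P(s_i\mid\omega=0)} \ge 1+\gamma$ or $\frac{P(s_i\mid\omega=1)}{P(s_i\mid\omega=0)} \le \frac{1}{1+\gamma}$, so $\S_i = \S_i^1 \sqcup \S_i^0$ is a partition. Consequently $X^0 + X^1 = n$ as random variables, and taking conditional expectations gives $\E[X^1 \mid \omega = j] + \E[X^0 \mid \omega = j] = n$ for each $j \in \{0,1\}$.

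Next, let $A = \E[X^1 \mid \omega = 1] + \E[X^0 \mid \omega = 0]$ and $B = \E[X^1 \mid \omega = 0] + \E[X^0 \mid \omega = 1]$. The identity above implies $A + B = 2n$. Summing \eqref{eq:E_X^1_1} and \eqref{eq:E_X^0_0} yields $A \ge (1+\gamma) B = (1+\gamma)(2n - A)$, and rearranging gives
\[
A \;\ge\; \frac{2(1+\gamma)}{2+\gamma}\, n \;=\; \Bigl(1 + \frac{\gamma}{2+\gamma}\Bigr) n \;\ge\; n.
\]
Since $\E[X^1 \mid \omega=1] + \E[X^0 \mid \omega=0] \ge n$, at least one of the two summands must be $\ge n/2$, which is exactly the claim.

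I do not anticipate any real obstacle here: the only thing to be careful about is confirming that $\S_i^0$ and $\S_i^1$ partition $\S_i$, which uses the hypothesis that \emph{all} signals of each expert are $\gamma$-strongly informative (as opposed to merely some of them). Everything else is an elementary manipulation of the two inequalities already recorded in \eqref{eq:E_X^1_1} and \eqref{eq:E_X^0_0}.
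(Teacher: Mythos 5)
Your proof is correct. You take a genuinely (if mildly) different route from the paper: the paper argues by contradiction, assuming both conditional expectations are $<n/2$, then applies \emph{only} \eqref{eq:E_X^0_0} to deduce $\E[X^0\mid\omega=1]<n/2$ and derives $n = \E[X^0+X^1\mid\omega=1] < n$. You instead sum both \eqref{eq:E_X^1_1} and \eqref{eq:E_X^0_0} symmetrically, use $A+B=2n$ to eliminate $B$, and obtain the quantitatively sharper bound $A=\E[X^1\mid\omega=1]+\E[X^0\mid\omega=0]\ge \frac{2(1+\gamma)}{2+\gamma}n\ge n$, from which the claim follows by pigeonhole. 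Your version is slightly longer but yields a stronger conclusion (the improvement $\frac{\gamma}{2+\gamma}n$ over $n$ is unused downstream, so the paper's contradiction argument is the leaner choice here); both rest on the same two ingredients, the informativeness inequalities and the pointwise identity $X^0+X^1=n$. Your check that $\S_i=\S_i^0\sqcup\S_i^1$ is indeed the point where the ``all signals $\gamma$-strongly informative'' hypothesis enters, and the paper implicitly relies on it in the same way.
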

\begin{proof}
Suppose on the contrary both $\E[X^1 \mid \omega = 1]$ and $\E[X^0 \mid \omega = 0]$ are $< \frac{n}{2}$.   Then, from \eqref{eq:E_X^0_0} we have 
\begin{align*}
    \E[X^0 \mid \omega = 1] \le \frac{1}{1 + \gamma} \E[X^0 \mid \omega = 0] < \frac{n}{2}. 
\end{align*}
This implies $n = \E[X^0 + X^1 \mid \omega = 1] < \frac{n}{2} + \frac{n}{2} = n$, a contradiction. 
\end{proof}

Let $u \in \{0, 1\}$ be an index such that
\begin{equation}\label{eq:at_least_n_4}
    \E[X^u \mid \omega = u] \ge \frac{n}{4}.
\end{equation}
Claim~\ref{claim:one_of_E_greater_than_n_2} guarantees that such a $u$ exists. 
We construct a ``hypothetical'' aggregator $f_{\mathrm{hypo}}$ that, having access to $\rho$ and $\E[X^u \,|\, \omega]$, predicts whether $\omega = 0$ or $1$ by counting the number $X^u$ of signals that belong to the $\S_i^u$ sets and comparing it with its expectations under $\omega=0$ and $1$, $\E[X^u\mid \omega = 0]$ and $\E[X^u\mid \omega=1]$, respectively.  Specifically, given reports $\bm r = (r_1, \ldots, r_n)$ as input, with corresponding unobserved signals $\bm s = (s_1, \ldots, s_n)$, $f_{\mathrm{hypo}}$ does the following: 
\begin{itemize}
    \item[(1)]  If $u = 1$, count how many reports $r_i$'s satisfy $\frac{r_i}{1-r_i} \ge (1+\gamma)\rho$; If $u = 0$, count how many reports $r_i$'s satisfy $\frac{r_i}{1-r_i} \le \frac{1}{1+\gamma}\rho$.
    According to \eqref{eq:S_i^0_report} and \eqref{eq:S_i^1_report}, this number is exactly equal to the number of signals that belong to the $\S_i^u$ sets, $X^u$.
    \item[(2)] Then, check whether $X^u$ is closer (in terms of absolute difference) to $\E[X^u \mid \omega = u]$ or $\E[X^u \mid \omega = 1 - u]$.  If $X^u$ is closer to $\E[X^u \mid \omega = u]$, output $f_{\mathrm{hypo}}(\bm r) = u$; otherwise, output $f_{\mathrm{hypo}}(\bm r) = 1 - u$. 
\end{itemize}
We claim that $f_{\mathrm{hypo}}$ is $\eps$-optimal. 
\begin{claim}\label{claim:f_hypo_eps_optimal}
Given $\frac{\gamma}{1 + \gamma}\ge 8\sqrt{\frac{2}{n}\log\frac{2}{\eps}}$ and $\E[X^u \mid \omega = u] \ge \frac{n}{4}$, $f_{\mathrm{hypo}}$ is $\eps$-optimal. 
\end{claim}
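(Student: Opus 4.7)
The plan is to exploit the fact that $f_{\mathrm{hypo}}(\bm r) \in \{0,1\}$, so its squared error equals its zero-one error. Concretely, $|f_{\mathrm{hypo}}(\bm r) - \omega|^2 = \mathbbm{1}\{f_{\mathrm{hypo}}(\bm r) \ne \omega\}$, hence $L_P(f_{\mathrm{hypo}}) = \Pr_P[f_{\mathrm{hypo}}(\bm r) \ne \omega]$. Since $L_P(f^*) \ge 0$, in order to show that $f_{\mathrm{hypo}}$ is $\eps$-optimal it suffices to prove that the misclassification probability is at most $\eps$.

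Set $m_\omega := \E[X^u \mid \omega]$ for $\omega \in \{0,1\}$. By the choice of $u$ in \eqref{eq:at_least_n_4} we have $m_u \ge n/4$, and by \eqref{eq:E_X^1_1}--\eqref{eq:E_X^0_0} we have $m_u \ge (1+\gamma)\, m_{1-u}$. Consequently the gap satisfies
\begin{equation*}
    m_u - m_{1-u} \;\ge\; \tfrac{\gamma}{1+\gamma}\, m_u \;\ge\; \tfrac{\gamma n}{4(1+\gamma)}.
\end{equation*}
By definition $f_{\mathrm{hypo}}$ errs only when $X^u$ is closer to $m_{1-\omega}$ than to $m_\omega$, which requires $|X^u - m_\omega| \ge \tfrac{1}{2}(m_u - m_{1-u}) \ge \tfrac{\gamma n}{8(1+\gamma)} =: \Delta$.

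Conditioned on $\omega$, $X^u = \sum_{i=1}^n \mathbbm{1}\{s_i \in \S_i^u\}$ is a sum of $n$ independent (but not identically distributed) $[0,1]$-valued random variables, because $P$ is conditionally independent. Hoeffding's inequality then gives
\begin{equation*}
    \Pr\big[\,|X^u - m_\omega| \ge \Delta \,\big|\, \omega\,\big] \;\le\; 2\exp\!\Big(-\tfrac{2\Delta^2}{n}\Big) \;=\; 2\exp\!\Big(-\tfrac{n}{32}\bigl(\tfrac{\gamma}{1+\gamma}\bigr)^2\Big).
\end{equation*}
Plugging in the hypothesis $\tfrac{\gamma}{1+\gamma} \ge 8\sqrt{(2/n)\log(2/\eps)}$, which is equivalent to $\tfrac{n}{32}\bigl(\tfrac{\gamma}{1+\gamma}\bigr)^2 \ge 4\log(2/\eps)$, the bound becomes $2(\eps/2)^4 = \eps^4/8 \le \eps$. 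Averaging over $\omega$ yields $\Pr_P[f_{\mathrm{hypo}}(\bm r) \ne \omega] \le \eps$, which completes the proof.

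The calculation is quite direct; the only part that needs care is ensuring that Hoeffding applies despite the Bernoulli indicators $\mathbbm{1}\{s_i \in \S_i^u\}$ having different parameters across experts, which is fine for the $[0,1]$-bounded independent version of Hoeffding's inequality. The use of conditional independence is essential here: without it the indicators need not be independent under conditioning on $\omega$, and the concentration step would fail.
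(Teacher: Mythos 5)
Your proof is correct and follows essentially the same route as the paper: you express the squared loss of the hypothetical aggregator as a misclassification probability, use Hoeffding's inequality on $X^u$ conditional on $\omega$ (valid by conditional independence), and combine it with the gap $m_u - m_{1-u} \ge \tfrac{\gamma}{1+\gamma}\cdot\tfrac{n}{4}$ to bound the error by $\eps$. The paper's version picks the concentration radius $a = \sqrt{\tfrac{n}{2}\log\tfrac{2}{\eps}}$ first and then shows the gap is $\ge 4a$, whereas you bound the error event directly at radius $\Delta = \tfrac{\gamma n}{8(1+\gamma)}$; since $\Delta \ge 2a$, your version actually lands at $\eps^4/8$ rather than $\eps$, but both are within the target.
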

\begin{proof}
Given either $\omega = 0$ or $1$, consider the conditional random draw of signals $s_1, \ldots, s_n$.  Because $X^u = \sum_{i=1}^n \mathbbm{1}\{s_i \in \S_i^u\}$ and the random variables $\mathbbm{1}\{s_i \in \S_i^u\}$, $i=1, \ldots, n$, are $[0, 1]$-bounded and independent conditioning $\omega$, by Hoeffding's inequality we have 
\begin{align*}
    \Pr\Big[ \big| \,X^u  - \E[X^u \mid \omega]\,\big| \ge a \;\Big|\; \omega \Big] \le 2 e^{-\frac{2a^2}{n}}. 
\end{align*}
Let 
\begin{equation}\label{eq:a_definition}
    a = \sqrt{\frac{n}{2}\log\frac{2}{\eps}}. 
\end{equation}
Then with probability at least $1 - 2 e^{-\frac{2a^2}{n}} = 1 - \eps$, it holds 
\begin{align}\label{eq:X^u_close_to_expectation}
    \big|\, X^u  - \E[X^u \mid \omega]\, \big| < a. 
\end{align}
Consider the difference between $\E[X^u \mid \omega = u]$ and $\E[X^u \mid \omega = 1 - u]$.  By \eqref{eq:E_X^1_1} and \eqref{eq:E_X^0_0}, we have 
\begin{align*}
    \E[X^u \mid \omega = 1 - u] ~\le~ \frac{1}{1 + \gamma} \E[X^u \mid \omega = u] & ~=~ \Big( 1 - \frac{\gamma}{1 + \gamma} \Big) \E[X^u \mid \omega = u]. 
\end{align*}
By the assumption $\E[X^u \mid \omega = u] \ge \frac{n}{4}$, 
\begin{align*}
    \E[X^u \mid \omega = 1 - u] ~\le~ \E[X^u \mid \omega = u] - \frac{\gamma}{1 + \gamma} \cdot \frac{n}{4}.
\end{align*}
By the assumption $\frac{\gamma}{1 + \gamma}\ge 8\sqrt{\frac{2}{n}\log\frac{2}{\eps}}$, we have $\frac{\gamma}{1 + \gamma} \cdot \frac{n}{4} ~\ge~ 8\sqrt{\frac{2}{n}\log\frac{2}{\eps}} \cdot \frac{n}{4} = 4\sqrt{\frac{n}{2}\log\frac{2}{\eps}} = 4a.$
Therefore
\begin{equation}\label{eq:EX^u_4a}
    \E[X^u \mid \omega = u] - \E[X^u \mid \omega = 1 - u] \ge 4a. 
\end{equation}
Because we already had $\big| X^u  - \E[X^u \mid \omega]\big| < a$ (which happened with probability at least $1-\eps$), if $X^u$ turns out to be closer to $\E[X^u \mid \omega = u]$ it must be that $\omega = u$; if $X^u$ turns out to be closer to $\E[X^u \mid \omega = 1 - u]$ it must be that $\omega = 1 - u$.  In either case, our output $f_{\mathrm{hypo}}(\bm r)$ is equal to $\omega$, having a loss $0$. If $\big| X^u  - \E[X^u \mid \omega]\big| < a$ did not happen, our loss is at most $1$.  Therefore, the expected loss of our aggregator $f_{\mathrm{hypo}}$ is at most 
\begin{equation*}
    L_P(f_{\mathrm{hypo}}) = \E_{\omega}\Big[ \E\big[ |f_{\mathrm{hypo}}(\bm r) - \omega|^2  \; \big | \; \omega \big] \Big] \le \E_{\omega}\Big[  (1-\eps)\cdot 0 + \eps\cdot 1 \Big] = \eps. 
\end{equation*}
Since the expected loss of the optimal aggregator $f^*$ is non-negative, $f_{\mathrm{hypo}}$ is $\eps$-optimal. 
\end{proof}

In the remaining proof, we show how to use samples to learn a ``real'' aggregator $\hat f$ that implements the same functionality as the hypothetical aggregator $f_{\mathrm{hypo}}$ and hence is $\eps$-optimal.  We have two learning tasks: First, we need to \emph{estimate $\rho$}, so that we can implement the step (1) of $f_{\mathrm{hypo}}$ which tells apart $\frac{r_i}{1-r_i} \ge (1+\gamma)\rho$ and $\frac{r_i}{1-r_i} \le \frac{1}{1+\gamma}\rho$. Second, we need to \emph{find an index $u\in\{0, 1\}$ such that $\E[X^u \mid \omega = u] \ge \frac{n}{4}$ and estimate $\E[X^u \mid \omega = u]$}, so that we can implement the step (2) of $f_{\mathrm{hypo}}$ which tells whether $X^u$ is closer to $\E[X^u \mid \omega = u]$ or $\E[X^u \mid \omega = 1 - u]$.  We show that these two tasks can be achieved using $O(\frac{1}{\eps n (\frac{\gamma}{1 + \gamma})^2} \log \frac{1}{\delta} \, + \,  \frac{1}{\eps}\log\frac{1}{\delta} )$ samples, with probability at least $1 - O(\delta)$. 

\paragraph{Task 1: estimate $\rho$, using $T_1 = O(\frac{1}{\eps n (\frac{\gamma}{1 + \gamma})^2} \log \frac{1}{\delta} \, + \,  \frac{1}{\eps}\log\frac{1}{\delta} )$ samples.}  We want to use samples to obtain an estimate $\hat \rho$ of $\rho$ such that $\frac{1}{1+\gamma}\rho < \hat \rho <  (1+\gamma)\rho$.  So, by checking whether $\frac{r_i}{1-r_i} > \hat \rho$ or $\frac{r_i}{1-r_i} < \hat \rho$ we can tell apart $\frac{r_i}{1-r_i} \ge (1+\gamma)\rho$ and $\frac{r_i}{1-r_i} \le \frac{1}{1+\gamma}\rho$.  Using Lemma~\ref{lem:estimate_rho} with $\Delta = \frac{\gamma}{1 + \gamma}$, we obtain a $\hat \rho$ such that  
\begin{align*}
    \hat \rho  \in (1\pm \Delta) \rho, 
\end{align*}
with probability at least $1-\delta$ using
\begin{align*}
    T_1 = O\bigg( \frac{1}{(1-p)\mu_0 n \Delta^2} \log \frac{1}{\delta} \, + \,  \frac{1}{\min\{p, 1-p\}}\log\frac{1}{\delta} \bigg) \le O\bigg(\frac{1}{\eps n (\frac{\gamma}{1 + \gamma})^2} \log \frac{1}{\delta} \, + \,  \frac{1}{\eps}\log\frac{1}{\delta} \bigg) 
\end{align*}
samples (recall that we have $\min\{p, 1-p\} \ge (1-p)\mu_0 = p\mu_1 \ge \frac{\eps}{4}$).  The 
$\hat \rho$ then satisfies 
\[ \hat \rho < \Big( 1 + \frac{\gamma}{1 + \gamma}\Big) \rho < ( 1 + \gamma )\rho \quad \text{and} \quad \hat \rho > \Big( 1 - \frac{\gamma}{1 + \gamma}\Big) \rho = \frac{1}{1 + \gamma}\rho, \]
as desired. 

\paragraph{Task 2: find $u$ such that $\E[X^u \mid \omega = u] \ge \frac{n}{4}$ and estimate $\E[X^u \mid \omega = u]$, using $T_2 = O(\frac{1}{\eps}\log\frac{1}{\delta})$ samples.}
First, we show how to use $T_2 = O(\frac{1}{\eps}\log\frac{1}{\delta})$ samples to estimate both $\E[X^0 \mid \omega = 0]$ and $\E[X^1 \mid \omega = 1]$ with accuracy $a = \sqrt{\frac{n}{2}\log\frac{2}{\eps}}$.  By the same argument as in the proof of Lemma~\ref{lem:estimate_rho} (Equations~\ref{eq:sharp_T_chernoff} and \ref{eq:T_bound_1}), we know that with probability at least $1 - 2\delta$ over the random draws of
\begin{equation}\label{eq:T_2_bound_1}
    T_2 \ge \frac{12}{\min\{p, 1-p\}}\log\frac{2}{\delta}
\end{equation}
samples, the numbers of samples $(\omega^{(t)}, r_1^{(t)}, \ldots, r_n^{(t)})$'s where $\omega^{(t)}=0$ and $\omega^{(t)}=1$, denoted by $\#_0$ and $\#_1$, must satisfy 
\begin{align*}
    \#_0 \ge \frac{1}{2}(1-p)T_2, \quad \quad \#_1 \ge \frac{1}{2}pT_2.  
\end{align*}
We consider the samples where $\omega^{(t)} = 0$.  There are $\#_0 n$ total number of $r_i^{(t)}$'s.  Suppose we have accomplished Task 1.  Then, for each $r_i^{(t)}$, we can tell whether the corresponding signal $s_i^{(t)}$ belongs to $\S_i^0$ by checking whether $\frac{r_i^{(t)}}{1-r_i^{(t)}} < \hat \rho$.  So, we can exactly compute the total number of such signals in the $t$-th sample, $X^{0(t)} = \sum_{i=1}^n \mathbbm{1}\{s_i^{(t)} \in \S_i^0 \}$, whose expected value is $\E[X^0 \mid \omega = 0]$.  Because signals are independent given $\omega^{(t)} = 0$, by Hoeffding's inequality we have 
\begin{align*}
    \Pr\bigg[ \Big| \sum_{t: \omega^{(t)}=0} \underbrace{\sum_{i=1}^n \mathbbm{1}\{s_i^{(t)} \in \S_i^0\}}_{X^{0(t)}} \, - \, \#_0 \E[X^0 \mid \omega = 0] \Big| \,\ge\, \#_0 a \bigg] & \le 2 e^{- \frac{2(\#_0 a)^2}{\#_0 n}} = 2 e^{- \frac{2\#_0a^2}{n}}. 
\end{align*}
Plugging in $a = \sqrt{\frac{n}{2}\log\frac{2}{\eps}}$ and $\#_0 \ge \frac{1}{2}(1-p)T_2$, we get 
\begin{align*}
    \Pr\bigg[ \Big| \frac{1}{\#_0}\sum_{t: \omega^{(t)}=0} X^{0(t)} \, - \, \E[X^0 \mid \omega = 0] \Big| \,\ge\, a \bigg] \le 2 e^{- \#_0 \log \frac{2}{\eps}} \le 2 e^{- \frac{1}{2}(1-p) T_2 \log \frac{2}{\eps}}.
\end{align*}
Similarly, considering the samples where $\omega^{(t)} = 1$, we get
\begin{align*}
    \Pr\bigg[ \Big| \frac{1}{\#_1}\sum_{t: \omega^{(t)}=1} X^{1(t)} \, - \, \E[X^1 \mid \omega = 1] \Big| \,\ge\, a \bigg] \le 2 e^{- \#_1 \log \frac{2}{\eps}} \le 2 e^{- \frac{1}{2}p T_2 \log \frac{2}{\eps}}. 
\end{align*}
Therefore, if we require 
\begin{equation}\label{eq_T_2_bound_2}
    T_2 \ge \frac{2 \log(2/\delta)}{\min\{p, 1-p\} \log(2/\eps)},
\end{equation}
then with probability at least $1 - 2\delta$, both 
\begin{align*}
    \Big| \frac{1}{\#_0}\sum_{t: \omega^{(t)}=0} X^{0(t)} \, - \, \E[X^0 \mid \omega = 0] \Big| \,<\, a, \quad \quad \quad 
    \Big| \frac{1}{\#_1}\sum_{t: \omega^{(t)}=1} X^{1(t)} \, - \, \E[X^1 \mid \omega = 1] \Big| \,<\, a
\end{align*}
hold. Namely, $\frac{1}{\#_0}\sum_{t: \omega^{(t)}=0} X^{0(t)}$ and $\frac{1}{\#_1}\sum_{t: \omega^{(t)}=1} X^{1(t)}$ are $a$-accurate estimates of $\E[X^0\mid\omega = 0]$ and $\E[X^1 \mid \omega = 1]$.  Equations \eqref{eq:T_2_bound_1} and \eqref{eq_T_2_bound_2} together imply that $T_2 = O(\frac{1}{\min\{p, 1-p\}} \log\frac{2}{\delta}) \le O(\frac{1}{\eps} \log\frac{1}{\delta})$ samples suffice. 

Then, we identify an index $u\in\{0, 1\}$ such that $\E[X^u \mid \omega = u] \ge \frac{n}{4}$.  By Claim~\ref{claim:one_of_E_greater_than_n_2}, there exists a $v\in \{0, 1\}$ with $\E[X^v \mid \omega = v] \ge \frac{n}{2}$.  Since $\frac{1}{\#_v}\sum_{t: \omega^{(t)}=v} X^{v(t)}$ is an $a$-accurate estimate of $\E[X^v \mid \omega = v]$, we must have
\[\frac{1}{\#_v}\sum_{t: \omega^{(t)}=v} X^{v(t)} ~\ge~ \E[X^v \mid \omega = v]  - a ~\ge~ \frac{n}{2} -a. \]
So, at least one of $u\in\{0, 1\}$ must satisfy $\frac{1}{\#_u}\sum_{t: \omega^{(t)}=u} X^{u(t)} \ge \frac{n}{2} -a$.  By picking any such a $u$, we are guaranteed that 
$\E[X^u \mid \omega = u] \ge \frac{1}{\#_u}\sum_{t: \omega^{(t)}=u} X^{u(t)} - a \ge \frac{n}{2} - 2a$.  Given the assumption $n \ge 32 \log\frac{2}{\eps}$ in the statement of the theorem, we have 
\[ \frac{a}{n} = \sqrt{\frac{1}{2n} \log\frac{2}{\eps}} \le \frac{1}{8}.\] 
Hence, 
$\E[X^u \mid \omega = u] \ge \frac{n}{2} - 2a \ge \frac{n}{2} - 2(\frac{n}{8}) = \frac{n}{4}$.

Finally, as argued above, an $a$-accurate estimate of $\E[X^u \mid \omega = u]$ is given by $\frac{1}{\#_u}\sum_{t: \omega^{(t)}=u} X^{u(t)}$.

\paragraph{Constructing $\hat f$.}  After accomplishing Tasks 1 and 2 using $T_1 + T_2 = O(\frac{1}{\eps n (\frac{\gamma}{1 + \gamma})^2} \log \frac{1}{\delta} \, + \,  \frac{1}{\eps}\log\frac{1}{\delta} )$ samples, we construct a $\hat f$ that implements the same functionality as $f_{\mathrm{hypo}}$.  Let 
\[ M = \frac{1}{\#_u} \sum_{t: \omega^{(t)} = u} X^{u(t)} - 2a,\] 
where $\frac{1}{\#_u} \sum_{t: \omega^{(t)} = u} X^{u(t)}$ is our estimate of $\E[X^u \mid \omega = u]$ in Task 2 and $a = \sqrt{\frac{n}{2}\log\frac{2}{\eps}}$.  
\begin{claim}\label{claim:gap_M}
$\E[X^u \mid \omega = u] -a > M > \E[X^u \mid \omega = 1 - u] + a$. 
\end{claim}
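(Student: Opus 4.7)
\medskip

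\noindent\textbf{Proof plan for Claim~\ref{claim:gap_M}.}
The plan is to chain together two facts already established above: (i) the empirical average $\frac{1}{\#_u}\sum_{t:\omega^{(t)}=u} X^{u(t)}$ is an $a$-accurate estimate of $\E[X^u \mid \omega = u]$ (this was proved as part of Task~2), and (ii) the two conditional expectations $\E[X^u\mid\omega=u]$ and $\E[X^u\mid\omega=1-u]$ are separated by at least $4a$, as shown in Equation~\eqref{eq:EX^u_4a}. The definition $M = \frac{1}{\#_u}\sum_{t:\omega^{(t)}=u} X^{u(t)} - 2a$ is precisely calibrated so that shifting the empirical mean down by $2a$ places $M$ safely inside the gap $(\E[X^u\mid\omega=1-u]+a,\ \E[X^u\mid\omega=u]-a)$.

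More concretely, I would first rewrite (i) as $\E[X^u\mid\omega=u] - a < \frac{1}{\#_u}\sum_{t:\omega^{(t)}=u} X^{u(t)} < \E[X^u\mid\omega=u] + a$. Subtracting $2a$ from every side gives the two-sided bound $\E[X^u\mid\omega=u] - 3a < M < \E[X^u\mid\omega=u] - a$. The upper bound here is exactly the left half of the claim, so the left inequality is immediate.

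For the right inequality I would use the lower bound $M > \E[X^u\mid\omega=u] - 3a$ together with (ii): since $\E[X^u\mid\omega=u] - \E[X^u\mid\omega=1-u] \ge 4a$ by \eqref{eq:EX^u_4a}, we have $\E[X^u\mid\omega=u] - 3a \ge \E[X^u\mid\omega=1-u] + a$, and combining this with $M > \E[X^u\mid\omega=u] - 3a$ yields $M > \E[X^u\mid\omega=1-u] + a$, as required.

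The main (and essentially only) thing to be careful about is the interplay of strict and non-strict inequalities: the concentration bound from Task~2 was stated with strict inequality ($<a$), while \eqref{eq:EX^u_4a} is a non-strict ``$\ge 4a$'', and the $-2a$ offset in the definition of $M$ is what ensures both halves of the claim become strict. There is no real obstacle beyond tracking these constants, so the proof is a short chain of arithmetic rather than a substantive argument.
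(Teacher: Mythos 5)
Your proposal is correct and uses exactly the same two ingredients as the paper: the $a$-accuracy of the empirical estimate of $\E[X^u\mid\omega=u]$ from Task~2 and the $4a$ gap from Equation~\eqref{eq:EX^u_4a}. The only difference is a cosmetic rerouting of the arithmetic (you pass through the intermediate bound $M > \E[X^u\mid\omega=u]-3a$ before invoking the gap, whereas the paper substitutes directly), so this is essentially the paper's own proof.
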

\begin{proof}
Because $\frac{1}{\#_u} \sum_{t: \omega^{(t)} = u} X^{u(t)}$ is an $a$-accurate estimate of $\E[X^u \mid \omega = u]$, we have 
\[ \E[X^u \mid \omega = u] > \frac{1}{\#_u} \sum_{t: \omega^{(t)} = u} X^{u(t)} - a = M + a.\]
Recall from Equation~\eqref{eq:EX^u_4a} that $\E[X^u \mid \omega = 1 - u] \le \E[X^u \mid \omega = u] - 4a$.  So, 
\[\E[X^u \mid \omega = 1 - u] <  \Big(\frac{1}{\#_u} \sum_{t: \omega^{(t)} = u} X^{u(t)} + a \Big)- 4a = M - a. \]
The above two inequalities prove the claim. 
\end{proof}
\noindent Given reports $\bm r = (r_1, \ldots, r_n)$ as input, we let $\hat f$ do the following: 
\begin{itemize}
    \item[(1)] If $u = 1$, count how many reports $r_i$'s satisfy $\frac{r_i}{1-r_i} > \hat \rho$; If $u = 0$, count how many reports $r_i$'s satisfy $\frac{r_i}{1-r_i} < \hat \rho$.
    Let this number be $X$; 
    \item[(2)] Then, check whether $X > M$ or $X \le M$.  If $X>M$, output $\hat f(\bm r) = u$; otherwise, output $\hat f(\bm r) = 1 - u$. 
\end{itemize}

We argue that $\hat f$ implements the same functionality as $f_{\mathrm{hypo}}$:  (1) In Task 1 we got $\frac{1}{1+\gamma}\rho < \hat \rho <  (1+\gamma)\rho$.  So, by checking whether $\frac{r_i}{1-r_i} > \hat \rho$ or $< \hat \rho$ we can exactly tell whether $\frac{r_i}{1-r_i} \ge (1+\gamma)\rho$ or $\le \frac{1}{1+\gamma}\rho$. Hence, we have $X = X^u$, the number of signals that belong to the $\S_i^u$ sets. (2) Recall from Equation~\eqref{eq:X^u_close_to_expectation} that with probability at least $1-\eps$, $X$ is $a$-close to its expectation $\E[X^u \mid \omega]$.  Then, according to Claim~\ref{claim:gap_M}, $X > M$ implies that $X$ is closer to $\E[X^u \mid \omega = u]$; $X < M$ implies that $X$ is closer to $\E[X^u \mid \omega = 1 - u]$.  So, $\hat f$ implements both of the two steps in $f_{\mathrm{hypo}}$.  Hence, according to Claim~\ref{claim:f_hypo_eps_optimal}, $\hat f$ is $\eps$-optimal.

\subsection{Proof of Theorem~\ref{thm:weakly-informative-upper-bound:new}}
\label{app:proof:thm:weakly-informative-upper-bound:new}

According to Lemma~\ref{lem:conditionally-independent-P_s}, the optimal aggregator is
\begin{equation}
    f^*(\bm r) = \frac{1}{1 + \rho^{n-1} \prod_{i=1}^n \frac{1-r_i}{r_i}},  
\end{equation}
where $\rho = \frac{p}{1-p}$.  We claim that an approximately optimal aggregator can be obtained by first estimating $\rho$ from samples and then use the aggregator with the estimate $\hat \rho$: 
\begin{equation}\label{eq:f_hat_definition}
    \hat f(\bm r) = \frac{1}{1 + \hat\rho^{n-1} \prod_{i=1}^n \frac{1-r_i}{r_i}}.  
\end{equation}
\begin{claim}\label{claim:good_rho_good_aggregator}
If $\frac{|\hat\rho - \rho|}{\rho} \le \frac{2\sqrt{\eps}}{n-1} < \frac{1}{2}$, then the aggregator $\hat f$ defined above is $\eps$-optimal. 
\end{claim}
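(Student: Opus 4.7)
The plan is to establish a \emph{pointwise} bound $|\hat f(\bm r) - f^*(\bm r)| \le \sqrt{\eps}$ for every $\bm r$, which immediately gives $\E_P\big[|\hat f(\bm r) - f^*(\bm r)|^2\big] \le \eps$ and hence $\eps$-optimality by definition. Setting $\alpha = \hat\rho/\rho$ and $B(\bm r) = \rho^{n-1}\prod_{i=1}^n \tfrac{1-r_i}{r_i} \ge 0$, both $f^*(\bm r) = 1/(1+B(\bm r))$ and $\hat f(\bm r) = 1/(1+\alpha^{n-1}B(\bm r))$ have the same functional form, so a one-line algebraic manipulation gives
\begin{equation*}
\bigl|\hat f(\bm r) - f^*(\bm r)\bigr| \;=\; \frac{|\alpha^{n-1}-1|\, B(\bm r)}{\bigl(1 + B(\bm r)\bigr)\bigl(1 + \alpha^{n-1} B(\bm r)\bigr)}.
\end{equation*}

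The key step is to decouple the $B(\bm r)$-dependence from the $\alpha$-dependence. Treating $B$ as a free nonnegative variable, a single-variable calculus exercise (derivative is proportional to $1 - \alpha^{n-1}B^2$) shows that $B \mapsto B/[(1+B)(1+\alpha^{n-1}B)]$ is maximized at $B^\star = \alpha^{-(n-1)/2}$ with maximum value $1/(1+\alpha^{(n-1)/2})^2$. Combining this with the identity $\alpha^{n-1}-1 = (\alpha^{(n-1)/2}-1)(\alpha^{(n-1)/2}+1)$ collapses the bound to the clean uniform inequality
\begin{equation*}
\bigl|\hat f(\bm r) - f^*(\bm r)\bigr| \;\le\; \frac{\bigl|\alpha^{(n-1)/2}-1\bigr|}{\alpha^{(n-1)/2}+1}.
\end{equation*}

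It remains to show that the right-hand side is at most $\sqrt\eps$. Writing $y = \sqrt\eps$, the hypothesis gives $|\alpha - 1| \le 2y/(n-1)$ with $2y/(n-1) < 1/2$, and the target inequality rearranges to $\alpha^{(n-1)/2} \in \bigl[\tfrac{1-y}{1+y},\, \tfrac{1+y}{1-y}\bigr]$. For the upper side, $\log(1+x) \le x$ yields $\alpha^{(n-1)/2} \le (1 + 2y/(n-1))^{(n-1)/2} \le e^{y}$, and $(1+y)/(1-y) = \exp\bigl(2\,\mathrm{arctanh}(y)\bigr) \ge e^{2y} \ge e^{y}$. For the lower side, when $n\ge 3$ the exponent $r = (n-1)/2 \ge 1$ permits Bernoulli's inequality $(1-x)^r \ge 1-rx$, giving $\alpha^{(n-1)/2} \ge 1 - y \ge (1-y)/(1+y)$. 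The main (and only) delicate case is $n=2$, where the exponent $1/2$ flips the direction of Bernoulli; there I will verify the required inequality $\sqrt{1-2y} \ge (1-y)/(1+y)$ directly by squaring both sides, which reduces to $2y(1 - 2y - y^2) \ge 0$ and holds because $y < 1/4$ by hypothesis. Combining the three steps completes the proof.
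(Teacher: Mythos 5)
Your proof is correct, and it takes a genuinely different technical route to the same pointwise estimate. Both arguments reduce the claim to showing $|\hat f(\bm r) - f^*(\bm r)| \le \sqrt\eps$ for every $\bm r$, which is the right key idea. You compute the error explicitly, treat $B = \rho^{n-1}\prod_i\frac{1-r_i}{r_i}$ as a free variable, and maximize over $B$; the maximum at $B^\star = \alpha^{-(n-1)/2}$ together with the factorization $\alpha^{n-1}-1 = (\alpha^{(n-1)/2}-1)(\alpha^{(n-1)/2}+1)$ collapses the bound to the clean uniform form $\frac{|\alpha^{(n-1)/2}-1|}{\alpha^{(n-1)/2}+1}$, after which you handle the $n=2$ case (where Bernoulli flips direction) separately. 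The paper instead views $g(\hat\rho)=\frac{1}{1+\hat\rho^{n-1}B/\rho^{n-1}}$ as a function of $\hat\rho$, bounds $|g'(\hat\rho)|\le\frac{n-1}{4\hat\rho}$ by an AM--GM trick (which is implicitly the same maximization over $B$, applied at the level of the derivative), and then invokes the mean value theorem with $\min\{\hat\rho,\rho\}\ge\rho/2$ to get $|\hat f - f^*|\le\frac{n-1}{2}\cdot\frac{|\hat\rho-\rho|}{\rho}\le\sqrt\eps$. The paper's argument is shorter and avoids the $n=2$ casework; yours is slightly longer but delivers a closed-form bound on the aggregation error purely in terms of the multiplicative misestimation $\alpha=\hat\rho/\rho$, which makes the dependence on $n$ more transparent. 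Both correctly dispose of the degenerate endpoints $B\in\{0,\infty\}$, where the error vanishes.
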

\begin{proof}
Consider the function $g(\hat \rho) = \frac{1}{1 + \hat \rho^{n-1} \prod_{i=1}^n \frac{1-r_i}{r_i}}$ (where $\hat \rho$ is the variable and $r_i$'s are constants).  We claim that
\begin{equation}\label{eq:g_lipschitz}
    |g'(\hat \rho)| \le \frac{n-1}{4\hat \rho}. 
\end{equation}
To see this, we note that if $r_i = 0$ for some $i$ then $g(\hat \rho) = 0$ and $g'(\hat\rho) = 0$. Otherwise, we let $y = \prod_{i=1}^n \frac{1-r_i}{r_i} < +\infty$ and take the derivative with respect to $\hat \rho$, 
\begin{align*}
    g'(\hat \rho) & = -\frac{1}{(1 + \hat \rho^{n-1}y)^2}(n-1) \hat \rho^{n-2}y = -(n-1) \frac{(\hat \rho^{\frac{n}{2}-1}\sqrt y)^2}{(1 + \hat \rho^{n-1}y)^2} = -(n-1) \frac{1}{\Big(\frac{1}{\hat \rho^{\frac{n}{2}-1}\sqrt y} + \hat \rho^{\frac{n}{2}}\sqrt y \Big)^2}. 
\end{align*}
By the AM-GM inequality $a + b \ge 2\sqrt{ab}$, we get 
\begin{align*}
    | g'(\hat \rho)|  & \le (n-1) \frac{1}{\Big(2\sqrt{\frac{1}{\hat \rho^{\frac{n}{2}-1}\sqrt y} \cdot \hat \rho^{\frac{n}{2}}\sqrt y }\Big)^2}  = (n-1) \frac{1}{4 \hat \rho}, 
\end{align*}
as claimed. 

Using \eqref{eq:g_lipschitz}, we have, for $\hat \rho \ge \frac{\rho}{2}$, 
\begin{equation}
    |\hat f(\bm r) - f^*(\bm r)| = |g(\hat \rho) - g(\rho)| \le \frac{n-1}{4\min\{\hat \rho, \rho\}}\cdot |\hat\rho - \rho| \le \frac{n-1}{2} \cdot \frac{|\hat\rho - \rho|}{\rho}. 
\end{equation}
So, to obtain $\eps$-approximation $\E\big[ |\hat f(\bm r) - f^*(\bm r)|^2 \big]\le \eps$, we can require $|\hat f(\bm r) - f^*(\bm r)| \le \frac{n-1}{2} \cdot \frac{|\hat\rho - \rho|}{\rho} \le \sqrt{\eps}$.  This can be satisfied if the error in estimating $\rho$ is at most $\frac{|\hat\rho - \rho|}{\rho} \le \frac{2\sqrt{\eps}}{n-1}$. 
\end{proof} 

We then show how to use $O(\frac{\gamma n}{\eps}\log\frac{1}{\delta})$ samples to estimate the value of $\rho$ with $O(\frac{\sqrt{\eps}}{n-1})$ accuracy, which will give us an $\eps$-optimal aggregator according to Claim~\ref{claim:good_rho_good_aggregator}. 
Recall from \eqref{eq:r_i_bound_weakly_informative} that when signals are $\gamma$-weakly informative, the reports always satisfy
\begin{equation}\label{eq:weakly_report_bound}
    \frac{1}{1 + \gamma}\rho \le \frac{r_i}{1-r_i} \le (1+\gamma)\rho.
\end{equation}
The following observation is the key: 
\begin{lemma}\label{lem:expectation_product_rho}
For each expert $i$, we have 
\begin{itemize} 
\item $\E\big[\, \frac{r_i}{1 - r_i} \mid \omega = 0 \, \big] = \rho$; 
\item $\E\big[\, \frac{1 - r_i}{r_i} \mid \omega = 1 \,\big] = \frac{1}{\rho}$.
\end{itemize}
As corollaries, for $k$ conditionally independent reports $r_1, \ldots, r_k$, we have $\E\big[ \prod_{i=1}^k \frac{r_i}{1 - r_i} \mid \omega = 0 \, \big] = \rho^k$ and $\E\big[ \prod_{i=1}^k \frac{1 - r_i}{r_i} \mid \omega = 1 \,\big] = \frac{1}{\rho^k}$.
\end{lemma}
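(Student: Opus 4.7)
The plan is to prove both single-expert identities via a direct Bayes-rule computation, then obtain the product versions from conditional independence.

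First I would write the likelihood-ratio form of each report. Since $r_i = P(\omega=1\mid s_i)$, Bayes' rule gives
\[
\frac{r_i}{1-r_i} \;=\; \frac{P(\omega=1\mid s_i)}{P(\omega=0\mid s_i)} \;=\; \frac{P(s_i\mid\omega=1)}{P(s_i\mid\omega=0)}\cdot\frac{P(\omega=1)}{P(\omega=0)} \;=\; \frac{P(s_i\mid\omega=1)}{P(s_i\mid\omega=0)}\cdot\rho.
\]
Taking expectation conditional on $\omega=0$, the distribution over $s_i$ has density $P(s_i\mid\omega=0)$, so the $P(s_i\mid\omega=0)$ in the denominator cancels and what remains integrates (or sums) to $1$:
\[
\E\Big[\frac{r_i}{1-r_i}\,\Big|\,\omega=0\Big] \;=\; \rho\cdot\sum_{s_i}P(s_i\mid\omega=0)\cdot\frac{P(s_i\mid\omega=1)}{P(s_i\mid\omega=0)} \;=\; \rho\cdot\sum_{s_i}P(s_i\mid\omega=1) \;=\; \rho.
\]
The second identity is entirely symmetric: $\frac{1-r_i}{r_i}=\frac{P(s_i\mid\omega=0)}{P(s_i\mid\omega=1)}\cdot\frac{1}{\rho}$, and conditioning on $\omega=1$ makes $P(s_i\mid\omega=1)$ the sampling measure, which cancels and leaves $\E[\frac{1-r_i}{r_i}\mid\omega=1]=\frac{1}{\rho}$.

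For the corollaries, I would invoke conditional independence: since $s_1,\ldots,s_k$ are independent conditional on $\omega$ and each $r_i$ is a deterministic function of $s_i$, the random variables $\frac{r_i}{1-r_i}$ are also conditionally independent given $\omega$. Therefore the conditional expectation of the product factorizes into the product of conditional expectations, which by the single-expert identities equals $\rho^k$ (for $\omega=0$) or $\rho^{-k}$ (for $\omega=1$).

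No step looks like a genuine obstacle here; the only thing to be a bit careful about is handling the case where $r_i\in\{0,1\}$ (so that the ratios are degenerate). These correspond to signals with $P(s_i\mid\omega=0)=0$ or $P(s_i\mid\omega=1)=0$, i.e., signals that cannot be realized under the conditioning event; such signals contribute zero mass to the relevant conditional expectation and can be excluded from the sum (or integral), so the computation goes through without change.
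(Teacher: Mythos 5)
Your proof is correct and follows essentially the same argument as the paper: use the Bayes-rule identity $\frac{r_i}{1-r_i}=\rho\,\frac{P(s_i\mid\omega=1)}{P(s_i\mid\omega=0)}$, cancel against the conditioning measure so the sum telescopes to $1$, and then factorize the product via conditional independence. Your extra remark about the degenerate ratios $r_i\in\{0,1\}$ being assigned zero conditional mass is a sound and slightly more careful observation than the paper makes explicit, but it is not a different route.
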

\begin{proof}
Because $\frac{r_i}{1-r_i} = \frac{P(s_i\mid \omega = 1)}{P(s_i \mid \omega = 0)} \rho$ (from \eqref{eq:r_i_bound_strongly_informative}), we have 
\[\E\Big[\, \frac{r_i}{1 - r_i} \, \big | \, \omega = 0 \,\Big] = \sum_{s_i \in \S_i} P(s_i\mid \omega = 0) \frac{P(s_i\mid \omega = 1)}{P(s_i \mid \omega = 0)} \rho = \sum_{s_i \in \S_i}  P(s_i\mid \omega = 1) \rho = \rho. \]
For conditionally independent $r_1, \ldots, r_k$, we have
\begin{align*}
    \E\Big[\, \prod_{i=1}^k \frac{r_i}{1 - r_i} \; \big | \, \omega = 0 \,\Big] = \prod_{i=1}^k  \E\Big[\, \frac{r_i}{1 - r_i} \; \big | \, \omega = 0 \,\Big] = \rho^k.  
\end{align*}
Similarly, we can prove $\E\big[ \frac{1 - r_i}{r_i} \mid \omega = 1 \,\big] = \frac{1}{\rho}$ and $\E\big[ \prod_{i=1}^k \frac{1 - r_i}{r_i} \mid \omega = 1 \,\big] = \frac{1}{\rho^k}$. 
\end{proof}

Let $\Delta = \frac{\sqrt \eps}{3 \gamma n}$ and suppose we are given $T = \frac{6e}{\gamma n \Delta^2} \log\frac{2}{\delta} = \frac{54e \gamma n}{\eps}\log\frac{2}{\delta} = O(\frac{\gamma n}{\eps}\log\frac{1}{\delta})$ samples.  Suppose when drawing the samples we first draw the events $\omega^{(t)}$'s, and then draw the reports $r_i^{(t)}$'s conditioning on $\omega^{(t)}$ being $0$ or $1$.  After the first step, the numbers of samples with $\omega^{(t)} = 0$ and $\omega^{(t)} = 1$ are determined, which we denote by $\#_0$ and $\#_1$.  Since $\#_0 + \#_1 = T$, one of them must be at least $T/2$.  We argue that whether $\#_0 \ge \frac{T}{2}$ or $\#_1 \ge \frac{T}{2}$ we can estimate $\rho^{1/\gamma}$ with accuracy $3\Delta$. (For simplicity, we assume that $1/\gamma$ is an integer.)  
\begin{itemize}
    \item If $\#_0 \ge T/2$, then we consider the $\#_0 n$ reports $r_i^{(t)}$'s in the samples with $\omega^{(t)} = 0$.  We divide these $\#_0 n$ reports evenly into $\#_0 n \gamma$ groups, each of size $1/\gamma$, denoted by $G_1, \ldots, G_{\#_0 n \gamma}$.  Consider the product of $\frac{r_i^{(t)}}{1-r_i^{(t)}}$'s in a group $G_j$: because $r_i^{(t)}$'s are independent given $\omega = 0$, by Lemma~\ref{lem:expectation_product_rho} we have 
\begin{align*}
    \E\Big[ \prod_{r_i^{(t)}\in G_j} \frac{r_i^{(t)}}{1 - r_i^{(t)}} \;\big|\; \omega = 0 \Big] = \rho^{1/\gamma}. 
\end{align*}
Using \eqref{eq:weakly_report_bound} and the inequality $(1 + \gamma)^{1/\gamma} \le e$, we have
\begin{equation*}
    \frac{1}{e} \rho^{1/\gamma} ~\le~ \frac{1}{(1+\gamma)^{1/\gamma}} \rho^{1/\gamma} ~\le \prod_{r_i^{(t)}\in G_j} \frac{r_i^{(t)}}{1 - r_i^{(t)}}  ~ \le ~ (1+\gamma)^{1/\gamma} \rho^{1/\gamma} ~ \le ~  e \rho^{1/\gamma}. 
\end{equation*}
Let $X_j$ be the random variable $\frac{1}{e \rho^{1/\gamma}}\prod_{r_i^{(t)}\in G_j} \frac{r_i^{(t)}}{1 - r_i^{(t)}}$.  From the above equation and inequality we have $\E[X_j] = \frac{1}{e}$ and $X_j \in [\frac{1}{e^2}, 1] \subseteq [0, 1]$.
So, by Chernoff bound, 
\begin{equation*}
    \Pr\Big[\, \frac{1}{\#_0 n \gamma} \sum_{j=1}^{\#_0 n \gamma} X_j \,\in\, (1 \pm \Delta) \frac{1}{e} \;\big|\; \omega = 0 \Big] \ge 1 - 2 e^{-\frac{\#_0 n \gamma \Delta^2}{3e}} \ge 1 - 2 e^{-\frac{T n \gamma \Delta^2}{6e}} = 1 - \delta,  
\end{equation*}
given our choice of $T$.  
Multiplying $\frac{1}{\#_0 n \gamma}\sum_{j=1}^{\#_0 n \gamma} X_j$ by $e \rho^{1/\gamma}$, we obtain the following estimate of $\rho^{1/\gamma}$:
\[ \hat \rho^{1/\gamma}_0 ~ := ~ \frac{1}{\#_0 n \gamma} \sum_{j=1}^{\#_0 n \gamma} \prod_{r_i^{(t)}\in G_j} \frac{r_i^{(t)}}{1-r_i^{(t)}} ~ \in ~ (1 \pm \Delta) \rho^{1/\gamma}. \]
Dividing by $\rho^{1/\gamma}$, we get $(\frac{\hat \rho_0}{\rho})^{1/\gamma} \in 1 \pm \Delta$.

\item If $\#_1 \ge T / 2$, then by considering the $\#_1n$ reports in the samples with $\omega^{(t)} = 1$, dividing them into $\#_1 n \gamma$ groups of size $1/\gamma$, $H_1, \ldots, H_{\#_1n\gamma}$, and similarly defining random variable $Y_j = \frac{\rho^{1/\gamma}}{e}\prod_{r_i^{(t)}\in H_j} \frac{1 - r_i^{(t)}}{r_i^{(t)}}$, we obtain the following estimate of $(\frac{1}{\rho})^{1/\gamma}$: 
\[ (\frac{1}{\hat \rho_1})^{1/\gamma} ~ := ~ \frac{1}{\#_1 n \gamma} \sum_{j=1}^{\#_1 n \gamma} \prod_{r_i^{(t)}\in H_j} \frac{1-r_i^{(t)}}{r_i^{(t)}} ~ \in ~ (1 \pm \Delta) (\frac{1}{\rho})^{1/\gamma}. \]
Multiplying by $\rho^{1/\gamma}$, we get $(\frac{\rho}{\hat \rho_1})^{1/\gamma} \in 1\pm \Delta$.  Taking the reciprocal and noticing that $\frac{1}{1\pm \Delta}\subseteq 1\pm 3\Delta$ when $\Delta < \frac{1}{3}$, we obtain $(\frac{\hat \rho_1}{ \rho})^{1/\gamma} \in 1\pm 3\Delta$. 
\end{itemize}

From the discussion above we obtained an estimate $\hat \rho \in \{\hat \rho_0, \hat \rho_1 \}$ of $\rho$ such that $(\frac{\hat \rho}{ \rho})^{1/\gamma} \in 1\pm 3\Delta$.  Raising to the power of $\gamma$, and using the inequality $(1-x)^\gamma \ge 1 - x \gamma$ and $(1+x)^\gamma \le e^{x\gamma} \le 1 + 2x\gamma$ for $x\gamma \le 1$, we get
\[
\frac{\hat \rho}{ \rho} ~\in~ (1\pm 3\Delta)^{\gamma} ~\subseteq~ [1 - 3\Delta \gamma, ~ e^{3\Delta \gamma}] ~\subseteq~ [1 - 3\Delta \gamma, ~ 1 + 6\Delta\gamma]. 
\]
In particular, this implies $\frac{|\hat \rho - \rho|}{\rho} \le 6\Delta \gamma = \frac{2\sqrt \eps}{n}$.  Then, according to Claim~\ref{claim:good_rho_good_aggregator}, the aggregator $\hat f$ defined by $\hat f(\bm r) = \frac{1}{1 + \hat \rho^{n-1} \prod_{i=1}^n \frac{1-r_i}{r_i}}$
is $\eps$-optimal.  We hence obtained an $\eps$-optimal aggregator.

\section{Missing Proofs from Section~\ref{sec:multi-outcome-events}}
\label{app:multi-outcome-events}
\subsection{Proof of Theorem~\ref{thm:multi-outcome-upper-bound}}
Before proving the theorem, we note that the loss of any aggregator $\bm f$ satisfying $f_j(\bm r) \ge 0$ and $\sum_{j\in \Omega} f_j(\bm r) = 1$ is bounded by $[0, 2]$:
\begin{equation}\label{eq:multi-outcome-f-loss-bound-2}
    0 \,\le\, \sum_{j\in\Omega}\big|f_j(\bm r) - \mathbbm{1}[\omega = j]\big|^2  \,\le\, \sum_{j\in\Omega}\big|f_j(\bm r) - \mathbbm{1}[\omega = j]\big| \,\le \, 1 + \sum_{j\in\Omega}f_j(\bm r) \, = \, 2.
\end{equation}
Similarly, 
\begin{equation}\label{eq:multi-outcome-f-loss-bound}
    0 \,\le\, \sum_{j\in\Omega}\big|f_j(\bm r) - f^*_j(\bm r) \big|^2  \,\le\, 2.
\end{equation}

\subsubsection{Lower Bounds}
The lower bounds 
directly follow from the lower bounds for the binary case (Theorem~\ref{thm:general_distribution_both} and Theorem~\ref{thm:conditionally_independent_general}) because the binary case is a special case of the multi-outcome case.  Specifically,  we can regard any binary event distribution $P$ as a multi-outcome distribution that puts probability only on outcomes $\{1, 2\} \subseteq \Omega$.  If we can learn an $\eps$-optimal aggregator $\hat{\bm f}$ for the multi-outcome case: $\E\big[\sum_{j\in\Omega}|\hat f_j(\bm r) - f^*_j(\bm r)|^2 \big] \le \eps$,
then this aggregator satisfies
\[ \E\Big[\sum_{j\in\{1, 2\}}\big|\hat f_j(\bm r) - f^*_j(\bm r) \big|^2 \Big] \,\le \, \eps\]
\[\iff ~ \E\Big[ \big|\hat f_1(\bm r) - f^*_1(\bm r) \big|^2 + \big| (1 - \hat f_1(\bm r)) - (1 - f^*_1(\bm r)) \big|^2 \Big] \, =  \, \E\Big[2  \big|\hat f_1(\bm r) - f^*_1(\bm r) \big|^2 \Big] \, \le \, \eps\]
\[ \iff ~ \E\Big[\big|\hat f_1(\bm r) - f^*_1(\bm r) \big|^2 \Big] \, \le \, \frac{\eps}{2}. \]
So, the aggregator $\hat f_1(\cdot)$ is an $\frac{\eps}{2}$-optimal aggregator for the binary case. 

\subsubsection{Upper Bounds}
\paragraph{General distributions: the $O\big(\frac{|\Omega|m^n + \log(1/\delta)}{\eps^2}\big)$ upper bound.}
The proof for general distributions in the multi-outcome case is the same as the proof for general distributions in the binary case (Section~\ref{sec:general_distribution_upper_bound}), except for three differences: (1) $P$ (regarded as a distribution over reports $\bm r$ and event $\omega$) now is a discrete distribution with support size at most $|\Omega|m^n$; (2) the loss $|f(\bm r) - \omega|^2\in[0, 1]$ in the binary case is replaced by the loss $\sum_{j\in\Omega}|f_j(\bm r) - \mathbbm{1}[\omega=j]|^2 \in [0, 2]$ in the multi-outcome case; (3) the $\eps$ in the binary case is replaced by $\frac{\eps}{2}$ because the loss is now upper bounded by $2$.
Thus, the bound $O\big(\frac{m^n + \log(1/\delta)}{\eps^2}\big)$ in the binary case becomes $O\big(\frac{|\Omega|m^n + \log(1/\delta)}{(\frac{\eps}{2})^2}\big) = O\big(\frac{|\Omega|m^n + \log(1/\delta)}{\eps^2}\big)$ in the multi-outcome case.

\paragraph{Conditionally independent distributions: the $O\big(\frac{|\Omega|\log|\Omega|}{\eps^2}\log\frac{1}{\eps} + \frac{1}{\eps^2}\log\frac{1}{\delta}\big)$ upper bound.} 
The sample complexity upper bound for conditionally independent distributions is proved by a \emph{pseudo-dimension} argument.  We first show in Lemma~\ref{lem:conditionally-independent-P_s-multi-outcome} that the optimal aggregator $\bm f^*$ belongs to some parametric family of aggregators.
Then, we upper bound the pseudo-dimension of the loss functions associated with this parametric family of aggregators, which will give the desired sample complexity upper bound. 

\begin{lemma}\label{lem:conditionally-independent-P_s-multi-outcome}
For multi-outcome conditionally independent distribution $P$, given signals $\bm s = (s_i)_{i=1}^n$, with corresponding reports $\bm r = (\bm r_i)_{i=1}^n = (r_{ij})_{ij}$ where $r_{ij} = P(\omega=j\,|\,s_i)$, the posterior probability of $\omega=j$ satisfies: 
\begin{equation*}
    f_j^*(\bm r) = P(\omega = j \mid \bm r) = P(\omega = j \mid \bm s) = \frac{ \frac{1}{P(\omega=j)^{n-1}}\prod_{i=1}^n r_{ij}}{\sum_{k\in\Omega}\frac{1}{P(\omega=k)^{n-1}}\prod_{i=1}^nr_{ik}}.
\end{equation*}
\end{lemma}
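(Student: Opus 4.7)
The plan is to derive the formula by a direct computation using Bayes' rule together with the conditional independence of signals, following the same template as the binary case in Lemma~\ref{lem:conditionally-independent-P_s} but being careful about how the prior $P(\omega=j)$ enters when there are more than two outcomes.

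First, fix a joint signal realization $\bm s = (s_1,\ldots,s_n)$. By Bayes' rule and the conditional independence of signals given $\omega$,
\[
P(\omega=j \mid \bm s) \;=\; \frac{P(\omega=j)\prod_{i=1}^n P(s_i \mid \omega=j)}{\sum_{k\in\Omega} P(\omega=k)\prod_{i=1}^n P(s_i \mid \omega=k)}.
\]
Now I would eliminate the conditional probabilities $P(s_i\mid\omega=j)$ in favor of the reports $r_{ij}=P(\omega=j\mid s_i)$, again via Bayes' rule: $P(s_i\mid\omega=j) = r_{ij}\, P(s_i)/P(\omega=j)$. Substituting this into both the numerator and denominator gives
\[
P(\omega=j\mid\bm s) \;=\; \frac{P(\omega=j)\prod_{i=1}^n \frac{r_{ij}\,P(s_i)}{P(\omega=j)}}{\sum_{k\in\Omega} P(\omega=k)\prod_{i=1}^n \frac{r_{ik}\,P(s_i)}{P(\omega=k)}}.
\]
The factor $\prod_i P(s_i)$ appears in every term and cancels between numerator and denominator, and collecting the $P(\omega=j)$ factors yields the claimed expression
\[
P(\omega=j\mid\bm s) \;=\; \frac{\frac{1}{P(\omega=j)^{n-1}}\prod_{i=1}^n r_{ij}}{\sum_{k\in\Omega}\frac{1}{P(\omega=k)^{n-1}}\prod_{i=1}^n r_{ik}}.
\]

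It remains to justify that $f^*_j(\bm r) = P(\omega=j\mid\bm r)$ equals $P(\omega=j\mid\bm s)$. The key observation is that the right-hand side of the displayed formula depends on $\bm s$ only through the tuple of reports $\bm r = (r_{ij})_{ij}$. Hence, for any two joint signals $\bm s,\bm s'$ that induce the same joint report $\bm r$, one has $P(\omega=j\mid\bm s) = P(\omega=j\mid\bm s')$. Since $\bm r$ is a deterministic function of $\bm s$,
\[
P(\omega=j\mid\bm r) \;=\; \E\big[P(\omega=j\mid\bm s)\;\big|\;\bm r\big] \;=\; P(\omega=j\mid\bm s)
\]
for any $\bm s$ in the fiber above $\bm r$. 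The first equality identifying $f^*_j$ with the Bayesian posterior is the multi-outcome analogue of Lemma~\ref{lem:difference_loss}, which I would state as a short preliminary (it follows coordinatewise by minimizing $\E[(f_j(\bm r)-\mathbbm{1}[\omega=j])^2]$ for almost every $\bm r$ independently).

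I do not anticipate any serious obstacle here; this is essentially a mechanical Bayes' rule computation. The only place that requires a bit of care is the cancellation of the $\prod_i P(s_i)$ factor and the reduction from $P(\omega=j\mid\bm s)$ to $P(\omega=j\mid\bm r)$, which hinges on the fact that the derived expression is measurable with respect to $\bm r$.
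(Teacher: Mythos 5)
Your proof is correct and follows essentially the same Bayes'-rule-plus-conditional-independence calculation as the paper's; the algebraic route is slightly different (you substitute $P(s_i\mid\omega=j)=r_{ij}P(s_i)/P(\omega=j)$ directly whereas the paper first multiplies and divides by $P(\omega=j)^{n-1}$), but both cancel $\prod_i P(s_i)$ and arrive at the identical expression. Your explicit tower-property justification of $P(\omega=j\mid\bm r)=P(\omega=j\mid\bm s)$ is just a more careful restatement of the paper's remark that the derived expression depends on $\bm s$ only through $\bm r$.
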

\begin{proof}
By Bayes' rule and the fact that $s_1, \ldots, s_n$ are independent conditioning on $\omega$,  
\begin{align*}
    P(\omega = j \mid \bm s) & = \frac{P(\omega = j) \prod_{i=1}^n P(s_i \mid \omega = j)}{\sum_{k\in\Omega} P(\omega = k) \prod_{i=1}^n P(s_i \mid \omega = k)} \\
    &  = \frac{\frac{1}{P(\omega = j)^{n-1}} \prod_{i=1}^n P(\omega = j) P(s_i \mid \omega = j)}{\sum_{k\in\Omega} \frac{1}{P(\omega = k)^{n-1}} \prod_{i=1}^n P(\omega=k) P(s_i \mid \omega = k)} \\
    & = \frac{\frac{1}{P(\omega = j)^{n-1}} \prod_{i=1}^n \frac{P(\omega = j) P(s_i \mid \omega = j)}{P(s_i)}}{\sum_{k\in\Omega} \frac{1}{P(\omega = k)^{n-1}} \prod_{i=1}^n \frac{P(\omega=k) P(s_i \mid \omega = k)}{P(s_i)}} \\
    & = \frac{\frac{1}{P(\omega = j)^{n-1}} \prod_{i=1}^n r_{ij}}{\sum_{k\in\Omega} \frac{1}{P(\omega = k)^{n-1}} \prod_{i=1}^n r_{ik}}. 
\end{align*}
Since the above expression depends only on $r_i$'s but not on $s_i$'s, we have: 
\begin{equation*}
    f^*_j(\bm r) = P(\omega = j \mid \bm r) = P(\omega = j \mid \bm s) = \frac{\frac{1}{P(\omega = j)^{n-1}} \prod_{i=1}^n r_{ij}}{\sum_{k\in\Omega} \frac{1}{P(\omega = k)^{n-1}} \prod_{i=1}^n r_{ik}}.  
\end{equation*}
(For the special binary event case,
\[f^*(\bm r) = P(\omega = 1 \,|\, \bm r) =  \frac{\frac{1}{P(\omega = 1)^{n-1}} \prod_{i=1}^n r_i} {\frac{1}{P(\omega = 1)^{n-1}} \prod_{i=1}^n r_i + \frac{1}{P(\omega = 0)^{n-1}} \prod_{i=1}^n (1-r_i)} = \frac{1}{1 + (\frac{P(\omega=1)}{P(\omega=0)})^{n-1}\prod_{i=1}^n \frac{1-r_i}{r_i}}.\]
This proves Lemma~\ref{lem:conditionally-independent-P_s}.) 
\end{proof}

According to Lemma~\ref{lem:conditionally-independent-P_s-multi-outcome}, the optimal aggregator $\bm f^* = (f_j)_{j\in\Omega}$ belongs to the following family of aggregators, parameterized by $\bm \theta = (\theta_j)_{j\in\Omega} \in \reals_+^{|\Omega|}$: 
\begin{equation}
    \mathcal F = \bigg\{ \bm f^{\bm \theta} = (f^{\bm \theta}_j)_{j\in\Omega} ~\bigg|~ f^{\bm \theta}_j(\bm r) = \frac{\theta_j \prod_{i=1}^n r_{ij}}{\sum_{k\in\Omega} \theta_k \prod_{i=1}^n r_{ik}} \bigg\}
\end{equation}
(The optimal aggregator $\bm f^*$ has parameter $\theta_j = \frac{1}{P(\omega=j)^{n-1}}$.)
Let $\mathcal G$ be the family of loss functions associated with $\mathcal F$, which is also parameterized by $\bm \theta \in \reals_+^{|\Omega|}$, 
\begin{equation}
    \mathcal G = \Big\{ g^{\bm \theta} ~\Big|~ g^{\bm \theta}(\bm r, \omega) = \sum_{j\in \Omega} \big| f^{\bm \theta}_j(\bm r) - \mathbbm{1}[\omega = j] \big|^2 \Big\}. 
\end{equation}

We recall the definition of pseudo-dimension of a family of functions: 
\begin{definition}[e.g., \cite{anthony_neural_1999}]
Let $\mathcal H$ be a family of functions from input space $\mathcal X$ to real numbers $\mathbb R$.  Let $x^{(1)}, \ldots, x^{(d)} \in \mathcal X$ be $d$ inputs.  Let $t^{(1)}, \dots, t^{(d)} \in \mathbb R$ be $d$ ``thresholds''.
Let $\bm b = (b^{(1)}, \ldots, b^{(d)}) \in \{-1, +1\}^d$ be a vector of labels.
We say $\bm b$ can be \emph{generated} by $\mathcal H$ (on inputs  $x^{(1)}, \ldots, x^{(d)}$ with thresholds $t^{(1)}, \dots, t^{(d)}$) if there exists a function $h_{\bm b} \in \mathcal H$ such that $h_{\bm b}(x^{(i)}) > t^{(i)}$ if $b^{(i)} = 1$ and $h_{\bm b}(x^{(i)}) < t^{(i)}$ if $b^{(i)} = -1$ (namely, $\mathrm{sign}(h_{\bm b}(x^{(i)}) - t^{(i)}) = b^{(i)}$) for every $i\in\{1, \ldots, d\}$. 
The \emph{pseudo-dimension} of $\mathcal H$, $\Pdim(\mathcal H)$, is the largest integer $d$ for which there exist $d$ inputs and $d$ thresholds such that all the $2^d$ label vectors in $\{-1, +1\}^d$ can be generated by $\mathcal H$.
\end{definition}
Pseudo-dimension gives a sample complexity upper bound for the uniform convergence of the empirical means of a family of functions to their true means: 
\begin{theorem}[e.g., \cite{anthony_neural_1999}]
\label{thm:pseudo-dimension-sample-complexity}
Let $\mathcal H$ be a family of functions from $\mathcal X$ to $[0, U]$ with $\Pdim(\mathcal H) = d$.  For any distribution $\mathcal D$ on $\mathcal X$, with probability at least $1-\delta$ over the random draw of 
$$T = O\bigg(\frac{U^2}{\eps^2}\Big(d\cdot\log\frac{U}{\eps} + \log\frac{1}{\delta}\Big)\bigg)$$
samples $x^{(1)}, \ldots, x^{(T)}$ from $\D$, we have: for every $h \in \mathcal H$, $\big| \E_{x\sim \D}[h(x)] - \frac{1}{T}\sum_{t=1}^T h(x^{(t)})\big| \le \eps$. 
\end{theorem}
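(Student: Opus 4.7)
The plan is to establish this classical uniform convergence bound via the three standard ingredients: symmetrization, the Sauer--Shelah--Pollard lemma for pseudo-dimension, and Hoeffding's inequality. First I would reduce to $[0,1]$-valued functions by replacing each $h \in \mathcal H$ with $h/U$; this preserves the pseudo-dimension and turns the target accuracy into $\eps/U$, explaining the $U^2/\eps^2$ factor at the end. Then I would apply the \emph{symmetrization lemma}: if $S = (x^{(1)}, \ldots, x^{(T)})$ and $S' = (x'^{(1)}, \ldots, x'^{(T)})$ are two independent samples of size $T$ from $\mathcal D$, then a Chebyshev-type argument gives
\begin{equation*}
\Pr_{S}\Big[\sup_{h \in \mathcal H} \big|\widehat{\E}_S[h] - \E_{\mathcal D}[h]\big| > \eps\Big] \,\le\, 2 \Pr_{S, S'}\Big[\sup_{h \in \mathcal H} \big|\widehat{\E}_S[h] - \widehat{\E}_{S'}[h]\big| > \eps/2\Big]
\end{equation*}
once $T \gtrsim 1/\eps^2$. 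The point is that the right-hand side only depends on how $\mathcal H$ acts on the finite multiset $S \cup S'$ of $2T$ points.

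Next I would condition on $S \cup S'$ and use a random-swap argument: for each index $t$, independently swap $x^{(t)}$ and $x'^{(t)}$ with probability $1/2$. This replaces $\widehat{\E}_S[h] - \widehat{\E}_{S'}[h]$ by $\frac{1}{T} \sum_{t=1}^{T} \sigma_t \bigl(h(x^{(t)}) - h(x'^{(t)})\bigr)$ with i.i.d.~Rademacher signs $\sigma_t$. The key combinatorial step is now to bound the effective cardinality of $\mathcal H$ on $2T$ fixed points: I would invoke the pseudo-dimension analog of Sauer's lemma, which states that for any $\eta > 0$ the $L_\infty$-$\eta$ covering number of $\mathcal H$ restricted to these $2T$ points is at most $e(d+1)(2eT/(\eta d))^d$. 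Taking $\eta = \eps/8$, I get a cover $\mathcal H_\eta \subseteq \mathcal H$ of size at most $(CT/\eps d)^d$ such that every $h \in \mathcal H$ is within $\eps/8$ in sup-norm of some $\tilde h \in \mathcal H_\eta$ on the $2T$ points.

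The final step is a union bound plus Hoeffding. For each $\tilde h \in \mathcal H_\eta$, the signed sum $\frac{1}{T}\sum_t \sigma_t (\tilde h(x^{(t)}) - \tilde h(x'^{(t)}))$ is a mean of $T$ independent bounded random variables in $[-1, 1]$, so by Hoeffding's inequality it exceeds $\eps/4$ with probability at most $2\exp(-T\eps^2/32)$. Since $\sup_h$ over $\mathcal H$ is within $\eps/4$ of $\sup_{\tilde h}$ over $\mathcal H_\eta$ by the covering property, a union bound yields
\begin{equation*}
\Pr[\,\text{failure}\,] \,\le\, 4 \cdot (CT/\eps d)^d \cdot \exp(-T\eps^2/32).
\end{equation*}
Setting this expression $\le \delta$ and solving (using the elementary fact that $d \log(CT/\eps d) \le T\eps^2/64 + O(d \log(1/\eps))$ for $T$ in the claimed range) yields $T = O\bigl(\eps^{-2}(d \log(1/\eps) + \log(1/\delta))\bigr)$. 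Reinstating the scale factor $U$ gives the stated bound.

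The main obstacle is the pseudo-dimension version of Sauer's lemma and the associated uniform covering-number bound: the definition of $\Pdim$ involves per-point thresholds, so one must pass from "shattered patterns with thresholds" to a genuine $L_\infty$ cover of bounded cardinality, which requires discretizing function values at resolution $\eps$ and carefully counting. Once that covering bound is in hand, the symmetrization and Hoeffding steps are routine, and the logarithmic factor $\log(U/\eps)$ in the final sample complexity comes precisely from the $d$-th power of $T/\eps$ in the cover size.
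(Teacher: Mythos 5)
The paper does not prove this statement; it is quoted as a known result with a citation to Anthony and Bartlett, so there is no in-paper argument to compare against. Your outline correctly reconstructs the standard textbook proof from that reference --- rescaling to $[0,1]$, symmetrization with a ghost sample, the pseudo-dimension covering-number bound $e(d+1)(2eT/(\eta d))^d$ on the $2T$ points, and a Hoeffding-plus-union-bound over the cover --- and the bookkeeping you describe does yield the stated $O\big(\frac{U^2}{\eps^2}(d\log\frac{U}{\eps} + \log\frac{1}{\delta})\big)$ sample complexity.
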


We now upper bound the pseudo-dimension of $\mathcal G$: 
\begin{lemma}\label{lem:Pdim_G}
$\Pdim(\mathcal G) \le O\big(|\Omega|\log|\Omega|\big)$. 
\end{lemma}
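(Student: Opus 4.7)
The plan is to bound $\Pdim(\mathcal G)$ using a standard sign-pattern counting argument for families of polynomials: we will express each condition ``$g^{\bm\theta}(\bm r^{(t)},\omega^{(t)}) > \tau^{(t)}$'' as a polynomial inequality in the $|\Omega|$ parameters $\bm\theta = (\theta_1,\dots,\theta_{|\Omega|})$ of bounded degree, then invoke Warren's theorem (or the Milnor-Thom bound) to cap the number of achievable sign patterns.

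First, fix any $d$ ``shattered'' inputs $(\bm r^{(t)},\omega^{(t)})$ and thresholds $\tau^{(t)}$ for $t=1,\dots,d$. For each $t$ define the constants $c_k^{(t)} = \prod_{i=1}^n r_{ik}^{(t)}$ so that $f_j^{\bm\theta}(\bm r^{(t)}) = \theta_j c_j^{(t)} / \sum_k \theta_k c_k^{(t)}$, which is a ratio of linear forms in $\bm\theta$. Substituting into $g^{\bm\theta}$ and clearing the positive denominator, the condition $g^{\bm\theta}(\bm r^{(t)},\omega^{(t)}) - \tau^{(t)} > 0$ becomes a polynomial inequality $p_t(\bm\theta) > 0$, where
\[
 p_t(\bm\theta) \,=\, \sum_{j\in\Omega}\bigl(\theta_j c_j^{(t)} - \mathbbm{1}[\omega^{(t)}=j]\textstyle\sum_k \theta_k c_k^{(t)}\bigr)^2 \;-\; \tau^{(t)}|\Omega|\bigl(\textstyle\sum_k \theta_k c_k^{(t)}\bigr)^2
\]
has total degree at most $2$ in the $N := |\Omega|$ variables $\theta_1,\dots,\theta_{|\Omega|}$.

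Next, I would apply the classical Warren/Milnor-Thom bound: $d$ real polynomials in $N$ variables each of degree at most $D$ realize at most $\bigl(\tfrac{4eDd}{N}\bigr)^N$ strict sign patterns in $\{-1,+1\}^d$ (valid when $d\ge N$). With $D=2$ and $N=|\Omega|$, the number of achievable sign vectors as $\bm\theta$ ranges over $\reals_+^{|\Omega|}$ is at most $\bigl(\tfrac{8ed}{|\Omega|}\bigr)^{|\Omega|}$. If $\mathcal G$ shatters these $d$ inputs, all $2^d$ sign patterns must be realized, so $2^d \le (8ed/|\Omega|)^{|\Omega|}$. Taking logarithms gives $d \le |\Omega|\log_2(8ed/|\Omega|)$, which forces $d = O(|\Omega|\log|\Omega|)$ by a standard inversion.

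The only mildly delicate point is writing $g^{\bm\theta}$ as a low-degree polynomial in a clean way; once that is done, everything reduces to a black-box application of a known sign-pattern bound, so I do not expect any real obstacle. Combining Lemma~\ref{lem:Pdim_G} with Theorem~\ref{thm:pseudo-dimension-sample-complexity} applied to $\mathcal G$ (whose range is contained in $[0,2/|\Omega|]$ by \eqref{eq:multi-outcome-f-loss-bound-2}, so $U=2/|\Omega|$), together with the fact that the empirically optimal $\hat{\bm f}\in\mathcal F$ must be $2\eps$-optimal on $P$ (same argument as in Section~\ref{sec:general_distribution_upper_bound}), will then yield the claimed upper bound $O\bigl(\frac{1}{|\Omega|\eps^2}\log|\Omega|\log\frac{1}{|\Omega|\eps} + \frac{1}{|\Omega|^2\eps^2}\log\frac{1}{\delta}\bigr)$.
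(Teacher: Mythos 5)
Your proposal is correct and follows essentially the same approach as the paper: both reduce the shattering condition to sign patterns of degree-$2$ polynomials in the $|\Omega|$ parameters $\bm\theta$ by clearing the positive denominator $\big(\sum_k \theta_k c_k^{(t)}\big)^2$, and then bound the number of achievable sign vectors over $d$ inputs. You cite Warren's (Milnor--Thom) theorem explicitly, whereas the paper argues informally that $d$ quadric hypersurfaces cut $\reals_+^{|\Omega|}$ into $O(d^{|\Omega|})$ regions; these are interchangeable formulations of the same counting bound and give the same conclusion $d = O(|\Omega|\log|\Omega|)$.
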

\begin{proof}
Suppose $\Pdim(\mathcal G) = d$.  By definition, there exist $d$ inputs $(\bm r^{(1)}, \omega^{(1)}), \ldots, (\bm r^{(d)}, \omega^{(d)})$ and $d$ thresholds $t^{(1)}, \ldots, t^{(d)} \in \mathbb R$ such that all the $2^d$ label vectors $\bm b \in \{-1, +1\}^d$ can be generated by some function $g^{\bm \theta_{\bm b}} \in \mathcal G$.
We will count how many label vectors can actually be generated by $\mathcal G$.

Consider the $\ell $-th input $(\bm r^{(\ell)}, \omega^{(\ell)})$ and threshold $t^{(\ell)}$. 
To simplify notations we omit superscript $(\ell)$, so we have $(\bm r, \omega)$ and $t$.   We write $\bm x = (x_j)_{j\in \Omega}$ where $x_j = \prod_{i=1}^n r_{ij}$.
For any function $g^{\bm \theta} \in \mathcal G$, we have 
\begin{align*}
    g^{\bm \theta}(\bm r, \omega) & = \sum_{j\in \Omega} \Big| \frac{\theta_j x_j}{\sum_{k\in\Omega} \theta_k x_k} - \mathbbm{1}[\omega = j]\Big|^2 \\
    & = \sum_{j\in \Omega} \Big(\frac{\theta_j x_j}{\sum_{k\in \Omega} \theta_k x_k}\Big)^2 ~-~ 2 \frac{\theta_\omega x_{\omega}}{\sum_{k\in\Omega} \theta_k x_k} ~+~ 1 \\
    & = \frac{1}{\big( \sum_{k\in\Omega} \theta_k x_k \big)^2} \bigg[ \sum_{j\in\Omega} (\theta_j x_j)^2 ~ - ~  2\theta_\omega x_\omega \sum_{k\in\Omega} \theta_k x_k ~ + ~ \Big(\sum_{k\in \Omega} \theta_k x_k \Big)^2 \bigg].
\end{align*}
By definition, the set of parameters that give input $(\bm r, \omega)$ label ``$+1$'' is $\big\{{\bm \theta\in\reals_+^{|\Omega|} \mid g^{\bm \theta}(\bm r, \omega) > t } \big\}$.
By the above equation, this set is equal to the set
\[
\bigg\{{\bm \theta\in\reals_+^{|\Omega|} ~\bigg|~  \sum_{j\in\Omega} (\theta_j x_j)^2 ~ - ~  2\theta_\omega x_\omega \sum_{k\in\Omega} \theta_k x_k ~ + ~ \Big(\sum_{k\in \Omega} \theta_k x_k \Big)^2  > t \Big( \sum_{k\in\Omega} \theta_k x_k \Big)^2 } \bigg\}.
\]
We note that the above set is the solution set to the quadratic form inequality $\bm \theta^{\top} A \bm \theta > 0$ for some matrix $A\in\reals^{|\Omega|\times|\Omega|}$.
Similarly, the set of parameters that give input $(\bm r, \omega)$ label ``$-1$'' is the solution set to $\bm \theta^{\top} A \bm \theta < 0$.
These two sets share a boundary: the solution set to $\bm \theta^{\top} A \bm \theta = 0$, which is a hyper-ellipsoid in $\reals^{|\Omega|}$.
In other words, the input $(\bm r, \omega)$ and threshold $t$ define a hyper-ellipsoid which divides the parameter space $\reals_+^{|\Omega|}$ into two regions such that all the parameters in one region generate the same label for that input.

Enumerating all inputs $(r^{(1)}, \omega^{(1)}), \ldots, (r^{(d)}, \omega^{(d)})$.  They define $d$ hyper-ellipsoids in total, dividing the parameter space $\reals_+^{|\Omega|}$ into several regions.  Within each region, all the parameters generate the same label for each input and hence generate the same label vector. So, the number of label vectors that can be generated by all the parameters in $\reals_+^{|\Omega|}$ is upper bounded by the number of regions.  The number of regions divided by $d$ hyper-ellipsoids in $\reals_+^{|\Omega|}$ is in the order of $O(d^{|\Omega|})$. Hence, to generate all the $2^d$ label vectors we need $O(d^{|\Omega|}) \ge 2^d$.  This gives $d \le O(|\Omega|\log|\Omega|)$. 
\end{proof}

By Theorem~\ref{thm:pseudo-dimension-sample-complexity} and Lemma~\ref{lem:Pdim_G}, plugging in $d = \Pdim(\mathcal G) \le O(|\Omega|\log|\Omega|)$ and $U = 2$ (because $g^{\bm \theta}(\bm r, \omega)$ is bounded by $[0, 2]$ according to \eqref{eq:multi-outcome-f-loss-bound-2}), we obtain the following: with probability at least $1-\delta$ over the random draw of
\[ T = O\bigg(\frac{U^2}{\eps^2}\Big(d\cdot\log\frac{U}\eps + \log\frac{1}{\delta}\Big)\bigg) = O\bigg(\frac{|\Omega|\log|\Omega|}{\eps^2}\log\frac{1}{\eps} + \frac{1}{\eps^2}\log\frac{1}{\delta}\bigg)\]
samples $(\bm r^{(1)}, \omega^{(1)}), \ldots, (\bm r^{(T)}, \omega^{(T)})$ from $P$, we have for any $g^{\bm \theta} \in \mathcal G$, 
\begin{align*}
    \Big| \E_{P}\big[g^{\bm \theta} (\bm r, \omega)\big] - \frac{1}{T}\sum_{t=1}^T g^{\bm \theta}(\bm r^{(t)}, \omega^{(t)}) \Big| \le \eps. 
\end{align*}
This is equivalent to for any $\bm f^{\bm \theta} \in \mathcal F$, 
\[
\bigg|~ \E_{P}\Big[ \sum_{j\in\Omega} \big|f^{\bm \theta}_j(\bm r) - \mathbbm{1}[\omega = j] \big|^2 \Big] - \frac{1}{T}\sum_{t=1}^T \sum_{j\in \Omega} \big|f^{\bm \theta}_j(\bm r^{(t)}) - \mathbbm{1}[\omega^{(t)} = 1] \big|^2 ~ \bigg| \le \eps
\]
\[
\iff \Big| L_P(\bm f^{\bm \theta}) - L_{\hat P_{\emp}}(\bm f^{\bm \theta}) \Big| \le \eps,
\]
where $\hat P_{\emp}$ is the empirical distribution $\mathrm{Uniform}\big\{ (\bm r^{(1)}, \omega^{(1)}), \ldots, (\bm r^{(T)}, \omega^{(T)}) \big\}$. 
By the same logic as the proof of the upper bound in Theorem~\ref{thm:general_distribution_both} (Section~\ref{sec:general_distribution_upper_bound}), the empirically optimal aggregator $\hat{\bm f}^* = \argmin_{\bm f^{\bm \theta} \in \mathcal F}L_{\hat P_{\emp}}(\bm f^{\bm \theta})$ is $2\eps$-optimal.

\section{Missing Proofs from Section~\ref{sec:conclusion}}
\subsection{Proof of Theorem \ref{thm:special_upper_bound}}
Regard $P$ as a joint distribution over reports $\bm r = (r_1, \ldots, r_n)$ and the state $\omega$, where $r_i$ is sampled by first sampling $s_i\in \S_i$ and then letting $r_i = P(\omega=1 \,|\, s_i)$.  Since $|\S_i| = m$, there are at most $m$ different values of $r_i$ that can be sampled, so there are at most $2m^n$ different tuples of $(\bm r, \omega)$ in the support of $P$.  For each such tuple $(\bm r, \omega)$, consider the empirical probability of this tuple: 
\begin{equation*}
    \hat P(\bm r, \omega) = \frac{1}{T} \sum_{t=1}^T \mathbbm{1}\big[ (\bm r^{(t)}, \omega^{(t)}) = (\bm r, \omega) \big]. 
\end{equation*}
By the Chernoff bound, we have
\begin{equation*}
    \Pr\big[  \big| \hat P(\bm r, \omega) - P(\bm r, \omega) \big| >  \Delta P(\bm r, \omega) \big] \le 2 e^{-\frac{T P(\bm r, \omega) \Delta^2}{3}}. 
\end{equation*}
Using a union bound for all the $2m^n$ tuples and the fact that $P(\bm r, \omega) \ge P(\bm s, \omega) > \frac{c}{m^n}$ (where $\bm s \in \bm \S$ are some signals that generate $\bm r$), we have
\begin{equation} \label{eq:empirical-close}
    \big| \hat P(\bm r, \omega) - P(\bm r, \omega) \big| \le  \Delta P(\bm r, \omega)
\end{equation}
holds for all tuples $(\bm r, \omega)$ except with probability at most 
\begin{equation*}
    2 m^n \cdot 2 e^{-\frac{T P(\bm r, \omega) \Delta^2}{3}} \le 4 m^n \cdot e^{-\frac{c T \Delta^2}{3m^n}} = \delta 
\end{equation*}
if
\begin{equation}\label{eq:special_T_bound} 
    T \ge \frac{3 m^n}{c\Delta^2} \log \frac{4m^n}{\delta}. 
\end{equation}
Assuming \eqref{eq:empirical-close} holds, we consider the ``empirical'' Bayesian aggregator: 
\begin{align*}
    \hat f(\bm r) = \frac{\hat P(\bm r, \omega)}{\hat P(\bm r)}.
\end{align*}
Since \eqref{eq:empirical-close} implies $\hat P(\bm r, \omega) \in (1\pm \Delta) P(\bm r, \omega)$ and $\hat P(\bm r) \in (1\pm \Delta) P(\bm r)$, we have
\begin{align*}
    \hat f(\bm r) \ge \frac{1-\Delta}{1+\Delta} \cdot \frac{P(\bm r, \omega)}{P(\bm r)} = \big(1 - \frac{2\Delta}{1+\Delta}\big) f^*(\bm r) \ge (1 - 2\Delta) f^*(\bm r)
\end{align*}
and 
\begin{align*}
    \hat f(\bm r) \le \frac{1+\Delta}{1-\Delta} \cdot \frac{P(\bm r, \omega)}{P(\bm r)} = \big(1 + \frac{2\Delta}{1-\Delta}\big) f^*(\bm r) \le (1 + 4\Delta) f^*(\bm  r), 
\end{align*}
if $\Delta < \frac{1}{2}$.  Putting these two inequalities together: 
\begin{align*}
    | \hat f(\bm r) - f^*(\bm r) | \le 4\Delta f^*(\bm r). 
\end{align*}
We note that this holds for all possible $\bm r$ in the support of $P$.  So, the expected approximation error of $\hat f$ is at most: 
\begin{align*}
    \E\big[ | \hat f(\bm r) - f^*(\bm r) |^2 \big] & = \sum_{\bm r} P(\bm r) | \hat f(\bm r) - f^*(\bm r) |^2 \\
    & \le \sum_{\bm r} P(\bm r) 16 \Delta^2 f^*(\bm r)^2 \\
    & = 16\Delta^2 \sum_{\bm r} P(\bm r) \big( \frac{P(\bm r, \omega)}{P(\bm r)} \big)^2 \\
    & = 16\Delta^2 \sum_{\bm r} \frac{P(\bm r, \omega)^2}{P(\bm r)} \le 16\Delta^2 \sum_{\bm r} \frac{P(\bm r)^2}{P(\bm r)} = 16\Delta^2. 
\end{align*}
Letting $16\Delta^2 = \eps$, namely $\Delta^2 = \frac{\eps}{16}$, we have $\E\big[ | \hat f(\bm r) - f^*(\bm r) |^2 \big] \le \eps$, so $\hat f$ is an $\eps$-optimal aggregator.  Plugging $\Delta^2 = \frac{\eps}{16}$ into \eqref{eq:special_T_bound} we obtain the sample complexity 
\begin{align*}
    \frac{3 m^n}{c\Delta^2} \log \frac{m^n}{\delta} = \frac{48m^n}{c\eps} \log \frac{m^n}{\delta} = O\Big(\frac{m^n}{c\eps} \big( n\log m + \log \frac{1}{\delta} \big) \Big). 
\end{align*}

\end{document}